\renewenvironment{proof}{\par\noindent{\bf Proof\ }}{\hfill\BlackBox\\[2mm]}
\newtheorem{assumption}[theorem]{Assumption}
\DeclareMathOperator*{\argmax}{arg\,max}
\renewcommand{\epsilon}{\varepsilon}
\renewcommand{\ln}{\log}
\begin{document}

\title{Contextual Bandits with Stage-wise Constraints}

\author{\name Aldo Pacchiano \email pacchian@bu.edu \\
       \addr Faculty of Computing \& Data Sciences\\
       Boston University\\
       Boston, MA, USA
       \AND
       \name Mohammad Ghavamzadeh \email ghavamza@amazon.com \\
       \addr Amazon AGI \\
       Sunnyvale, CA, USA
       \AND
       \name Peter Bartlett \email peter@berkeley.edu\\
       \addr  Department of Electrical Engineering and Computer Sciences \\
       University of California\\
       Berkeley, CA, USA
       }

\editor{Quanquan Gu}

\maketitle

\begin{abstract}%
We study contextual bandits in the presence of a stage-wise constraint when the constraint must be satisfied both with high probability and in expectation. We start with the linear case where both the reward function and the stage-wise constraint (cost function) are linear. In each of the high probability and in expectation settings, we propose an upper-confidence bound algorithm for the problem and prove a $T$-round regret bound for it. We also prove a lower-bound for this constrained problem, show how our algorithms and analyses can be extended to multiple constraints, and provide simulations to validate our theoretical results. In the high probability setting, we describe the minimum requirements for the action set for our algorithm to be tractable. In the setting that the constraint is in expectation, we specialize our results to multi-armed bandits and propose a computationally efficient algorithm for this setting with regret analysis. Finally, we extend our results to the case where the reward and cost functions are both non-linear. We propose an algorithm for this case and prove a regret bound for it that characterize the function class complexity by the eluder dimension.
\end{abstract}

\begin{keywords}
  multi-armed bandits, contextual bandits, constraints, safety, eluder dimension
\end{keywords}

\section{Introduction}
\label{sec:intro}

A {\em multi-armed bandit} (MAB)~\citep{lai85asymptotically,auer02finitetime,lattimore2018bandit} is an online learning problem in which the agent acts by pulling arms. After an arm is pulled, the agent receives its {\em stochastic reward} sampled from the distribution of the arm. The goal of the agent is to maximize its expected cumulative reward without knowledge of the arms' distributions. To achieve this goal, the agent has to balance its {\em exploration} and {\em exploitation}: to decide when to {\em explore} and learn about the arms, and when to {\em exploit} and pull the arm with the highest estimated reward thus far. A \emph{stochastic linear bandit}~\citep{dani08stochastic,rusmevichientong10linearly,abbasi2011improved} is a generalization of MAB to the setting where each of (possibly) infinitely many arms is associated with a feature vector. The mean reward of an arm is the dot product of its feature vector and an unknown parameter vector, which is shared by all the arms. This formulation contains time-varying action (arm) sets and feature vectors, and thus, includes the {\em linear contextual bandit} setting. These models capture many practical applications spanning clinical trials~\citep{Villar15MA}, recommendation systems~\citep{Li10CB,balakrishnan2018using}, wireless networks~\citep{maghsudi2016multi}, sensors~\citep{washburn2008application}, and strategy games \citep{ontanon2013combinatorial}. 
The most popular exploration strategies in stochastic bandits are {\em optimism in the face of uncertainty} (OFU) or {\em upper confidence bound} (UCB)~\citep{auer02finitetime} and {\em Thompson sampling} (TS)~\citep{thompson33likelihood,agrawal13further,abeille2017linear,russo18tutorial} that are relatively well-understood in both multi-armed and linear bandits~\citep{abbasi2011improved,agrawal13thompson}.

In many practical problems, the agent requires to satisfy certain operational constraints while maximizing its cumulative reward. Depending on the form of the constraints, several {\em constrained stochastic bandit} settings have been formulated and analyzed. One such setting is what is known as {\em knapsack bandits}. In this setting, pulling each arm, in addition to producing a reward signal, results in a random consumption of a global budget, and the goal is to maximize the cumulative reward before the budget is fully consumed (e.g.,~\citealt{badanidiyuru2013bandits,badanidiyuru2014resourceful,agrawal2014bandits,wu2015algorithms,agrawal2016linear}). Another such setting is referred to as {\em conservative bandits}. In this setting, there is a baseline arm or policy, and the agent, in addition to maximizing its cumulative reward, should ensure that at each round, its cumulative reward remains above a predefined fraction of the cumulative reward of the baseline~\citep{wu2016conservative,kazerouni2017conservative,Garcelon20IA}. In these two settings, the constraint is {\em history-dependent}, i.e.,~it applies to a cumulative quantity, such as budget consumption or reward, over the entire run of the algorithm. Thus, the set of feasible actions at each round is a function of the history of the algorithm. 

Another constrained bandit setting is where each arm is associated with two (unknown) distributions, generating reward and cost signals. The goal is to maximize the cumulative reward, while making sure that with {\em high probability}, the expected cost of the arm pulled at each round is below a certain threshold. Here the constraint is {\em stage-wise}, and unlike the last two settings, is independent of the history. This setting has many applications, for example, a recommendation system should not suggest an item to a customer that despite high probability of click (high reward) reduces their watch-time or their chance of coming back to the website (bounded cost), or a drug that may help with a certain symptom (high reward) should not have too many side-effects (bounded cost). Another example is a company whose goal is to optimize its app’s strategy for sending notification to its customers. Here the reward signal is often related to the customer’s engagement with the app, and the cost signal depends on the probability that the customer gets tired of the notifications and opt out. Thus, the goal is to derive a strategy that while maximizes customer’s engagement with the app, keeps the churn below a certain threshold. It is important to note that the reward and cost in this setting can be viewed as different objectives according to which a recommendation or a medical diagnosis system or an app's notification strategy are evaluated. 

\citet{amani2019linear} and~\citet{Moradipari19SL} studied this setting for linear bandits and derived and analyzed {\em explore-exploit}~\citep{amani2019linear} and {\em Thompson sampling}~\citep{Moradipari19SL} algorithms for it. We start the paper by studying the same setting for contextual linear bandits. After defining the setting in Section~\ref{sec:setting}, we propose a {\em UCB-style} algorithm for it, called Linear Constraint Linear UCB (LC-LUCB), in Section~\ref{sec:algo_high_prob}. We prove a $T$-round regret bound for LC-LUCB in Sections~\ref{sec:analysis}, which clearly identifies the main components that control the hardness of this problem. We also prove a lower-bound for this setting in Section~\ref{subsec:lower-bound}, show how this setting can be extended to multiple constraints (multiple cost distributions for each arm) in Section~\ref{sec:multi-constraints}, and report experimental results as a proof of concept for LC-LUCB in Section~\ref{section::experiments}. We provide a detailed comparison between our results and those in~\citet{amani2019linear} and~\citet{Moradipari19SL} in Section~\ref{sec:related-work}. 

We then switch to a slightly different setting in Section~\ref{section::expectation_constraints} in which we relax the high probability constraint and replace it with a constraint in expectation. High probability constraints are often quite restrictive and result in overly conservative strategies. This is why in many applications we may want to relax them to obtain strategies with higher expected cumulative reward. We describe this relaxed setting in Section~\ref{subsec:expectation-setting} and propose an algorithm for it, called Optimistic-Pessimistic Linear Bandit (OPLB) (Section~\ref{sec:algo_expectation}), with regret analysis (Section~\ref{section::lin_opt_pess_analysis}). We then specialize our results to multi-armed bandits (Section~\ref{sec:constrained-MAB}) and report experimental results as a proof of concept for OPLB (Section~\ref{sec:experiments}). Finally in Section~\ref{section::nonlinear_rewards_costs}, we extend our results to the case where the reward and cost functions are non-linear. We propose an algorithm, called Optimistic Pessimistic Nonlinear Bandit (OPNLB), and prove a regret bound for it. We use a characterization of function class complexity based on the eluder dimension~\citep{russo2013eluder} in our regret bound. This part of the paper is an extension of our earlier work~\citep{pacchiano2020stochastic}. Here we improve the regret bounds reported in~\citet{pacchiano2020stochastic} for both contextual linear and multi-armed bandit settings to better show their dependence on the components that contribute to the hardness of the problem. We also show how our results can be extended to non-linear reward and cost(s).

\section{Problem Formulation}
\label{sec:setting}

{\bf Notation.} We adopt the following notation throughout the paper. We denote by $\langle x,y \rangle=x^\top y$ and $\langle x,y \rangle_\mathbf{A}=x^\top \mathbf{A}y$, for a positive definite matrix $\mathbf{A}\in\mathbb R^{d\times d}$, the inner-product and weighted inner-product of vectors $x,y\in\mathbb R^d$. Similarly, we denote by $\|x\|=\sqrt{x^\top x}$ and $\|x\|_\mathbf{A}=\sqrt{x^\top \mathbf{A}x}$, the $\ell_2$ and weighted $\ell_2$ norms of vector $x\in\mathbb R^d$. For any square matrix $\mathbf{A}$, we denote by $\mathbf{A}^\dagger$, its Moore-Penrose pseudo-inverse. We represent the set of distributions with support over a compact set $\mathcal S$ by $\Delta_{\mathcal S}$. We use upper-case letters for random variables (e.g.,~$X$), and their corresponding lower-case letters for a particular instantiation of that random variable (e.g.,~$X=x$). The set $\{1,\ldots,T\}$ is denoted by $[T]$. Finally, we use $\widetilde{\mathcal O}$ for the big-$\mathcal O$ notation up to logarithmic factors. 

We study the following {\em constrained contextual linear bandit} setting in this paper. In each round $t\in[T]$, the agent (also referred to as learner) is given a decision set $\mathcal A_t\subset{\mathbb R}^d$ from which it has to choose an action $x_t$. Upon taking an action $X_t\in\mathcal A_t$, the agent observes a pair $(R_t,C_t)$, where $R_t=\langle X_t,\theta_*\rangle + \xi^r_t$ and $C_t=\langle X_t,\mu_*\rangle + \xi^c_t$ are the reward and cost signals, respectively. In the reward and cost definitions, $\theta_*\in{\mathbb R}^d$ and $\mu_*\in{\mathbb R}^d$ are the unknown {\em reward} and {\em cost parameters}, and $\xi^r_t$ and $\xi^c_t$ are reward and cost noise, satisfying conditions that will be specified in Assumption~\ref{ass:noise-sub-gaussian}. The agent aims to maximize its {\em expected $T$-round reward}, i.e.,~$\sum_{t=1}^T\langle X_t,\theta_*\rangle$, while is required to satisfy a {\bf\em stage-wise linear constraint}, i.e.,~$\langle X_t,\mu_*\rangle \leq \tau,\;\forall t\in[T]$, with high probability. The {\em constraint threshold} $\tau \geq 0$ is a positive constant that is known to the agent.

Because of the constraint, in each round $t$, the agent should pull an arm from the set of {\em feasible actions} in that round, i.e.,~$\mathcal A_t^f=\{x\in\mathcal A_t:\langle x,\mu_*\rangle\leq\tau\}$. Of course this set is unknown, because the agent does not know the cost parameter $\mu_*$. Maximizing the expected $T$-round reward is equivalent to minimizing the {\em expected $T$-round (constrained) (pseudo)-regret}, i.e.,
\begin{equation}
\label{equation::regret_definition}
\mathcal{R}_\mathcal{C}(T) = \sum_{t=1}^T\langle x^*_t,\theta_* \rangle - \langle X_t,\theta_* \rangle = \sum_{t=1}^T\langle x^*_t - X_t,\theta_* \rangle,
\end{equation}
where $x_t^*$ is the {\em optimal feasible action} in round $t$, i.e.,~$x_t^*\in\argmax_{x\in\mathcal A_t^f}\langle x,\theta_*\rangle$, and $X_t$ is the action taken by the agent in round $t$, which belongs to the set of feasible actions in that round, i.e.,~$X_t\in\mathcal A_t^f$, with high probability.

We make the following assumptions for our setting. The first four assumptions are standard in linear bandits and the fifth one is necessary for constraint satisfaction. 

\begin{assumption}[sub-Gaussian noise]
\label{ass:noise-sub-gaussian}
For all $t\in[T]$, the reward and cost noise random variables $\xi_t^r$ and $\xi_t^c$ are conditionally $R$-sub-Gaussian, i.e.,~for all $\alpha\in\mathbb R$,
\begin{align*}
&\mathbb{E}[\xi_t^r \mid \mathcal{H}_{t-1}] = 0, \quad \mathbb{E}[\exp(\alpha \xi_t^r) \mid \mathcal{H}_{t-1}] \leq \exp(\alpha^2 R^2/2), \\
&\mathbb{E}[\xi_t^c \mid \mathcal{H}_{t-1}] = 0, \quad \mathbb{E}[\exp(\alpha \xi_t^c) \mid \mathcal{H}_{t-1}] \leq \exp(\alpha^2 R^2/2),
\end{align*}
where $\mathcal H_t$ is the filtration that includes all the events $(R_{1:t}, C_{1:t},\xi^r_{1:t},\xi^c_{1:t})$ until the end of round $t$. %
\end{assumption}

\begin{assumption}[bounded parameters]
\label{ass:bounded-reward-cost-param}
There is a known constant $S > 0$, such that $\|\theta_*\| \leq S$ and $\|\mu_*\| \leq S$.\footnote{The choice of the same upper-bound $S$ for both $\theta_*$ and $\mu_*$ is just for simplicity and convenience.}
\end{assumption}
\begin{assumption}[bounded actions]
\label{ass:bounded-action}
The $\ell_2$-norm of all actions are bounded by $L > 0$, i.e.,
\begin{equation*}
\max_{t\in[T]}\;\max_{x \in \mathcal{A}_t}\;\|x\| \leq L.   
\end{equation*}
\end{assumption}
\begin{assumption}[bounded rewards and costs]
\label{ass:bounded-mean-reward-cost}
For all $t\in[T]$ and $x \in \mathcal{A}_t$, the mean rewards and costs are bounded, i.e.,~$\langle x, \theta_* \rangle \in [0,1]$ and $\langle x, \mu_* \rangle \in [0,1]$.
\end{assumption}
\begin{assumption}[safe action]
\label{ass:safe-action}
There is a known safe action $x_0\in\mathcal A_t,\;\forall t\in[T]$, with known %
cost $c_0$, i.e.,~$\langle x_0, \mu_* \rangle = c_0 < \tau$, and known reward $r_0$.
\end{assumption}

Knowing a safe action $x_0$ is absolutely necessary for solving the constrained contextual linear bandit problem studied in this paper, because it requires the constraint to be satisfied from the very first round. However, the assumption of knowing its expected reward $r_0$ and cost $c_0$ can be relaxed. We can think of the safe action as a baseline policy, the current strategy (e.g.,~resource allocation) of a company that is safe (i.e.,~its cost $c_0<\tau$) and has a reasonable performance (i.e.,~its reward $r_0$ is not low). In this case, it makes sense to assume that the reward $r_0$ and cost $c_0$ of this action (policy) are both known. We will discuss how our proposed algorithms will change if $r_0$ and $c_0$ are unknown in Sections~\ref{sec:algo_high_prob} and~\ref{sec:algo_expectation}, and Appendix~\ref{section::unknown_c0}. 

We will show later that the difficulty of solving the above constrained bandit problem is directly related to the quality of the safe action $x_0$, more specifically to its {\em safety gap} and {\em sub-optimality}.

\begin{definition}[safety gap \& sub-optimality]
\label{def:safety-suboptimality}
The safety gap and sub-optimality of a safe action $x_0$ quantify how close its cost $c_0$ and reward $r_0$ are to the constraint threshold $\tau$ and the maximum achievable reward $1$, and are defined as $(\tau - c_0)$ and $(1-r_0)$, respectively.
\end{definition}

\noindent
{\bf Notation.} We conclude this section with introducing another set of notations that will be used in describing our algorithms and their analyses. We define the normalized safe action as $e_0:=x_0/\|x_0\|$ and the span of the safe action as $\mathcal{V}_o := \mathrm{span}(x_0) =\{\eta x_0 : \eta\in\mathbb R\}$. We denote by $\mathcal{V}_o^{\perp}$, the orthogonal complement of $\mathcal{V}_o$, i.e.,~$\mathcal{V}_o^{\perp}=\{x\in\mathbb R^d : \langle x,y\rangle=0,\;\forall y\in\mathcal{V}_o\}$.\footnote{In the case of $x_0 = \mathbf{0}\in\mathbb R^d$, we define $\mathcal{V}_o$ as the empty subspace and $\mathcal{V}_o^\perp$ as the entire $\mathbb{R}^d$.} We define the projection of a vector $x\in\mathbb R^d$ into the subspace $\mathcal{V}_o$, as $x^o:=\langle x,e_0\rangle e_0$, and into the subspace $\mathcal{V}_o^{\perp}$, as $x^{o,\perp} := x - x^o$.  %

\section{Related Work}
\label{sec:related-work}

As described in Section~\ref{sec:intro}, the high probability constraint satisfaction setting that we study in Section~\ref{section::high_probability_constraints} is similar to the one in~\cite{Moradipari19SL} and~\cite{amani2019linear}.~\cite{Moradipari19SL} propose a Thompson sampling (TS) algorithm for this setting and prove an $\widetilde{O}(d^{3/2}\sqrt{T}/\tau)$ regret bound for it. Our algorithm is UCB-style and our regret bound is $\widetilde{\mathcal{O}}((1+\frac{1-r_0}{\tau-c_0})d\sqrt{T})$, which not only has a better dependence on $d$, but also clearly identifies the ratio between the sub-optimality $(1-r_0)$ of the safe action $x_0$ and its safety gap $(\tau - c_0)$ as the measure of hardness for the problem. The $\widetilde{\mathcal O}(\sqrt{d})$ advantage to their bound is similar to the best regret results for UCB vs.~TS. Moreover, they restrict themselves to linear bandits, i.e.,~$\mathcal A_t=\mathcal A,\forall t\in[T]$, and define their action set to be any convex compact subset of $\mathbb R^d$ that contains the origin. Therefore, they restrict their {\em ``known"} safe action to be the origin, $x_0=\mathbf{0}$, with the {\em ``known"} cost $c_0=0$. This is why $c_0$ does not appear in their bounds. Although later in their proofs, to guarantee that their algorithm does not violate the constraint in the first round, they require the action set to contain the ball with radius $\tau/S$ around the origin. Hence, our setting and action set are more general than theirs. We also prove a lower-bound for the problem and show how our algorithm and analysis can be extended to multiple constraints and to the case when the reward and cost of the safe action are unknown. Finally, unlike us, their action set does not allow their results to be immediately applicable to MAB. However, their algorithm is TS, and thus, is less complex than ours. Although it can still be intractable, even when the action set $\mathcal A$ is convex, as we can see they require several approximations in their experiments. Unlike them, we describe the minimum requirements on the action set in order for our algorithm to be tractable.

\citet{amani2019linear} propose an explore-exploit algorithm for a slightly different setting than ours, in which reward and cost have the same unknown parameter $\theta_*$, and the constraint is defined as $c_t=x_t^\top B\theta_*\leq\tau$, for a {\em ``known"} matrix $B$. They prove a regret bound of $\widetilde{\mathcal O}(T^{2/3})$ for their algorithm. Although our algorithm has a better regret rate $\widetilde{\mathcal O}(\sqrt{T})$, it cannot immediately give the same rate for the setting studied in~\citet{amani2019linear}, except in special cases, such as when all $\mathcal{A}_t$ are convex and $B = I$.

Several authors have extended the constrained problem studied in this paper to other constrained bandit settings.~\citet{Chen22DO} modified the constraint to cumulative and obtained an $o(T)$ bound for cumulative constraint violation while obtaining an $\mathcal O(\log(T)^2)$ instance-dependent bound for the cumulative regret. Other extensions include to anytime cumulative constraints~\citep{Liu21EP}, kernel setting~\citep{Zhou22KM}, best arm identification~\citep{Wang22BA}, and online convex optimization~\citep{Chaudhary22SO}. 

Our stage-wise constrained bandit problem has also been extended to reinforcement learning (RL) where the goal is to find a policy with maximum expected cumulative reward while the learner is required to keep the expected cumulative cost below a threshold at every single round. Here the learner has access to a safe policy that can be deployed while the learner does not have sufficient knowledge of the safety constraint. This RL setting has been studied in tabular~\citep{efroni2020exploration,liu2021learning,Wei21PE,Bura22DD} and linear~\citep{Ding21PE,Ghosh22PE} MDPs. It is notable that~\citet{liu2021learning} make use of the optimism-pessimism principle that we developed in our earlier work~\citep{pacchiano2020stochastic} and used in the analysis of this paper. Their result is a direct extension of ours to constrained RL. 

\citet{amani2021safe} extended this constrained RL setting to per-step (from per-round) constraints, i.e.,~the expected cost of the action taken at every visited state should be below a threshold. The key idea is that some actions are unsafe and need to be avoided at every step. Here the assumption is the access to a safe action whose expected cost is below the threshold. They use the same geometric conditions that we impose in the analysis of our LC-LUCB algorithm (see Definition~\ref{definition::star_convex}) to ensure that knowledge of a safe action is sufficient for safe exploration. Moreover, their algorithmic techniques rely heavily on our optimism-pessimism principle. \citet{Shi23NO} later extended the per-step constrained RL work of~\citet{amani2021safe} to the case where some state/action combinations are unsafe.

\subsection{A Summary of our Results}

In this paper, we introduce several algorithms for constrained linear bandits in high-probability and in-expectation settings. The learner's objective is to achieve low regret while playing actions that satisfy a cost constraint. An action (or policy) is safe if its expected cost is upper-bounded by a known cost threshold $\tau$. In order to achieve this, the learner has access to a safe arm $x_0$ that belongs to all contexts, and has an expected reward $\langle x_0, \theta_*\rangle  = r_0$ and an expected cost $\langle x_0, \mu_*\rangle = c_0$ satisfying the constraint $c_0 < \tau$ (see Assumption~\ref{ass:safe-action}). All our algorithms satisfy a regret bound of order $\mathcal{O}\left( \frac{1-r_0}{\tau-c_0} d\sqrt{T}\right)$ (ignoring logarithmic factors). In contrast, previous approaches such as Safe-TS satisfy a regret bound of order $\mathcal{O}\left( \frac{1}{\tau} d^{3/2}\sqrt{T}    \right)$ for problems where $c_0 = 0$. In the following table, we compare and contrast our algorithms with Safe-TS. We also highlight the requirements that our approaches require for computational tractability.

\begin{table}[ht]
    \centering
    \begin{tabular}{|c|c|c|c|}
        \hline
        \textbf{Algorithm} & \textbf{Contextual} & \textbf{Action Space} & \textbf{$x_0 = \mathbf{0}$}  \\
        \hline
         Safe-LTS~\citep{Moradipari19SL} & {\color{red} X}  & Convex and Compact & {\color{green} \checkmark} \\             LC-LUCB (Algorithm~\ref{alg::linear_optimism_pessimism})  & {\color{red} \checkmark}  & Star Convex & {\color{red} X}  \\
      OPLB (Algorithm~\ref{alg:optimistic-pessimistic-LB}) &  {\color{green} \checkmark}   & Arbitrary & {\color{red} X}\\
    OPB (Algorithm~\ref{alg::optimism_pessimism}) & {\color{green} \checkmark}   & Multi-Armed Bandits & {\color{red} X} \\
        \hline
    \end{tabular}
    \label{tab:results}
\end{table}

Safe-LTS is not adapted to contextual scenarios and requires the action space to be convex and compact, and to contain the safe action $x_0 = \mathbf{0}$. In contrast, our algorithms LC-LUCB and OPLB are adapted to the contextual scenario. LC-LUCB achieves high probability guarantees and is tractable when the contexts are finite star-convex centered around the safe action $x_0$. In contrast, the OPLB algorithm achieves in-expectation guarantees and is not tractable for general context spaces. Finally, the OPB algorithm attains in-expectation guarantees in multi-armed bandit problems. Note that the OPB policies can be computed by solving a linear program.    

\begin{table}[ht]
    \centering
    \begin{tabular}{|c|c|c|c|}
        \hline
        \textbf{Algorithm} &   \textbf{Regret Bound} & \textbf{Tractability} \\
        \hline
         Safe-LTS~\citep{Moradipari19SL} & $\mathcal{O}\left( \frac{1}{\tau} d^{3/2}\sqrt{T}    \right)$ & {\color{green} \checkmark} \\             LC-LUCB (Algorithm~\ref{alg::linear_optimism_pessimism})  &  $\mathcal{O}\left( \frac{1-r_0}{\tau-c_0} d\sqrt{T}    \right)$& Finite Star-Convex  \\
      OPLB (Algorithm~\ref{alg:optimistic-pessimistic-LB}) &  
$\mathcal{O}\left( \frac{1-r_0}{\tau-c_0} d\sqrt{T}    \right)$ &  {\color{red} X} \\
    OPB (Algorithm~\ref{alg::optimism_pessimism}) & 
$\mathcal{O}\left( \frac{1-r_0}{\tau-c_0} \sqrt{KT}    \right)$ & Linear Program  \\
        \hline
    \end{tabular}
    \label{tab:results_one}
\end{table}

\section{High Probability Constraint Satisfaction} 
\label{section::high_probability_constraints}

As described in Section~\ref{sec:setting}, we study a {\em contextual linear bandit} setting in which each action (arm) is associated with two distributions, generating reward $R_t=\langle X_t,\theta_*\rangle + \xi_t^r$ and cost $C_t=\langle X_t,\mu_*\rangle + \xi_t^c$ signals. The agent aims to maximize its {\em expected cumulative reward} in $T$ rounds, i.e.,~$\sum_{t=1}^T\langle X_t,\theta_*\rangle$, while is required to satisfy the {\bf\em stage-wise linear constraint}
\begin{equation}
\label{eq:high-prob-constraint}
\langle X_t,\mu_* \rangle \leq \tau, \quad \forall t\in[T],
\end{equation}
with probability at least $1-\delta$. The agent knows the constraint threshold $\tau\geq 0$ and has access to a safe action $x_0\in\mathcal A_t$ with known cost $c_0 = \langle x_0, \mu_* \rangle < \tau$ and reward $r_0$ (Assumption~\ref{ass:safe-action}). 

\begin{remark}
\label{remark:extension-MAB}
It is important to note that the high probability constrained setting described above cannot be solved for multi-armed bandits (MABs). This is because there is no generalization among the arms/actions in MABs, and thus, we cannot have an estimate of the cost of an arm without pulling it, which may itself violate the constraint~\eqref{eq:high-prob-constraint}. In other words, pulling the safe action/arm, $x_0$, does not give us any information about the cost of the other arms in MABs. Thus, only interaction with decision sets $\mathcal A_t$ that allow for the safe exploration of progressively better actions may yield provable guarantees. We capture this intuition via a geometric condition on the decision sets $\mathcal A_t$ known as star-convexity. This is in contrast with the in-expectation constrained setting that we study in Section~\ref{section::expectation_constraints}, where it is possible to guarantee safety by playing a distribution over the arms. Extensions to reinforcement learning, such as in~\cite{amani2021safe}, follow the same in-expectation structure that we study in Section~\ref{section::expectation_constraints} and cannot be achieved in the high probability setting studied in this section.  
\end{remark}

\begin{definition}[star-convex set]
\label{definition::star_convex}
We call a set $\mathcal{A}$ star-convex around a point $x \in \mathcal{A}$ if for all other points $a \in \mathcal{A}$, the ray $[x,a ]$ (the line between $x$ and $a$) is in $\mathcal{A}$. When all action sets are star convex centered around $x_0$ the family of star-convex sets is rich enough to contain all convex sets (i.e.,~any convex set is star-convex). %
\end{definition}

Definition~\ref{definition::star_convex} subsumes the case where the action sets $\mathcal A_t$ are convex, and thus, assuming $\mathcal A_t$'s are star-convex is weaker than assuming that they are convex. In this section, we make the following assumption:
\begin{assumption}\label{assumption::star_convex_all_sets}
    All action sets $\mathcal{A}_t$ are star convex centered around the safe action $x_0$.
\end{assumption}

Here we first propose an algorithm for the high probability {\em contextual linear bandit} setting described above. We provide its regret analysis under Assumption~\ref{assumption::star_convex_all_sets}, prove a lower-bound for it, discuss how this setting can be extended to multiple constraints, and finally conclude with a set of experimental results as a proof of concept.

\subsection{Algorithm}
\label{sec:algo_high_prob}

Let $\{X_s\}_{s=1}^t$ be the sequence of actions played by the agent up to time $t$, and $\{ R_s = \langle X_s, \theta_* \rangle + \xi_s^r \}_{s=1}^t$ and $\{ C_s =  \langle X_s, \mu_* \rangle + \xi_s^c\}_{s=1}^t$ be the rewards and costs it observes in the same duration. Since the agent knows the cost of the safe action, i.e.,~$c_0=\langle x_0, \mu_* \rangle$, it can compute the (random) cost incurred by $X_t$ along the subspace $\mathcal{V}_o^\perp$, i.e.,~$C^{\perp}_t = C_t - \frac{\langle X_{t}, e_0\rangle c_0 }{\|x_0\|}$. The knowledge of $c_0$ allows us to build a (regularized) least-squares estimator for $\mu_*$ without estimating it along the $e_0$ direction (recall $x_0 = \| x_0 \|e_0$). For any regularization parameter $\lambda > 0$, we define the regularized covariance matrix in round $t$ as %
\begin{equation}
\label{eq:Sigmas}
\Sigma_t = \lambda \mathrm{I} + \sum_{s=1}^{t-1} X_s X_s^\top, \qquad\qquad \Sigma_t^{o, \perp} = \lambda \mathrm{I}^{o, \perp}+ \sum_{s=1}^{t-1} X_s^{o, \perp}(X_s^{o, \perp})^\top\!,
\end{equation}
where $\mathrm{I}^{o, \perp} = \mathrm{I}-e_0e^\top_0$, and $\Sigma_t$ and $\Sigma_t^{o,\perp}$ are the Gram matrices of the actions and projection of actions into the sub-space $\mathcal V_o^\perp$, respectively. Using~\eqref{eq:Sigmas}, we define the regularized least-squares estimates $\widehat{\theta}_t$ and $\widehat{\mu}^{o, \perp}_t$ of the reward $\theta_*$ and cost $\mu_*^{o,\perp}$ parameters as
\begin{equation}
\label{eq:theta-mu}
\widehat{\theta}_t = \Sigma_t^{-1} \sum_{s=1}^{t-1} R_s X_s, \qquad\qquad \widehat{\mu}^{o, \perp}_t = (\Sigma_t^{o, \perp})^\dagger \sum_{s=1}^{t-1} C^\perp_s X_s^{o,\perp}.  
\end{equation}
To define high probability confidence sets %
around estimators $\widehat{\theta}_t$ and $\widehat{\mu}^{o, \perp}_t$, and to capture how far they are from $\theta_*$ and $\mu_*^{o ,\perp}$, we make use of Theorem~2 in~\citet{abbasi2011improved}. These confidence sets, and in particular their radii, will play an important role in our algorithm. 
\begin{theorem}[Thm.~2 in~\citealp{abbasi2011improved}]
\label{theorem::yasin_theorem}
For a fixed $\delta \in (0,1)$ and 
\begin{equation*}
\beta_t(\delta, d) = R \sqrt{ d \log\left(\frac{ 1 + (t-1)L^2/\lambda}{\delta} \right)} + \sqrt{\lambda}S, \qquad \forall t\in[T],
\end{equation*}
it holds with probability (w.p.) at least $1-\delta$ that %
\begin{equation*}
\| \widehat{\theta}_t - \theta_* \|_{\Sigma_t} \leq \beta_t(\delta,d), \qquad\qquad \| \widehat{\mu}^{o, \perp}_t - \mu_*^{o, \perp} \|_{\Sigma^{o, \perp}_t} \leq \beta_t(\delta, d-1).
\end{equation*} 
\end{theorem}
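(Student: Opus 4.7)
The plan is to derive both inequalities as consequences of the self-normalized concentration bound of \citet{abbasi2011improved} (their Theorem~2). The first inequality is an immediate invocation of that theorem, so the substance of the argument is reducing the second inequality to the same statement applied in a $(d-1)$-dimensional subspace.

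For the first inequality I would simply cite Theorem~2 of \citet{abbasi2011improved} applied to the regression $R_t = \langle X_t, \theta_*\rangle + \xi_t^r$: features $\{X_s\}$ have $\ell_2$-norm at most $L$ by Assumption~\ref{ass:bounded-action}, $\|\theta_*\|\le S$ by Assumption~\ref{ass:bounded-reward-cost-param}, the noise is conditionally $R$-sub-Gaussian by Assumption~\ref{ass:noise-sub-gaussian}, and the regularization is $\lambda$. The regularized least-squares estimator and its Gram matrix are exactly $\widehat{\theta}_t$ and $\Sigma_t$ from~\eqref{eq:theta-mu}--\eqref{eq:Sigmas}, so Theorem~2 delivers the advertised radius $\beta_t(\delta,d)$ in ambient dimension $d$.

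For the second inequality I would first rewrite the centered cost as a linear regression living in $\mathcal{V}_o^\perp$. Using $X_t = X_t^o + X_t^{o,\perp}$ with $X_t^o=\langle X_t,e_0\rangle e_0$ and the identity $\langle e_0,\mu_*\rangle = c_0/\|x_0\|$ (from $\langle x_0,\mu_*\rangle=c_0$), one gets
\begin{equation*}
C_t^\perp \;=\; C_t - \tfrac{\langle X_t,e_0\rangle c_0}{\|x_0\|} \;=\; \langle X_t,\mu_*\rangle - \langle X_t,e_0\rangle\langle e_0,\mu_*\rangle + \xi_t^c \;=\; \langle X_t^{o,\perp},\mu_*^{o,\perp}\rangle + \xi_t^c,
\end{equation*}
so the projected costs obey a standard linear model whose features $X_t^{o,\perp}$ and unknown parameter $\mu_*^{o,\perp}$ both live in the $(d-1)$-dimensional subspace $\mathcal{V}_o^\perp$. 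The regularized least-squares solution in this subspace, with regularizer $\lambda\mathrm{I}^{o,\perp}$, is precisely $\widehat{\mu}_t^{o,\perp}$ from~\eqref{eq:theta-mu} (the pseudo-inverse appears because $\Sigma_t^{o,\perp}$ annihilates $e_0$).

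I would then identify $\mathcal{V}_o^\perp$ with $\mathbb{R}^{d-1}$ via an orthonormal basis: under this identification $\Sigma_t^{o,\perp}$ becomes a genuinely positive-definite $(d-1)\times(d-1)$ regularized covariance of exactly the form required by \citet{abbasi2011improved}, the noise $\xi_t^c$ remains conditionally $R$-sub-Gaussian, $\|X_t^{o,\perp}\|\le\|X_t\|\le L$ since orthogonal projection is non-expansive, and $\|\mu_*^{o,\perp}\|\le\|\mu_*\|\le S$. Applying Theorem~2 in this reduced problem yields the bound with radius $\beta_t(\delta,d-1)$, and the weighted norm $\|\cdot\|_{\Sigma_t^{o,\perp}}$ transports back because it agrees on $\mathcal{V}_o^\perp$ with the norm in the basis representation. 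The main subtlety to handle carefully is the dimension bookkeeping around the pseudo-inverse—verifying that $\widehat{\mu}_t^{o,\perp}$ as defined via $(\Sigma_t^{o,\perp})^\dagger$ coincides with ordinary regularized least squares inside $\mathcal{V}_o^\perp$, and that the confidence ellipsoid in the smaller space pulls back correctly to the ambient one. If a joint guarantee is desired one can union-bound the two events with $\delta/2$ each, absorbing the factor of $2$ inside the logarithm.
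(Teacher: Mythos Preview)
Your proposal is correct and in fact more thorough than the paper itself: the paper does not prove this statement but simply attributes it to Theorem~2 of \citet{abbasi2011improved}, treating both inequalities as direct consequences of that result. Your reduction of the second inequality to a $(d-1)$-dimensional instance of the same self-normalized bound---via the identity $C_t^\perp = \langle X_t^{o,\perp},\mu_*^{o,\perp}\rangle + \xi_t^c$ and the observation that all the hypotheses (bounded features, bounded parameter, sub-Gaussian noise) are inherited under orthogonal projection---is exactly the argument that makes the paper's citation legitimate, and your remarks about the pseudo-inverse and the union bound are the right bookkeeping points to flag.
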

Using Theorem~\ref{theorem::yasin_theorem}, we now define the following confidence sets (ellipsoids):
\begin{equation}
\label{eq:assymetric-confidence-sets}
\begin{split}
\mathcal{C}_t^r(\alpha_r ) &= \{ \theta \in \mathbb{R}^d: \| \theta -\widehat{\theta}_t \|_{\Sigma_t} \leq \alpha_r \beta_t(\delta,d) \}, \\
\mathcal{C}_t^c(\alpha_c) &= \{ \mu \in \mathcal{V}_0^{ \perp} : \| \mu - \widehat{\mu}^{o, \perp}_t\|_{\Sigma^{o,\perp}_t} \leq \alpha_c \beta_t(\delta, d-1) \},
\end{split}
\end{equation}
around the estimates $\widehat{\theta}_t$ and $\widehat{\mu}^{o, \perp}_t$ with {\em scaling parameters} $\alpha_r,\alpha_c \geq 1$. It is important to note that these confidence sets are {\em asymmetrically scaled}, i.e.,~their radii have been scaled with different scaling parameters. %
Theorem~\ref{theorem::yasin_theorem} suggests that $\theta_*\in \mathcal{C}_t^r(\alpha_r)$ and $\mu_*^{o,\perp}\in\mathcal{C}_t^c(\alpha_c)$, each with probability at least $1-\delta$. 

\begin{algorithm}[t]
\caption{Linear Constraint Linear UCB (LC-LUCB)}
\label{alg::linear_optimism_pessimism}
\begin{algorithmic}[1]
\STATE {\bfseries Input:} Safe action $x_0$ with reward $r_0$ and cost $c_0$; $\;$ Constraint threshold $\tau\geq 0$; $\;$ Scaling parameters $\alpha_r, \alpha_c \geq 1$ 
\FOR{$t=1, \ldots , T$}
\STATE Observe star-convex $\mathcal{A}_t$ and build the estimated feasible action set $\widetilde{\mathcal{A}}^f_t$ using~\eqref{eq:pessimistic-cost} and~\eqref{equation::safe_set}
\STATE Compute action $X_t = \arg\max_{x \in \widetilde{\mathcal{A}}^f_t} \widetilde{V}_t^r(x)\;\;\;$ {\em (see~\eqref{equation::selecting_reward_maximizing_action} and~\eqref{equation::v_t_r} for the definition of $\widetilde{V}_t^r$)}
\STATE Take action $X_t$ and observe reward and cost signals $(R_t,C_t)$
\ENDFOR
\end{algorithmic}
\end{algorithm}

Algorithm~\ref{alg::linear_optimism_pessimism} contains the pseudo-code of our upper confidence bound (UCB) algorithm, which we call Linear Constraint Linear UCB (LC-LUCB). Our algorithm leverages the asymmetrically scaled confidence sets in~\eqref{eq:assymetric-confidence-sets} to appropriately balance its optimism about rewards and pessimism about costs. LC-LUCB starts by constructing a feasible (safe) action set $\widetilde{\mathcal A}_t^f$ from the original action set $\mathcal{A}_t$. In each round $t$, this is done by first computing a {\bf\em pessimistic cost value} for an action $x$ as
\begin{equation}
\label{eq:pessimistic-cost}
\widetilde{V}_t^c(x)  =\underbrace{ \frac{ \langle x^o, e_0\rangle c_0}{\| x_0 \|} }_{\text{known cost along $e_0$}} + \underbrace{\max_{\mu^{o, \perp} \in \mathcal{C}_t^c(\alpha_c)} \langle x^{o, \perp}, \mu^{o, \perp} \rangle }_{\text{max possible cost in $\mathcal{V}_o^{\perp}$}}.
\end{equation}
Note that the known cost of $x$ along $e_0$ equals $\frac{\langle x^o, e_0 \rangle c_0}{\| x_0 \|} $, since $\frac{c_0}{\|x_0 \|}$ is the unit cost in direction $e_0$. Whenever the confidence interval $\mathcal{C}_t^c(\alpha_c)$ holds, $\widetilde{V}_t^c(x)$ overestimates the cost of action $x$ (pessimistic). The feasible action set constructed by LC-LUCB in round $t$, i.e.,~$\widetilde{\mathcal{A}}^f_t$, contains all actions whose pessimistic cost value $\widetilde{V}_t^c(\cdot)$ is at most $\tau$, i.e.,
\begin{equation}
\label{equation::safe_set}
\widetilde{\mathcal{A}}^f_t = \big\{x\in\mathcal{A}_t : \widetilde{V}_t^c(x) \leq \tau\big\}.
\end{equation}
We construct $\widetilde{\mathcal{A}}^f_t$ pessimistically in order to ensure that all its actions are indeed feasible. It is important to note that  $\widetilde{\mathcal{A}}^f_t$ is always non-empty, since as a consequence of Assumption~\ref{ass:safe-action}, the safe action $x_0$ is always in $\widetilde{\mathcal{A}}^f_t$.

LC-LUCB then proceeds by playing optimistically w.r.t.~the reward signal, but only makes use of the feasible actions $x \in \widetilde{\mathcal{A}}^f_t$. In each round $t$, this is done by first computing an {\bf\em optimistic reward value} for every action $x\in\mathcal A_t$ as
\begin{equation}
\label{equation::selecting_reward_maximizing_action}
\widetilde{V}_t^r(x) = \max_{\theta \in \mathcal{C}_t^r(\alpha_r)} \langle x, \theta \rangle, 
\end{equation}
and then playing the arm $X_t$ that maximizes it over the feasible action set $\widetilde{\mathcal{A}}^f_t$ (see Lines~2 and~3 of Algorithm~\ref{alg::linear_optimism_pessimism}). The following proposition contains the closed-form expressions for the {\em pessimistic} cost and {\em optimistic} reward values defined by~\eqref{eq:pessimistic-cost} and~\eqref{equation::selecting_reward_maximizing_action}. 

\begin{proposition}
\label{PROP:OPTIMISTIC-REWARD-PESSIMISTIC-COST}
The optimistic reward and pessimistic cost values in~\eqref{equation::selecting_reward_maximizing_action} and~\eqref{eq:pessimistic-cost} can be written in closed-form as
\begin{align}
\label{equation::v_t_r}
\widetilde{V}_t^r(x) &=\langle x, \widehat{\theta}_t \rangle + \alpha_r \beta_t(\delta,d) \| x\|_{\Sigma_t^{-1}}, \\
\label{equation::v_t_c}
\widetilde{V}_t^c(x) &= \frac{\langle x^o, e_0\rangle c_0 }{\|x_0\|} + \langle x^{o, \perp}, \widehat{\mu}^{o, \perp}_t \rangle  + \alpha_c \beta_t(\delta,d-1) \| x^{o, \perp}\|_{(\Sigma_t^{o, \perp})^{-1}}. 
\end{align}
\end{proposition}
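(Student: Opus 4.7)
The plan is to compute each maximum directly by Cauchy–Schwarz in the appropriate weighted inner product; both identities are standard calculations once the geometry of the two confidence ellipsoids is set up carefully. The only subtlety is the pessimistic cost, because $\mathcal{C}_t^c(\alpha_c)$ lives in the subspace $\mathcal{V}_o^\perp$ and the associated Gram matrix $\Sigma_t^{o,\perp}$ is only invertible on that subspace (hence the pseudo-inverse notation).

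For the optimistic reward, I would parametrize $\theta = \widehat{\theta}_t + \nu$ with $\|\nu\|_{\Sigma_t} \le \alpha_r \beta_t(\delta,d)$. Then
\begin{equation*}
\langle x, \theta\rangle = \langle x, \widehat{\theta}_t\rangle + \langle \Sigma_t^{-1/2} x, \Sigma_t^{1/2}\nu\rangle \le \langle x, \widehat{\theta}_t\rangle + \|x\|_{\Sigma_t^{-1}}\,\|\nu\|_{\Sigma_t},
\end{equation*}
by the Cauchy–Schwarz inequality, with equality achieved by $\nu = \alpha_r\beta_t(\delta,d)\,\Sigma_t^{-1}x/\|x\|_{\Sigma_t^{-1}}$, which lies in $\mathcal{C}_t^r(\alpha_r)$. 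This yields~\eqref{equation::v_t_r}.

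For the pessimistic cost, I would first split $\widetilde{V}_t^c(x)$ according to the definition~\eqref{eq:pessimistic-cost}: the first summand $\langle x^o, e_0\rangle c_0 / \|x_0\|$ is already in the claimed form, so only the second summand requires work. Writing $\mu^{o,\perp} = \widehat{\mu}_t^{o,\perp} + \nu$ with $\nu \in \mathcal{V}_o^\perp$ and $\|\nu\|_{\Sigma_t^{o,\perp}} \le \alpha_c \beta_t(\delta,d-1)$, I would apply Cauchy–Schwarz in the inner product induced by $\Sigma_t^{o,\perp}$ restricted to $\mathcal{V}_o^\perp$, where this matrix is genuinely invertible. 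Since $x^{o,\perp}\in\mathcal{V}_o^\perp$, the pairing $\langle x^{o,\perp},\nu\rangle$ equals $\langle (\Sigma_t^{o,\perp})^{-1/2} x^{o,\perp},\, (\Sigma_t^{o,\perp})^{1/2}\nu\rangle$, bounded by $\|x^{o,\perp}\|_{(\Sigma_t^{o,\perp})^\dagger}\cdot\|\nu\|_{\Sigma_t^{o,\perp}}$. The equality case is realised by an analogous choice of $\nu$ in $\mathcal{V}_o^\perp$, giving~\eqref{equation::v_t_c}.

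The only potentially tricky step is handling the pseudo-inverse: I would note that on $\mathcal{V}_o^\perp$, the restriction of $\Sigma_t^{o,\perp}$ is positive definite (because $\lambda \mathrm{I}^{o,\perp}$ already is), so $(\Sigma_t^{o,\perp})^\dagger$ agrees with the ordinary inverse of that restriction and the Cauchy–Schwarz argument goes through in the standard way. This is routine but should be stated explicitly to justify writing $\|x^{o,\perp}\|_{(\Sigma_t^{o,\perp})^{-1}}$ in~\eqref{equation::v_t_c} — i.e., to confirm that this notation refers to the inverse on the subspace, equivalently the pseudo-inverse on $\mathbb{R}^d$, applied to a vector $x^{o,\perp}$ already in $\mathcal{V}_o^\perp$. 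Aside from this bookkeeping, no further ideas are needed.
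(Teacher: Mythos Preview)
Your proposal is correct and follows essentially the same approach as the paper: parametrize $\theta=\widehat{\theta}_t+\nu$ (resp.\ $\mu^{o,\perp}=\widehat{\mu}_t^{o,\perp}+\nu$), apply Cauchy--Schwarz in the $\Sigma_t$-weighted inner product to get the upper bound, and exhibit the explicit maximizer $\nu^\ast=\alpha_r\beta_t(\delta,d)\,\Sigma_t^{-1}x/\|x\|_{\Sigma_t^{-1}}$ to show the bound is attained. The paper only spells out the reward case and declares the cost case analogous; your additional remark that $\Sigma_t^{o,\perp}$ is positive definite on $\mathcal{V}_o^\perp$ (so the pseudo-inverse coincides with the ordinary inverse there) is a welcome clarification that the paper omits.
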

\begin{proof}
See Appendix~\ref{appendix:section-high-prob}.
\end{proof}
The leading term $\frac{\langle x^o, e_0\rangle c_0}{\| x_0 \|}$ in~\eqref{equation::v_t_c} accounts for the knowledge of $\mu_*^{o,\perp}$ derived from the information we possess about the safe action $x_0$ and its cost $c_0$. %
Later we use~\eqref{equation::v_t_r} and~\eqref{equation::v_t_c} to derive a computationally efficient implementation of Algorithm~\ref{alg::linear_optimism_pessimism} for a specific form of the action sets $\{\mathcal{A}_t\}_{t=1}^T$. %

\begin{remark}[unknown $r_0$ and $c_0$] As discussed in Section~\ref{sec:setting}, knowing a safe action $x_0$ is absolutely necessary for solving the constrained contextual linear bandit setting studied in this paper, otherwise, it would be impossible to satisfy the constraint from the very first round. However, we can relax the assumption of knowing the reward $r_0$ and cost $c_0$ of the safe arm. In this case, we start by playing $x_0$ for $T_0$ rounds in order to construct conservative estimates $\widehat{\Delta}_r$ and $\widehat{\Delta}_c$ of the quantities $1-r_0$ and $\tau - c_0$ that satisfy $\widehat{\Delta}_r \geq \frac{1-r_0}{2}$ and $\widehat{\Delta}_c \geq \frac{\tau - c_0}{2}$. We then warm-start our estimators for $\theta_*$ and $\mu_*$ using the data collected by playing $x_0$ and instead of only estimating $\mu_*^{o, \perp}$, we build an estimator for $\mu_*$ over all its directions, including $e_0$, just as LC-LUCB already does for $\theta_*$. Finally, we set $\frac{\alpha_r}{\alpha_c} = \frac{\widehat{\Delta}_r}{\widehat{\Delta}_c}$ and run Algorithm~\ref{alg::linear_optimism_pessimism} for rounds $t > T_0$. The regret incurred during these first $T_0$ rounds can be upper bounded by $\mathcal{O}\left( \log(T/\delta)\max\left(\frac{1-r_0}{(\tau - c_0)^2},\frac{1}{1 - r_0} \right)\right)$. We report the details of this modification of LC-LUCB in Appendix~\ref{section::unknown_c0}. 
\end{remark}

\subsubsection{Computational Tractability of LC-LUCB}
\label{subsubsection:LC-LUCB-Comp-Complexity}

As described above, each round of LC-LUCB involves computing a feasible action set followed by selecting an action that maximizes a linear function over this set. Unfortunately, even if the action set $\mathcal A_t$ is convex, the feasible set $\widetilde{\mathcal A}_t^f$ can have a form for which maximizing the linear function is intractable.\footnote{Note that even in unconstrained linear bandits, the optimization problem that needs to be solved in each round of OFU-style algorithms (e.g.,~\citealt{abbasi2011improved}) can be intractable even when the set is convex. This is because the problem of maximizing a quadratic form over a convex set can be hard in general.} Here we show (see Lemma~\ref{lemma:star-convex}) that whenever the action set $\mathcal{A}_t$ is {\em star-convex} and {\em finite}, (see Definition~\ref{definition::star_convex}), the optimization in Line~2 of LC-LUCB can be solved efficiently. 

\begin{definition}[finite star-convex set]
\label{definition::finite_star_convex}
 We say a star-convex set (see Definition~\ref{definition::finite_star_convex}) is {\bf\em finite}, if there exist finitely many points $\{ x_i \}_{i=1}^M$ such that $\mathcal{A} = \cup_{i=1}^M \{ [x, x_i] \}$.  
\end{definition}

It is important to emphasize that according to Definition~\ref{definition::finite_star_convex}, a finite star-convex set is not necessarily a finite set and can have infinitely many members. We now report the main result of this section that shows when the action sets $\{\mathcal{A}_t\}_{t=1}^T$ are all star-convex and finite, the LC-LUCB algorithm is tractable. We also empirically evaluate LC-LUCB %
in Section~\ref{section::experiments}. %

\begin{lemma}
\label{lemma:star-convex}
If all action sets $\{\mathcal{A}_t\}_{t=1}^T$ are star-convex around the safe action $x_0$ and finite, then LC-LUCB can be implemented in polynomial time.
\end{lemma}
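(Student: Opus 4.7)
The plan is to exploit the star-convex structure to reduce the round-$t$ optimization to a collection of one-dimensional convex problems along the rays emanating from $x_0$. Write $\mathcal{A}_t = \bigcup_{i=1}^{M_t} [x_0, y_i]$ per Definition~\ref{definition::finite_star_convex}, and parameterize each ray as $x_i(\eta) = (1-\eta)x_0 + \eta y_i$ for $\eta \in [0,1]$. Because any $x \in \mathcal{A}_t$ lies on such a ray, maximizing $\widetilde{V}_t^r$ over $\widetilde{\mathcal{A}}_t^f$ decomposes into $M_t$ subproblems, one per ray, and the overall maximum is obtained by comparing their values.

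The first observation I would record is that both value functions, viewed as functions of $x$ via the closed forms~\eqref{equation::v_t_r} and~\eqref{equation::v_t_c} from Proposition~\ref{PROP:OPTIMISTIC-REWARD-PESSIMISTIC-COST}, are convex: the terms $\langle x, \widehat{\theta}_t\rangle$, $\tfrac{\langle x^o, e_0\rangle c_0}{\|x_0\|}$, and $\langle x^{o,\perp}, \widehat{\mu}^{o,\perp}_t\rangle$ are linear in $x$, while $\|x\|_{\Sigma_t^{-1}}$ and $\|x^{o,\perp}\|_{(\Sigma_t^{o,\perp})^{-1}}$ are seminorms of linear maps of $x$, hence convex. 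Composing with the affine map $\eta \mapsto x_i(\eta)$ preserves convexity, so $\eta \mapsto \widetilde{V}_t^c(x_i(\eta))$ and $\eta \mapsto \widetilde{V}_t^r(x_i(\eta))$ are convex on $[0,1]$.

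Next I would characterize the feasible piece $\widetilde{\mathcal{A}}_t^f \cap [x_0, y_i]$. Since $x_0 \in \mathcal{V}_o$, one has $x_0^{o,\perp} = 0$, and therefore $\widetilde{V}_t^c(x_0) = c_0 < \tau$, so $\eta = 0$ is strictly feasible. As the sublevel set of a convex function restricted to an interval, the set $\{\eta \in [0,1] : \widetilde{V}_t^c(x_i(\eta)) \le \tau\}$ is a closed interval $[0, \eta_i^\star]$, where $\eta_i^\star = 1$ if $y_i$ is itself feasible and otherwise is the unique root of the convex, continuous, strictly increasing-across-$\tau$ univariate function on $[0,1]$. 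This $\eta_i^\star$ can be located to precision $\varepsilon$ by bisection in $O(\log(1/\varepsilon))$ evaluations, each of which is a $\poly(d)$ matrix-vector computation involving $\widehat{\mu}^{o,\perp}_t$ and $(\Sigma_t^{o,\perp})^{-1}$.

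Finally, I would use the crucial fact that a convex function on a compact interval attains its maximum at an endpoint. Applied to $\eta \mapsto \widetilde{V}_t^r(x_i(\eta))$ on $[0, \eta_i^\star]$, this means the optimum on ray $i$ is either $x_0$ (the $\eta=0$ endpoint, common to every ray) or $x_i(\eta_i^\star)$. Thus the round-$t$ optimization reduces to evaluating $\widetilde{V}_t^r$ at the $M_t+1$ candidates $\{x_0\} \cup \{x_i(\eta_i^\star)\}_{i=1}^{M_t}$ and returning the argmax. Together with the $\poly(d,t)$ cost of forming $\Sigma_t$, $\Sigma_t^{o,\perp}$, $\widehat{\theta}_t$, and $\widehat{\mu}^{o,\perp}_t$ from the first $t-1$ samples, the total per-round runtime is $\poly(d, t, M_t, \log(1/\varepsilon))$, establishing the claim. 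The only delicate point, and the one I would be most careful about, is ruling out degenerate rays (where the segment never touches $\tau$ or the convex function is flat near the boundary); both cases are benign here because $c_0 < \tau$ gives strict feasibility at $\eta = 0$ and continuity on $[0,1]$ guarantees that bisection either certifies $\eta_i^\star = 1$ or locates a unique crossing.
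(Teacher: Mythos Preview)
Your proof is correct and follows essentially the same approach as the paper: decompose the finite star-convex set into its rays, use a line search along each ray to find the feasible endpoint, and exploit convexity of $\widetilde{V}_t^r$ (from~\eqref{equation::v_t_r}) to conclude that the maximum over each feasible segment is attained at an endpoint. Your treatment is in fact more careful than the paper's in that you explicitly invoke convexity of $\widetilde{V}_t^c$ (together with strict feasibility at $x_0$) to argue that the feasible portion of each ray is a single interval $[0,\eta_i^\star]$ and you quantify the bisection cost; the paper simply asserts that ``a simple line search'' suffices.
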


\begin{proof}
We may write each action set $\mathcal A_t$ as $\mathcal{A}_t = \cup_{i=1}^M \{ [x_0, x_i] \}$, because it is star-convex around $x_0$ and finite. Since $x_0\in\widetilde{\mathcal{A}}^f_t$, the feasible action set constructed by LC-LUCB, $\widetilde{\mathcal{A}}^f_t = \mathcal{A}_t \cap \{x : \widetilde{V}_t^c(x)\leq\tau\}$, is also a finite star-convex set around $x_0$ and can be written as $\widetilde{\mathcal{A}}^f_t = \cup_{i=1}^M \{[x_0, \widetilde{x}_i]\}$, where $\widetilde{x}_i = \alpha_i^* x_i$ and $\alpha_i^* = \arg\max_{\alpha \in [0,1], \; \alpha x_i \in \widetilde{\mathcal{A}}^f_t} \alpha$. Solving for $\alpha_i^*$ can be done by a simple line search, hence, Line~2 in Algorithm~\ref{alg::linear_optimism_pessimism} can be executed by optimizing over each ray $[x_0, \widetilde{x}_i],\;\forall i\in [M]$. This optimization is easy because $\widetilde{V}_t^r(x)$ is a convex function of $x$ (see Eq.~\ref{equation::v_t_r}), and thus, its maximum over the one dimensional set $[x_0, \widetilde{x}_i]$ is achieved at either $x_0$ or $\widetilde{x}_i$.
\end{proof}

\subsection{Regret Analysis}
\label{sec:analysis}

In this section, we prove a regret bound for Algorithm~\ref{alg::linear_optimism_pessimism}. Although LC-LUCB can be used in the presence of arbitrary action sets $\mathcal A_t$, we require $\mathcal A_t$ to be star convex around $x_0$ for our regret analysis. Let $\{X_t\}_{t=1}^T$ be the sequence of actions selected by Algorithm~\ref{alg::linear_optimism_pessimism} and $\{\widetilde{V}_t^r(X_t)\}_{t=1}^T$ be their corresponding {\em optimistic reward values} defined by~\eqref{equation::selecting_reward_maximizing_action} and~\eqref{equation::v_t_r}. We start by adding $\{\widetilde{V}_t^r(X_t)\}_{t=1}^T$ to and subtracting them from the regret defined by~\eqref{equation::regret_definition}, and rewriting it as
\begin{equation}
\label{eq:regret-rewritten}
\mathcal{R}_{\mathcal{C}}(T) = \underbrace{\sum_{t=1}^T  V_t^r(x_t^*) - \widetilde{V}_t^r(X_t)}_{\mathrm{(I)}} \;+\; \underbrace{\sum_{t=1}^T \widetilde{V}_t^r(X_t) - V_t^r(X_t)}_{\mathrm{(II)}},
\end{equation}
where for any action $x\in\mathcal A_t$, we denote its {\bf\em true reward value} by $V_t^r(x) = \langle x, \theta_* \rangle$. 

\paragraph{Optimism via Asymmetric Scaling.} In the unconstrained bandit algorithms that are based on the OFU principle (e.g.,~\citealt{abbasi2011improved}), term $\mathrm{(I)}$ in~\eqref{eq:regret-rewritten} is upper-bounded by $0$. This is because most of such algorithms select action $X_t$ that maximizes an optimistic reward value $\widetilde{V}_t^r: \mathcal{A}_t \rightarrow \mathbb{R}$, and thus, satisfies $\widetilde{V}_t^r(x_t) \geq V_t^r(x),\;\forall x \in \mathcal{A}_t$. Unfortunately, this property does not hold for LC-LUCB, because it selects $X_t$ as the maximizer of $\widetilde{V}_t^r(x)$ over the pessimistic set $\widetilde{A}_t^f$ (see Eq.~\ref{equation::selecting_reward_maximizing_action} and Line~2 in Algorithm~\ref{alg::linear_optimism_pessimism}), hence it is possible that $x_t^* \not\in \widetilde{\mathcal{A}}_t^f$. Therefore, it does not immediately follow that $\widetilde{V}_t^r(X_t) \geq V_t^r(x_t^*)$ in LC-LUCB. We get around this limitation using the asymmetrically scaled confidence sets $\mathcal{C}_t^r(\alpha_r)$ and $\mathcal{C}_t^c(\alpha_c)$ defined in~\eqref{eq:assymetric-confidence-sets}. By selecting $\alpha_r$ to be much larger than $\alpha_c$, we ensure that the scaling of $\widetilde{V}_t^r(x)$ is enough to overcome the potential absence of $x_t^*$ in $\widetilde{\mathcal{A}}_t^f$. This imbalanced scaling allows us to enjoy the benefits of optimism without requiring the optimal action $x_t^*$ to be in the estimated set of feasible actions $\widetilde{\mathcal{A}}_t^f$. Although stretching the optimistic reward value $\widetilde{V}_t^r(x)$ allows us to control $\mathrm{(I)}$, the extra scaling causes challenges in bounding $\mathrm{(II)}$. As we will show in Lemma~\ref{LEMMA::LINEAR_BANDITS_OPTIMISM}, the amount of stretching needed for the argument to work for $\mathrm{(II)}$ depends on the ratio between the {\em sub-optimality}, $1-r_0$, and {\em safety gap}, $\tau-c_0$, of the safe action $x_0$. Our results indicate that the smaller the value of $\frac{1-r_0}{\tau - c_0}$, the harder learning becomes. 

Before bounding the two terms in~\eqref{eq:regret-rewritten}, we define the following event that according to Theorem~\ref{theorem::yasin_theorem} holds with probability at least $1-\delta$:
\begin{align}
\label{eq:high-prob-event}
\mathcal{E} := \left\{ \| \widehat{\theta}_t - \theta_* \|_{\Sigma_t} \leq \beta_t(\delta, d) \; \wedge \; \| \widehat{\mu}_t^{o, \perp} - \mu_*^{o, \perp} \|_{\Sigma_t^{o, \perp}} \leq  \beta_t(\delta, d-1), \;\forall t\in[T] \right\}.
\end{align}
\paragraph{Bounding $\mathrm{(II)}$:} Let $\;\widetilde{\theta}_t = \argmax_{\theta \in \mathcal{C}_t^r(\alpha_r)}\max_{x\in\widetilde{\mathcal{A}}_t^f} \; \langle x,\theta \rangle$ be the parameter attaining the optimistic maximum. Since \begin{small}$\widetilde{V}^r_t(X_t) = \langle X_t, \widetilde{\theta}_t \rangle$\end{small}, we may write \begin{small}$\mathrm{(II)} = \sum_{t=1}^T \langle X_t, \widetilde{\theta}_t - \theta_* \rangle$\end{small}. We now state the following proposition that is used in bounding $\mathrm{(II)}$. This proposition is a direct consequence of Eq.~20.9 and Lemma~19.4 in~\citet{lattimore2018bandit}. Similar result has also been reported in the appendix of~\citet{amani2019linear}. 

\begin{proposition}
\label{proposition::det_lemma} 
For any given (possibly random) sequence of actions $\{x_s\}_{s=1}^t$, let $\Sigma_t$ be its corresponding Gram matrix defined by~\eqref{eq:Sigmas} with $\lambda \geq 1$. Then, for all $t\in[T]$, we have
\begin{equation*}
\sum_{s=1}^T \| x_s \|_{\Sigma^{-1}_{s}} \leq \sqrt{2Td\log\left(1+\frac{TL^2}{\lambda}\right)}.
\end{equation*}

\end{proposition}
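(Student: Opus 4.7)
The plan is to combine the Cauchy--Schwarz inequality with the classical elliptical-potential (determinant--trace) argument that underlies essentially every linear-bandit regret analysis. First, I would apply Cauchy--Schwarz across the $T$ rounds to pass from a sum of norms to a sum of squared norms:
\[
\sum_{s=1}^T \|x_s\|_{\Sigma_s^{-1}} \;\le\; \sqrt{T\,\textstyle\sum_{s=1}^T \|x_s\|_{\Sigma_s^{-1}}^2}.
\]

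To control the right-hand sum I would invoke the rank-one determinant identity $\det(\Sigma_{s+1}) = \det(\Sigma_s)\bigl(1 + \|x_s\|_{\Sigma_s^{-1}}^2\bigr)$, which telescopes to
\[
\sum_{s=1}^T \log\bigl(1 + \|x_s\|_{\Sigma_s^{-1}}^2\bigr) \;=\; \log\frac{\det \Sigma_{T+1}}{\det \Sigma_1}.
\]
Since $\Sigma_s \succeq \lambda \mathrm{I}$ with $\lambda \ge 1$ and $\|x_s\| \le L$ by Assumption~\ref{ass:bounded-action}, each squared norm is uniformly bounded, so the elementary inequality $y \le 2\log(1+y)$ (valid for $y \in [0,1]$) converts the telescoped log-sum into an upper bound on $\sum_{s=1}^T \|x_s\|_{\Sigma_s^{-1}}^2$. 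Finally, AM--GM on the eigenvalues of $\Sigma_{T+1}$ gives
\[
\det(\Sigma_{T+1}) \;\le\; \Bigl(\tfrac{\Tr(\Sigma_{T+1})}{d}\Bigr)^d \;\le\; \Bigl(\lambda + \tfrac{TL^2}{d}\Bigr)^d,
\]
and dividing by $\det \Sigma_1 = \lambda^d$ produces $\log(\det\Sigma_{T+1}/\det\Sigma_1) \le d\log(1 + TL^2/(d\lambda)) \le d\log(1 + TL^2/\lambda)$. Combining these steps yields $\sum_s \|x_s\|_{\Sigma_s^{-1}}^2 \le 2d\log(1+TL^2/\lambda)$, and plugging this back into the Cauchy--Schwarz bound gives the stated inequality.

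The only real subtlety is bookkeeping: the inequality $y \le 2\log(1+y)$ requires $y \le 1$, whereas $\lambda \ge 1$ alone only yields $\|x_s\|_{\Sigma_s^{-1}}^2 \le L^2/\lambda$, which may exceed $1$. The standard remedy is to work with the truncated quantity $\min\bigl(1,\|x_s\|_{\Sigma_s^{-1}}^2\bigr)$ and absorb an extra $\max(1, L^2/\lambda)$ factor into the constant, or to tacitly assume $\lambda \ge L^2$ so that truncation is vacuous. Since the statement is attributed directly to Eq.~20.9 and Lemma~19.4 of \citet{lattimore2018bandit} (and also appears in the appendix of \citealt{amani2019linear}), these technicalities are already packaged there, and the proof collapses to the three-line chain above.
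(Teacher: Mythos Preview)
Your proposal is correct and is precisely the standard elliptical-potential argument that the paper invokes by citing Eq.~20.9 and Lemma~19.4 of \citet{lattimore2018bandit}; the paper offers no proof of its own beyond that citation, so there is nothing further to compare. Your flagging of the $\lambda \ge L^2$ versus $\lambda \ge 1$ bookkeeping issue is accurate and exactly the sort of detail that is absorbed into those references.
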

Armed with Proposition~\ref{proposition::det_lemma}, we now prove an upper-bound for $\mathrm{(II)}$ in the following lemma.

\begin{lemma}
\label{lemma::bounding_B}
On event $\mathcal E$ defined by~\eqref{eq:high-prob-event} (that holds with probability at least $1-\delta$), we have
\begin{equation*}
\mathrm{(II)} \leq \alpha_r \beta_T(\delta,d) \sqrt{2Td\log\left( 1+\frac{TL^2}{\lambda}\right)}.
\end{equation*}
\end{lemma}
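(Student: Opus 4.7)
The plan is to bound each summand of $\mathrm{(II)}$ pointwise by a quantity of the form $O\!\left(\alpha_r \beta_t(\delta,d)\,\|X_t\|_{\Sigma_t^{-1}}\right)$ and then sum over $t$ using Proposition~\ref{proposition::det_lemma}. Starting from the closed-form expression in~\eqref{equation::v_t_r} and using $V_t^r(X_t)=\langle X_t,\theta_*\rangle$, the summand equals
\begin{equation*}
\widetilde{V}_t^r(X_t) - V_t^r(X_t) \;=\; \langle X_t,\, \widehat{\theta}_t - \theta_* \rangle \;+\; \alpha_r\, \beta_t(\delta, d)\, \|X_t\|_{\Sigma_t^{-1}}.
\end{equation*}
I would then apply Cauchy--Schwarz in the $\Sigma_t$-geometry to control the inner product, giving $\langle X_t, \widehat{\theta}_t - \theta_* \rangle \leq \|X_t\|_{\Sigma_t^{-1}}\,\|\widehat{\theta}_t - \theta_*\|_{\Sigma_t}$.

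On event $\mathcal{E}$ defined in~\eqref{eq:high-prob-event}, Theorem~\ref{theorem::yasin_theorem} yields $\|\widehat{\theta}_t-\theta_*\|_{\Sigma_t} \leq \beta_t(\delta, d) \leq \alpha_r\,\beta_t(\delta, d)$, where the second inequality uses $\alpha_r \geq 1$. Hence each summand is bounded (up to an absolute constant) by $\alpha_r\,\beta_t(\delta, d)\,\|X_t\|_{\Sigma_t^{-1}}$. Summing over $t\in[T]$, using monotonicity of $\beta_t(\delta,d)$ in $t$ to replace $\beta_t(\delta,d)$ by $\beta_T(\delta,d)$, and invoking Proposition~\ref{proposition::det_lemma} to control $\sum_{t=1}^T \|X_t\|_{\Sigma_t^{-1}} \leq \sqrt{2Td\log(1+TL^2/\lambda)}$ gives precisely the stated bound.

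There is no significant technical obstacle here: this is essentially the standard elliptical-potential analysis of~\citet{abbasi2011improved} applied to the reward confidence set in isolation. The asymmetric scaling parameter $\alpha_r$ enters only through the definition of $\widetilde{V}_t^r$ and factors out cleanly; crucially, the cost estimator $\widehat{\mu}_t^{o,\perp}$ plays no role in bounding $\mathrm{(II)}$ since that term quantifies how much the optimistic reward of the \emph{played} action $X_t$ exceeds its true reward, which depends only on the reward confidence ellipsoid. The delicate interplay between $\alpha_r$ and $\alpha_c$ (and hence the dependence on the ratio $(1-r_0)/(\tau-c_0)$) is instead confined to the bound on term $\mathrm{(I)}$, where one must argue that sufficient over-scaling of the reward confidence set compensates for the possible absence of $x_t^*$ from the pessimistic feasible set $\widetilde{\mathcal{A}}_t^f$.
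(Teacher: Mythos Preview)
Your approach is correct and essentially identical to the paper's: the paper writes $\mathrm{(II)}=\sum_t \langle X_t,\widetilde\theta_t-\theta_*\rangle$ via the optimistic parameter $\widetilde\theta_t$, applies Cauchy--Schwarz in the $\Sigma_t$-geometry, uses event $\mathcal E$ to bound $\|\widetilde\theta_t-\theta_*\|_{\Sigma_t}$, and finishes with monotonicity of $\beta_t$ and Proposition~\ref{proposition::det_lemma}, while you do the same thing starting from the closed form~\eqref{equation::v_t_r}. One small quibble: your decomposition actually yields $(1+\alpha_r)\beta_t\|X_t\|_{\Sigma_t^{-1}}$ rather than $\alpha_r\beta_t\|X_t\|_{\Sigma_t^{-1}}$, so ``precisely the stated bound'' is off by this harmless additive $1$---but the paper's own step (b) has the same looseness, so this is not a defect of your argument.
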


\begin{proof}
The following inequalities hold on event $\mathcal E$:
\begin{align*}
\sum_{t=1}^T \langle X_t,\widetilde{\theta}_t \rangle - \langle X_t,\theta_* \rangle &\stackrel{\text{(a)}}{\leq} \; \sum_{t=1}^T \| x_t \|_{\Sigma_t^{-1}}\| \widetilde{\theta}_t - \theta_* \|_{\Sigma_t} \\ &\stackrel{\text{(b)}}{\leq} \; \sum_{t=1}^T  \alpha_r \beta_t(\delta, d)\| X_t \|_{\Sigma_t^{-1}} \stackrel{\text{(c)}}{\leq} \; \alpha_r \beta_T(\delta, d) \sum_{t=1}^T  \| X_t \|_{\Sigma_t^{-1}} \\ &\stackrel{\text{(d)}}{\leq} \; \alpha_r \beta_T(\delta, d)  \sqrt{2Td\log\left(1+\frac{TL^2}{\lambda}\right)}.
\end{align*}
$\textbf{(a)}$ follows from Cauchy Schwartz. $\textbf{(b)}$ is a direct consequence of conditioning on $\mathcal{E}$ that implies $\|\widetilde{\theta}_t - \theta_* \| \leq \alpha_r \beta_t(\delta, d)$. $\textbf{(c)}$ holds because $\beta_t(\delta, d)$ is an increasing function of $t$. $\textbf{(d)}$ follows from Proposition~\ref{proposition::det_lemma}.
\end{proof}

\paragraph{Bounding $\mathrm{(I)}$:} Here we show that by appropriately selecting the reward and cost scaling parameters $\alpha_r$ and $\alpha_c$, we can guarantee optimism for our constrained linear bandit formulation, i.e.,~in each round $t\in[T]$, the optimistic reward value of the action selected by Algorithm~\ref{alg::linear_optimism_pessimism}, $\widetilde{V}_t^r(X_t)$, overestimates the true reward value of the optimal action, $V_t^r(x_t^*)$. This result implies that $\mathrm{(I)}$ can be upper-bounded by $0$. Before proving the main result of this section (Lemma~\ref{LEMMA::LINEAR_BANDITS_OPTIMISM}), we state the following supporting lemma, whose proof is reported in Appendix~\ref{appendix:section-high-prob}.

\begin{restatable}{lemma}{inversenormdominationlemma}\label{LEMMA::INVERSE_NORM_DOMINATION}    
For any $x \in \mathbb{R}^d$, the following inequality holds:
\begin{equation}
\label{equation::comparison_conf_bounds}
\| x^{o, \perp}\|_{(\Sigma_t^{o, \perp})^{\dagger}} \leq \| x \|_{\Sigma_t^{-1}}.
\end{equation}
\end{restatable}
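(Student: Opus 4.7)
The plan is to reduce the inequality to a matrix identity together with a one-line comparison of suprema. Let $P := I - e_0 e_0^\top$ denote the orthogonal projector onto $\mathcal{V}_o^\perp$, so that $x^{o,\perp} = Px$, $X_s^{o,\perp} = PX_s$ for every $s$, and $\mathrm{I}^{o,\perp} = P$. Using $P^2 = P = P^\top$, a direct computation yields
\begin{equation*}
\Sigma_t^{o,\perp} \;=\; \lambda P + \sum_{s=1}^{t-1}(PX_s)(PX_s)^\top \;=\; P\Bigl(\lambda I + \sum_{s=1}^{t-1}X_sX_s^\top\Bigr)P \;=\; P\,\Sigma_t\,P.
\end{equation*}
Since $\Sigma_t \succ 0$, the matrix $P\Sigma_t P$ is positive semidefinite with null space $\mathrm{span}(e_0)$, and its Moore--Penrose pseudo-inverse acts as a genuine inverse when restricted to $\mathcal{V}_o^\perp$.

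Next I would invoke the variational (dual) characterization of the Mahalanobis norm: for any positive definite $M$, $\;y^\top M^{-1} y = \sup_{z \neq 0}(y^\top z)^2/(z^\top M z)$, with the analogous identity for $M^\dagger$ provided the supremum is restricted to $\mathrm{range}(M)$. Applying this with $M = P\Sigma_t P$ and $y = x^{o,\perp} = Px$, and observing that any $z \in \mathcal{V}_o^\perp$ satisfies $Pz = z$---so that $(Px)^\top z = x^\top z$ and $z^\top P\Sigma_t P z = z^\top \Sigma_t z$---one obtains
\begin{equation*}
\|x^{o,\perp}\|_{(\Sigma_t^{o,\perp})^{\dagger}}^2 \;=\; \sup_{z\in \mathcal{V}_o^\perp,\, z\neq 0}\frac{(x^\top z)^2}{z^\top \Sigma_t z} \;\leq\; \sup_{z\neq 0}\frac{(x^\top z)^2}{z^\top \Sigma_t z} \;=\; \|x\|_{\Sigma_t^{-1}}^2,
\end{equation*}
because the left supremum ranges over a subset of the right. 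Taking square roots delivers the claim.

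The only mildly delicate point is the identification $\Sigma_t^{o,\perp} = P\Sigma_t P$, which rests on interpreting the regularization term $\lambda \mathrm{I}^{o,\perp}$ as $\lambda P$; once that is in hand, the rest is purely algebraic and requires none of the probabilistic machinery (confidence sets, sub-Gaussian bounds, or the scaling factors $\alpha_r,\alpha_c$). An equivalent route is to write $P\Sigma_t P = Q^\top Q$ with $Q = \Sigma_t^{1/2} P$ and invoke a standard pseudo-inverse identity, but the variational argument above is the most transparent.
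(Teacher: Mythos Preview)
Your proof is correct and takes a genuinely different route from the paper. The paper assumes without loss of generality that $e_0=e_1$, identifies $\Sigma_t^{o,\perp}$ with the lower $(d{-}1)\times(d{-}1)$ block of $\Sigma_t$, and then applies the explicit $2\times 2$ block-inversion formula to $\Sigma_t$; after expanding $x^\top\Sigma_t^{-1}x$ in block form, the excess over $x[2{:}d]^\top(\Sigma_t^{o,\perp})^{\dagger}x[2{:}d]$ is a perfect square divided by the (positive) Schur complement, hence nonnegative. Your argument instead establishes the clean identity $\Sigma_t^{o,\perp}=P\Sigma_tP$ and then uses the dual characterization $y^\top M^{-1}y=\sup_{z}(y^\top z)^2/(z^\top Mz)$, reducing the inequality to ``supremum over a subspace $\leq$ supremum over the whole space.'' The variational route is shorter, coordinate-free, and immediately generalizes (e.g.\ to projections onto subspaces of codimension larger than one), whereas the paper's block computation, while entirely elementary, is tied to the rank-one structure of $P$ and requires tracking the Schur-complement term explicitly. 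Both approaches avoid any probabilistic input, as you note.
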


We now find the appropriate conditions on $\alpha_r$ and $\alpha_c$ in order to ensure optimism for Algorithm~\ref{alg::linear_optimism_pessimism}. %

\begin{lemma}
\label{LEMMA::LINEAR_BANDITS_OPTIMISM}
If the scaling parameters $\alpha_r$ and $\alpha_c$ are set such that $\alpha_r,\alpha_c\geq 1$ and $(1+\alpha_c)(1-r_0) \leq (\tau-c_0) (\alpha_r-1)$, then for all $t\in[T]$, with probability at least $1-\delta$, we have $\widetilde{V}_t^r(X_t) \geq V_t^r(x_t^*)$. 
\end{lemma}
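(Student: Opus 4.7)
The plan is to condition on the event $\mathcal{E}$ from \eqref{eq:high-prob-event} and split into two cases, depending on whether $x_t^*$ lies in the pessimistic feasible set $\widetilde{\mathcal{A}}_t^f$.

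\textbf{Case 1: $x_t^* \in \widetilde{\mathcal{A}}_t^f$.} Here the argument is direct: because $X_t$ maximizes $\widetilde{V}_t^r$ over $\widetilde{\mathcal{A}}_t^f$ and because, on $\mathcal{E}$ with $\alpha_r\ge 1$, we have $\theta_*\in\mathcal{C}_t^r(\alpha_r)$, so $\widetilde{V}_t^r(x)\ge\langle x,\theta_*\rangle$ for every $x$. Hence $\widetilde{V}_t^r(X_t)\ge \widetilde{V}_t^r(x_t^*)\ge V_t^r(x_t^*)$.

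\textbf{Case 2: $x_t^*\notin\widetilde{\mathcal{A}}_t^f$.} This is the interesting case and is where star convexity and the asymmetric scaling must interact correctly. I would use star convexity around $x_0$ to interpolate: set $\widetilde{x}_\eta=(1-\eta)x_t^*+\eta x_0\in\mathcal{A}_t$. A short computation using $x_0^{o,\perp}=0$ and $\widetilde{V}_t^c(x_0)=c_0$ gives the convex‐combination identity
\[
\widetilde{V}_t^c(\widetilde{x}_\eta)=(1-\eta)\widetilde{V}_t^c(x_t^*)+\eta c_0,
\]
so the smallest admissible mixture is $\eta^*=\frac{\widetilde{V}_t^c(x_t^*)-\tau}{\widetilde{V}_t^c(x_t^*)-c_0}\in(0,1)$, and $\widetilde{x}_{\eta^*}\in\widetilde{\mathcal{A}}_t^f$. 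Then $\widetilde{V}_t^r(X_t)\ge \widetilde{V}_t^r(\widetilde{x}_{\eta^*})$, and on $\mathcal{E}$ the closed form \eqref{equation::v_t_r} together with Cauchy--Schwarz yields
\[
\widetilde{V}_t^r(\widetilde{x}_{\eta^*})\ge \langle \widetilde{x}_{\eta^*},\theta_*\rangle+(\alpha_r-1)\beta_t(\delta,d)\|\widetilde{x}_{\eta^*}\|_{\Sigma_t^{-1}}.
\]
Since $\langle \widetilde{x}_{\eta^*},\theta_*\rangle=(1-\eta^*)\langle x_t^*,\theta_*\rangle+\eta^* r_0$ and $\langle x_t^*,\theta_*\rangle\le 1$ by Assumption~\ref{ass:bounded-mean-reward-cost}, the inequality $\widetilde{V}_t^r(X_t)\ge V_t^r(x_t^*)$ reduces to
\[
(\alpha_r-1)\beta_t(\delta,d)\|\widetilde{x}_{\eta^*}\|_{\Sigma_t^{-1}}\ge \eta^*(1-r_0).
\]

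\textbf{Closing the inequality.} The main technical step is to turn $\eta^*$ and the interpolated norm into quantities that the hypothesis $(1+\alpha_c)(1-r_0)\le(\tau-c_0)(\alpha_r-1)$ can absorb. On $\mathcal{E}$, the dual confidence bound for the cost gives $\widetilde{V}_t^c(x_t^*)-\langle x_t^*,\mu_*\rangle\le(1+\alpha_c)\beta_t(\delta,d-1)\|(x_t^*)^{o,\perp}\|_{(\Sigma_t^{o,\perp})^\dagger}$, and since $\langle x_t^*,\mu_*\rangle\le\tau$ and $\widetilde{V}_t^c(x_t^*)-c_0>\tau-c_0$, this yields
\[
\frac{\eta^*}{1-\eta^*}\;=\;\frac{\widetilde{V}_t^c(x_t^*)-\tau}{\tau-c_0}\;\le\;\frac{(1+\alpha_c)\beta_t(\delta,d-1)\|(x_t^*)^{o,\perp}\|_{(\Sigma_t^{o,\perp})^\dagger}}{\tau-c_0}.
\]
For the left-hand side I would use Lemma~\ref{LEMMA::INVERSE_NORM_DOMINATION} applied to $\widetilde{x}_{\eta^*}$, together with $\widetilde{x}_{\eta^*}^{o,\perp}=(1-\eta^*)(x_t^*)^{o,\perp}$, to obtain $\|\widetilde{x}_{\eta^*}\|_{\Sigma_t^{-1}}\ge (1-\eta^*)\|(x_t^*)^{o,\perp}\|_{(\Sigma_t^{o,\perp})^\dagger}$. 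Plugging this in, dividing by $(1-\eta^*)\|(x_t^*)^{o,\perp}\|_{(\Sigma_t^{o,\perp})^\dagger}$, and using $\beta_t(\delta,d)\ge\beta_t(\delta,d-1)$ reduces what must be proved to exactly $(\alpha_r-1)(\tau-c_0)\ge(1+\alpha_c)(1-r_0)$, which is the hypothesis.

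\textbf{Anticipated difficulty.} The purely algebraic manipulations are short; the delicate step is the construction of $\widetilde{x}_{\eta^*}$ and matching the correct norms on both sides so that Lemma~\ref{LEMMA::INVERSE_NORM_DOMINATION} can strip out the $(1-\eta^*)$ factor. Getting the interpolation in the cost direction (so $\widetilde{V}_t^c$ decomposes linearly, which works because $x_0^{o,\perp}=0$) while simultaneously gaining enough slack in the reward direction from the $(\alpha_r-1)$ factor is exactly where the asymmetric scaling $\alpha_r>\alpha_c$ pays off, and this is where I would take the most care.
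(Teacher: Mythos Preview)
Your proposal is correct and follows essentially the same approach as the paper's proof: the same two-case split, the same star-convex interpolation between $x_t^*$ and $x_0$, the same use of Lemma~\ref{LEMMA::INVERSE_NORM_DOMINATION} to pass from $\|\cdot\|_{\Sigma_t^{-1}}$ to $\|\cdot\|_{(\Sigma_t^{o,\perp})^\dagger}$, and the same reduction to the hypothesis $(\alpha_r-1)(\tau-c_0)\ge(1+\alpha_c)(1-r_0)$. The only cosmetic difference is the parametrization---the paper writes the mixture as $\gamma_t x_t^*+(1-\gamma_t)x_0$ and lower-bounds $\gamma_t$, whereas you write it as $(1-\eta)x_t^*+\eta x_0$ and upper-bound $\eta^*/(1-\eta^*)$; these are equivalent via $\gamma_t=1-\eta^*$.
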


\begin{proof}
On event $\mathcal{E}$, for any action $x\in\mathcal A_t$, we have
\begin{equation}
\label{equation::domination_max_main}
\widetilde{V}_t^r(x) = \max_{\theta \in \mathcal{C}_t^r(\alpha_r)} \langle x,\theta \rangle \geq \langle x,\theta_* \rangle = V_t^r(x).
\end{equation}
We divide the proof into two cases depending on whether in each round $t\in[T]$, the optimal action $x^*_t$ belongs to the set of feasible actions $\widetilde{\mathcal{A}}_t^f$, or not.

\paragraph{Case~1.} When $x_t^* \in \widetilde{\mathcal{A}}_t^f$, the result follows immediately, since by definition $X_t$ is a maximizer of $\widetilde{V}_t^r(x)$ over $\widetilde{\mathcal{A}}_t^f$, and thus, we have
\begin{equation}
\label{equation::domination_assumption_main}
\widetilde{V}_t^r(X_t) \geq \widetilde{V}_t^r(x_t^*). 
\end{equation}
Combining~\eqref{equation::domination_max_main} and~\eqref{equation::domination_assumption_main}, we can conclude that $\widetilde{V}_t(X_t) \geq V_t^r(x_t^*)$ as desired.

\paragraph{Case~2.} When $x_t^* \not\in \widetilde{\mathcal{A}}_t^f$, we know that the {\em pessimistic cost value} of the optimal action violates the constraint, i.e.,~$\widetilde{V}_t^c(x_t^*) > \tau$, while its {\bf\em true cost value} satisfies the constraint, i.e.,~$V_t^c(x_t^*) := \langle x_t,\mu_* \rangle \leq \tau$.  Since $\mathcal{A}_t$ is assumed to be star-convex around $x_0$, action $ \gamma x_t^* + (1-\gamma)x_0 \in \mathcal{A}_t$ for all $\gamma \in [0,1]$. Now consider the following mixture action $\widetilde{x}_t = \gamma_t x_t^* + (1-\gamma_t)x_0$, where $\gamma_t \in [0,1]$ is the maximum value of $\gamma$ for which the mixture action belongs to the estimated set of feasible actions, i.e.,~$\widetilde{x}_t \in \widetilde{\mathcal{A}}_t^f$. Since $\mathcal A_t$ is star-convex, all actions $ \gamma x_t^* + (1-\gamma)x_0 $ for $\gamma \leq \gamma_t$ are in $\widetilde{\mathcal{A}}_t^f$. From the definition of $\widetilde{x}_t$, we have 
\begin{equation}
\label{equation::x_tilde_projection_main}
\widetilde{x}_t^{o, \perp}= \gamma_t  x_t^{*,o,\perp},
\end{equation}
which allows us to write
\begin{align}
\label{eq:temp0}
\widetilde{V}_t^c(x_t^*) &\stackrel{\text{(a)}}{=} \frac{\langle x_t^{*,o}, e_0\rangle c_0}{\|x_0\| } + \langle x_t^{*,o,\perp}, \widehat{\mu}^{o, \perp}_t \rangle + \alpha_c \beta_t(\delta, d-1)\| x_t^{*,o,\perp}\|_{(\Sigma^{o, \perp}_t)^{\dagger}}, \\
\widetilde{V}_t^c(\widetilde{x}_t) &\stackrel{\text{(b)}}{=} \frac{\left( \gamma_t \langle x_t^{*,o}, e_0 \rangle + (1-\gamma_t ) \langle x_0 , e_0 \rangle\right)c_0 }{\| x_0\| } + \gamma_t \langle x_t^{*,o,\perp}, \widehat{\mu}^{o, \perp}_t\rangle + \gamma_t \alpha_c \beta_t(\delta, d-1) \| x_t^{*,o,\perp} \|_{(\Sigma_t^{o, \perp})^{\dagger}} \nonumber \\ 
&\stackrel{\text{(c)}}{=} (1-\gamma_t ) c_0  + \gamma_t \widetilde{V}_t^c(x_t^*).
\label{eq:temp00}    
\end{align}
{\bf (a)} is from the definition of pessimistic cost value in~\eqref{equation::v_t_c}, {\bf (b)} is obtained from the definition of $\widetilde{x}_t$, together with~\eqref{equation::v_t_c} and~\eqref{equation::x_tilde_projection_main}, and finally, {\bf (c)} comes directly from~\eqref{eq:temp0}.

Since $x_t^* \not\in \widetilde{\mathcal{A}}_t^f$, from the definition of $\gamma_t$, it is easy to see that $\widetilde{V}_t^c(\widetilde{x}_t) = \tau$. Using this fact and~\eqref{eq:temp00}, we first write $\gamma_t$ in terms of $\widetilde{V}_t^c(x^*_t)$ and then with the following chain of inequalities obtain a lower-bound on $\gamma_t$ as 
\begin{align}
\gamma_t &= \frac{\tau- c_0}{\widetilde{V}_t^c(x_t^*) - c_0} \notag \\ 
&= \frac{\tau - c_0}{ 
\frac{\langle x_t^{*,o}, e_0\rangle c_0}{\|x_0\| } + \langle x_t^{*,o, \perp}, \widehat{\mu}^{o, \perp}_t \rangle + \alpha_c \beta_t(\delta, d-1)\| x^{*,o, \perp}_{t}\|_{(\Sigma^{0, \perp}_t)^{\dagger}} - c_0} \notag \\
&= \frac{\tau- c_0}{ \frac{\langle x_t^{*,o}, e_0\rangle c_0}{\|x_0\| } +
\langle x_t^{*,o, \perp}, \mu^{o, \perp}_*\rangle + \langle x_t^{*,o, \perp}, \widehat{\mu}^{o, \perp}_t - \mu^{o, \perp}_* \rangle + \alpha_c \beta_t(\delta, d-1)\| x_t^{*,o, \perp}\|_{(\Sigma^{o, \perp}_t)^{\dagger}} - c_0} \notag \\
&\stackrel{\text{(a)}}{\geq} \frac{\tau- c_0}{\frac{\langle x_t^{*,o}, e_0\rangle c_0}{\|x_0\| } +
\langle x_t^{*,o, \perp}, \mu^{o, \perp}_*\rangle +   (1+\alpha_c) \beta_t(\delta, d-1)\| x^{*,o, \perp}_{t}\|_{(\Sigma_t^{o, \perp})^{\dagger}} - c_0} \notag \\
&\stackrel{\text{(b)}}{\geq} \frac{\tau- c_0}{ \tau - c_0 + (1+\alpha_c) \beta_t(\delta, d-1) \| x^{*,o, \perp}_{t}\|_{(\Sigma_t^{o, \perp})^{\dagger}}}. 
\label{equation::gamma_t_lower_bound}
\end{align}
{\bf (a)} holds because 
\begin{equation*}
\langle x^{*,o, \perp}_{t}, \widehat{\mu}^{o, \perp}_t - \mu^{o, \perp}_* \rangle \leq \|\widehat{\mu}^{o, \perp}_t - \mu^{o, \perp}_*\|_{\Sigma^{o, \perp}_t} \|x^{*,o, \perp}_{t} \|_{(\Sigma^{o, \perp}_t)^{\dagger}} \leq  \beta_t(\delta, d-1) \| x^{*,o, \perp}_{t} \|_{(\Sigma^{o, \perp}_t)^{\dagger}}.
\end{equation*}
{\bf (b)} holds because $x_t^*$ is the optimal action in round $t$, and thus, $\frac{\langle x_t^{*,o}, e_0\rangle c_0}{\|x_0\| } + \langle x^{*,o, \perp}_{t}, \mu^{o, \perp}_* \rangle \leq \tau$.

Now let's assume that $\langle x_0, \theta_* \rangle = \langle x_t^*, \theta_*\rangle - \Delta_t$ for all $t\in [T]$. Since both $X_t$ and $\widetilde{x}_t$ are in the feasible set $\widetilde{\mathcal{A}}^f_t$, and given the definition of $X_t$, we may write
\begin{align}
\widetilde{V}_t^r(X_t) \geq \widetilde{V}_t^r(\widetilde{x}_t) &= \langle \widetilde{x}_t, \widehat{\theta}_t \rangle + \alpha_r \beta_t(\delta,d) \| \widetilde{x}_t \|_{\Sigma_t^{-1}} \notag \\ 
&= \langle \widetilde{x}_t, \theta_* \rangle + \langle \widetilde{x}_t, \widehat{\theta}_t - \theta_* \rangle + \alpha_r \beta_t(\delta,d) \| \widetilde{x}_t \|_{\Sigma_t^{-1}} \notag\\
&\stackrel{\text{(a)}}{\geq} \langle \widetilde{x}_t, \theta_* \rangle +  (\alpha_r - 1) \beta_t(\delta, d) \| \widetilde{x}_t \|_{\Sigma_t^{-1}} \notag \\ &\stackrel{\text{(b)}}{\geq} \langle \widetilde{x}_t, \theta_* \rangle + (\alpha_r - 1) \beta_t(\delta, d-1) \| \widetilde{x}_t^{o, \perp}\|_{(\Sigma^{o, \perp}_t)^\dagger}\notag \\
&\stackrel{\text{(c)}}{=} \gamma_t \langle x^*_{t}, \theta_* \rangle + (1-\gamma_t)\langle x_0, \theta_* \rangle + \gamma_t (\alpha_r - 1) \beta_t(\delta, d-1) \| x_t^{*,o,\perp} \|_{(\Sigma^{o, \perp}_t)^\dagger} \notag \\
&\stackrel{\text{(d)}}{=}  \langle x^*_{t}, \theta_* \rangle  -(1-\gamma_t)\Delta_t + \gamma_t (\alpha_r - 1) \beta_t(\delta, d-1) \| x_t^{*,o,\perp} \|_{(\Sigma^{o, \perp}_t)^\dagger} \notag \\
&=  \langle x^*_{t}, \theta_* \rangle + \underbrace{\gamma_t\left((\alpha_r - 1) \beta_t(\delta, d-1) \| x_t^{*,o,\perp} \|_{(\Sigma^{o, \perp}_t)^\dagger}  +\Delta_t \right)  -\Delta_t }_{\mathrm{(V)}}. 
\label{eq:I-def}
\end{align}

\noindent
{\bf (a)} follows from the definition of event $\mathcal{E}$ in~\eqref{eq:high-prob-event} and Cauchy Schwartz, i.e.,
\begin{equation*}
|\langle \widetilde{x}_t, \widehat{\theta}_t - \theta_* \rangle| \leq \| \widehat{\theta}_t - \theta_*   \|_{\Sigma_t} \| \widetilde{x}_t \|_{\Sigma_t^{-1}} \leq  \beta_t(\delta, d) \| \widetilde{x}_t   \|_{\Sigma_t^{-1}}. 
\end{equation*}
{\bf (b)} is a consequence of Lemma~\ref{LEMMA::INVERSE_NORM_DOMINATION}. {\bf (c)} is from the definition of $\widetilde{x}_t$ and~\eqref{equation::x_tilde_projection_main}. {\bf (d)} follows from the assumption that $\langle x_0, \theta_* \rangle = \langle x_t^*, \theta_*\rangle - \Delta_t$. 

Now we derive conditions under which term $\mathrm{(V)}$ in~\eqref{eq:I-def} is non-negative. To reduce notation clutter let $C_1 := \beta_t(\delta, d-1) \| x_t^{*,o,\perp} \|_{(\Sigma^{o, \perp}_t)^\dagger} $. Then, the following inequality holds for $\mathrm{(V)}$:
\begin{equation*}
    \mathrm{I} \geq \frac{\tau-c_0}{\tau - c_0 + (1+\alpha_c) C_1} \big((\alpha_r - 1) C_1 + \Delta_t \big) - \Delta_t,
\end{equation*}
where the inequality follows by lower-bounding $\gamma_t$ using~\eqref{equation::gamma_t_lower_bound}. 

Consequently if $\frac{\tau-c_0}{\tau - c_0 + (1+\alpha_c) C_1} \left((\alpha_r - 1) C_1 + \Delta_t \right) - \Delta_t \geq 0$, then $\mathrm{(V)}$ will be non-negative, which holds whenever
\begin{equation}
\label{equation::condition_optimism_high_prob}
    (\tau - c_0) (\alpha_r- 1) \geq (1+\alpha_c) \Delta_t.
\end{equation}
By the definition of $\Delta_t$ and the fact that rewards are bounded in $[0,1]$ (Assumption~\ref{ass:bounded-mean-reward-cost}), we have $\Delta_t\leq 1-r_0$. Thus, inequality~\eqref{equation::condition_optimism_high_prob} holds if $(\tau - c_0) (\alpha_r- 1) \geq (1+\alpha_c) (1-r_0)$. This concludes the proof, since we proved that $\widetilde{V}_t^r(X_t)\geq V_t^r(x_t^*)$ in both cases where $x_t^*\in\widetilde{\mathcal A}_t^f$ and $x_t^*\notin\widetilde{\mathcal A}_t^f$.
\end{proof}

After bounding the two terms in~\eqref{eq:regret-rewritten} using Lemmas~\ref{lemma::bounding_B} to~\ref{LEMMA::LINEAR_BANDITS_OPTIMISM}, we are now ready to state the main theorem of this section, which is a regret bound for Algorithm~\ref{alg::linear_optimism_pessimism}. %

\begin{theorem}[regret bound for LC-LUCB]
\label{thm:regret-bound-HPLOP}
Let $\alpha_c = 1$ and $\alpha_r = 1+ \frac{ 2(1-r_0)}{\tau-c_0}$. Then, with probability at least $1-\delta$, the regret of Algorithm~\ref{alg::linear_optimism_pessimism} can be upper-bounded as
\begin{equation}
\mathcal{R}_{\mathcal{C}}(T) \leq   \alpha_r \beta_T(\delta, d) \sqrt{2Td\log\left( 1+\frac{TL^2}{\lambda}\right)}.
\label{eq:regret-bound-LC-LUCB}
\end{equation}
\end{theorem}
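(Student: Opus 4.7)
The plan is bookkeeping: the regret decomposition~\eqref{eq:regret-rewritten} already splits $\mathcal{R}_{\mathcal C}(T)$ into the optimism term $\mathrm{(I)} = \sum_t V_t^r(x_t^*) - \widetilde V_t^r(X_t)$ and the confidence-width term $\mathrm{(II)} = \sum_t \widetilde V_t^r(X_t) - V_t^r(X_t)$. Lemma~\ref{LEMMA::LINEAR_BANDITS_OPTIMISM} already shows $\mathrm{(I)} \le 0$ under an optimism precondition on $(\alpha_r,\alpha_c)$, and Lemma~\ref{lemma::bounding_B} already bounds $\mathrm{(II)}$ on the event~$\mathcal E$. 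So the theorem reduces to (i) checking the prescribed choice $\alpha_c = 1$, $\alpha_r = 1 + 2(1-r_0)/(\tau-c_0)$ satisfies the hypotheses of Lemma~\ref{LEMMA::LINEAR_BANDITS_OPTIMISM}, and (ii) chaining the two lemmas together on the high-probability event $\mathcal E$ defined in~\eqref{eq:high-prob-event}.

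For step~(i), first note $\alpha_c = 1 \ge 1$, and since Assumption~\ref{ass:bounded-mean-reward-cost} gives $r_0 \in [0,1]$ and Assumption~\ref{ass:safe-action} gives $c_0 < \tau$, we have $\alpha_r \ge 1$ as well. The key inequality $(1+\alpha_c)(1-r_0) \le (\tau-c_0)(\alpha_r - 1)$ becomes, after substitution, $2(1-r_0) \le (\tau - c_0) \cdot \frac{2(1-r_0)}{\tau - c_0} = 2(1-r_0)$, which holds with equality. Hence Lemma~\ref{LEMMA::LINEAR_BANDITS_OPTIMISM} applies and yields $\widetilde V_t^r(X_t) \ge V_t^r(x_t^*)$ for every $t \in [T]$ on the event $\mathcal E$.

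For step~(ii), on the event $\mathcal E$ (which holds with probability at least $1-\delta$ by Theorem~\ref{theorem::yasin_theorem}, absorbing the union bound over the two confidence sets into the definition of $\delta$), we combine $\mathrm{(I)} \le 0$ with the bound from Lemma~\ref{lemma::bounding_B} to obtain
\[
\mathcal R_{\mathcal C}(T) = \mathrm{(I)} + \mathrm{(II)} \le 0 + \alpha_r \beta_T(\delta,d) \sqrt{2Td \log\!\left(1 + \tfrac{TL^2}{\lambda}\right)},
\]
which is exactly~\eqref{eq:regret-bound-LC-LUCB}.

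There is really no hard step here: all the technical work has been done in Lemma~\ref{LEMMA::LINEAR_BANDITS_OPTIMISM} (the star-convex mixture argument and the use of asymmetric scaling) and Lemma~\ref{lemma::bounding_B} (the elliptical potential bound via Proposition~\ref{proposition::det_lemma}). The only subtlety worth flagging is that the optimism precondition of Lemma~\ref{LEMMA::LINEAR_BANDITS_OPTIMISM} involves the reward gap $\Delta_t \le 1 - r_0$, so the choice of $\alpha_r$ is dictated by the worst-case gap; this is why the ratio $(1-r_0)/(\tau - c_0)$ surfaces as the problem-dependent hardness factor in the final bound.
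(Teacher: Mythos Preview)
Your proposal is correct and follows exactly the same route as the paper: invoke the regret decomposition~\eqref{eq:regret-rewritten}, use Lemma~\ref{LEMMA::LINEAR_BANDITS_OPTIMISM} to show $\mathrm{(I)}\le 0$, and use Lemma~\ref{lemma::bounding_B} to bound $\mathrm{(II)}$ on the event $\mathcal E$. Your explicit verification that the choice $\alpha_c=1$, $\alpha_r=1+2(1-r_0)/(\tau-c_0)$ satisfies the precondition of Lemma~\ref{LEMMA::LINEAR_BANDITS_OPTIMISM} with equality is a nice addition that the paper leaves implicit.
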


\begin{proof}
The proof follows directly from bounding $\mathrm{(I)}$ and $\mathrm{(II)}$ in the regret decomposition~\eqref{eq:regret-rewritten} using Lemmas~\ref{lemma::bounding_B} to~\ref{LEMMA::LINEAR_BANDITS_OPTIMISM}.
\end{proof}

\begin{remark}
When $\lambda = 1$, ignoring logarithmic dependencies on $T$ and $1/\delta$, the term $\beta_T(\delta, d)$ in~\eqref{eq:regret-bound-LC-LUCB} is of order $\sqrt{d}$. Thus, this parameter setting yields a regret bound of order $\mathcal{R}_\mathcal{C}(T) = \widetilde{\mathcal{O}}\big((1+\frac{1-r_0}{\tau-c_0})d\sqrt{T}\big)$, which shows that LC-LUCB recovers the same dependence on $d$ and $T$ as unconstrained OFU-style linear bandit algorithms (e.g.,~\citealt{abbasi2011improved}). The extra term of $\widetilde{\mathcal{O}}(\frac{1-r_0}{\tau-c_0}d\sqrt{T})$ is the cost of satisfying the constraint and the multiplier $\frac{1-r_0}{\tau-c_0}$ represents the hardness of the constrained problem.
\end{remark}

\subsection{Lower Bound}
\label{subsec:lower-bound}

We also prove a min-max lower-bound for the constrained contextual linear bandit setting described in Section~\ref{sec:setting}. We prove in Theorem~\ref{theorem::lower_bound} that no algorithm can obtain a regret better than $\mathcal{O}\big(\max( d\sqrt{T}, \frac{1-r_0}{(\tau - c_0)^2})\big)$ on all such constrained contextual linear bandit instances. This result substantiates our intuition that learning while satisfying linear constraints is statistically harder than the unconstrained case, particularly when the safety gap $\tau - c_0$ is small w.r.t.~the horizon $T$ and the reward suboptimality $1-r_0$. %
\begin{restatable}{theorem}{theoremlowerboundlclucb}\label{theorem::lower_bound}
Let $\tau,c_0, r_0\in (0,1)$, $B = \max\big(\frac{d\sqrt{T}}{8e^2},\frac{1-r_0}{21(\tau - c_0)^2}\big)$, and assume $T \geq \max(d-1,\frac{168eB}{1-r_0})$. Then, for any algorithm $\mathfrak{A}$, there is a pair of reward and cost parameters $(\theta_*, \mu_*)$, such that $\mathcal{R}_\mathcal{C}(T) \geq B$. %
\end{restatable}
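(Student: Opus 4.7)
The plan is to prove the lower bound by exhibiting, separately, two classes of instances, one for each of the two terms inside the maximum that defines $B$. Since $B=\max(\cdot,\cdot)$, it suffices to make the regret exceed one of them on each construction and then combine.

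\textbf{The $\tfrac{d\sqrt{T}}{8e^2}$ term.} This is the standard unconstrained linear-bandit lower bound, lifted to our setting. I would set $\mu_*=0$, so that $\langle x,\mu_*\rangle=0\leq \tau$ for every action and the stage-wise safety constraint is vacuously satisfied for every policy. The problem then reduces to unconstrained stochastic linear bandits, and I would embed inside the ball of radius $L$ a hard class of instances in the style of Theorem~24.1 of~\citet{lattimore2018bandit} (or of~\citealt{dani08stochastic}). The assumption $T\geq d-1$ is precisely what is needed to instantiate that minimax construction, and it yields $\mathcal{R}_\mathcal{C}(T)\geq \Omega(d\sqrt{T})$ with a constant that can be matched to $1/(8e^2)$. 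The safe action $x_0$ and the values $r_0,c_0,\tau$ play no role in this part and can be chosen consistently with any instance of the hard class.

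\textbf{The $\tfrac{1-r_0}{21(\tau-c_0)^2}$ term.} This is the constraint-specific lower bound, which I would establish by a two-point Le~Cam argument on an essentially one-dimensional instance. Let $\mathcal{A}_t\equiv\mathcal{A}=[x_0,x^*]$ be a single segment (trivially star convex around $x_0$), with $\langle x_0,\theta_*\rangle=r_0$, $\langle x_0,\mu_*\rangle=c_0$ and $\langle x^*,\theta_*\rangle=1$, and consider two hypotheses differing only in $\langle x^*,\mu_*\rangle$: under $H_+$, $\langle x^*,\mu_*\rangle=c_0$, so $x^*$ is feasible and optimal; under $H_-$, $\langle x^*,\mu_*\rangle=2\tau-c_0$, so $x^*$ is infeasible and $x_0$ is optimal. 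Writing $X_t=\alpha_t x^*+(1-\alpha_t)x_0$ for a history-measurable $\alpha_t\in[0,1]$, the per-round regret under $H_+$ is $(1-\alpha_t)(1-r_0)$, while the safety constraint under $H_-$ (where the mean cost of $X_t$ equals $c_0+2\alpha_t(\tau-c_0)$) forces $\alpha_t\leq \tfrac12$ with high probability. A direct computation shows
\[
\mathrm{KL}(P_+\!\parallel\! P_-)\;=\;\frac{2(\tau-c_0)^2}{R^2}\,\mathbb{E}_{+}\!\left[\sum_{t=1}^T \alpha_t^2\right].
\]
Applying Bretagnolle--Huber to the event $A=\{\sum_t \mathbf{1}\{\alpha_t>\tfrac12\}\geq T/4\}$, safety under $H_-$ forces $P_-(A)$ to be tiny (a union bound over the safety guarantee), whereas any policy whose regret under $H_+$ is smaller than $(1-r_0)/[21(\tau-c_0)^2]$ must satisfy $\sum_t(1-\alpha_t)\lesssim 1/[21(\tau-c_0)^2]$ and hence $P_+(A)$ close to $1$ (the hypothesis $T\geq 168eB/(1-r_0)$ is exactly what makes $T/4$ dominate $T-\mathrm{regret}/(1-r_0)$). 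Combining the two and using $\alpha_t^2\leq \alpha_t$, the KL bound is insufficient to separate $P_+$ from $P_-$ unless the regret under $H_+$ is at least $\Omega((1-r_0)/(\tau-c_0)^2)$; the constants can then be tightened to the claimed $1/21$.

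\textbf{Main obstacle.} The delicate point is that the action set is a continuum $[x_0,x^*]$, not the discrete pair $\{x_0,x^*\}$: the algorithm can play small-$\alpha_t$ mixtures that are safe under both hypotheses and yet gather information about $\mu_*$, so a textbook two-arm Le~Cam argument is too loose. The resolution, sketched above, is to work with $\alpha_t$ as the unit of analysis and to track simultaneously the quadratic KL contribution $\alpha_t^2$, the linear regret contribution $(1-\alpha_t)$, and the hard safety budget $\alpha_t\leq 1/2$ under $H_-$. Pushing the explicit constants $1/(8e^2)$, $1/21$ and $168e$ through this argument is a tedious but routine piece of bookkeeping; the hypotheses $T\geq\max(d-1,168eB/(1-r_0))$ and $\tau,c_0,r_0\in(0,1)$ are tuned exactly to make both constructions valid simultaneously. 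Taking the worse of the two constructions then yields $\mathcal{R}_\mathcal{C}(T)\geq B$.
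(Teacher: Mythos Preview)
Your treatment of the $d\sqrt{T}/(8e^2)$ term is fine and matches the paper: make the cost constraint vacuous and invoke the standard unconstrained linear-bandit lower bound.

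Your two-hypothesis construction for the $(1-r_0)/[21(\tau-c_0)^2]$ term, however, has a genuine gap. On the segment $[x_0,x^*]$, the only way to distinguish $H_+$ from $H_-$ is to put mass on $x^*$, and under $H_+$ doing so is \emph{free}: playing $\alpha_t=1$ is simultaneously optimal and maximally informative. Concretely, your own KL computation gives $\mathrm{KL}(P_+\|P_-)=\tfrac{2(\tau-c_0)^2}{R^2}\,\mathbb{E}_+\!\big[\sum_t\alpha_t^2\big]$, and an algorithm with small regret under $H_+$ has $\alpha_t\approx 1$, hence $\mathrm{KL}=\Theta\!\big(T(\tau-c_0)^2\big)$. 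In the regime of the theorem ($T\ge 168eB/(1-r_0)$, i.e.\ $T(\tau-c_0)^2\gtrsim 8e$) this KL is large, so Bretagnolle--Huber imposes no constraint: $P_+(A)\approx 1$ and $P_-(A)\approx 0$ are perfectly compatible. The ``$\alpha_t^2\le\alpha_t$'' observation does not help, because low regret under $H_+$ gives an \emph{upper} bound on $\sum_t(1-\alpha_t)$, not on $\sum_t\alpha_t^2$. The same obstruction arises if you run the KL in the $P_-\|P_+$ direction: safety only yields $\mathbb{E}_-[\sum_t\alpha_t^2]\le T/4$, again $\Theta(T(\tau-c_0)^2)$.

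The paper avoids this by decoupling the \emph{informative} direction from the \emph{rewarding} one. It embeds a constrained multi-armed instance in $\Delta_d$ with (at least) four arms: under the base instance $\nu$ the optimal policy is a mixture of arms $0$ and $1$, while the two instances $\nu,\nu'$ differ only in the cost of a separate arm $3$. Under $\nu$, any feasible mixture that puts mass on arm $3$ is strictly suboptimal, so every unit of probe mass on arm $3$ costs $\Theta(\Delta)$ regret (with $\Delta=(1-r_0)/7$). The divergence-decomposition lemma then forces $\mathbb{E}_\nu[T_3(T)]\gtrsim 1/(\tau-c_0)^2$ for any algorithm that behaves well on both instances, yielding $\mathcal{R}_\mathcal{C}(T)\gtrsim \Delta/(\tau-c_0)^2$. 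The missing ingredient in your sketch is exactly this third ``expensive-to-probe'' direction; a one-segment two-point design cannot supply it.
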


\begin{proof}
See Appendix~\ref{appendix::High_probability_Lower_Bound}.    
\end{proof}
Theorem~\ref{theorem::lower_bound} shows that if $\frac{1}{\tau - c_0} = \Omega(\sqrt{T})$ and $r_0 \leq 1/2$, learning while satisfying linear constraints is impossible in a min-max sense since in this case our lower bound  indicates the regret must grow at least linearly. As an additional example, if $\frac{1}{\tau - c_0 }= \sqrt{d}\;T^{3/8}$ and $r_0 \leq 1/2$, Theorem~\ref{theorem::lower_bound} implies that a constrained learner must incur $\Omega(d \; T^{3/4})$ regret, while unconstrained learning can achieve a regret rate of order $d\sqrt{T}$. This shows the existence of a fundamental statistical separation between constrained and unconstrained learning as a function of the ratio between the safety gap $\tau - c_0$ and the reward suboptimality $1-r_0$. The question of whether the quadratic dependence on $\tau-c_0$ is optimal in this lower-bound remains open. %

\subsection{Extension to Multiple Constraints}
\label{sec:multi-constraints}

The formulation, algorithm, and analysis of %
the previous sections can be extended to multiple constraints. 
In this setting, when the agent takes an action $X_t$ in each round $t\in [T]$, in addition to the reward signal $R_t$, it observes a vector of $m$ cost signals $C_t^{(i)} = \langle X_t, \mu_*^{(i)}\rangle + \xi_t^{c(i)},\;\forall i\in[m]$, where the reward and costs satisfy the assumptions %
listed in Section~\ref{sec:setting}. The agent is required to satisfy $m$ {\bf\em stage-wise linear constraints} $\langle X_t, \mu_*^{(i)} \rangle \leq \tau_i,\;\forall i\in[m]$. Here we also need the following assumption for the safe action, which is a generalization of Assumption~\ref{ass:safe-action} to multiple constraints. %

\begin{assumption}[safe action]
\label{assumption::known_safe_arm_multiple}
There is a known {\em safe action} $x_0\in\mathcal A_t,\;\forall t\in[T]$, with known reward $r_0$ and costs $c_0^{(i)}=\langle x_0, \mu^{(i)}_* \rangle \leq \tau_i,\;\forall i\in[m]$. 
\end{assumption}

In extending LC-LUCB to multiple constraints, we maintain estimators $\big\{\widehat{\mu}_t^{o, \perp (i)}\big\}_{i=1}^m$ for all cost parameters $\big\{\mu_*^{(i)}\big\}_{i=1}^m$, and construct the feasible action set as
\begin{equation}
\label{equation::safe_set1}
\widetilde{\mathcal{A}}^f_t = \big\{x\in\mathcal{A}_t : \widetilde{V}_t^{c(i)}(x) \leq \tau_i,\;\forall i\in [m]\big\},
\end{equation}
where $\widetilde{V}_t^{c(i)}(\cdot)$ is the pessimistic cost value for the $i$'th cost signal. The rest of the algorithm remains unchanged. 

To derive a regret bound for the extension of LC-LUCB to multiple constraints, it suffices to prove the following extension of Lemma~\ref{LEMMA::LINEAR_BANDITS_OPTIMISM}.

\begin{restatable}{lemma}{linearbanditsoptimismmultiple}\label{LEMMA::LINEAR_BANDITS_OPTIMISM_MULTIPLE}
If we set the scaling parameters $\alpha_r$ and $\alpha_c$ such that $\alpha_r,\alpha_c \geq 1$ and $(1+\alpha_c)(1-r_0) \leq \Delta_c^* (\alpha_r-1)$, where $\Delta_c^* = \min_{i\in[m]}(\tau - c_0^{(i)})$, then for all $t\in[T]$, with probability at least $1-\delta$, we have $\widetilde{V}_t^r(X_t) \geq V^r(x_t^*)$.
\end{restatable}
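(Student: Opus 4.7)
The plan is to mirror the proof of Lemma~\ref{LEMMA::LINEAR_BANDITS_OPTIMISM} essentially verbatim, with the only nontrivial modification being the identification of a single \emph{binding} constraint that plays the role previously played by the unique cost. First I would extend the good event $\mathcal{E}$ to include all $m$ cost estimators simultaneously, applying a union bound over the events $\|\widehat{\mu}_t^{o,\perp(i)} - \mu_*^{o,\perp(i)}\|_{\Sigma_t^{o,\perp(i)}} \leq \beta_t(\delta,d-1)$ for $i \in [m]$ (the $m$ factor is absorbed into the confidence term, since the lemma is stated up to the high-probability event). On this event, reward optimism $\widetilde{V}_t^r(x) \geq V_t^r(x)$ continues to hold for every $x \in \mathcal{A}_t$, exactly as in~\eqref{equation::domination_max_main}.

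Case~1, where $x_t^* \in \widetilde{\mathcal{A}}_t^f$, is unchanged: $X_t$ is the $\widetilde{V}_t^r$-maximizer over $\widetilde{\mathcal{A}}_t^f$, so $\widetilde{V}_t^r(X_t) \geq \widetilde{V}_t^r(x_t^*) \geq V_t^r(x_t^*)$. For Case~2, where $x_t^* \notin \widetilde{\mathcal{A}}_t^f$, I would again define the mixture $\widetilde{x}_t = \gamma_t x_t^* + (1-\gamma_t)x_0$ with $\gamma_t$ the largest value in $[0,1]$ such that $\widetilde{x}_t \in \widetilde{\mathcal{A}}_t^f$; star convexity around $x_0$ ensures the whole segment $[x_0, \widetilde{x}_t]$ is feasible. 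Since $\widetilde{\mathcal{A}}_t^f$ is now the intersection over $i \in [m]$ of the sets $\{x : \widetilde{V}_t^{c(i)}(x) \leq \tau_i\}$, by continuity of each $\widetilde{V}_t^{c(i)}$ along the segment there exists at least one \emph{binding} index $i^\star = i^\star(t) \in [m]$ with $\widetilde{V}_t^{c(i^\star)}(\widetilde{x}_t) = \tau_{i^\star}$.

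With the binding index fixed, the algebra of equations~\eqref{eq:temp0}--\eqref{equation::gamma_t_lower_bound} applies to the $i^\star$-th cost coordinate with no change, yielding
\begin{equation*}
\gamma_t \;\geq\; \frac{\tau_{i^\star} - c_0^{(i^\star)}}{\tau_{i^\star} - c_0^{(i^\star)} + (1+\alpha_c)\,\beta_t(\delta,d-1)\,\|x_t^{*,o,\perp}\|_{(\Sigma_t^{o,\perp(i^\star)})^{\dagger}}}.
\end{equation*}
The reward-side chain of inequalities~\eqref{eq:I-def} is identical (it only involves $\widetilde{\theta}_t$, $\Sigma_t$, and Lemma~\ref{LEMMA::INVERSE_NORM_DOMINATION}, none of which depend on $i^\star$), and reduces the optimism condition to
\begin{equation*}
(\tau_{i^\star} - c_0^{(i^\star)})(\alpha_r - 1) \;\geq\; (1+\alpha_c)\,\Delta_t.
\end{equation*}

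Finally, since $i^\star$ can be any index in $[m]$ depending on the instance and round, the condition must be enforced uniformly by taking the worst (smallest) safety gap, which gives $\Delta_c^\star(\alpha_r - 1) \geq (1+\alpha_c)\Delta_t$; combining with $\Delta_t \leq 1-r_0$ (Assumption~\ref{ass:bounded-mean-reward-cost}) yields the stated sufficient condition $(1+\alpha_c)(1-r_0) \leq \Delta_c^\star(\alpha_r - 1)$. The only real obstacle is the existence of a binding index at $\gamma_t$; this is a simple continuity/maximality argument, so in practice the extension is essentially bookkeeping over $i$ plus a worst-case minimization of the safety gap.
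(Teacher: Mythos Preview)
Your proposal is correct and follows essentially the same route as the paper, which simply says to substitute $\tau - c_0$ by $\Delta_c^*$ in the derivation of~\eqref{equation::gamma_t_lower_bound} and then continue as in Lemma~\ref{LEMMA::LINEAR_BANDITS_OPTIMISM}. Your version is more explicit in that you identify a binding index $i^\star$ before passing to the worst-case gap; one small notational slip is that the Gram matrix $\Sigma_t^{o,\perp}$ does not depend on $i$ (the orthogonal projection is determined solely by $x_0$), so your $\Sigma_t^{o,\perp(i^\star)}$ should just be $\Sigma_t^{o,\perp}$.
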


\begin{proof}
A simple modification to the proof of Lemma~\ref{LEMMA::LINEAR_BANDITS_OPTIMISM} yields the desired result. Note that substituting $\tau - c_0$ with $\Delta_c^*$ and following the same argument as in the derivation of inequality~\eqref{equation::gamma_t_lower_bound} yields
\begin{equation}
\label{equation::multi_constraint_lower_bound}
\gamma_t \geq \frac{\Delta_c^*}{\Delta_c^* + (1+\alpha_c ) \beta_t(\delta, d-1) \| x_t^{*, o, \perp}\|_{(\Sigma_t^{o, \perp})^{\dagger}}}.
\end{equation}
Plugging this result into~\eqref{eq:I-def} and continuing with the proof logic of Lemma~\ref{LEMMA::LINEAR_BANDITS_OPTIMISM} concludes the proof.
\end{proof}

\vspace{-0.05in}
Lemma~\ref{LEMMA::LINEAR_BANDITS_OPTIMISM_MULTIPLE} allows us to derive a regret bound for the extension of LC-LUCB to multiple constraints, identical to the one we proved for the single-constraint case in Theorem~\ref{thm:regret-bound-HPLOP}.

\begin{theorem}
Let $\alpha_c = 1$ and $\alpha_r = 1+ 2(1-r_0)/\Delta_c^*$. Then, with probability at least $1-\delta$, the regret of the extension of LC-LUCB to multiple constraints can be upper-bounded as
\begin{equation}
\mathcal{R}_{\mathcal{C}}(T) \leq \alpha_r \beta_T(\delta, d) \sqrt{2Td\log\left(1+\frac{TL^2}{\lambda}\right)}.
\end{equation}
\end{theorem}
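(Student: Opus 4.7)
The plan is to mirror the single-constraint proof of Theorem~\ref{thm:regret-bound-HPLOP} verbatim, taking the new optimism lemma (Lemma~\ref{LEMMA::LINEAR_BANDITS_OPTIMISM_MULTIPLE}) as the only ingredient that needs to be substituted. First I would condition on the same good event $\mathcal{E}$, extended to simultaneously control all $m$ cost estimators:
\begin{equation*}
\mathcal{E} = \Bigl\{\|\widehat{\theta}_t - \theta_*\|_{\Sigma_t} \leq \beta_t(\delta,d) \;\wedge\; \|\widehat{\mu}_t^{o,\perp(i)} - \mu_*^{o,\perp(i)}\|_{\Sigma_t^{o,\perp}} \leq \beta_t(\delta,d-1),\; \forall t\in[T], \forall i \in[m]\Bigr\},
\end{equation*}
which holds with probability at least $1-\delta$ after a union bound (possibly absorbing the factor of $m$ into the $\log(1/\delta)$ term of $\beta_t$, or equivalently by rescaling $\delta \to \delta/(m+1)$; the statement as written treats $m$ as $\mathcal{O}(1)$).

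Next I would reuse the regret decomposition~\eqref{eq:regret-rewritten}, $\mathcal{R}_\mathcal{C}(T) = \mathrm{(I)} + \mathrm{(II)}$, where $\mathrm{(I)} = \sum_t V_t^r(x_t^*) - \widetilde{V}_t^r(X_t)$ and $\mathrm{(II)} = \sum_t \widetilde{V}_t^r(X_t) - V_t^r(X_t)$. Term $\mathrm{(II)}$ depends only on the reward confidence set $\mathcal{C}_t^r(\alpha_r)$ and on the actions $X_t$ actually played; neither of these changes meaningfully in the multi-constraint case (the feasible set in~\eqref{equation::safe_set1} merely intersects additional half-spaces, and $X_t \in \widetilde{\mathcal{A}}_t^f$ still). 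Hence, Lemma~\ref{lemma::bounding_B} applies verbatim and gives $\mathrm{(II)} \leq \alpha_r \beta_T(\delta,d)\sqrt{2Td\log(1+TL^2/\lambda)}$.

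For term $\mathrm{(I)}$, I would invoke the just-proved Lemma~\ref{LEMMA::LINEAR_BANDITS_OPTIMISM_MULTIPLE}: with the choice $\alpha_c = 1$ and $\alpha_r = 1 + 2(1-r_0)/\Delta_c^*$, the hypothesis $(1+\alpha_c)(1-r_0) = 2(1-r_0) = \Delta_c^*(\alpha_r-1)$ is met, so for every $t$ we have $\widetilde{V}_t^r(X_t) \geq V_t^r(x_t^*)$, and therefore $\mathrm{(I)} \leq 0$. Summing the two bounds yields the stated inequality.

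The only step that requires genuine care—and the place where the multi-constraint structure actually bites—is the lower bound on the mixture coefficient $\gamma_t$ in the new optimism lemma: one must pick $\widetilde{x}_t = \gamma_t x_t^* + (1-\gamma_t)x_0$ so that $\widetilde{x}_t \in \widetilde{\mathcal{A}}_t^f$ simultaneously for all $m$ pessimistic constraints, which forces the worst-case safety margin $\Delta_c^* = \min_{i\in[m]}(\tau_i - c_0^{(i)})$ to appear in~\eqref{equation::multi_constraint_lower_bound}. Once that worst-case $\Delta_c^*$ replaces $\tau - c_0$ in inequality~\eqref{equation::condition_optimism_high_prob}, the remainder of the argument is unchanged, and no further obstacle arises in propagating the bound through to the final regret inequality.
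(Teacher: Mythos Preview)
Your proposal is correct and mirrors the paper's approach exactly: the paper does not give an explicit proof for this theorem but simply remarks that Lemma~\ref{LEMMA::LINEAR_BANDITS_OPTIMISM_MULTIPLE} allows one to derive the bound ``identical to the one we proved for the single-constraint case in Theorem~\ref{thm:regret-bound-HPLOP},'' i.e., reuse the decomposition~\eqref{eq:regret-rewritten}, bound $\mathrm{(II)}$ via Lemma~\ref{lemma::bounding_B}, and bound $\mathrm{(I)}\le 0$ via the new optimism lemma. Your observation about the union bound over the $m$ cost estimators (and the resulting $m$-dependence hidden in $\beta_T$) is a valid subtlety that the paper leaves implicit.
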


We show in Appendix~\ref{section::unknown_c0} how the multi-constraint algorithm (similar to the single-constraint case) can be changed to handle the scenario where the reward $r_0$ and costs $\{c^{(i)}_0\}_{i=1}^m$ of the safe action are unknown.

\subsection{Experiments}
\label{section::experiments}

In this section we compare the performance of LC-LUCB and the Safe-LTS algorithm of~\citet{Moradipari19SL} in two simulation-based experiments. In each of these scenarios we show that LC-LUCB performs better than Safe-LTS. In all our experiments, we run a regularized least-squares regression by setting $\lambda = 1$.

In our first experiment, presented in Figure~\ref{fig::figure_linear_constrained_1}, we consider a linear bandit problem in which the safe action is the zero-vector $x_0 = 0$ and the arm sets, $\mathcal{A}_t$, are $10$ dimensional star-convex sets generated by the $10$ cyclic shifted versions of the vector $v/\| v\|$, where $v = (0, 1, \ldots, 9)$. For all $t$, the action set $\mathcal{A}_t$ is the star-convex set defined by this set of actions and the lines emanating from the zero vector. We set $\theta_* = v/\|v\|$ where $v = (0, 1, \ldots, 9)$ and\footnote{The vector $(9, 8, \ldots, 0)$ is the flipped version of $v$. } $\mu_* = (9, 8, \ldots, 0)/\| v\|$ to be the $\ell_2$ normalized version of $v$ and $(9, \ldots, 0)$. In Figure~\ref{fig::figure_linear_constrained_1}, we plot the regret and cost evolution of LC-LUCB for different threshold values $\tau$, and compare them with those for the Safe-LTS algorithm of~\citet{Moradipari19SL}. The safe action is the zero vector and each plot is an average over $10$ runs. We show that as the threshold $\tau$ is driven to $0$, the problem gets progressively harder. The results show that for all threshold values and dimensions, LC-LUCB has a better regret profile than Safe-LTS, while satisfying the constraint. We report the results for dimensions $d=3$ and $d=5$ of this problem, and also show the reward evolution (in addition to regret and cost) for LC-LUCB and Safe-LTS in Appendix~\ref{sec:experiments}.

\begin{figure*}
\centering
\includegraphics[width=0.29\linewidth]{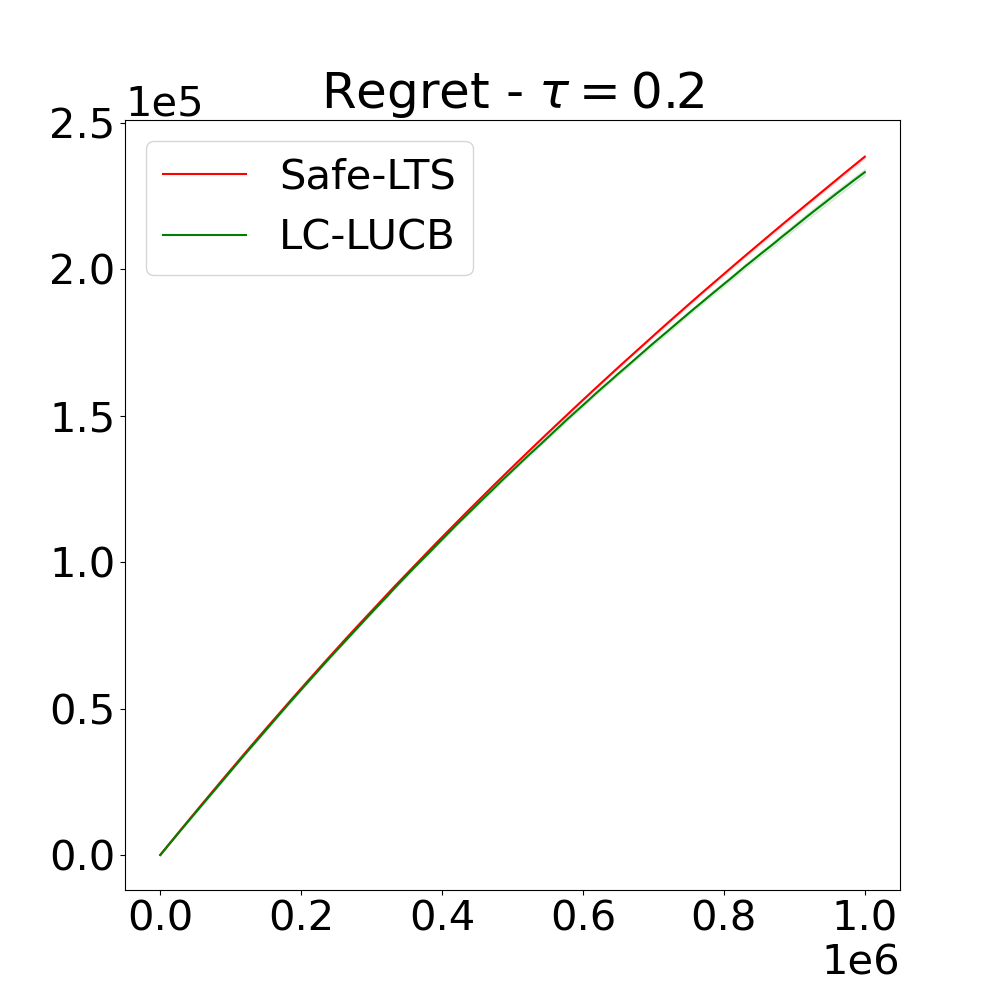}
\centering
\includegraphics[width=0.29\linewidth]{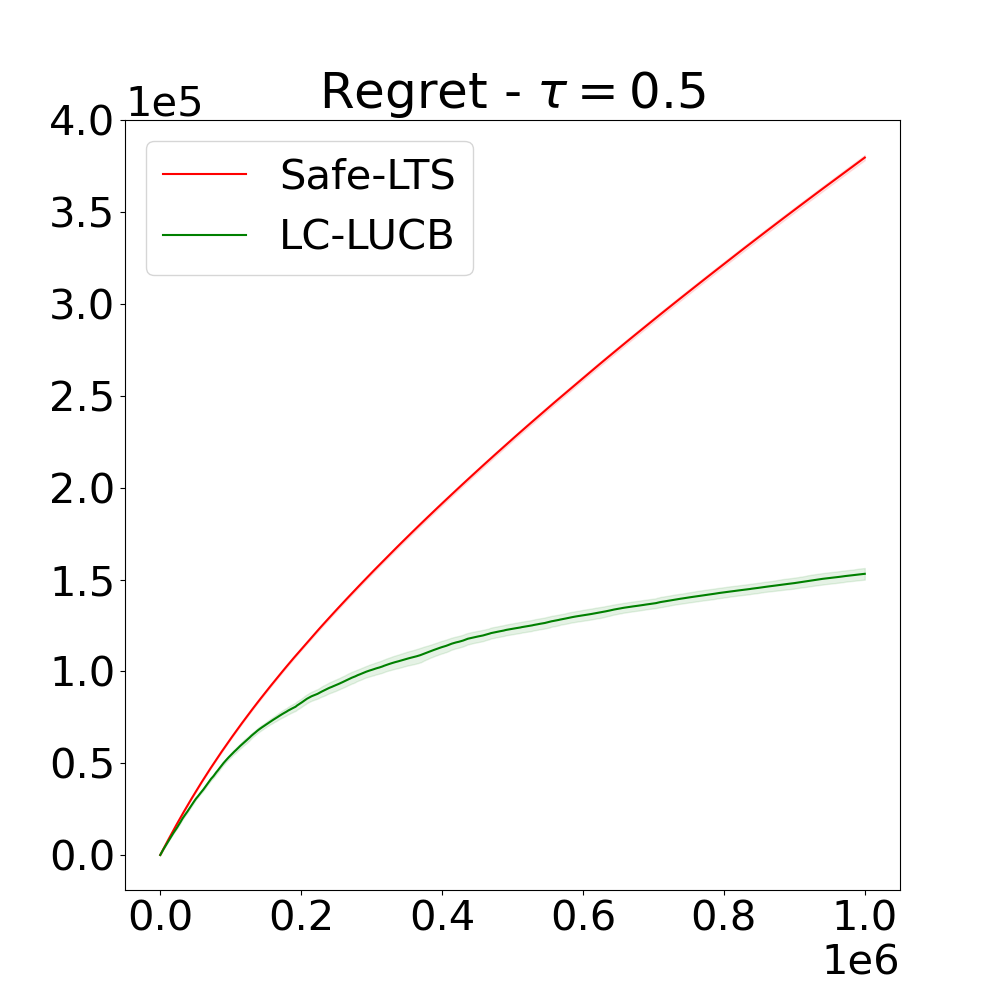} 
\centering
\includegraphics[width=0.29\linewidth]{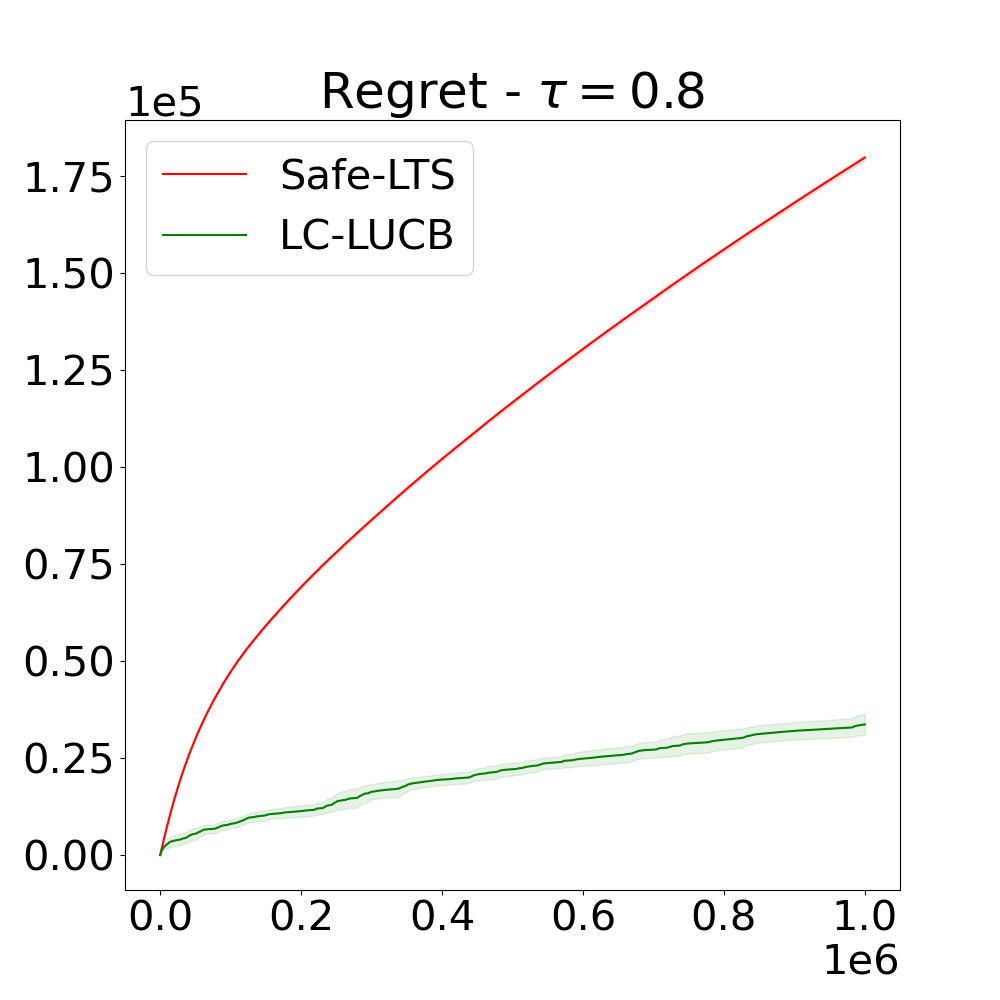}
\centering
\includegraphics[width=0.29\linewidth]{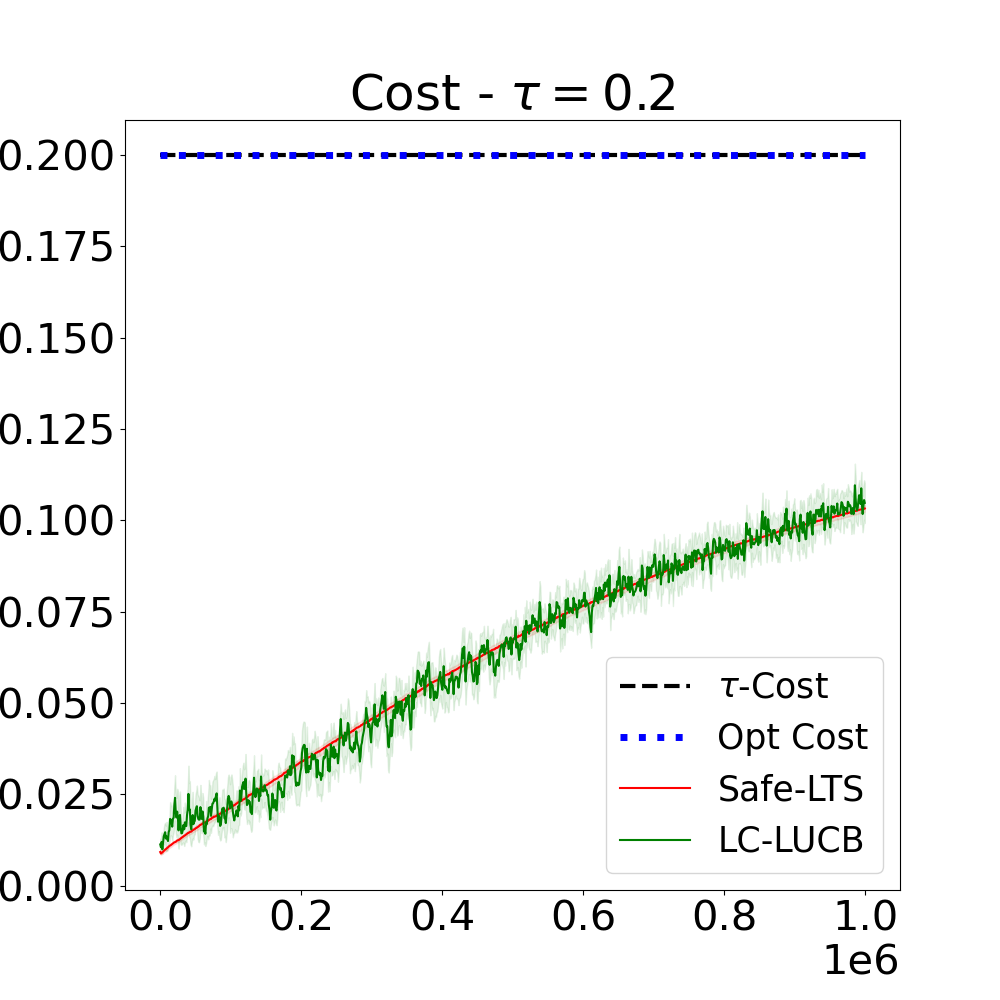}
\centering
\includegraphics[width=0.29\linewidth]{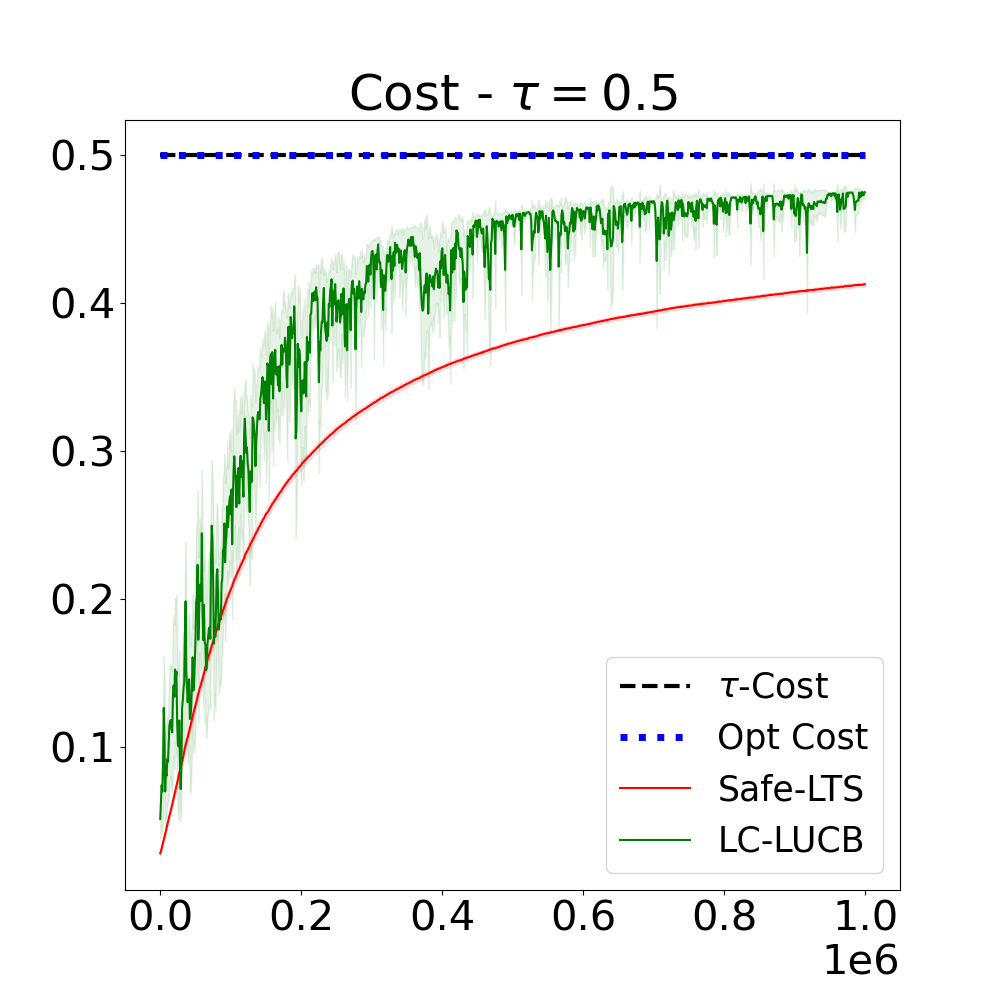}
\centering
\includegraphics[width=0.29\linewidth]{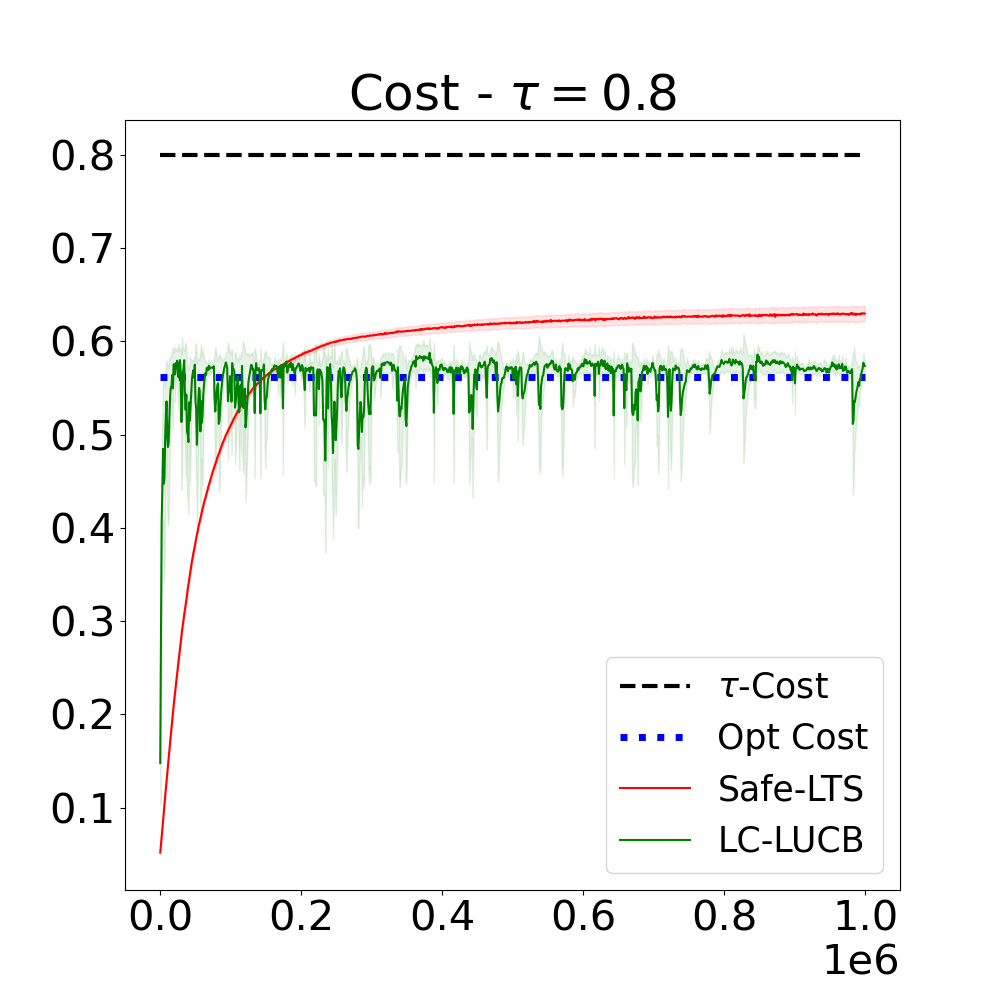}
\caption{Regret and cost evolution of \textbf{LC-LUCB} and \textbf{Safe-LTS}. \textbf{Left:} Constraint Threshold $\tau=0.2$. \textbf{Center:} Constraint Threshold $\tau = 0.5$. \textbf{Right:} Constraint Threshold $\tau = 0.8$.  Learning is harder for smaller thresholds $\tau$. The arm sets $\mathcal{A}_t$ are $10$ dimensional star-convex sets generated by the $10$ cyclic shifted versions of the vector $v/\| v\|$, where $v = (0, 1, \ldots, 9)$. There exist optimal unconstrained solutions with cost less than 0.561. This is less than the cost threshold $\tau = 0.8$. This is why the lower right cost evolution plot shows convergence to a level below the $0.8$ threshold.} \label{fig::figure_linear_constrained_1}
\end{figure*}

In our second experiment, presented in Figure~\ref{fig::figure_unit_sphere}, we consider the setting where the action sets $\mathcal{A}_t$ are the unit ball (infinite) and the safe action is the zero vector. Our plots compare the regret across time of Safe-LTS and LC-LUCB when averaged across problem instances. We generate different problem instances by sampling $\theta_*$ and $\mu_*$ vectors uniformly from the unit sphere and also generate thresholds $\tau$ by sampling uniformly from the interval $[0,1]$. Each sample run of this experiment corresponds to a sample problem instance. In order to make the optimization problem at each round of LC-LUCB tractable for this infinite size action set, we approximate $\mathcal{A}_t$ by sampling $100$ vectors $\{v_i\}^{1000}_{i=1}$ uniformly from the unit sphere and defining an approximate (still infinite) action set $\tilde{\mathcal{A}}_t$, consisting of the rays from zero to each of the $v_i$. Figure~\ref{fig::figure_unit_sphere} compares the regret of LC-LUCB with Safe-LTS for dimensions $d=5$ and $d=10$ of this problem. Each plot is averaged over $10$ sample runs and the shaded regions around the curves correspond to $1$ standard deviation. Similar to the previous experiment, LC-LUCB shows better performance than Safe-LTS.

\begin{figure}%
        \centering{\includegraphics[width=0.3\linewidth]{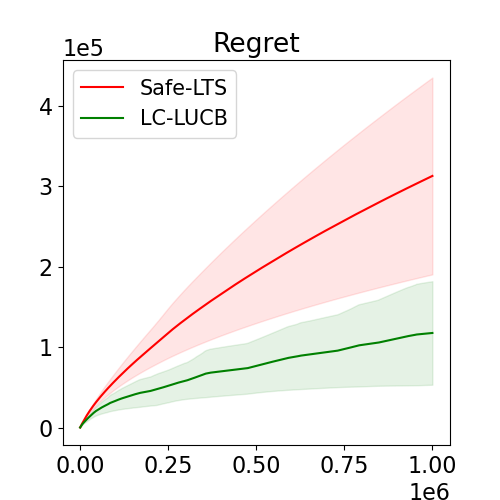}}
        \hspace{0.5in}
        \centering{\includegraphics[width=0.3\linewidth]{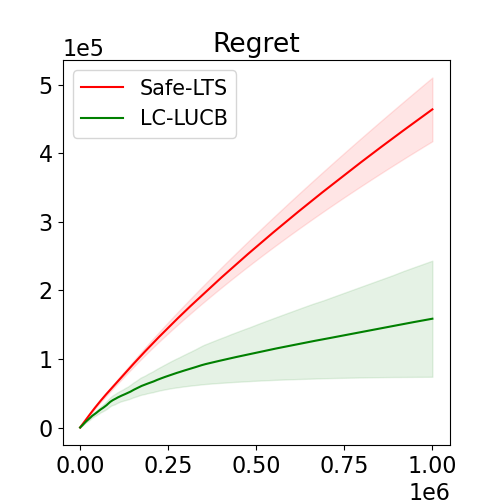}}
        \caption{ Unit sphere, \textbf{left } dimension $d = 5$,  \textbf{right} dimension $d=10$.}
        \label{fig::figure_unit_sphere}
\end{figure}

\section{Constraint Relaxation: From High Probability to Expectation}
\label{section::expectation_constraints}

In Section~\ref{section::high_probability_constraints}, we studied a constrained contextual linear bandit setting in which the agent maximizes its expected $T$-round cumulative reward (minimizes its expected $T$-round constrained pseudo-regret) while satisfying a stage-wise linear high probability constraint defined in~\eqref{eq:high-prob-constraint}. In many constrained or multi-objective problems, making sure that the constraints are not violated or certain objectives are within certain thresholds with high probability would result in overly conservative strategies. A common solution to balance performance and constraint satisfaction is to replace conservative {\em high probability} constraints with more relaxed {\em in-expectation} constraints. In this section, we study such a relaxation in which the high probability constraint~\eqref{eq:high-prob-constraint} is replaced with a constraint in expectation. We describe this relaxed setting in Section~\ref{subsec:expectation-setting}. We then propose an algorithm for this setting, provide its regret analysis, specialize our results to multi-armed bandits (MABs),\footnote{Unlike the high probability constrained setting of Section~\ref{section::high_probability_constraints} that is impossible to solve it in MABs (see Remark~\ref{remark:extension-MAB}), the relaxed setting we study in this section can be solved for MABs.} and report experimental results as a proof of concept. Additionally, in Section~\ref{section::nonlinear_rewards_costs} we extend these results to the scenario where the reward and cost functions are non-linear. We propose an algorithm for this setting with regret analysis. We use a characterization of function class complexity based on the eluder dimension~\citep{russo2013eluder} in the derivation of this regret bound.

\subsection{Linear Contextual Bandits with In-expectation Stage-Wise Linear Constraints}
\label{subsec:expectation-setting}
 
In the relaxed setting we study in this section, in each round $t\in[T]$, the agent selects its action $X_t\in\mathcal A_t$ according to its policy $\pi_t\in\Pi_t=\Delta_{\mathcal A_t}$, i.e.,~$X_t\sim\pi_t$. The goal of the agent is to produce a sequence of policies $\{\pi_t\}_{t=1}^T$ with maximum expected cumulative reward over the course of $T$ rounds, while satisfying the {\bf\em stage-wise linear constraint} %
\begin{equation}
\label{eq:constraint}
\mathbb E_{X\sim\pi_t}[\langle X,\mu_* \rangle]\leq \tau,\quad\forall t\in[T], 
\end{equation}
where $\tau\geq 0$ is the {\em constraint threshold}. Note that unlike the constraint~\eqref{eq:high-prob-constraint} studied in Section~\ref{section::high_probability_constraints} which is in high probability, this constraint is in expectation. 

The policy $\pi_t$ that the agent selects in each round $t\in[T]$ should belong to the set of {\em feasible policies} over the action set $\mathcal A_t$, i.e.,~$\Pi_t^f=\{\pi\in\Pi_t:\mathbb E_{X\sim\pi}[\langle X,\mu_* \rangle]\leq \tau\}$. Maximizing the expected cumulative reward in $T$ rounds is equivalent to minimizing the $T$-round {\em constrained (pseudo)-regret}%
\begin{equation}
\label{eq:regret}
\mathcal R_\Pi(T) = \sum_{t=1}^T \mathbb E_{X\sim\pi_t^*}[\langle X,\theta_* \rangle] - \mathbb E_{X\sim\pi_t}[\langle X,\theta_* \rangle], 
\end{equation}
where $\pi_t,\pi^*_t\in\Pi_t$ for all $t\in[T]$, and $\pi^*_t\in\max_{\pi\in\Pi_t^f}\mathbb E_{X\sim\pi}[\langle X,\theta_* \rangle]$ is the {\em optimal feasible} policy in round $t$. The terms $\mathbb E_{X\sim\pi}[\langle X,\theta_* \rangle]$ in~\eqref{eq:regret} and $\mathbb E_{X\sim\pi}[\langle X,\mu_* \rangle]$ in~\eqref{eq:constraint} are the expected reward and cost of policy $\pi$, respectively. Thus, a feasible policy is the one whose expected cost is below the constraint threshold $\tau$, and the optimal feasible policy is a feasible policy with maximum expected reward. We use the shorthand notations $x_\pi := \mathbb{E}_{X \sim \pi}[X]$, %
$R_\pi:=\mathbb E_{X\sim\pi}[\langle X,\theta_* \rangle]$, and $C_\pi:=\mathbb E_{X\sim\pi}[\langle X,\mu_* \rangle]$ for the expected action, reward, and cost of a policy $\pi$, respectively. With these notations, we may write the $T$-round regret in~\eqref{eq:regret} as 
\begin{equation}
\label{eq:regret2}
\mathcal R_\Pi(T)=\sum_{t=1}^T R_{\pi_t^*} - R_{\pi_t}.
\end{equation}
{\bf Notation.} Here we use some extra notations in addition to those defined in Section~\ref{sec:setting}. We define the projection of a policy $\pi$ into $\mathcal{V}_o$ and $\mathcal{V}_o^{\perp}$, as $x_{\pi}^o := \mathbb{E}_{X \sim \pi}[X^o]$ and $x_{\pi}^{o, \perp} := \mathbb{E}_{X \sim \pi}[X^{o, \perp}]$, respectively.

\subsubsection{Algorithm}
\label{sec:algo_expectation}

We propose a UCB-style algorithm for this setting, called {\em optimistic-pessimistic linear bandit} (OPLB), because it maintains a pessimistic assessment of the set of available policies, while acting optimistically within this set. Algorithm~\ref{alg:optimistic-pessimistic-LB} contains the pseudo-code of OPLB. Similar to LC-LUCB (Algorithm~\ref{alg::linear_optimism_pessimism}), the main idea behind Algorithm~\ref{alg:optimistic-pessimistic-LB} is to balance exploration and constraint satisfaction by {\em asymmetrically} scaling the radii of the reward and cost confidence sets with different scaling factors $\alpha_r$ and $\alpha_c$. This will prove crucial in the regret analysis of OPLB. We now describe in detail the steps of OPLB that differ from the LC-LUCB algorithm.  
 
\begin{algorithm}[t]
\caption{Optimistic-Pessimistic Linear Bandit (OPLB)}
\label{alg:optimistic-pessimistic-LB}    
\begin{algorithmic}[1]
\STATE {\bfseries Input:} Safe action $x_0$ with reward $r_0$ and cost $c_0$; $\;$ Constraint threshold $\tau\geq 0$; $\;$ Scaling parameters $\alpha_r, \alpha_c \geq 1$ 
\FOR{$t=1,\ldots,T$}
\STATE Compute regularized least-squares estimates $\;\widehat{\theta}_t\;$ and $\;\widehat{\mu}_t^{o,\perp}$ using~\eqref{eq:theta-mu}
\STATE Construct confidence sets $\;\mathcal{C}_t^r(\alpha_r)\;$ and $\;\mathcal{C}_t^c(\alpha_c)$ using~\eqref{eq:assymetric-confidence-sets} 
\STATE Observe $\mathcal{A}_t$ and construct the estimated feasible policy set $\;\widetilde{\Pi}^f_t$ using~\eqref{eq:safe-policy-set}
\STATE Compute policy $\;(\pi_t,\widetilde{\theta}_t) = \argmax_{\pi \in \widetilde{\Pi}^f_t,\;\theta \in \mathcal{C}_t^r(\alpha_r)} \mathbb{E}_{x \sim \pi}[ \langle x, \theta \rangle]$
\STATE Take action $X_t \sim \pi_t$ and observe reward and cost signals $(R_t,C_t)$ 
\ENDFOR
\end{algorithmic}
\end{algorithm}

Just as in the analysis of LC-LUCB, the choice of $\alpha_r,\alpha_c\geq 1$ and Theorem~\ref{theorem::yasin_theorem} suggest that $\theta_* \in \mathcal{C}_t^r(\alpha_r)$ and $\mu_*^{o,\perp}\in\mathcal{C}_t^c(\alpha_c)$, each w.p.~at least $1-\delta$. Replacing actions with policies in~\eqref{eq:pessimistic-cost} and~\eqref{equation::selecting_reward_maximizing_action}, we define the {\em optimistic reward} and {\em pessimistic cost} values for any policy $\pi$ in round $t$ as 
\begin{align}
\label{eq:max-reward}
\widetilde{V}_t^r(\pi) &:= \max_{\theta\in\mathcal{C}_t^r(\alpha_r)} \mathbb{E}_{X \sim \pi}[\langle X,\theta\rangle], \\
\widetilde{V}_t^c(\pi) &:= \frac{\langle x_{\pi}^o,e_0 \rangle c_0}{\| x_0 \|} + \max_{\mu\in\mathcal{C}_t^c(\alpha_c)} \mathbb{E}_{X\sim\pi}[\langle X,\mu \rangle].
\label{eq:min-cost}
\end{align}
Similar to Proposition~\ref{PROP:OPTIMISTIC-REWARD-PESSIMISTIC-COST}, we can derive the following closed-form expressions for $\widetilde{V}_t^r(\pi)$ and $\widetilde{V}_t^c(\pi)$. 

\begin{restatable}{proposition}{expectationoptimisticrewardpessimisticcost}\label{prop:optimistic-reward-pessimistic-cost}
We may write $\widetilde{V}_t^r(\pi)$ and $\widetilde{V}_t^c(\pi)$, defined in~\eqref{eq:max-reward} and~\eqref{eq:min-cost}, in closed-form as %
\begin{align}
\label{eq:optimistic-reward-closed-form}
\widetilde{V}_t^r(\pi) &= \langle x_\pi,\widehat{\theta}_t \rangle + \alpha_r \beta_t(\delta,d) \| x_\pi\|_{\Sigma_t^{-1}}, \\ 
\label{eq:pessimistic-cost-closed-form}
\widetilde{V}_t^c(\pi) &= \frac{\langle x_\pi^o, e_0\rangle c_0 }{\|x_0\|} + \langle x_\pi^{o, \perp}, \widehat{\mu}^{o, \perp}_t \rangle  + \alpha_c \beta_t(\delta,d-1) \| x_{\pi}^{o, \perp}\|_{(\Sigma_t^{o, \perp})^{-1}}. 
\end{align}
\end{restatable}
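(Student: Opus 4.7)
The plan is to reduce each supremum to the per-action optimization already handled in Proposition~\ref{PROP:OPTIMISTIC-REWARD-PESSIMISTIC-COST} by exploiting linearity of expectation. Concretely, for any $\theta\in\mathbb R^d$ and any policy $\pi\in\Pi_t$, linearity gives $\mathbb{E}_{X\sim\pi}[\langle X,\theta\rangle]=\langle x_\pi,\theta\rangle$, where $x_\pi=\mathbb{E}_{X\sim\pi}[X]$ is a deterministic vector once $\pi$ is fixed. Substituting this into~\eqref{eq:max-reward} gives
\begin{equation*}
\widetilde{V}_t^r(\pi)=\max_{\theta\in\mathcal{C}_t^r(\alpha_r)}\langle x_\pi,\theta\rangle,
\end{equation*}
which is exactly the optimization~\eqref{equation::selecting_reward_maximizing_action} from Proposition~\ref{PROP:OPTIMISTIC-REWARD-PESSIMISTIC-COST} with the fixed action $x$ replaced by $x_\pi$. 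Hence the closed-form~\eqref{equation::v_t_r} applies verbatim with $x\leftarrow x_\pi$, yielding~\eqref{eq:optimistic-reward-closed-form}.

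For the cost value, I first argue that only the $\mathcal V_o^\perp$ component of $X$ contributes to the maximization. Since every $\mu\in\mathcal{C}_t^c(\alpha_c)$ lies in $\mathcal V_o^\perp$, one has $\langle X^o,\mu\rangle=0$ and therefore $\langle X,\mu\rangle=\langle X^{o,\perp},\mu\rangle$. Taking expectation over $X\sim\pi$ and using linearity again gives $\mathbb{E}_{X\sim\pi}[\langle X,\mu\rangle]=\langle x_\pi^{o,\perp},\mu\rangle$, so that
\begin{equation*}
\max_{\mu\in\mathcal{C}_t^c(\alpha_c)}\mathbb{E}_{X\sim\pi}[\langle X,\mu\rangle]=\max_{\mu\in\mathcal{C}_t^c(\alpha_c)}\langle x_\pi^{o,\perp},\mu\rangle.
\end{equation*}
Applying the closed form from Proposition~\ref{PROP:OPTIMISTIC-REWARD-PESSIMISTIC-COST} with $x^{o,\perp}\leftarrow x_\pi^{o,\perp}$ turns this maximum into $\langle x_\pi^{o,\perp},\widehat{\mu}_t^{o,\perp}\rangle+\alpha_c\beta_t(\delta,d-1)\,\|x_\pi^{o,\perp}\|_{(\Sigma_t^{o,\perp})^{-1}}$. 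Adding back the known-cost contribution $\tfrac{\langle x_\pi^o,e_0\rangle c_0}{\|x_0\|}$ from the definition~\eqref{eq:min-cost} produces exactly~\eqref{eq:pessimistic-cost-closed-form}.

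The only subtlety worth flagging, though not truly an obstacle, is the exchange of $\max$ and $\mathbb E$ that underlies both computations: this is legitimate precisely because $\theta$ (resp.\ $\mu$) is a free optimization variable over a compact ellipsoid while $x_\pi$ (resp.\ $x_\pi^{o,\perp}$) is a deterministic vector once $\pi$ is fixed, so the inner objective is linear in the optimization variable and the supremum commutes trivially with the expectation. Thus the proof amounts to little more than invoking Proposition~\ref{PROP:OPTIMISTIC-REWARD-PESSIMISTIC-COST} at the averaged vector $x_\pi$; I would write this out in the appendix as a two-line reduction for each of the two identities.
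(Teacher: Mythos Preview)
Your proposal is correct and follows essentially the same approach as the paper: the paper's proof uses linearity of expectation to rewrite $\mathbb{E}_{X\sim\pi}[\langle X,\theta\rangle]=\langle x_\pi,\theta\rangle$ and then invokes the argument of Proposition~\ref{PROP:OPTIMISTIC-REWARD-PESSIMISTIC-COST} with $x$ replaced by $x_\pi$, exactly as you outline. Your handling of the cost case (observing that $\mu\in\mathcal V_o^\perp$ forces $\langle X^o,\mu\rangle=0$) is slightly more explicit than the paper's terse ``the rest of the argument remains the same,'' but the underlying reduction is identical.
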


\begin{proof}
See Appendix~\ref{sec:proofs-algo-section}.
\end{proof}

Following the same logic as in the high probability formulation, we adapt the pessimistic estimation of the feasible action set $\widetilde{\mathcal{A}}^f_t$ in~\eqref{equation::safe_set} to the policy setting. After observing the action set $\mathcal A_t$, OPLB constructs its feasible (safe) policy set as
\begin{equation}
\label{eq:safe-policy-set}
\widetilde{\Pi}^f_t = \big\{\pi \in \Delta_{\mathcal A_t} : \widetilde{V}_t^c(\pi) \leq \tau\big\}. 
\end{equation}
Note that $\widetilde{\Pi}^f_t$ is an approximation of $\Pi_t^f$, i.e.,~the set of feasible policies over the action set $\mathcal A_t$, and is not an empty set because $\pi_0$ is always in $\widetilde{\Pi}^f_t$. We can think of the safe action $x_0$ as a policy $\pi_0$ whose probability mass is all on $x_0$, and thus, we have $x_{\pi_0}^o = x_0$ and $x_{\pi_0}^{o,\perp} =0$. Plugging $\pi_0$ into~\eqref{eq:pessimistic-cost-closed-form} yields $\widetilde V_t^c(\pi_0) =\frac{\langle x_{\pi_0}^o, e_0\rangle c_0}{\| x_0 \|}  = c_0 \leq \tau$. We prove in the following proposition that all policies in $\widetilde{\Pi}^f_t$ are feasible with high probability.

\begin{restatable}{proposition}{propositionsafesetexpectation}\label{prop:safe-set}
With probability at least $1-\delta$, for all rounds $t\in[T]$, all policies in $\widetilde{\Pi}^f_t $ are feasible. %
\end{restatable}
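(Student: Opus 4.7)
The plan is to show that, on the high-probability event $\mathcal{E}$ defined in~\eqref{eq:high-prob-event}, the pessimistic cost value $\widetilde V^c_t(\pi)$ upper bounds the true cost $C_\pi = \mathbb{E}_{X\sim\pi}[\langle X,\mu_*\rangle]$ for every policy $\pi$ and every round $t$. Since $\widetilde{\Pi}_t^f$ is the sublevel set $\{\pi:\widetilde V^c_t(\pi)\leq \tau\}$, combining this with the fact that $\Pr(\mathcal{E})\geq 1-\delta$ from Theorem~\ref{theorem::yasin_theorem} will immediately yield the proposition.

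The first step is to decompose $C_\pi$ along $\mathcal{V}_o$ and $\mathcal{V}_o^\perp$. Writing $X = X^o + X^{o,\perp}$ and using that $\langle X^o,e_0\rangle e_0 = X^o$ gives
\begin{equation*}
C_\pi \;=\; \mathbb{E}_{X\sim\pi}\!\left[\tfrac{\langle X^o,e_0\rangle \langle x_0,\mu_*\rangle}{\|x_0\|}\right] + \mathbb{E}_{X\sim\pi}[\langle X^{o,\perp},\mu_*^{o,\perp}\rangle] \;=\; \tfrac{\langle x_\pi^o,e_0\rangle c_0}{\|x_0\|} + \langle x_\pi^{o,\perp},\mu_*^{o,\perp}\rangle,
\end{equation*}
where the last equality uses linearity of expectation together with the definitions $x_\pi^o=\mathbb{E}_{X\sim\pi}[X^o]$ and $x_\pi^{o,\perp}=\mathbb{E}_{X\sim\pi}[X^{o,\perp}]$, and the fact that $x_\pi^{o,\perp}\in\mathcal V_o^\perp$ (so the inner product with $\mu_*$ equals the one with its projection $\mu_*^{o,\perp}$). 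This matches the leading and orthogonal terms in the closed-form expression~\eqref{eq:pessimistic-cost-closed-form} for $\widetilde V^c_t(\pi)$.

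Next, on event $\mathcal{E}$ I have $\|\widehat\mu_t^{o,\perp}-\mu_*^{o,\perp}\|_{\Sigma_t^{o,\perp}}\leq \beta_t(\delta,d-1)$, so Cauchy--Schwarz in the weighted inner product (restricted to $\mathcal V_o^\perp$, where $(\Sigma_t^{o,\perp})^\dagger$ acts as an inverse) yields
\begin{equation*}
\langle x_\pi^{o,\perp},\mu_*^{o,\perp}\rangle \;\leq\; \langle x_\pi^{o,\perp},\widehat\mu_t^{o,\perp}\rangle + \beta_t(\delta,d-1)\,\|x_\pi^{o,\perp}\|_{(\Sigma_t^{o,\perp})^{-1}} \;\leq\; \langle x_\pi^{o,\perp},\widehat\mu_t^{o,\perp}\rangle + \alpha_c\beta_t(\delta,d-1)\,\|x_\pi^{o,\perp}\|_{(\Sigma_t^{o,\perp})^{-1}},
\end{equation*}
where the second inequality uses $\alpha_c\geq 1$. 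Adding the common leading term $\langle x_\pi^o,e_0\rangle c_0/\|x_0\|$ to both sides and comparing with~\eqref{eq:pessimistic-cost-closed-form} gives $C_\pi\leq \widetilde V^c_t(\pi)$ for every $\pi$ and every $t\in[T]$, simultaneously on $\mathcal{E}$.

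Finally, any $\pi\in\widetilde{\Pi}_t^f$ satisfies $\widetilde V^c_t(\pi)\leq \tau$ by definition~\eqref{eq:safe-policy-set}, hence $C_\pi\leq\tau$, i.e., $\pi$ is feasible. Since all three relevant inequalities hold on $\mathcal{E}$, whose probability is at least $1-\delta$ by Theorem~\ref{theorem::yasin_theorem}, the statement follows. I expect no serious obstacle; the only small subtlety is keeping straight that Cauchy--Schwarz in the $(\Sigma_t^{o,\perp})^\dagger$ norm is legitimate because both $x_\pi^{o,\perp}$ and $\widehat\mu_t^{o,\perp}-\mu_*^{o,\perp}$ live in $\mathcal V_o^\perp$, so the pseudo-inverse behaves as a genuine inverse on the relevant subspace, exactly as already used in the high-probability part of the paper.
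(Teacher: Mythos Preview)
Your proposal is correct and follows essentially the same approach as the paper: condition on the event $\mathcal{E}$, decompose the true cost $C_\pi$ into its $\mathcal{V}_o$ and $\mathcal{V}_o^\perp$ components, apply Cauchy--Schwarz with the $\Sigma_t^{o,\perp}$ norm to bound the orthogonal term, and conclude $C_\pi\leq \widetilde V_t^c(\pi)\leq\tau$. Your write-up is in fact slightly cleaner than the paper's (which has a couple of typos in the displayed formulas), and your remark about the pseudo-inverse acting as a genuine inverse on $\mathcal V_o^\perp$ is exactly the right justification.
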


\begin{proof}
See Appendix~\ref{sec:proofs-algo-section}.
\end{proof}

In Line~6 of Algorithm~\ref{alg:optimistic-pessimistic-LB}, the agent computes its policy $\pi_t$ as the one that is safe (belongs to $\widetilde{\Pi}^f_t $) and attains the maximum optimistic reward value, i.e.,~$\widetilde V_t^r(\pi) = \max_{\theta \in \mathcal{C}_t^r(\alpha_r)} \langle x_\pi, \theta \rangle = \langle x_{\pi_t},\widetilde{\theta}_t \rangle$. We refer to $\widetilde{\theta}_t$ as the {\em optimistic reward parameter}. 

\paragraph{Computational Complexity of OPLB.} {As shown in Line~6 of Algorithm~\ref{alg:optimistic-pessimistic-LB}, in each round $t$, OPLB solves the following optimization problem:
\begin{align}
\label{equation::opt_problem_linear}
&\max_{\pi \in \Delta_{\mathcal{A}_t}} \; \langle x_\pi, \widehat{\theta}_t \rangle + \alpha_r \beta_t(\delta,d) \| x_\pi\|_{\Sigma_t^{-1}}, \\ %
&\;\;\text{s.t. } \quad\! \frac{\langle x_\pi^o, e_0\rangle c_0 }{\|x_0\|} + \langle x_\pi^{o, \perp}, \widehat{\mu}^{o, \perp}_t \rangle + \alpha_c \beta_t(\delta,d-1) \| x_{\pi}^{o, \perp}\|_{(\Sigma_t^{o, \perp})^{-1}} \leq \tau. \nonumber
\end{align}
However, solving~\eqref{equation::opt_problem_linear} could be challenging. The bottleneck is in computing the safe policy set $\widetilde{\Pi}^f_t$, which is the intersection of $\Delta_{\mathcal{A}_t}$ and the ellipsoidal constraint. This is a consequence of the intractability of the optimization problem that needs to be solved in each round of OFU-style algorithms (e.g.,~\citealt{abbasi2011improved}). In contrast to LC-LUCB (see Section~\ref{subsubsection:LC-LUCB-Comp-Complexity}), solving~\eqref{equation::opt_problem_linear} could be intractable even when the action set $\mathcal{A}_t$ is finite star-convex around the safe action $x_0$.  }

\paragraph{Unknown $r_0$ and $c_0$.} In case that the reward $r_0$ and cost $c_0$ of the safe action are unknown, OPLB may use the same warm-starting sub-routine as in LC-LUCB to estimate them with sufficient accuracy. The regret incurred during these first $T_0$ rounds can be upper bounded by $\mathcal{O}\left( \log(T/\delta)\max\left(\frac{1-r_0}{(\tau - c_0)^2},\frac{1}{1 - r_0} \right)\right)$. We discuss this in more details in Appendix~\ref{section::unknown_c0}.

\subsubsection{Regret Analysis}
\label{section::lin_opt_pess_analysis}

In this section, we prove a regret bound for the OPLB algorithm. The main challenge in obtaining this regret bound is to ensure that optimism holds in each round $t\in [T]$, i.e.,~the solution $(\pi_t,\widetilde\theta_t)$ of~\eqref{equation::opt_problem_linear} satisfies $\widetilde{V}_t^r(\pi_t) = \langle x_{\pi_t},\widetilde\theta_t \rangle \geq V_t^r(\pi^*_t)$. This is not obvious, since the approximate feasible policy set $\widetilde \Pi^f_t$ might have been  constructed such that it does not contain the optimal policy $\pi_t^*$. Similar to the analysis of LC-LUCB in Section~\ref{sec:analysis}, our main algorithmic innovation is the use of asymmetric confidence intervals $\mathcal{C}_t^r(\alpha_r)$ and $\mathcal{C}_t^c(\alpha_c)$ for $\theta_*$ and $\mu^{o, \perp}_*$, respectively, that allows us to guarantee optimism by appropriately selecting the ratio $\frac{\alpha_r}{\alpha_c}$. We will show in our analysis that similar to the case of LC-LUCB, $\frac{\alpha_r}{\alpha_c}$ depends on the ratio $\frac{1-r_0}{\tau-c_0}$.

\begin{theorem}[Regret Bound for OPLB]
\label{theorem::main_theorem_linear}
Let $\alpha_c = 1$ and $\alpha_r =1+ \frac{ 2(1-r_0)}{\tau-c_0}$. Then, with probability at least $1-2\delta$, the regret of OPLB satisfies %
\begin{align}
\label{eq:regret-OPLB}
\mathcal{R}_{\Pi}(T) &\leq \frac{2L(\alpha_r+1)\beta_T(\delta, d)}{\sqrt{\lambda}}\sqrt{2T\log(1/\delta)}+ (\alpha_r+1)\beta_T(\delta, d)\sqrt{2Td\log\left(1+\frac{TL^2}{\lambda}\right)}\;. 
\end{align}
\end{theorem}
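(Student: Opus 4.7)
The plan is to follow the template of Theorem~\ref{thm:regret-bound-HPLOP} for LC-LUCB, adapted to the policy setting and augmented with a martingale concentration step that handles the randomness of the sampled actions $X_t\sim\pi_t$. First, I would decompose the regret~\eqref{eq:regret2} as
\[
\mathcal{R}_\Pi(T)=\underbrace{\sum_{t=1}^T\bigl(V_t^r(\pi_t^*)-\widetilde{V}_t^r(\pi_t)\bigr)}_{\mathrm{(I)}}+\underbrace{\sum_{t=1}^T\bigl(\widetilde{V}_t^r(\pi_t)-V_t^r(\pi_t)\bigr)}_{\mathrm{(II)}},
\]
where $V_t^r(\pi):=\langle x_\pi,\theta_*\rangle$ and $\widetilde V_t^r(\pi)$ is given by~\eqref{eq:optimistic-reward-closed-form}. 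Let $\mathcal{E}$ denote the event~\eqref{eq:high-prob-event}, which holds with probability at least $1-\delta$ by Theorem~\ref{theorem::yasin_theorem}.

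For term $\mathrm{(I)}$, I would mirror the argument of Lemma~\ref{LEMMA::LINEAR_BANDITS_OPTIMISM} at the level of policies. If $\pi_t^*\in\widetilde\Pi_t^f$, then Line~6 of Algorithm~\ref{alg:optimistic-pessimistic-LB} together with the trivial optimism inequality $\widetilde V_t^r(\pi_t^*)\geq V_t^r(\pi_t^*)$ on $\mathcal E$ immediately yields $\widetilde V_t^r(\pi_t)\geq V_t^r(\pi_t^*)$. Otherwise, consider the mixture policy $\widetilde\pi_t=\gamma_t\pi_t^*+(1-\gamma_t)\pi_0$, where $\pi_0$ is the deterministic policy that always selects $x_0$ and $\gamma_t\in[0,1]$ is the largest value for which $\widetilde\pi_t\in\widetilde\Pi_t^f$. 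Because $\Delta_{\mathcal A_t}$ is automatically convex in the policy simplex, this construction needs no geometric hypothesis on $\mathcal A_t$; moreover, $x_{\pi_0}^{o,\perp}=0$, so $x_{\widetilde\pi_t}^{o,\perp}=\gamma_t\,x_{\pi_t^*}^{o,\perp}$. The computation leading to~\eqref{equation::gamma_t_lower_bound}--\eqref{equation::condition_optimism_high_prob} then transfers mutatis mutandis and delivers the sufficient optimism condition $(1+\alpha_c)(1-r_0)\leq(\tau-c_0)(\alpha_r-1)$, which is met with equality by the prescribed $\alpha_c=1$ and $\alpha_r=1+2(1-r_0)/(\tau-c_0)$. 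Hence $\mathrm{(I)}\leq 0$.

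For term $\mathrm{(II)}$, I would rewrite each summand using the optimistic parameter $\widetilde\theta_t$ returned by Line~6 as $\widetilde V_t^r(\pi_t)-V_t^r(\pi_t)=\langle x_{\pi_t},\widetilde\theta_t-\theta_*\rangle$, and then insert the sampled action:
\[
\sum_{t=1}^T\langle x_{\pi_t},\widetilde\theta_t-\theta_*\rangle=\sum_{t=1}^T\langle X_t,\widetilde\theta_t-\theta_*\rangle+\sum_{t=1}^T\langle x_{\pi_t}-X_t,\widetilde\theta_t-\theta_*\rangle.
\]
On $\mathcal E$, the triangle inequality gives $\|\widetilde\theta_t-\theta_*\|_{\Sigma_t}\leq(\alpha_r+1)\beta_t(\delta,d)$, so the first sum is controlled as in Lemma~\ref{lemma::bounding_B} via Cauchy--Schwarz together with Proposition~\ref{proposition::det_lemma}, producing the $(\alpha_r+1)\beta_T(\delta,d)\sqrt{2Td\log(1+TL^2/\lambda)}$ term in~\eqref{eq:regret-OPLB}. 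The second sum is the genuinely new piece: since $\pi_t$ and $\widetilde\theta_t$ are $\mathcal H_{t-1}$-measurable while $\mathbb E[X_t\mid\mathcal H_{t-1}]=x_{\pi_t}$, it is a martingale difference sequence, and each summand is bounded almost surely by $\|x_{\pi_t}-X_t\|\cdot\|\widetilde\theta_t-\theta_*\|\leq 2L(\alpha_r+1)\beta_T(\delta,d)/\sqrt{\lambda}$, using Assumption~\ref{ass:bounded-action} together with $\Sigma_t\succeq\lambda I\Rightarrow\|v\|\leq\|v\|_{\Sigma_t}/\sqrt{\lambda}$. Azuma--Hoeffding then gives the first term of~\eqref{eq:regret-OPLB} with probability at least $1-\delta$, and a union bound over $\mathcal E$ and the Azuma event produces the $1-2\delta$ confidence level. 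The main obstacle, and the chief departure from the LC-LUCB argument, is precisely this martingale concentration; once one exploits the automatic convexity of the policy simplex and the identity $x_{\pi_0}^{o,\perp}=0$, the optimism-via-asymmetric-scaling argument carries over essentially unchanged.
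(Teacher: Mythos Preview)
Your proposal is correct and follows essentially the same approach as the paper. Your two-piece split of $\mathrm{(II)}$ into $\sum_t\langle X_t,\widetilde\theta_t-\theta_*\rangle$ and $\sum_t\langle x_{\pi_t}-X_t,\widetilde\theta_t-\theta_*\rangle$ is exactly the paper's decomposition into $\mathrm{(IV)}$ and $\mathrm{(III)}+\mathrm{(V)}$, bounded respectively by the elliptical-potential argument of Lemma~\ref{lemma::bounding_B} and by Azuma--Hoeffding (the paper's Lemma~\ref{lemma:bounding-3+5}); your treatment of $\mathrm{(I)}$ via the mixture policy $\widetilde\pi_t=\gamma_t\pi_t^*+(1-\gamma_t)\pi_0$ is precisely Lemma~\ref{lemma:linear_bandits_optimism}, and your observation that convexity of the policy simplex replaces the star-convexity hypothesis is exactly the point the paper makes there.
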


We provide a proof sketch for Theorem~\ref{theorem::main_theorem_linear} here. Most of the results are obtained in a similar fashion as those in the analysis of the LC-LUCB algorithm in Section~\ref{sec:analysis}. We use the high probability event $\mathcal{E}$ defined by~\eqref{eq:high-prob-event}. We first decompose the regret $\mathcal{R}_{\Pi}(T)$ in~\eqref{eq:regret} as 
\begin{equation}
\label{eq:regret-decomp}
\mathcal{R}_{\Pi}(T) = \underbrace{\sum_{t=1}^T V_t^r(\pi_t^*) - \widetilde{V}_t^r(\pi_t)}_{(\mathrm{I})} \; + \; \underbrace{\sum_{t=1}^T \widetilde{V}_t^r(\pi_t) -V_t^r(\pi_t)}_{(\mathrm{II})}\;,
\end{equation}
where for any policy $\pi \in \Pi^f_t$, we denote by $V_t^r(\pi) = \langle x_\pi, \theta_* \rangle$, its {\bf\em true reward value} and by $\widetilde{V}_t^r(\pi_t)$ its optimistic reward value defined by~\eqref{eq:max-reward} and~\eqref{eq:optimistic-reward-closed-form}. We first bound term $(\mathrm{II})$ in~\eqref{eq:regret-decomp} by further decomposing it as
\begin{equation*}
\label{eq:II-decomp}
(\mathrm{II}) \; = \; \underbrace{\sum_{t=1}^T  \langle x_{\pi_t}, \widetilde{\theta}_t \rangle - \langle X_t, \widetilde{\theta}_t \rangle}_{(\mathrm{III})} \; + \; \underbrace{\sum_{t=1}^T \langle X_t, \widetilde{\theta}_t \rangle - \langle X_t, \theta_* \rangle}_{(\mathrm{IV})} \; + \; \underbrace{\sum_{t=1}^T \langle X_t, \theta_* \rangle - \langle x_{\pi_t}, \theta_* \rangle }_{(\mathrm{V})}\;. 
\end{equation*}
When the event $\mathcal{E}$ defined by~\eqref{eq:high-prob-event} holds, $(\mathrm{IV})$ can be bounded by Lemma~\ref{lemma::bounding_B} as

\begin{equation*}
(\mathrm{IV}) \leq \alpha_r \beta_T(\delta,d) \sqrt{2Td\log\left( 1+\frac{TL^2}{\lambda}\right)}\;.
\end{equation*}

We bound the sum of $(\mathrm{III})$ and $(\mathrm{V})$ in the following lemma.

\begin{lemma}
\label{lemma:bounding-3+5}
On the event $\mathcal{E}$ in~\eqref{eq:high-prob-event}, for any $\delta' \in (0, 1)$, with probability at least $1-\delta'$, we have
\begin{equation*}
(\mathrm{III}) + (\mathrm{V}) \leq \frac{2L(\alpha_r + 1)\beta_T(\delta, d)}{\sqrt{\lambda}}\cdot\sqrt{2 T \log(1/\delta')}\;.
\end{equation*}
\end{lemma}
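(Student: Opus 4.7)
The plan is to merge the two sums into a single one, $(\mathrm{III}) + (\mathrm{V}) = \sum_{t=1}^T \langle x_{\pi_t} - X_t,\, \widetilde{\theta}_t - \theta_* \rangle$, and recognize its summands as a bounded martingale difference sequence, then apply Azuma–Hoeffding. Let $Z_t := \langle x_{\pi_t} - X_t,\, \widetilde{\theta}_t - \theta_* \rangle$ and consider the filtration in which $\mathcal A_t$, $\widehat\theta_t$, $\widehat\mu_t^{o,\perp}$, the set $\widetilde\Pi_t^f$, and hence $\pi_t$ and $\widetilde\theta_t$, are all measurable before $X_t$ is sampled from $\pi_t$. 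Since $\mathbb E[X_t \mid \mathcal H_{t-1},\pi_t] = x_{\pi_t}$, it follows that $\mathbb E[Z_t \mid \mathcal H_{t-1}] = 0$, so $\{Z_t\}$ is indeed a martingale difference sequence.

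Next I would obtain a uniform bound on $|Z_t|$ on the event $\mathcal{E}$ from~\eqref{eq:high-prob-event}. By Cauchy–Schwarz, $|Z_t| \leq \|x_{\pi_t} - X_t\|\cdot\|\widetilde{\theta}_t - \theta_*\|$. The first factor is at most $2L$ by Assumption~\ref{ass:bounded-action} (both $X_t \in \mathcal A_t$ and $x_{\pi_t} \in \mathrm{conv}(\mathcal A_t)$ have norm at most $L$). For the second factor, note that on $\mathcal E$ the triangle inequality gives
\begin{equation*}
\|\widetilde{\theta}_t - \theta_*\|_{\Sigma_t} \leq \|\widetilde{\theta}_t - \widehat{\theta}_t\|_{\Sigma_t} + \|\widehat{\theta}_t - \theta_*\|_{\Sigma_t} \leq (\alpha_r + 1)\,\beta_t(\delta,d),
\end{equation*}
where the bound $\|\widetilde\theta_t - \widehat\theta_t\|_{\Sigma_t} \leq \alpha_r\beta_t(\delta,d)$ is the defining property of $\widetilde\theta_t \in \mathcal C_t^r(\alpha_r)$. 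Since $\Sigma_t \succeq \lambda \mathrm I$, we can convert to the Euclidean norm via $\|v\| \leq \|v\|_{\Sigma_t}/\sqrt\lambda$, yielding
\begin{equation*}
\|\widetilde\theta_t - \theta_*\| \leq \frac{(\alpha_r+1)\,\beta_t(\delta,d)}{\sqrt\lambda} \leq \frac{(\alpha_r+1)\,\beta_T(\delta,d)}{\sqrt\lambda},
\end{equation*}
using monotonicity of $t \mapsto \beta_t(\delta,d)$. Combined with the $2L$ bound on the first factor, this gives $|Z_t| \leq M := 2L(\alpha_r+1)\beta_T(\delta,d)/\sqrt\lambda$ for every $t$.

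Finally I would apply Azuma–Hoeffding to the martingale difference sequence $\{Z_t\}$ with increments bounded by $M$: with probability at least $1-\delta'$,
\begin{equation*}
\sum_{t=1}^T Z_t \leq M\sqrt{2T\log(1/\delta')} = \frac{2L(\alpha_r+1)\beta_T(\delta,d)}{\sqrt\lambda}\sqrt{2T\log(1/\delta')},
\end{equation*}
which is exactly the stated bound. One minor care point is that the increment bound holds only on the event $\mathcal E$, which is fine because the lemma is stated conditionally on $\mathcal E$; alternatively, one can define a truncated version $Z_t \cdot \mathbf 1[\mathcal E]$ to remain in a classical Azuma framework. I expect no real obstacle here — the only subtlety is getting the measurability story straight (that $\widetilde\theta_t$ is chosen from $\mathcal C_t^r(\alpha_r)$ using only $\mathcal H_{t-1}$ information), after which the argument is a direct application of Azuma–Hoeffding.
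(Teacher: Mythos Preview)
Your proof is correct and essentially identical to the paper's: both combine $(\mathrm{III})+(\mathrm{V})$ into $\sum_t \langle x_{\pi_t}-X_t,\widetilde\theta_t-\theta_*\rangle$, bound the increments by $2L(\alpha_r+1)\beta_T(\delta,d)/\sqrt\lambda$ via Cauchy--Schwarz and $\Sigma_t\succeq\lambda\mathrm I$, and apply Azuma--Hoeffding. The only cosmetic difference is that the paper uses the weighted Cauchy--Schwarz $\|x_{\pi_t}-X_t\|_{\Sigma_t^{-1}}\|\widetilde\theta_t-\theta_*\|_{\Sigma_t}$ and converts the first factor, whereas you use the Euclidean Cauchy--Schwarz and convert the second factor; your explicit use of the triangle inequality to get the $(\alpha_r+1)$ constant is in fact cleaner than the paper's slightly sloppy statement at that step.
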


\begin{proof}
Applying Cauchy-Schwartz to $(\mathrm{III}) + (\mathrm{V}) = \sum_{t=1}^T \; \langle x_{\pi_t} - x_t, \widetilde{\theta}_t - \theta_* \rangle$, we may write $|\langle x_{\pi_t} - X_t, \widetilde{\theta}_t - \theta_* \rangle | \leq \| x_{\pi_t} - X_t \|_{\Sigma^{-1}_t} \| \widetilde{\theta}_t - \theta_*\|_{\Sigma_t}$. Since $\widetilde{\theta}_t\in \mathcal{C}_t^r(\alpha_r)$ on event $\mathcal E$, we have $\|\widetilde{\theta}_t - \theta_*\|_{\Sigma_t} \leq \alpha_r \beta_t(\delta, d)$. From the definition of $\Sigma_t$, we have $\Sigma_t \succeq \lambda I$, and thus, $\| x_{\pi_t} - X_t \|_{\Sigma^{-1}_t} \leq \| x_{\pi_t} - X_t\|/\sqrt{\lambda} \leq 2L/\sqrt{\lambda}$. Hence, $Y_t =\sum_{s=1}^t \langle x_{\pi_s} -X_s, \widetilde{\theta}_s - \theta_* \rangle$ is a martingale sequence with $|Y_t - Y_{t-1}| \leq 2L(\alpha_r + 1)\beta_t(\delta, d)/\sqrt{\lambda}$, for all $t\in [T]$. Using the Azuma–Hoeffding inequality and since $\beta_t$ is an increasing function of $t$, i.e.,~$\beta_t(\delta, d) \leq \beta_T(\delta, d),\;\forall t\in[T]$, with probability at least $1-\delta'$, we may write $\mathbb{P}\big(Y_T \geq 2L\alpha_r \beta_T(\delta, d)\sqrt{2T \log(1/\delta')/\lambda}\big) \leq \delta'$, which concludes the proof. 
\end{proof}

We now bound term $(\mathrm{I})$ in~\eqref{eq:regret-decomp}. Similar to the regret proof for LC-LUCB, setting the values of $\alpha_r$ and $\alpha_c$ to $1$ and then solving for $\pi_t$ is not enough to ensure $\widetilde{V}_t^r(\pi_t) \geq V_t^r(\pi_t^\star)$. However, an appropriate choice of radii $\alpha_r$ and $\alpha_c$ for the confidence intervals can help us to get around this issue. Lemma~\ref{lemma:linear_bandits_optimism} contains the main result in which we prove that by appropriately setting $\alpha_r$ and $\alpha_c$, we can guarantee that in each round $t\in[T]$, OPLB selects an optimistic policy, i.e.,~a policy $\pi_t$ whose optimistic reward, $\widetilde{V}_t^r(\pi_t)$, is larger than the reward of the optimal policy, $V_t^r(\pi_t^*)$, on event $\mathcal E$. This means that with this choice of $\alpha_r$ and $\alpha_c$, $(\mathrm{I})$ is always non-positive. This result is the in-expectation version of the one proved in Lemma~\ref{LEMMA::LINEAR_BANDITS_OPTIMISM}.

\begin{lemma}
\label{lemma:linear_bandits_optimism}
On the event $\mathcal{E}$ defined by~\eqref{eq:high-prob-event}, if we set $\alpha_r$ and $\alpha_c$ such that $\alpha_r,\alpha_c\geq 1$ and $(1+\alpha_c)(1-r_0) \leq (\tau-c_0) (\alpha_r-1)$, then for any $t\in[T]$, we have $\widetilde{V}_t^r(\pi_t) \geq V_t^r(\pi_t^*)$.
\end{lemma}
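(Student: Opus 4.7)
The plan is to mirror the structure of the proof of Lemma \ref{LEMMA::LINEAR_BANDITS_OPTIMISM}, exploiting the fact that the policy set $\Pi_t = \Delta_{\mathcal{A}_t}$ is automatically convex (as a probability simplex), so we obtain the analogue of star convexity ``for free'' and do not need to impose any geometric condition on the raw action set. The functionals $\pi \mapsto x_\pi$, $\pi \mapsto x_\pi^o$, $\pi \mapsto x_\pi^{o,\perp}$ are all linear in $\pi$, and by Proposition \ref{prop:optimistic-reward-pessimistic-cost} both $\widetilde V_t^r$ and $\widetilde V_t^c$ are convex functionals of $x_\pi$. This linearity is the property that will allow the mixture argument to go through exactly as in the high-probability case.

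I would split into two cases. In Case 1, if $\pi_t^* \in \widetilde \Pi_t^f$, then since $\pi_t$ is by construction the maximizer of $\widetilde V_t^r$ over $\widetilde \Pi_t^f$, we have $\widetilde V_t^r(\pi_t) \geq \widetilde V_t^r(\pi_t^*) \geq V_t^r(\pi_t^*)$, where the last inequality is the standard optimism bound (valid on $\mathcal{E}$ since $\theta_* \in \mathcal{C}_t^r(\alpha_r)$). In Case 2, when $\pi_t^* \notin \widetilde \Pi_t^f$, I would introduce the mixture policy $\widetilde \pi_t = \gamma_t \pi_t^* + (1-\gamma_t)\pi_0$, with $\gamma_t$ chosen as the largest $\gamma \in [0,1]$ such that the corresponding mixture lies in $\widetilde \Pi_t^f$. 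Because $\widetilde \Pi_t^f$ is the intersection of $\Delta_{\mathcal{A}_t}$ with a sublevel set of the convex functional $\widetilde V_t^c$, such a $\gamma_t$ is well defined and $\widetilde V_t^c(\widetilde \pi_t) = \tau$. Linearity of $x_\pi$ in $\pi$ together with $x_{\pi_0}^{o,\perp}=0$ gives $x_{\widetilde \pi_t}^{o,\perp} = \gamma_t x_{\pi_t^*}^{o,\perp}$, exactly as in equation \eqref{equation::x_tilde_projection_main}.

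From here I would replicate the algebra that produced inequality \eqref{equation::gamma_t_lower_bound}. Writing the $\widetilde V_t^c$ identity for both $\pi_t^*$ and $\widetilde \pi_t$ using \eqref{eq:pessimistic-cost-closed-form}, setting $\widetilde V_t^c(\widetilde \pi_t) = \tau$, and then bounding $\langle x_{\pi_t^*}^{o,\perp}, \widehat\mu_t^{o,\perp} - \mu_*^{o,\perp}\rangle$ by $\beta_t(\delta, d-1)\|x_{\pi_t^*}^{o,\perp}\|_{(\Sigma_t^{o,\perp})^\dagger}$ (Cauchy--Schwarz plus $\mathcal{E}$) and using feasibility of $\pi_t^*$ (so that $\tfrac{\langle x_{\pi_t^*}^o, e_0\rangle c_0}{\|x_0\|} + \langle x_{\pi_t^*}^{o,\perp}, \mu_*^{o,\perp}\rangle \leq \tau$), we obtain
\begin{equation*}
\gamma_t \;\geq\; \frac{\tau - c_0}{\tau - c_0 + (1+\alpha_c)\,\beta_t(\delta, d-1)\,\|x_{\pi_t^*}^{o,\perp}\|_{(\Sigma_t^{o,\perp})^\dagger}}.
\end{equation*}
Then I chain $\widetilde V_t^r(\pi_t) \geq \widetilde V_t^r(\widetilde \pi_t)$ (since $\widetilde \pi_t \in \widetilde \Pi_t^f$ and $\pi_t$ is the maximizer of $\widetilde V_t^r$ on this set), expand using \eqref{eq:optimistic-reward-closed-form}, apply Cauchy--Schwarz and $\mathcal{E}$ to handle $\langle x_{\widetilde\pi_t}, \widehat\theta_t - \theta_*\rangle$, and crucially invoke Lemma \ref{LEMMA::INVERSE_NORM_DOMINATION} to replace $\|x_{\widetilde\pi_t}\|_{\Sigma_t^{-1}}$ by $\|x_{\widetilde\pi_t}^{o,\perp}\|_{(\Sigma_t^{o,\perp})^\dagger} = \gamma_t \|x_{\pi_t^*}^{o,\perp}\|_{(\Sigma_t^{o,\perp})^\dagger}$. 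Letting $\Delta_t := \langle x_{\pi_t^*}, \theta_*\rangle - \langle x_0, \theta_*\rangle \leq 1 - r_0$ by Assumption \ref{ass:bounded-mean-reward-cost}, I obtain, exactly as in \eqref{eq:I-def},
\begin{equation*}
\widetilde V_t^r(\pi_t) - V_t^r(\pi_t^*) \;\geq\; \gamma_t\bigl((\alpha_r-1)\,\beta_t(\delta,d-1)\,\|x_{\pi_t^*}^{o,\perp}\|_{(\Sigma_t^{o,\perp})^\dagger} + \Delta_t\bigr) - \Delta_t.
\end{equation*}

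The final step is to verify that the hypothesis $(1+\alpha_c)(1-r_0) \leq (\tau - c_0)(\alpha_r - 1)$ forces this lower bound to be nonnegative. Substituting the lower bound on $\gamma_t$ and simplifying leads to the sufficient condition $(\tau - c_0)(\alpha_r - 1) \geq (1+\alpha_c)\Delta_t$, which follows from the stated hypothesis and $\Delta_t \leq 1-r_0$. The main obstacle, as in the LC-LUCB proof, is the coupling between the scaling of the reward ellipsoid (which controls optimism) and the mixture coefficient $\gamma_t$ (which is governed by the cost ellipsoid); the whole point of the asymmetric scaling is to balance these two effects, and the calculation sketched above shows that the \emph{same} condition as in the high-probability case suffices, because the policy-level expectations inherit precisely the linear structure used in the action-level argument.
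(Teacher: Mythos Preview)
Your proposal is correct and follows essentially the same approach as the paper: the paper states that Lemma~\ref{lemma:linear_bandits_optimism} is a corollary of Lemma~\ref{LEMMA::LINEAR_BANDITS_OPTIMISM} obtained by substituting $x_t^*$ with $x_{\pi_t^*}$ and the mixture action with $x_{\widetilde\pi_t}$ where $\widetilde\pi_t = \eta_t \pi_t^* + (1-\eta_t)\pi_0$, after which the argument is unchanged. Your write-up spells out in more detail why this substitution works (linearity of $\pi \mapsto x_\pi$, convexity of $\Delta_{\mathcal{A}_t}$ replacing the star-convexity assumption, $x_{\pi_0}^{o,\perp}=0$), but the route and the key steps---the two-case split, the mixture policy, the $\gamma_t$ lower bound via \eqref{equation::gamma_t_lower_bound}, and the invocation of Lemma~\ref{LEMMA::INVERSE_NORM_DOMINATION}---are identical to the paper's.
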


Lemma~\ref{lemma:linear_bandits_optimism} is a corollary of Lemma~\ref{LEMMA::LINEAR_BANDITS_OPTIMISM}. The exact same proof argument holds with a few notational substitutions. The optimal action $x_t^* \in \widetilde{\mathcal{A}}_t^f$ is substituted with $x_{\pi_t^*}$, which is the average action vector of $\pi_t^* \in \widetilde \Pi^f_t$. Consequently, the mixture action $\widetilde{X}_t$ is substituted with $x_{\widetilde{\pi}_t}$, where $\widetilde{\pi}_t=\eta_t\pi_t^*+(1-\eta_t)\pi_0$, $\pi_0$ is the safe policy that always selects the safe action $x_0$, and $\eta_t\in[0,1]$ is the maximum value of $\eta$ for which the mixture policy belongs to the set of feasible policies, i.e.,~$\widetilde{\pi}_t\in \widetilde\Pi^f_t$. The rest of the proof remains unchanged.

\subsection{Specializing to Multi-Armed Bandits}
\label{sec:constrained-MAB}

We now specialize the results of this section to multi-armed bandits (MABs) and show that the structure of the MAB problem allows a computationally efficient implementation of our OPLB algorithm as well as an improvement in its regret bound.  

In the MAB setting, the action set consists of $K$ arms $\mathcal{A}=\{1,\ldots,K\}$, where each arm $a\in [K]$ has a reward and a cost distribution with means $\bar{r}_a,\bar{c}_a\in [0,1]$. In each round $t\in[T]$, the agent constructs a policy $\pi_t$ over $\mathcal A$, pulls an arm $A_t\sim\pi_t$, and observes a reward-cost pair $(R_{A_t},C_{A_t})$ sampled i.i.d.~from the reward and cost distributions of arm $A_t$. Similar to the constrained contextual linear bandit case studied above, the goal of the agent is to produce a sequence of policies $\{\pi_t\}_{t=1}^T$ with maximum expected cumulative reward over $T$ rounds, i.e.,~$\sum_{t=1}^T\mathbb E_{A_t\sim\pi_t}[\bar{r}_{A_t}]$, while satisfying the {\em stage-wise linear constraint} $\mathbb E_{A_t\sim\pi_t}[\bar{c}_{A_t}] \leq \tau,\;\forall t\in[T]$. Moreover, Arm~$1$ is assumed to be the known safe arm, i.e.,~its mean reward $\bar{r}_1$ and cost $\bar{c}_1$ are known, and $\bar{c}_1 \leq \tau$. 

\paragraph{Optimistic Pessimistic Bandit (OPB) Algorithm.} Let $\{T_a(t)\}_{a=1}^K$ and $\{\widehat{r}_a(t),\widehat{c}_a(t)\}_{a=1}^K$ be the total number of times that arm $a$ has been pulled and the estimated mean reward and cost of arm $a$ up until round $t$. In each round $t\in[T]$, OPB relies on the high-probability upper-bounds on the mean reward and cost of the arms, i.e.,~$\{u_a^r(t),u_a^c(t)\}_{a=1}^K$, where $u_a^r(t) = \widehat{r}_a(t) + \alpha_r\beta_a(t)$, $u_a^c(t) = \widehat{c}_a(t) + \alpha_c\beta_a(t)$, $\beta_a(t) = \sqrt{2\log(1/\delta')/T_a(t)}$, and constants $\alpha_r,\alpha_c\geq 1$. In order to produce a feasible policy, OPB solves the following linear program (LP) in each round $t\in[T]$: 
\begin{equation}
\label{eq::noisy_LP}
\max_{\pi\in\Delta_K} \; \sum_{a\in\mathcal{A}}\pi_a \; u^r_a(t), \qquad \text{s.t.} \;\; \sum_{a\in\mathcal{A}}\pi_a \; u_a^c(t) \leq \tau.
\end{equation}
As~\eqref{eq::noisy_LP} indicates, OPB selects its policy by being optimistic about reward and pessimistic about cost (using an upper-bound for both $r$ and $c$). Algorithm~\ref{alg::optimism_pessimism} contains the pseudo-code of OPB. Note that similar to OPLB, OPB constructs an (approximate) feasible (safe) policy set of the form $\widetilde\Pi_t^f = \big\{ \pi \in \Delta_K : \sum_{a \in \mathcal{A}} \pi_a u_a^c(t) \leq \tau \big\}$ (see Eq.~\ref{eq::noisy_LP}) and sets $\beta_a(0) = 0,\;\forall a\in \mathcal{A}$. 

\begin{algorithm}[t]
\caption{Optimism-Pessimism Bandit (OPB)}
\label{alg::optimism_pessimism}
\begin{algorithmic}[1]
\STATE \textbf{Input:} Number of arms $K$; $\;$ Mean reward $\bar{r}_1$ and cost $\bar{c}_1$ of the safe arm; $\;$ Constraint threshold $\tau\geq 0$; $\;$ Scaling parameters $\alpha_r, \alpha_c \geq 1$
\FOR{$t=1,\ldots,T$}
\STATE Compute estimates $\{u_a^r(t) \}_{a\in\mathcal A}$ and $\{u_a^c(t)  \}_{a\in\mathcal A}$ \label{alg-MAB-est}
\STATE Form the approximate LP (\ref{eq::noisy_LP}) using the estimates in Line~\ref{alg-MAB-est}
\STATE Compute policy $\pi_t$ by solving~\eqref{eq::noisy_LP}
\STATE Play arm $a \sim \pi_t$
\ENDFOR
\end{algorithmic}
\end{algorithm}

\paragraph{Computational Complexity of OPB.} {Unlike OPLB whose optimization problem might be complex to solve, OPB can be implemented extremely efficiently. The following lemma shows that~\eqref{eq::noisy_LP} always has a solution (policy) with support of at most $2$. This property allows us to solve~\eqref{eq::noisy_LP} in closed-form without a LP solver and to implement OPB very efficiently.} 

\begin{lemma}
\label{lemma::LP_support} 
There exists a policy that solves~\eqref{eq::noisy_LP} and has at most $2$ non-zero entries.
\end{lemma}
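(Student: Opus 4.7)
The plan is to argue via standard linear programming (LP) geometry: the feasible region of~\eqref{eq::noisy_LP} is a polytope in $\mathbb{R}^K$, and a linear objective attains its maximum at one of its vertices, so it suffices to count how many coordinates can be non-zero at a vertex.

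First I would verify that~\eqref{eq::noisy_LP} is a well-posed LP: the feasible set $\{\pi \in \Delta_K : \sum_a \pi_a u_a^c(t) \leq \tau\}$ is non-empty (the degenerate policy concentrated on the safe arm, Arm~$1$, is feasible because $\bar{c}_1 \leq \tau$ and $u_1^c(t) \geq \bar{c}_1$ only up to the confidence width, which we can accommodate by either pulling the safe arm once initially or treating $\bar c_1$ as known and setting $\beta_1(t)=0$) and bounded, and the objective is linear. Hence an optimal solution exists and, by the fundamental theorem of linear programming, there exists an optimal solution that is a vertex (basic feasible solution) of the feasible polytope.

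Next I would count active constraints at a vertex. The polytope sits in $\mathbb{R}^K$ and is cut out by:
\begin{itemize}
\item one equality $\sum_{a\in\mathcal{A}} \pi_a = 1$ (always active);
\item $K$ non-negativity inequalities $\pi_a \geq 0$;
\item one budget inequality $\sum_{a\in\mathcal{A}} \pi_a u_a^c(t) \leq \tau$.
\end{itemize}
A vertex of a polytope in $\mathbb{R}^K$ must make $K$ linearly independent constraints active. The simplex equality contributes one; the budget inequality contributes at most one more; therefore at least $K-2$ of the non-negativity constraints must be active, i.e.\ at least $K-2$ coordinates of $\pi$ equal zero. Consequently the vertex has support of size at most $2$, which is exactly the claim.

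The argument is essentially a one-line LP-theory fact and I do not expect any real obstacle. The only point that requires a touch of care is the case where the budget constraint is strictly slack at the optimum: then even more zero-coordinate constraints must be active, so the vertex in fact has support of size $1$, which only strengthens the conclusion. Likewise, if multiple arms share the same values of $u_a^r(t)$ and $u_a^c(t)$ there may be optimal solutions with larger support, but the theorem only asserts the \emph{existence} of a sparse optimal policy, and the vertex argument above delivers one. This sparse structure is what enables the closed-form, LP-solver-free implementation of OPB that the subsequent computational complexity discussion builds on.
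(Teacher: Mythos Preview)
Your argument is correct. It is the standard basic-feasible-solution count: one simplex equality, at most one cost inequality, and the rest must come from the $K$ nonnegativity constraints, forcing at least $K-2$ coordinates to vanish at any optimal vertex. The feasibility check is fine since in OPB the safe arm's mean cost $\bar c_1$ is assumed known (so effectively $u_1^c(t)=\bar c_1\leq\tau$), and the simplex is compact, so an optimal vertex exists.

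The paper proves the same lemma by a different route: it writes the dual $\min_{\lambda\geq 0}\max_a \lambda(\tau-u_a^c(t))+u_a^r(t)$, invokes complementary slackness to restrict the support of any optimal primal solution to the dual argmax set $\mathcal{A}^*$, and then, when $|\mathcal{A}^*|\geq 2$, solves for $\lambda^*$ explicitly and exhibits a feasible two-arm mixture whose value matches the dual. Your vertex-counting proof is shorter and more robust (it immediately gives the $m$-constraint generalization to support $\leq m+1$ that the paper states later). The paper's duality proof, on the other hand, produces the closed-form optimal mixture weights $\pi_{a_1}^*=\frac{u_{a_2}^c(t)-\tau}{u_{a_2}^c(t)-u_{a_1}^c(t)}$, $\pi_{a_2}^*=\frac{\tau-u_{a_1}^c(t)}{u_{a_2}^c(t)-u_{a_1}^c(t)}$ as a by-product, which is exactly what the subsequent ``enumerate all pairs, no LP solver needed'' implementation relies on. Your argument establishes the sparsity but you would still need a one-line extra computation to recover those weights for the algorithmic part.
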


\begin{proof}
See Appendix~\ref{section::LP_structure_appendix}.
\end{proof}

\paragraph{Regret Analysis of OPB.} {We also prove the following regret-bound for OPB.

\begin{restatable}{theorem}{theoremoptregret}\label{theorem::contrained_MAB}
Let $\delta = 4KT\delta'$, $\alpha_c=1$, and $\alpha_r = 1+\frac{2(1-\bar{r}_1)}{\tau-\bar{c}_1}$. Then, with probability at least $1-\delta$, the regret of OPB satisfies
\begin{align*}
\mathcal{R}_\Pi(T) &\leq \left(1 + \frac{2(1-\bar{r}_1)}{\tau-\bar{c}_1}\right) \times \left(2\sqrt{2KT\log(4KT/\delta)} + 4\sqrt{T\log(2/\delta)\log(4KT
/\delta)}\right).
\end{align*}
\end{restatable}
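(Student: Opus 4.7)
The plan is to adapt the proof template of Theorem~\ref{theorem::main_theorem_linear} (OPLB) to the multi-armed bandit setting, where each arm plays the role of an independent ``direction'' and the safe subspace degenerates to the singleton safe arm. First I would establish the high-probability event $\mathcal{E}$ via Hoeffding's inequality and a union bound: with $\delta' = \delta/(4KT)$, simultaneously $|\widehat{r}_a(t) - \bar{r}_a| \le \beta_a(t)$ and $|\widehat{c}_a(t) - \bar{c}_a| \le \beta_a(t)$ for all $a\in[K]$ and $t\in[T]$ hold with probability at least $1-\delta/2$. On $\mathcal{E}$, the estimates $u_a^c(t)$ overestimate $\bar{c}_a$, so every feasible solution of~\eqref{eq::noisy_LP} satisfies the true constraint, and the safe arm always keeps the LP non-empty.

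Next I would decompose the regret through the optimistic reward value $\widetilde{V}_t^r(\pi) := \sum_a \pi_a u_a^r(t)$ into an optimism surplus $(\mathrm{I}) := \sum_t R_{\pi_t^*} - \widetilde{V}_t^r(\pi_t)$ and a concentration term $(\mathrm{II}) := \sum_t \widetilde{V}_t^r(\pi_t) - R_{\pi_t}$. For $(\mathrm{I})$, I would mirror Lemma~\ref{lemma:linear_bandits_optimism}: if $\pi_t^* \in \widetilde{\Pi}_t^f$ then optimism is immediate, otherwise I form the mixture $\widetilde{\pi}_t = \eta_t \pi_t^* + (1-\eta_t)\pi_0$ with $\pi_0$ the point mass on the safe arm and $\eta_t\in[0,1]$ maximal so that $\widetilde{\pi}_t \in \widetilde{\Pi}_t^f$. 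A direct lower bound of the form $\eta_t \ge (\tau - \bar{c}_1)/\bigl((\tau-\bar{c}_1) + (1+\alpha_c)\sum_a \pi_{t,a}^* \beta_a(t)\bigr)$, combined with the chain $\widetilde{V}_t^r(\pi_t) \ge \widetilde{V}_t^r(\widetilde{\pi}_t)$ and the bound $R_{\pi_t^*} - R_{\pi_0} \le 1-\bar{r}_1$, shows that the sufficient condition $(1+\alpha_c)(1-\bar{r}_1) \le (\tau-\bar{c}_1)(\alpha_r-1)$ is met by $\alpha_c=1$, $\alpha_r = 1 + 2(1-\bar{r}_1)/(\tau-\bar{c}_1)$, so $(\mathrm{I}) \le 0$ on $\mathcal{E}$.

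For $(\mathrm{II})$, I would further split into $\sum_t (u_{A_t}^r(t) - \bar{r}_{A_t})$ and the martingale remainder $\sum_t \bigl[\widetilde{V}_t^r(\pi_t) - u_{A_t}^r(t) + \bar{r}_{A_t} - R_{\pi_t}\bigr]$. The first sum is bounded on $\mathcal{E}$ by $2\alpha_r \sum_t \beta_{A_t}(t) = 2\alpha_r \sqrt{2\log(4KT/\delta)}\sum_t 1/\sqrt{T_{A_t}(t)}$, and the standard pigeonhole identity $\sum_t 1/\sqrt{T_{A_t}(t)} = \sum_{a=1}^K \sum_{s=1}^{T_a(T)} 1/\sqrt{s} \le 2\sum_a \sqrt{T_a(T)} \le 2\sqrt{KT}$ (Cauchy-Schwarz) yields the $\alpha_r \cdot 2\sqrt{2KT\log(4KT/\delta)}$ piece. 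The martingale remainder has increments uniformly bounded by roughly $2(\alpha_r+1)\sqrt{2\log(4KT/\delta)}$ (driven by the $\sqrt{\log(1/\delta')}$ scale in $\beta_a(t)$ when $T_a(t)$ is small), so Azuma--Hoeffding with failure probability $\delta/2$ contributes the $4\alpha_r\sqrt{T\log(2/\delta)\log(4KT/\delta)}$ piece. A final union bound over the two $\delta/2$ failure events gives the theorem. The main obstacle, exactly as in OPLB, is driving the optimism step through with the correct ratio $\alpha_r/\alpha_c$ when $\pi_t^*$ may fall outside the pessimistic set $\widetilde{\Pi}_t^f$; everything else is standard UCB bookkeeping specialized to the finite-arm geometry.
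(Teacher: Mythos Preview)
Your proposal is correct and follows essentially the same route as the paper: the same regret decomposition into an optimism term $(\mathrm{I})$ and a concentration term $(\mathrm{II})$, the same mixture-policy argument (Lemma~\ref{lemma::optimism}) to kill $(\mathrm{I})$ under the condition $(1+\alpha_c)(1-\bar r_1)\le(\tau-\bar c_1)(\alpha_r-1)$, and the same pigeonhole bound $\sum_t \beta_{A_t}(t)\le 2\sqrt{2KT\log(1/\delta')}$ together with an Azuma step to pass from expected to realized confidence widths. The only cosmetic difference is that the paper first bounds $(\mathrm{II})\le(\alpha_r+1)\sum_t\mathbb{E}_{a\sim\pi_t}[\beta_a(t)]$ and then applies Azuma to the martingale $\mathbb{E}_{a\sim\pi_t}[\beta_a(t)]-\beta_{A_t}(t)$, whereas you split $(\mathrm{II})$ into the realized width $u_{A_t}^r(t)-\bar r_{A_t}$ plus a martingale remainder; these are the same argument in a slightly different order.
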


\begin{proof}
See Appendix~\ref{section:regret_analysis_appendix}. 
\end{proof}

The main component of this proof is the following lemma, which is the analogous to Lemma~\ref{lemma:linear_bandits_optimism} (and therefore also Lemma~\ref{LEMMA::LINEAR_BANDITS_OPTIMISM}) in the contextual linear bandit case. 

\begin{restatable}{lemma}{optimismMAB}
\label{lemma::optimism}
If we set $\alpha_r$ and $\alpha_c$ such that $\alpha_r,\alpha_c\geq 1$ and $(1+\alpha_c) (1-\bar{r}_1) \leq (\tau-\bar{c}_1) (\alpha_r-1) $, then with high probability, for any $t\in[T]$, we have $\mathbb{E}_{a \sim \pi_t}\left[u_a^r(t)\right] \geq \mathbb{E}_{a \sim \pi^*}\left[\bar{r}_a \right]$.
\end{restatable}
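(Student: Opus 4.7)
The plan is to mirror, in the finite-arm setting, the optimism argument used in the proof of Lemma~\ref{LEMMA::LINEAR_BANDITS_OPTIMISM}. I would condition on the high-probability event $\mathcal{E}_{\mathrm{MAB}}$ on which $|\widehat{r}_a(t)-\bar{r}_a|\leq\beta_a(t)$ and $|\widehat{c}_a(t)-\bar{c}_a|\leq\beta_a(t)$ hold simultaneously for all $a\in[K]$ and $t\in[T]$; this event has sufficiently high probability by sub-Gaussian concentration and a union bound over the $KT$ arm/round pairs. On $\mathcal{E}_{\mathrm{MAB}}$ we have the one-sided bounds $u_a^r(t)\geq\bar{r}_a+(\alpha_r-1)\beta_a(t)$ and $u_a^c(t)\leq\bar{c}_a+(1+\alpha_c)\beta_a(t)$; for the safe arm $a=1$ the values are known exactly, so effectively $\beta_1(t)=0$, $u_1^r(t)=\bar{r}_1$, and $u_1^c(t)=\bar{c}_1$.

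For a fixed round $t$ I would split on whether $\pi^*\in\widetilde\Pi_t^f$. If yes, optimism of $u_a^r$ combined with the LP-optimality of $\pi_t$ in~\eqref{eq::noisy_LP} immediately yields $\mathbb{E}_{a\sim\pi_t}[u_a^r(t)]\geq\mathbb{E}_{a\sim\pi^*}[u_a^r(t)]\geq\mathbb{E}_{a\sim\pi^*}[\bar{r}_a]$. Otherwise, I would introduce the mixture $\widetilde\pi_t:=\eta_t\pi^*+(1-\eta_t)\pi_0$, where $\pi_0$ is the Dirac on arm~$1$ and $\eta_t\in(0,1)$ is the largest value making $\widetilde\pi_t$ feasible, i.e.\ satisfying $\mathbb{E}_{a\sim\widetilde\pi_t}[u_a^c(t)]=\tau$. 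Writing $C_1:=\mathbb{E}_{a\sim\pi^*}[\beta_a(t)]$ and combining the cost upper bound with the true feasibility $\mathbb{E}_{a\sim\pi^*}[\bar{c}_a]\leq\tau$ gives $\mathbb{E}_{a\sim\pi^*}[u_a^c(t)]-\bar{c}_1\leq(\tau-\bar{c}_1)+(1+\alpha_c)C_1$, and solving for $\eta_t$ yields the MAB analogue of~\eqref{equation::gamma_t_lower_bound},
\begin{equation*}
\eta_t\ \geq\ \frac{\tau-\bar{c}_1}{(\tau-\bar{c}_1)+(1+\alpha_c)C_1}.
\end{equation*}

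Next I would lower-bound the optimistic reward of $\widetilde\pi_t$ using $u_a^r(t)\geq\bar{r}_a+(\alpha_r-1)\beta_a(t)$ and $u_1^r(t)=\bar{r}_1$, obtaining
\begin{equation*}
\mathbb{E}_{a\sim\widetilde\pi_t}[u_a^r(t)]\ \geq\ \mathbb{E}_{a\sim\pi^*}[\bar{r}_a]-(1-\eta_t)\Delta_t+\eta_t(\alpha_r-1)C_1,
\end{equation*}
with $\Delta_t:=\mathbb{E}_{a\sim\pi^*}[\bar{r}_a]-\bar{r}_1\leq 1-\bar{r}_1$ by Assumption~\ref{ass:bounded-mean-reward-cost}. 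Plugging the lower bound on $\eta_t$ into the inequality $\eta_t(\alpha_r-1)C_1\geq(1-\eta_t)\Delta_t$ makes the factor $C_1$ cancel on both sides and reduces the requirement to $(\alpha_r-1)(\tau-\bar{c}_1)\geq(1+\alpha_c)\Delta_t$, which is implied by the stated hypothesis together with $\Delta_t\leq 1-\bar{r}_1$. Hence $\mathbb{E}_{a\sim\widetilde\pi_t}[u_a^r(t)]\geq\mathbb{E}_{a\sim\pi^*}[\bar{r}_a]$, and since $\widetilde\pi_t\in\widetilde\Pi_t^f$ the LP-optimality of $\pi_t$ finishes the proof. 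The only real obstacle is the bookkeeping: one must track the confidence-width term $C_1$ symmetrically on the cost side (where it shrinks $\eta_t$) and on the reward side (where it inflates the optimistic reward) so that it cancels cleanly in the final reduction. The linear-case machinery around $e_0$ and the projected estimator $\widehat{\mu}_t^{o,\perp}$ has no counterpart here, because the safe arm's mean reward and cost are known exactly.
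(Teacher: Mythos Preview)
Your proposal is correct and follows essentially the same route as the paper's proof: condition on the concentration event, split on whether $\pi^*$ lies in $\widetilde\Pi_t^f$, and in the infeasible case mix $\pi^*$ with the safe-arm policy $\pi_0$, lower-bound the mixture weight, and show the residual term is nonnegative exactly when $(\tau-\bar c_1)(\alpha_r-1)\geq(1+\alpha_c)\Delta_t$. The only cosmetic difference is that the paper first writes $\pi^*=\rho^*\bar\pi^*+(1-\rho^*)\pi_0$ to strip off any safe-arm mass and works with $C_0=\rho^*\,\mathbb E_{a\sim\bar\pi^*}[\beta_a(t)]$, whereas you achieve the same effect by setting $\beta_1(t)=0$ and using $C_1=\mathbb E_{a\sim\pi^*}[\beta_a(t)]$; these two quantities coincide, so the arguments are equivalent.
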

}

\begin{proof}
See Appendix~\ref{section:regret_analysis_appendix}.
\end{proof}

\begin{remark}
The constrained contextual linear bandit formulation of Section~\ref{subsec:expectation-setting} is general enough to include the constrained MAB one described here. As a result the regret bound of OPLB in Theorem~\ref{theorem::main_theorem_linear} can be instantiated for the constrained MAB setting, in which case it yields a regret bound of order $\widetilde{\mathcal O}\big((1+\frac{1-\bar{r}_1}{\tau-\bar{c}_1} ) K\sqrt{T}\big)$. However, our OPB regret bound in Theorem~\ref{theorem::contrained_MAB} is of order $\widetilde{\mathcal O}((1+\frac{1-\bar{r}_1}{\tau-\bar{c}_1} ) \sqrt{KT})$, which shows $\sqrt{K}$ improvement over simply casting MAB as an instance of contextual linear bandit and using the regret bound of OPLB. %
\end{remark}

\paragraph{Lower-bound.} {We also prove a min-max lower-bound for our constrained MAB problem. Our lower-bound shows that no algorithm can attain a regret better than $\mathcal{O}\big(\max(\sqrt{KT},\frac{1-\bar r_1}{(\tau - \bar{c}_1)^2})\big)$ for this problem. The formal statement of the lower-bound can be found below.} %

\begin{restatable}{theorem}{theoremlowerboundlclucbmab}\label{theorem::lower_bound_mab}
Let $\tau,\bar c_1, \bar r_1\in (0,1)$, $B = \max\big(\frac{1}{27}\sqrt{(K-1)T},\frac{1-\bar r_1}{21(\tau - \bar c_1)^2}\big)$, and assume $T \geq \max(K-1,\frac{168eB}{1-\bar{r}_1})$. Then, for any algorithm there is a pair of reward and cost parameters $(\theta_*, \mu_*)$, such that $\mathcal{R}_\mathcal{C}(T) \geq B$. %
\end{restatable}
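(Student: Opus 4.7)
The plan is to adapt the lower-bound proof of Theorem~\ref{theorem::lower_bound} to the MAB setting and to decompose the claimed bound into its two components $\Omega(\sqrt{(K-1)T})$ and $\Omega\bigl((1-\bar r_1)/(\tau-\bar c_1)^2\bigr)$, which I will establish through two independent hard instance families. Since $B$ is the maximum of the two terms, lower-bounding the regret by each term separately (on a possibly different instance) is sufficient.

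For the first term $\Omega(\sqrt{(K-1)T})$, I would reduce from the standard unconstrained $(K-1)$-armed bandit lower bound. Construct a family of instances in which arm $1$ has mean reward $\bar r_1$ and cost $\bar c_1<\tau$, and every other arm $a\in\{2,\dots,K\}$ has cost identically equal to $0$, so all arms are feasible and $\widetilde\Pi_t^f=\Delta_K$. The constraint becomes vacuous and the problem reduces to a standard stochastic MAB with $K-1$ non-trivial arms (arm $1$ being dominated and contributing only an additive constant); applying the classical minimax lower bound via the two-point or Assouad argument on Bernoulli rewards with gap $\Theta(\sqrt{(K-1)/T})$ yields regret $\Omega(\sqrt{(K-1)T})$.

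For the second term $\Omega\bigl((1-\bar r_1)/(\tau-\bar c_1)^2\bigr)$, I would construct a two-point family supported on just two arms: the safe arm $1$ with known $(\bar r_1,\bar c_1)$, and a target arm $2$ with reward $\bar r_2=1$ and cost either $\bar c_2 = \tau - \Delta$ (\emph{feasible} instance $P_+$) or $\bar c_2 = \tau + \Delta$ (\emph{infeasible} instance $P_-$), for some $\Delta = \Theta(\tau-\bar c_1)$. In $P_+$, the optimal policy plays arm $2$ with probability $1$, with per-round reward gap $1-\bar r_1$ relative to arm $1$; in $P_-$, the best feasible policy is a mixture of arms $1$ and $2$ with arm $2$ weight at most $(\tau-\bar c_1)/(2\Delta)$, which caps the achievable reward well below $1$. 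Thus, an algorithm that on both instances pulls arm $2$ a number of times $n_2$ must, on one of them, incur per-round regret at least $\Omega(1-\bar r_1)$ for $\Omega(T-n_2)$ rounds if it guesses the wrong instance. A standard Le Cam / Pinsker argument gives that to distinguish $P_+$ from $P_-$ with constant error probability requires $n_2 = \Omega(1/\Delta^2) = \Omega(1/(\tau-\bar c_1)^2)$ pulls of arm $2$; and in the infeasible instance each pull of arm $2$ at probability above the feasible cap contributes regret on the order of $1-\bar r_1$, producing the desired $\Omega\bigl((1-\bar r_1)/(\tau-\bar c_1)^2\bigr)$ lower bound on at least one of the two instances.

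The main obstacle I anticipate is handling the constraint-enforcement side carefully: because the constraint is only required to hold in expectation on the algorithm's stage-wise policy $\pi_t$, one must be careful that the ``regret for failing to distinguish'' argument accounts for policies that play the risky arm fractionally. Concretely, in the infeasible instance the algorithm may legally play arm $2$ with probability up to $(\tau-\bar c_1)/(\bar c_2-\bar c_1) = (\tau-\bar c_1)/(2\Delta)$ per round without violating~\eqref{eq:constraint}, and the regret bookkeeping must compare against the corresponding constrained optimum rather than against the unconstrained play of arm $2$. Once $\Delta$ is tuned to be a constant fraction of $\tau-\bar c_1$ (so that the feasible fraction in $P_-$ is bounded away from $1$), the usual Bretagnolle--Huber / KL chain rule bound $\mathrm{KL}(P_+^{\otimes T}\Vert P_-^{\otimes T}) \le n_2\cdot\mathrm{KL}(\mathrm{Ber}(\tau-\Delta)\Vert\mathrm{Ber}(\tau+\Delta)) = O(n_2\Delta^2)$ closes the argument, and the assumption $T\geq 168eB/(1-\bar r_1)$ ensures that the horizon is large enough that the constant-error distinguishing regime applies. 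Combining the two constructions yields the claimed bound $B$.
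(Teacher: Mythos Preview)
Your treatment of the first term, $\Omega(\sqrt{(K-1)T})$, matches the paper: both reduce to the unconstrained MAB lower bound by making the constraint slack. The second term is where your proposal diverges and has a genuine gap.

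For the $\Omega\bigl((1-\bar r_1)/(\tau-\bar c_1)^2\bigr)$ component, the paper does \emph{not} use a two-arm construction. It reuses verbatim the four-arm instance pair $(\nu,\nu')$ from the proof of Theorem~\ref{theorem::lower_bound}: a safe arm~$0$, a high-reward infeasible arm~$1$, a low-reward feasible arm~$2$, and a ``distinguishing'' arm~$3$ whose cost flips from $\tau+2c$ in $\nu$ to $\tau-c$ in $\nu'$. The key structural feature is that arm~$3$ is \emph{strictly suboptimal} in $\nu$ (the optimum mixes arms~$0$ and~$1$), so the $\Omega(1/c^2)$ pulls of arm~$3$ forced by the divergence-decomposition argument translate directly into $\Omega(\Delta/c^2)$ regret in $\nu$.

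Your two-arm construction lacks this decoupling, and the stated mechanism for regret is inverted. You write that ``in the infeasible instance each pull of arm $2$ at probability above the feasible cap contributes regret on the order of $1-\bar r_1$.'' That is false for pseudo-regret as defined in~\eqref{equation::regret_definition} or~\eqref{eq:regret2}: arm~$2$ has the largest reward in both $P_+$ and $P_-$, so allocating mass to arm~$2$ above the feasible cap in $P_-$ \emph{decreases} pseudo-regret (it only causes constraint violation, which the regret functional does not penalize). Consequently your Le~Cam argument does not close: the $\Omega(1/\Delta^2)$ pulls of arm~$2$ that distinguishing requires are not costly in either instance, so you cannot convert ``must pull arm~$2$ many times'' into ``must incur regret.'' A salvage would need to run the argument the other way---use feasibility on $P_-$ to force $\pi_t(2)\le p_-^*$ on every observation sequence, then charge regret on $P_+$ for the under-exploitation---but that route requires carefully handling high-probability feasibility (if the algorithm is only feasible with probability $1-\delta$, the event $\{\bar T_2(T)>p_-^*T\}$ need not have probability zero under $P_-$), and you have not supplied that argument. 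The paper's extra arms sidestep all of this by making the arm that the change-of-measure forces you to sample one whose every pull is itself a source of regret.
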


\begin{proof}
See Appendix~\ref{appendix::High_probability_Lower_Bound}.
\end{proof}

\paragraph{Extension to Multiple Constraints.} {In the case of $m$ constraints, the learner receives $m$ cost signals after pulling each arm. The cost vector of the safe arm $\boldsymbol{c}_1$ satisfies $\boldsymbol{c}_1(i)<\tau_i,\forall i\in[m]$, where $\{\tau_i\}_{i=1}^m$ are the constraint thresholds. Similar to single-constraint OPB, multi-constraint OPB is computationally efficient. The main reason is that the LP of $m$-constraint OPB has a solution with at most $(m+1)$ non-zero entries. We also obtain a regret-bound of $\widetilde{\mathcal{O}}(\frac{\sqrt{KT }}{\min_{i\in [K]} (\tau_i - \boldsymbol{c}_1(i))})$ for $m$-constraint OPB. The proofs and details of this case are reported in Appendix~\ref{section::multiple_constraints_appendix}.}

\paragraph{Unknown $\bar{c}_1$ and $\bar{r}_1$.} The same warm starting sub-routine as in LC-LUCB and OPLB can be used for computing sufficiently accurate estimators of $\bar{r}_1$ and $\bar{c}_1$ for OPB. A detailed explanation can be found in Appendix~\ref{section::unknown_c0}.

\subsection{Extension to Nonlinear Rewards and Costs}\label{section::nonlinear_rewards_costs}

Building on a rich line of research that extends bandit problems to function approximation—such as \cite{russo2013eluder}, \cite{zhou2020neural}, \cite{foster2020beyond},  and \cite{liu2023global}—we investigate how to adapt constrained bandit problems to this setting. In this work, we leverage the eluder dimension framework to develop our bounds and algorithms.

In this section, we explore constrained learning beyond linearity and extend the OPLB algorithm to the setting where the reward and cost functions are possibly nonlinear functions of the actions. We call the resulting algorithm Optimistic-Pessimistic Nonlinear Bandit algorithm (OPNLB). In each round $t\in [T]$, the agent is given an action set $\mathcal{A}_t \subseteq \mathcal{A}$, where $\mathcal{A}$ is a formal action set. Upon taking action $X_t \in \mathcal{A}_t$ the learner observes a reward-cost pair $(R_t, C_t)$ such that $R_t = \theta_*(X_t) + \xi_t^r$ and $C_t = \mu_*(X_t) + \xi_t^c$, where $\theta_*(\cdot) \in \mathcal{G}_r$ and $\mu_*(\cdot) \in \mathcal{G}_c$ are the mean reward and mean cost functions that belong to known function classes $\mathcal{G}_r$ and $\mathcal{G}_c$, and $\xi_t^r$ and $\xi_t^c$ are conditionally zero-mean sub-Gaussian random variables. %
The rewards and costs satisfy the following assumption.

\begin{assumption}[Bounded Responses]\label{ass:bounded_reward_costs_function_approx}
For all $t \in [T]$ and $x \in \mathcal{A}_t$, the mean rewards and costs are bounded, i.e.,~$\theta_*(x), \mu_*(x) \in [0,1]$. Moreover, the rewards and costs observed in all rounds of the algorithm are also bounded, i.e.,~$R_t, C_t \in [0,1],\;\forall t\in [T]$.
\end{assumption}

Moreover, the action sets $\mathcal{A}_t$ satisfy the safe action Assumption~\ref{ass:safe-action}, i.e.,~there is an action $x_0 \in \mathcal{A}_t$, $\forall t \in [T]$, with known average reward $r_0 = \theta_*(x_0)$ and known average cost $c_0=\mu_*(x_0) $, such that $c_0 < \tau$. The policy $\pi_t$ according to which an action is taken in round $t$ is an element of $\Delta_{\mathcal{A}_t}$. We denote by $\Pi_t^f=\{\pi\in\Delta_{\mathcal A_t} : \mathbb E_{X\sim\pi}[\mu_*(X)] \leq \tau\}$ and $\pi^*_t=\argmax_{\pi\in\Pi^f_t}\mathbb E_{X\sim\pi}[\theta_*(X)]$ the set of feasible polices and the optimal policy in round $t\in [T]$. Finally, we define the $T$-round regret as 
\begin{equation}
\label{eq:regret-non-linear}
\mathcal R_\Pi(T) = \sum_{t=1}^T \mathbb E_{X\sim\pi_t^*}\big[\theta_*(X)\big] - \mathbb E_{X\sim\pi_t}\big[\theta_*(X)\big]. 
\end{equation}

The nonlinear reward and cost model that we discuss in this section does not allow for a high probability constraint satisfaction scenario without making strong assumptions on the action set. The star-convexity requirement of Definition~\ref{definition::star_convex} does not extend to non-linear action spaces. This is the reason why we study an expected constraint scenario instead. Before introducing our algorithm for this setting, we formally define the eluder dimension, i.e.,~a notion of complexity relevant to adaptive selection procedures introduced in \citet{russo2013eluder}.

\begin{definition}[Action Independence and Eluder Dimension]
Let $\epsilon > 0$ and $\{x_i \}_{i=1}^{n}$ be a set of actions. Then, we have the following definitions: 

\begin{itemize}
    \item An action $x$ is $\epsilon$-dependent on $\{x_i \}_{i=1}^{n}$ w.r.t.~the function space $\mathcal{G}$, if any $f, f' \in \mathcal{G}$ satisfying $\sqrt{\sum_{i=1}^n (f(x_i)-f'(x_i))^2} \leq \epsilon$ also satisfy $|f(x)-f'(x)|\leq \epsilon$. An action $x$ is $\epsilon$-independent of $\{x_i \}_{i=1}^{n}$ w.r.t.~$\mathcal{G}$ if it is not $\epsilon$-dependent on $\{x_i \}_{i=1}^{n}$.
    \item The $\epsilon$-eluder dimension $d_{\mathrm{eluder}}(\mathcal{G}, \epsilon)$ is the length of the longest sequence of elements in $\{ x_i \}_{i=1}^{n}$ such that for some $\epsilon' \geq \epsilon$, every element is $\epsilon'$-independent of its predecessors.
\end{itemize}
\end{definition}

Throughout this section, we will use the notation $d_{\mathrm{eluder}}^r = d_{\mathrm{eluder}}(\mathcal{G}_r, 1/T)$ and $d_{\mathrm{eluder}}^c = d_{\mathrm{eluder}}(\mathcal{G}_c, 1/T)$ to denote the eluder dimensions of the function spaces $\mathcal{G}_r$ and $\mathcal{G}_c$, respectively.

\paragraph{Optimistic-Pessimistic Non-Linear Bandit (OPNLB).} In each round $t\in [T]$, we define two confidence sets
\begin{align*}
C_t^r(\delta) &= \big\{\theta \in \mathcal{G}_r : \|\theta - \widehat{\theta}_{t} \|_{\mathcal{D}_t} \leq  \gamma_r(t,\delta/2)\big\}, \\
C_t^c(\delta) &= \big\{\mu \in \mathcal{G}_c : \| \mu - \widehat{\mu}_{t} \|_{\mathcal{D}_t} \leq  \gamma_c(t,\delta/2) \;\text{ and }\; \mu(x_0) = c_0\big\},
\end{align*}
where $\mathcal D_t = \{ (X_s, R_s, C_s) \}_{s=1}^{t-1}$ is the dataset of actions, rewards, and costs observed up until the beginning of round $t$, $\| f \|_{\mathcal{D}_t} = \sqrt{ \sum_{x \in \mathcal{D}_t} f^2(x)}$ is the norm defined by the dataset $\mathcal{D}_t$ for any function $f:\mathcal A_t\rightarrow \mathbb R$, and $\gamma_r(t, \delta)$ and $\gamma_c(t, \delta)$ are the reward and cost confidence set radii defined as
\begin{align*}
\gamma_r(t, \delta) = 512 \log\left( \frac{24 |\mathcal{G}_r|\log(2t)}{\delta} \right), \qquad
\gamma_c(t, \delta) = 512 \log\left( \frac{24 |\mathcal{G}_c|\log(2t)}{\delta} \right).
\end{align*}
Let $\mathcal{E}$ be the event defined as
\begin{align}
\label{eq:high-prob-event-nonlinear}
\mathcal{E} := \left\{  \theta_*  \in C_t^r(\delta) \; \wedge \; \mu_* \in C_t^r(\delta) , \;\forall t\in[T] \right\}.
\end{align}
This is the same event as in~\eqref{eq:high-prob-event} for the linear case where the confidence sets $C_t^r(\delta)$ and $C_t^c(\delta)$ satisfy $\theta_* \in C_t^r(\delta)$ and $\mu_* \in C_t^c(\delta)$ for all $t \in [T]$. Corollary~\ref{corollary:confidence_sets_function_approx} in Appendix~\ref{section::LS_estimators_properties} implies that $\mathbb{P}( \mathcal{E}) \geq 1-\delta$.

\begin{algorithm}[t]
\caption{Optimistic-Pessimistic Nonlinear Bandit (OPNLB)}
\label{alg::nonlinear_optimism_pessimism}
\begin{algorithmic}[1]
\STATE {\bfseries Input:} Safe action $x_0$ with reward $r_0$ and cost $c_0$; $\;$ Constraint threshold $\tau\geq 0$; $\;$ Scaling parameters $\alpha_r, \alpha_c \geq 1$ 
\FOR{$t=1, \ldots , T$}
\STATE Compute $\widehat{\theta}_{t}, \widehat{\mu}_{t}$ using least squares
\STATE Observe action set $\mathcal{A}_t$ and construct the estimated feasible policy set $\widetilde{\Pi}_t^f$ using~\eqref{equation::feasible_policy_set_eluder}.
\STATE Compute policy $\pi_t = \argmax_{\pi \in \widetilde{\Pi}_t^f} \widetilde{V}_t^r(\pi)$.
\STATE Take action $X_t \sim \pi_t $ and observe reward and cost signals $(R_t,C_t)$
\ENDFOR
\end{algorithmic}
\end{algorithm}

We define pessimistic cost $\widetilde{V}_t^c(\pi)$ and optimistic reward $\widetilde{V}_t^r(\pi)$ values for a policy $\pi$ in each round $t$ as
\begin{align*}
\widetilde{V}_t^c(\pi) &= \max_{\mu \in C^c_t(\delta)} \mathbb E_{x\sim\pi}\big[\mu(x)\big], \\ 
\widetilde{V}_t^r(\pi) &= \max_{\theta \in C^r_t(\delta)} \mathbb E_{x\sim\pi}\big[\theta(x)\big] + \alpha_r \max_{\mu' , \mu''\in C^c_t(\delta)} \mathbb E_{x\sim\pi}\big[\mu'(x)\big] - \mathbb E_{x\sim\pi}\big[\mu''(x)\big].
\end{align*}
Note that $\widetilde V_t^r$ is the combination of an optimistic reward estimator plus an artificially inflated confidence interval that depends on the cost function class. We define the set of feasible policies in round $t$ as
\begin{equation}\label{equation::feasible_policy_set_eluder}
\widetilde{\Pi}_t^f = \big\{ \pi \in \Delta_{\mathcal A_t} : \widetilde{V}_t^c(\pi) \leq \tau \big\}.
\end{equation}
We now show that by appropriately setting the scaling parameters $\alpha_r$ and $\alpha_c$, the policy $\pi_t$ selected by Algorithm~\ref{alg::nonlinear_optimism_pessimism} is feasible and a basic optimistic relationship holds between $\pi_t$ and $\pi_t^*$, i.e.,~$\widetilde{V}_t^r(\pi_t) \geq V_t^r( \pi_t^*)$. The following lemma is the equivalent of Lemma~\ref{lemma:linear_bandits_optimism} from the linear case. 

\begin{restatable}{lemma}{lemmaoptimismeluder}\label{lemma::optimism_eluder}

If the event $\mathcal{E}$ defined by~\eqref{eq:high-prob-event-nonlinear} holds and the scaling parameter satisfies $\alpha_r = \frac{1-r_0}{\tau- c_0}$, then for all $t \in [T]$, we have $\widetilde{V}_t^r(\pi_t) \geq V_t^r( \pi_t^*) = \theta_*(\pi_t^*, \mathcal{A}_t)$.
\end{restatable}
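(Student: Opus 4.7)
The plan is to mirror the structure of the proof of Lemma~\ref{LEMMA::LINEAR_BANDITS_OPTIMISM} (and its in-expectation analogue Lemma~\ref{lemma:linear_bandits_optimism}), splitting into two cases depending on whether the optimal policy $\pi_t^*$ lies in the estimated feasible set $\widetilde{\Pi}_t^f$. The essential algebraic identity that replaces the star-convexity geometry of the linear case is the constraint $\mu(x_0) = c_0$ baked into the definition of $C_t^c(\delta)$: this forces the pessimistic cost (and the cost-spread inflation) to behave linearly under mixtures with the deterministic safe policy $\pi_0$.

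In Case~1, if $\pi_t^* \in \widetilde{\Pi}_t^f$, the result is immediate: since $\pi_t$ is the $\widetilde{V}_t^r$-maximizer over $\widetilde{\Pi}_t^f$, we have $\widetilde{V}_t^r(\pi_t) \geq \widetilde{V}_t^r(\pi_t^*)$, and since $\theta_* \in C_t^r(\delta)$ on $\mathcal{E}$, the first term of $\widetilde{V}_t^r(\pi_t^*)$ already dominates $V_t^r(\pi_t^*) = \mathbb{E}_{x \sim \pi_t^*}[\theta_*(x)]$, while the inflation term is non-negative.

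The real work is Case~2, where $\pi_t^* \notin \widetilde{\Pi}_t^f$, i.e., $\widetilde{V}_t^c(\pi_t^*) > \tau$. I will introduce the mixture policy $\widetilde{\pi}_t = \eta_t \pi_t^* + (1-\eta_t)\pi_0$. Because every $\mu \in C_t^c(\delta)$ satisfies $\mu(x_0) = c_0$, the max over $\mu$ in the pessimistic cost is achieved independently of the mixing weight on $\pi_0$, so $\widetilde{V}_t^c(\widetilde{\pi}_t) = \eta_t \widetilde{V}_t^c(\pi_t^*) + (1-\eta_t) c_0$. Setting this equal to $\tau$ gives $\eta_t = (\tau - c_0)/(\widetilde{V}_t^c(\pi_t^*) - c_0) \in (0,1)$, so $\widetilde{\pi}_t \in \widetilde{\Pi}_t^f$, whence $\widetilde{V}_t^r(\pi_t) \geq \widetilde{V}_t^r(\widetilde{\pi}_t)$.

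It remains to lower bound $\widetilde{V}_t^r(\widetilde{\pi}_t)$ by $V_t^r(\pi_t^*)$. For the optimistic mean term, I pick $\theta = \theta_* \in C_t^r(\delta)$ to obtain the bound $\eta_t V_t^r(\pi_t^*) + (1-\eta_t) r_0$. For the inflation term, I pick $\mu' = \mu_* \in C_t^c(\delta)$ (legitimate on $\mathcal{E}$); again by $\mu(x_0) = c_0$ for all $\mu \in C_t^c(\delta)$, the contributions of the $(1-\eta_t)$-weight on $\pi_0$ cancel, yielding a lower bound of $\alpha_r \eta_t \bigl(\widetilde{V}_t^c(\pi_t^*) - \mathbb{E}_{x\sim \pi_t^*}[\mu_*(x)]\bigr) \geq \alpha_r \eta_t \bigl(\widetilde{V}_t^c(\pi_t^*) - \tau\bigr)$, where the last step uses feasibility of $\pi_t^*$. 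Combining these and rearranging, the desired inequality $\widetilde{V}_t^r(\widetilde{\pi}_t) \geq V_t^r(\pi_t^*)$ reduces to
\begin{equation*}
\alpha_r \eta_t \bigl(\widetilde{V}_t^c(\pi_t^*) - \tau\bigr) \;\geq\; (1-\eta_t)\bigl(V_t^r(\pi_t^*) - r_0\bigr).
\end{equation*}
Using $V_t^r(\pi_t^*) - r_0 \leq 1 - r_0$ (Assumption~\ref{ass:bounded_reward_costs_function_approx}) and substituting the closed form of $\eta_t$, both sides share the factor $(\widetilde{V}_t^c(\pi_t^*) - \tau)/(\widetilde{V}_t^c(\pi_t^*) - c_0)$, and the inequality collapses to $\alpha_r(\tau - c_0) \geq 1 - r_0$, which is exactly the hypothesis $\alpha_r = (1-r_0)/(\tau - c_0)$.

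The only subtle point — and the main obstacle — is justifying that the maximization in the inflation term ``commutes'' with the mixture, i.e., that I may pick a single pair $(\mu', \mu'')$ whose contribution on the $\pi_0$ component is zero. This is where the extra constraint $\mu(x_0) = c_0$ in the definition of $C_t^c(\delta)$ (which requires knowledge of $c_0$) is indispensable; without it, the inflation term would pick up an extra $(1-\eta_t)$-sized cost-spread at $x_0$ that cannot be controlled and would derail the cancellation above.
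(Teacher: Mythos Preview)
Your proof is correct and follows essentially the same approach as the paper: the same two-case split, the same mixture policy $\widetilde{\pi}_t=\eta_t\pi_t^*+(1-\eta_t)\pi_0$, and the same crucial use of the constraint $\mu(x_0)=c_0$ in $C_t^c(\delta)$ to make both the pessimistic cost and the cost-spread inflation linear in the mixing weight. The only cosmetic difference is that the paper first lower-bounds $\eta_t$ via the width $C_1=\max_{\mu',\mu''}\mu'(\pi_t^*)-\mu''(\pi_t^*)$ and then substitutes, whereas you work directly with the exact $\eta_t=(\tau-c_0)/(\widetilde{V}_t^c(\pi_t^*)-c_0)$ and observe that both sides share the factor $(\widetilde{V}_t^c(\pi_t^*)-\tau)/(\widetilde{V}_t^c(\pi_t^*)-c_0)$; both routes collapse to the identical condition $\alpha_r(\tau-c_0)\geq 1-r_0$.
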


\begin{proof}
See Appendix~\ref{section::proof_lemma_optimism_eluder}. 
\end{proof}

The proof of Lemma~\ref{lemma::optimism_eluder} follows a similar logic as that of Lemmas~\ref{LEMMA::LINEAR_BANDITS_OPTIMISM} and~\ref{lemma:linear_bandits_optimism}. Given this result, we now prove a regret bound for OPNLB in terms of the eluder dimensions $d_{\mathrm{eluder}}^r$ and $d_{\mathrm{eluder}}^c$. 
When $\mathcal{E}$ holds for all $t \in [T]$, the following inequalities are satisfied
\begin{align*}
    &\mathcal{R}_\Pi(T) = \sum_{t=1}^T \mathbb E_{X\sim\pi_t^*}\big[\theta_*(X)\big] - \mathbb E_{X\sim\pi_t}\big[\theta_*(X)\big] \\ 
    &\;\;\stackrel{\text{(a)}}{\leq} \sum_{t=1}^T \widetilde{V}^r_t(\pi_t) - \mathbb E_{X\sim\pi_t}\big[\theta_*(X)\big] \\
    &\;\;= \sum_{t=1}^T \max_{\theta \in C^r_t(\delta)} \mathbb E_{X\sim\pi_t}\big[\theta(X)\big] - \mathbb E_{X\sim\pi_t}\big[\theta_*(X)\big] + \alpha_r \!\!\max_{\mu , \mu'\in C^c_t(\delta)} \mathbb E_{X\sim\pi_t}\big[\mu(X)\big] - \mathbb E_{X\sim\pi_t}\big[\mu'(X)\big] \\
    &\;\;\stackrel{\text{(b)}}{\leq} \sum_{t=1}^T \max_{\theta, \theta' \in C^r_t(\delta)} \!\mathbb E_{X\sim\pi_t}\big[\theta(X)\big] - \mathbb E_{X\sim\pi_t}\big[\theta'(X)\big] + \alpha_r \!\!\!\max_{\mu , \mu'\in C^c_t(\delta)} \!\mathbb E_{X\sim\pi_t}\big[\mu(X)\big] - \mathbb E_{X\sim\pi_t}\big[\mu'(X)\big].
\end{align*}
{\bf (a)} holds because of Lemma~\ref{lemma::optimism_eluder}. {\bf (b)} holds because  $\theta_* \in C_t^r(\delta)$ for all $t \in [T]$ with probability at least $1-\delta$. %
The inequality sequence above implies that the regret can be upper-bounded by a weighted sum of uncertainty widths. We bound the sum of the uncertainty widths using Lemma~3 in~\cite{chan2021parallelizing} (by setting the parallelism parameter $P=1$) as
\begin{equation*}
\sum_{t=1}^T \max_{\theta, \theta' \in C^r_t(\delta)} \mathbb E_{X\sim\pi_t}\big[\theta(X)\big] - \mathbb E_{X\sim\pi_t}\big[\theta'(X)\big] = \mathcal{O}\left( d^r_{\mathrm{eluder} } + \sqrt{Td^r_{\mathrm{eluder} } \gamma_r(T, \delta/2)} \right) 
\end{equation*}
and
\begin{equation*}
\sum_{t=1}^T \max_{\mu, \mu' \in C^c_t(\delta)} \mathbb E_{X\sim\pi_t}\big[\mu(X)\big] - \mathbb E_{X\sim\pi_t}\big[\mu'(X)\big] = \mathcal{O}\left(d^c_{\mathrm{eluder} }+ \sqrt{Td^c_{\mathrm{eluder} } \gamma_c(T, \delta/2)} \right).
\end{equation*}
Combining these results and using $\mathbb{P}( \mathcal{E}) \geq 1-\delta$, we obtain the main result of this section, which is a regret bound for the OPNLB algorithm (Algorithm~\ref{alg::nonlinear_optimism_pessimism}).

\begin{theorem}[OPNLB regret-bound]
\label{theorem::main_nonlinear_result}
With probability at least $1-\delta$, the regret of Algorithm~\ref{alg::nonlinear_optimism_pessimism} satisfies
\begin{equation*}
     \mathcal{R}_\Pi(T) =\mathcal{O}\left( \sqrt{Td^r_{\mathrm{eluder} } \gamma_r(T, \delta/2)} + \frac{1-r_0}{\tau - c_0} \sqrt{Td^c_{\mathrm{eluder} } \gamma_c(T, \delta/2)} + d^r_{\mathrm{eluder} } + \frac{1-r_0}{\tau - c_0}  d^c_{\mathrm{eluder} }\right). 
\end{equation*}

\end{theorem}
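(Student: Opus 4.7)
The plan is to chain together three ingredients: the optimism guarantee already established in Lemma~\ref{lemma::optimism_eluder}, a clean decomposition of the per-round regret into reward and cost confidence-width terms, and a standard eluder-dimension width-summation bound applied separately to the reward function class $\mathcal{G}_r$ and the cost function class $\mathcal{G}_c$. I would first condition on the event $\mathcal{E}$ that the true reward and cost functions lie in their respective confidence sets, i.e.~$\theta_*\in C_t^r(\delta)$ and $\mu_*\in C_t^c(\delta)$ simultaneously for every $t\in[T]$; by Corollary~\ref{corollary:confidence_sets_function_approx} this occurs with probability at least $1-\delta$, and it is the only place a failure probability enters the argument.

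On $\mathcal{E}$, the prescribed scaling $\alpha_r=\tfrac{1-r_0}{\tau-c_0}$ together with Lemma~\ref{lemma::optimism_eluder} yields $\widetilde V_t^r(\pi_t)\ge \mathbb E_{X\sim\pi_t^*}[\theta_*(X)]$, so the per-round regret is dominated by $\widetilde V_t^r(\pi_t)-\mathbb E_{X\sim\pi_t}[\theta_*(X)]$. Unpacking the definition of $\widetilde V_t^r(\pi_t)$ and using $\theta_*\in C_t^r(\delta)$ to rewrite the subtracted term as a worst-case over the reward confidence set, this decomposes into a reward-width term plus $\alpha_r$ times a cost-width term:
\begin{equation*}
\mathcal{R}_\Pi(T) \le \sum_{t=1}^T \!\!\max_{\theta,\theta'\in C_t^r(\delta)}\!\! \mathbb E_{X\sim\pi_t}[\theta(X)-\theta'(X)] \;+\; \alpha_r\sum_{t=1}^T \!\!\max_{\mu,\mu'\in C_t^c(\delta)}\!\! \mathbb E_{X\sim\pi_t}[\mu(X)-\mu'(X)].
\end{equation*}

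Next I would invoke Lemma~3 of~\cite{chan2021parallelizing} with parallelism parameter $P=1$ to bound each of these sums of confidence-set widths by the corresponding eluder-dimension quantity. Applied to $\mathcal{G}_r$ with radius schedule $\gamma_r(t,\delta/2)$, the reward sum is $\mathcal{O}(d_{\mathrm{eluder}}^r+\sqrt{T d_{\mathrm{eluder}}^r\,\gamma_r(T,\delta/2)})$; applied to $\mathcal{G}_c$ with $\gamma_c(t,\delta/2)$, the cost sum is the analogous bound with superscript $c$. Multiplying the cost bound by $\alpha_r=\tfrac{1-r_0}{\tau-c_0}$ and adding the two pieces reproduces the claimed bound exactly.

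The main obstacle is really already discharged in Lemma~\ref{lemma::optimism_eluder}: ensuring that inflating $\widetilde V_t^r(\pi)$ with an $\alpha_r$-scaled \emph{cost}-uncertainty term suffices to dominate the suboptimality of pessimistically shrinking the feasible set. A subtler point worth highlighting in the writeup is that this inflation is what forces the regret to depend on $d_{\mathrm{eluder}}^c$ and $\gamma_c$ (and hence on $\log|\mathcal{G}_c|$), not only on the reward-class complexity; the factor $\tfrac{1-r_0}{\tau-c_0}$ in front of the cost term is the price paid for safe exploration and matches the hardness parameter appearing in the linear cases (Theorems~\ref{thm:regret-bound-HPLOP} and~\ref{theorem::main_theorem_linear}).
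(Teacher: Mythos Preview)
Your proposal is correct and follows essentially the same route as the paper: condition on $\mathcal{E}$ via Corollary~\ref{corollary:confidence_sets_function_approx}, invoke Lemma~\ref{lemma::optimism_eluder} to replace $\mathbb E_{X\sim\pi_t^*}[\theta_*(X)]$ by $\widetilde V_t^r(\pi_t)$, decompose into a reward-width sum plus $\alpha_r$ times a cost-width sum (using $\theta_*\in C_t^r(\delta)$ to pass from $\theta_*$ to a max over $\theta'\in C_t^r(\delta)$), and then bound each width sum by Lemma~3 of~\cite{chan2021parallelizing} with $P=1$. The paper's inline argument before the theorem statement is exactly this, including the same external lemma.
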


\begin{remark}
\label{remark:extension-high-prob-nonlinear}
It is not possible to extend the results of this section (the non-linear case) to the high probability setting studied in Section~\ref{section::high_probability_constraints}. In the linear high probability scenario, star-convexity around the safe action $x_0$ allows the learner to form a model of $\mu_\star(x)$ by playing a convex combination of the safe action and any other action $x$. In the non-linear setting, since $\mathcal{A}_t \subset \mathcal{A}$ is a formal action set, closure of $\mathcal{A}_t$ under convexity is not defined. Thus, it is possible to have actions $x$ that are safe, i.e.,~$\mu_\star(x) < \tau$, but can never b explored safely. 
\end{remark}

\paragraph{Computational Tractability of ONPLB.} Step ~$5$ of Algorithm~\ref{alg::nonlinear_optimism_pessimism}  involves solving a constrained optimization problem that, in general, can be intractable. It remains an open question how to design tractable algorithms for constrained non-linear bandit problems.   

\subsection{Experimental Results}
\label{sec:experiments}

\begin{figure*}
\centering\subfigure{\includegraphics[width=0.25\linewidth]{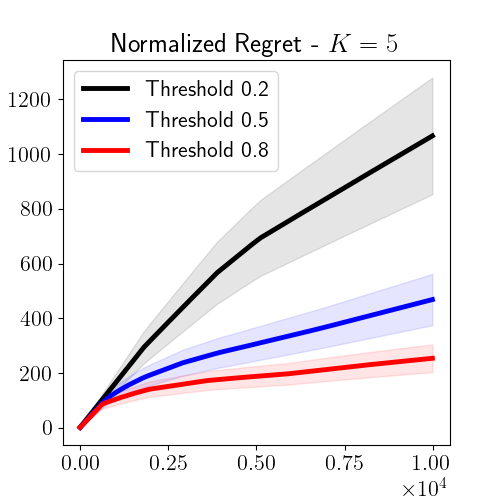}} 
\centering\subfigure{\includegraphics[width=0.32\linewidth]{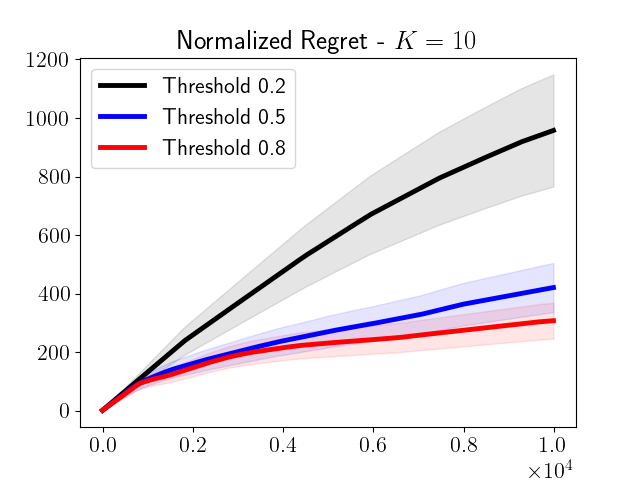}} 
\centering\subfigure{\includegraphics[width=0.32\linewidth]{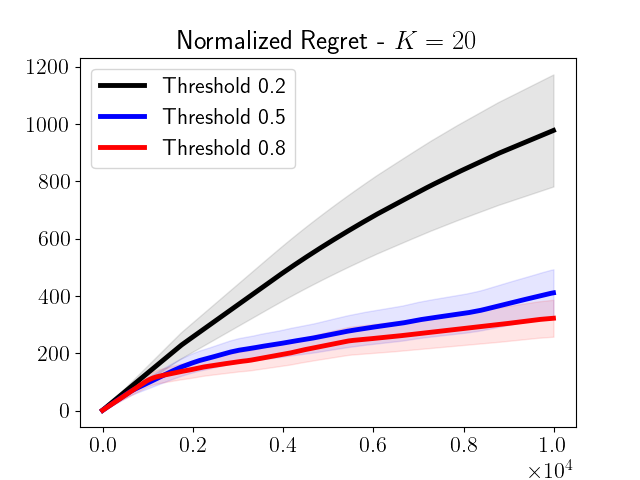}}        
\caption{Regret of OPB for three instances of the randomly generated constrained multi-armed bandit problem with the number of arms equal to $5$ {\em (a)}, $10$ {\em (b)}, and $20$ {\em (c)}. The cost and reward of the safe arm are set to $\bar{c}_1 = \bar{r}_1 = 0$.} 
\label{fig:constrained_bandits5}
\vspace{0.1in}
        
\centering\subfigure{\includegraphics[width=0.32\linewidth]{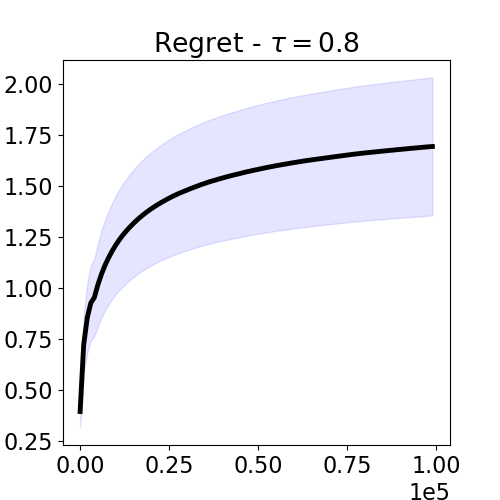}} 
\centering\subfigure{\includegraphics[width=0.32\linewidth]{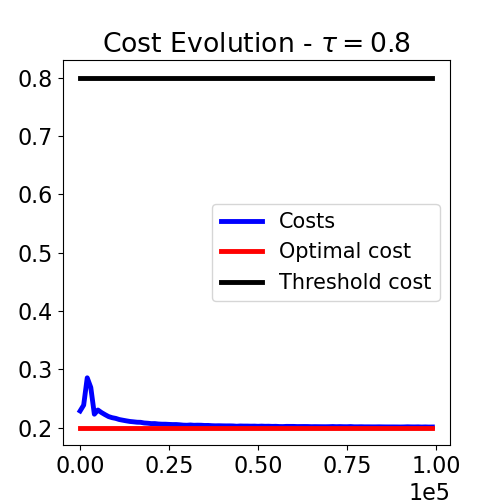}} 
\centering\subfigure{\includegraphics[width=0.32\linewidth]{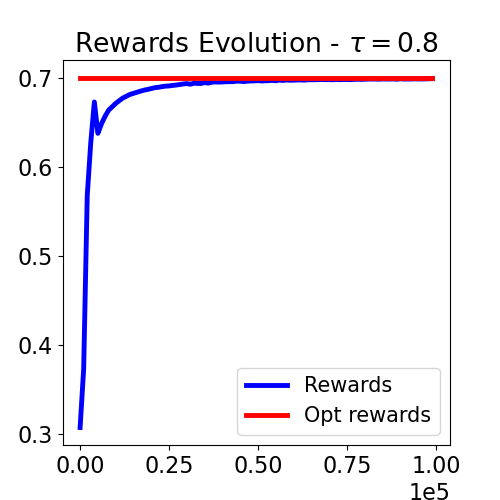}} 
\vspace{0.1in}
       
\centering\subfigure{\includegraphics[width=0.32\linewidth]{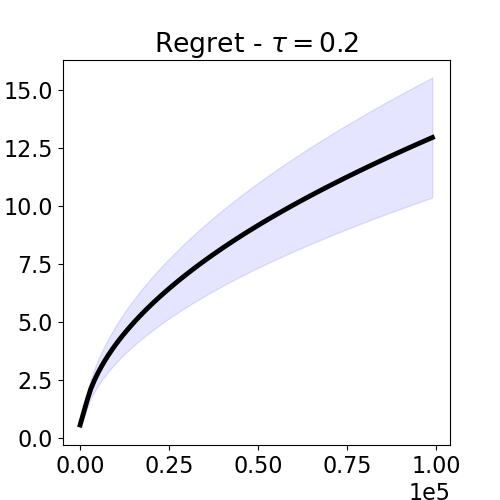}} 
\centering\subfigure{\includegraphics[width=0.32\linewidth]{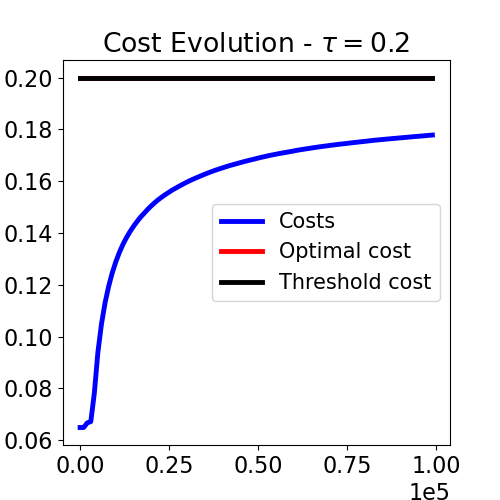}} 
\centering\subfigure{\includegraphics[width=0.32\linewidth]{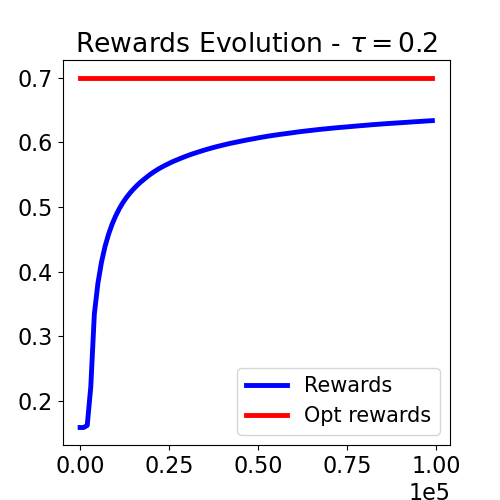}} 
\caption{{Regret {\em (left)}, cost {\em (middle)}, and reward {\em (right)} evolution of OPB in a $4$-armed bandit problem with Bernoulli reward and cost distributions with means $\bar{r} = (0.1, 0.2, 0.4, 0.7)$ and $\bar{c}=(0, 0.4, 0.5, 0.2)$. The cost of the safe arm (Arm~1) is $\bar{c}_1 = 0$. The constraint threshold is set to $\;\tau = 0.8$ {\em (top)} and $\;\tau = 0.2$ {\em (bottom)}.}}
 \label{fig:constrained_bandits4}
\begin{center}
\begin{minipage}{0.9\textwidth}
\end{minipage}
\end{center}
\end{figure*}

We run a set of experiments to show the behavior of the OPB algorithm and validate our theoretical results. In our first experiment, presented in Figure~\ref{fig:constrained_bandits5}, we produce random instances of our constrained multi-armed bandit problem. In all the instances, we set the safe arm to have reward and cost $0$. We generate different problem instances by sampling the Bernoulli mean rewards and costs of the rest of the arms uniformly at random from the interval $[0,1]$. Each sample run in this experiment corresponds to a sample problem instance. In Figure~\ref{fig:constrained_bandits5}, we report the regret of OPB for each of the number of arms $K$ equal to $5$ {\em (left)}, $10$ {\em (middle)}, and $20$ {\em (right)}, and for three constraint threshold $\tau$ values, $0.8$ {\em (red)}, $0.5$ {\em (blue)}, and $0.2$ {\em (black)}. For each parameter setting we sample $10$ random problem instances and report the average regret curves with a shaded region corresponding to the $\pm 0.5$ standard deviation around the regret. Figure~\ref{fig:constrained_bandits5} also shows that the regret of OPB grows inversely with the safety gap.  

In the next experiment, presented in Figure~\ref{fig:constrained_bandits4}, we consider a $K=4$-armed bandit problem in which the reward and cost distributions of the arms are Bernoulli with means $\bar r=(0.1,0.2,0.4,0.7)$ and $\bar c =(0,0.4,0.5,0.2)$. Arm~1 is the safe arm with the expected cost $\bar c_1=0$. We gradually reduce the constraint threshold $\tau$, and as a result, the {\em safety gap} $\tau - \bar{c}_1$, and show the regret {\em (left)}, cost {\em (middle)}, and reward {\em (right)} evolution of OPB. The cost and reward of OPB are in blue and the optimal cost and reward are in red. All results are averaged over $10$ runs and the shade is the $\pm 0.5$ standard deviation around the regret. 

Figure \ref{fig:constrained_bandits4} shows that the regret of OPB grows as we reduce $\tau$, and as a result the safety gap {\em (left)}. This is in support of our theories that identified the safety gap as the complexity of this constrained bandit problem. The results also indicate that the algorithm is successful in satisfying the constraint {\em (middle)} and in reaching the optimal reward/performance {\em (right)}. In the bottom three plots of Figure~\ref{fig:constrained_bandits4}, %
the cost of the best arm (Arm~4) is equal to the constraint threshold $\tau=0.2$. Thus, the cost of the optimal policy {\em (red)} and the constraint threshold {\em (black)} overlap in the cost evolution {\em (middle)} sub-figure. In Figure~\ref{fig:constrained_bandits3} in Appendix~\ref{section::expectation_constraints}, we report more experiments with the same $4$-armed bandit problem instance with constraint threshold values $\tau = 0.5$ and $0.6$. Using these intermediate threshold values we provide further support to our results showing the the safety gap governs the complexity in this constrained bandit problem.

\section{Conclusions}

In this work, we expand the frontier of the study of constrained bandit problems with anytime cost constraints. We extend the results of~\cite{pacchiano2020stochastic} in a variety of ways. First, we introduce the high probability constraint satisfaction regime for linear bandit problems with stage-wise constraints along with the LC-LUCB algorithm (Section~\ref{section::high_probability_constraints}). This formulation captures problems where an in-expectation constraint is not sufficient to ensure safety. We show that in contrast with OPLB, when the action set is finite and star-convex, the LC-LUCB algorithm is computationally tractable (Section~\ref{subsubsection:LC-LUCB-Comp-Complexity}). This stands in marked contrasts with the case of OPLB, that only has a tractable form in the multi-armed bandit setting (see the OPB algorithm in Section~\ref{sec:constrained-MAB}). Second, we improve the regret-bound of OPLB reported in~\cite{pacchiano2020stochastic} to better identify the quantity representing the hardness of the constrained problem $\frac{1-r_0}{\tau-c_0}$. 

Finally, we go beyond the scenario of linear rewards and cost functions and explore the nonlinear regime where the reward and cost functions come from arbitrary function classes of bounded eluder dimension (Section~\ref{section::nonlinear_rewards_costs}). When the reward and cost function classes are arbitrary and the requirement is to satisfy an anytime expected cost constraint, we introduce the OPNLB algorithm and prove it satisfies a regret-bound equivalent to the regret-bound for OPLB where the eluder dimension of the reward and cost function classes plays the role of the linear dimension in the linear case. Since the eluder dimension of linear classes equals the dimension of the ambient space, these results subsume the regret-bounds for OPLB in~\cite{pacchiano2020stochastic}.  

The design of all of our algorithms (LC-LUCB, OPLB, OPB and OPNLB) relies on the principle of optimism-pessimism and the technique of asymmetric confidence intervals that enables the provable analysis of optimistic-pessimistic algorithms. We hope the results of this work can serve as inspiration to extend the study of stage-wise constrained problems to richer scenarios such as reinforcement learning and beyond.

\acks{A.P.\ would like to thank the support of the Eric and Wendy Schmidt Center at the Broad Institute of MIT and Harvard. This work was supported in part by funding from the Eric and Wendy Schmidt Center at the Broad Institute of MIT and Harvard.}

\newpage

\appendix

\tableofcontents
\addtocontents{toc}{\protect\setcounter{tocdepth}{2}}
\clearpage

\section{Proofs of Section~\ref{section::high_probability_constraints}} 
\label{appendix:section-high-prob}

\begin{proof}
{\bf of Proposition~\ref{PROP:OPTIMISTIC-REWARD-PESSIMISTIC-COST}: } We only prove the statement~\eqref{equation::v_t_r} for the optimistic reward $\widetilde{V}_t^r(x)$. The proof of statement~\eqref{equation::v_t_c} for the pessimistic cost $\widetilde{V}_t^c(x)$ is analogous. From the definition of the confidence set $\mathcal{C}_t^r(\alpha_r)$, any vector $\theta \in \mathcal{C}_t^r(\alpha_r)$ can be written as $\widehat{\theta}_t + v$, where $v$ satisfying $\| v\|_{\Sigma_t} \leq \alpha_r \beta_t(\delta, d)$. Thus, we may write 
\begin{align}
\label{equation:supporting_optimistic_reward_pessimistic_cost}
\widetilde{V}_t^r(x) &=  \max_{\beta \in \mathcal{C}_t^r(\alpha_r)} \langle x, \theta \rangle = \langle x, \widehat{\theta}_t \rangle + \max_{v:\|v\|_{\Sigma_t} \leq \alpha_r \beta_t(\delta, d)} \langle x, v \rangle \notag \\ 
&\stackrel{\text{(a)}}{\leq} \langle x,\widehat{\theta}_t \rangle + \alpha_r \beta_t(\delta, d) \|  x \|_{\Sigma_t^{-1}}.
\end{align}
{\bf (a)} By Cauchy-Schwartz, for all $v$, we have $\langle x, v \rangle \leq \| x\|_{\Sigma_t^{-1}} \| v\|_{\Sigma_t}$. The result follows from the condition on $v$ in the maximum, i.e.,~$\| v \|_{\Sigma_t} \leq \alpha_r \beta_t(\delta,d)$.

Let us define $v^* := \frac{ \alpha_r \beta_t(\delta, d) \Sigma^{-1}_t x}{\| x \|_{\Sigma^{-1}_t}}$. This value of $v^*$ is feasible because 
\begin{equation*}
\|v^*\|_{\Sigma_t} = \frac{\alpha_r \beta_t(\delta,d)}{\| x \|_{\Sigma^{-1}_t}} \sqrt{x^\top \Sigma_t^{-1} \Sigma_t \Sigma_t^{-1} x} = \frac{\alpha_r\beta_t(\delta,d)}{\|x\|_{\Sigma^{-1}_t}} \sqrt{x^\top\Sigma_t^{-1}x} = \alpha_r\beta_t(\delta, d).
\end{equation*}
We now show that $v^*$ also achieves the upper-bound in the above inequality resulted from Cauchy-Schwartz 
\begin{equation*}
\langle x, v^* \rangle =  \frac{\alpha_r \beta_t(\delta, d)  x^\top \Sigma_t^{-1} x}{\| x \|_{\Sigma_t^{-1}}}   = \alpha_r \beta_t(\delta, d) \|  x \|_{\Sigma_t^{-1}}.
\end{equation*}
Thus, $v^*$ is the maximizer and we can write 
\begin{align*}
\widetilde{V}_t^r(x) = \langle x,\widehat{\theta}_t \rangle + \langle x,v^* \rangle = \langle x,\widehat{\theta}_t \rangle + \alpha_r \beta_t(\delta, d) \|  x \|_{\Sigma_t^{-1}},
\end{align*}
which concludes the proof. 
\end{proof}

\begin{proof}
{\bf of Lemma~\ref{LEMMA::LINEAR_BANDITS_OPTIMISM}: } In order to prove the desired result, it is enough to show that
\begin{equation*}
(x^{o, \perp})^\top (\Sigma_t^{o,\perp})^{\dagger} x^{o,\perp} \leq x^\top \Sigma_t^{-1}x.
\end{equation*}
Without loss of generality, we can assume $x_o = e_1$, where $e_1$ is the first basis vector. Note that in this case $\Sigma_t^{o, \perp}$ can be thought of as a sub-matrix of $\Sigma_t$ such that $\Sigma_t[2:, 2:] = \Sigma_t^{o, \perp}$, where $\Sigma_t[2:, 2:]$ denotes the sub-matrix with row and column indices from $2$ onward. Using the following formula for the inverse of a positive semi-definite (PSD) symmetric matrix
\begin{equation*}
\begin{bmatrix}
Z & \delta \\
\delta^\top & A 
\end{bmatrix} = \begin{bmatrix}
\frac{1}{D} &  -\frac{ A^{-1}\delta}{D} \\
- \frac{\delta^\top A^{-1} }{D}&A^{-1} + \frac{ A^{1} \delta \delta^\top A^{-1}}{D} 
\end{bmatrix},
\end{equation*}
where $D = z- \delta^\top A^{-1} \delta$, we may write $\Sigma_t^{-1}$ as 
\begin{equation*}
\Sigma_t^{-1} = \begin{bmatrix}
1/D & -\frac{(\Sigma_t^{o, \perp})^{\dagger} \Sigma_t[2, :d] }{D} \\
-\frac{\Sigma_t^\top [2:d](\Sigma_t^{o,\perp})^{\dagger} }{D} & (\Sigma_t^{o, \perp})^{\dagger} + \frac{(\Sigma_t^{o,\perp})^{\dagger} \Sigma_t[2:d]\Sigma_t[2:d](\Sigma_t^{o,\perp})^{\dagger} }{D}
\end{bmatrix},
\end{equation*}
where $D = \Sigma_t[1,1] - \Sigma_t[2:d]^\top(\Sigma_t^{o,\perp})^{\dagger} \Sigma_t[2:d] \in \mathbb{R}$. %
This allows us to write
\begin{align*}
x^\top(\Sigma_t^{-1}) x &= \frac{x(1)^2 - 2x(1)\Sigma_t[2:d]^\top(   \Sigma_t^{o,\perp})^{\dagger} x[2:d]}{D} \\ 
&+ \frac{x[2:d]^\top(\Sigma_t^{o,\perp})^{\dagger} \Sigma_t[2:d]\Sigma_t[2:d]^\top(\Sigma_t^{o,\perp})^{\dagger} x[2:d]}{D} + x[2:d]^\top(\Sigma_t^{o,\perp})^{\dagger} x[2:d] \\
&\geq x[2:d]^\top (\Sigma_t^{o,\perp})^{\dagger} x[2:d].
\end{align*}
The result follows by noting that $x[2:d] = x^{o, \perp}$.
\end{proof}

\section{Additional Experiments for Section~\ref{section::high_probability_constraints}} 
\label{appendix:section-high-prob-exp}

In this section, we present a comprehensive set of results extending the experiments presented in Figure~\ref{fig::figure_linear_constrained_1}. We consider a linear bandit problem in which the safe action equals the zero vector $x_0 = 0$ and the arm sets $\mathcal{A}_t$ are $d$ dimensional star convex sets generated by the $d$ cyclic shifted versions of the vector $v/\| v\|$ where $v = (0, 1, \cdots, d-1)$. Just like in Figure~\ref{fig::figure_linear_constrained_1}, the action set $\mathcal{A}_t$ is the star convex set defined by this set of actions and the lines emanating from the zero vector.  We let We let $\theta_* = v/\|v\|$ and $\mu_\star = (d-1, d-2, \cdots, 0)/\|v\|$, where $(d-1, d-2, \cdots, 0)$ is the flipped version of $(0, 1, \cdots, d-1)$.

In Figures~\ref{fig:constrained_bandits_TS3},~\ref{fig:constrained_bandits_TS4}, and~\ref{fig:constrained_bandits_TS5}, we plot the regret and cost evolution of LC-LUCB for dimensions $d=3, 5, 10$, and threshold values $\tau=0.2, 0.5, 0.8$, and compare them with those for the Safe-LTS algorithm of~\citet{Moradipari19SL}. The results for dimensions $d=3,5$ and $10$ are presented in Figures~\ref{fig:constrained_bandits_TS3},~\ref{fig:constrained_bandits_TS4}, and~\ref{fig:constrained_bandits_TS5} respectively. We show that as the threshold $\tau$ is driven to $0$, the problem gets progressively harder. The results show that LC-LUCB has a better regret profile than Safe-LTS, while satisfying the constraint, for all threshold values and dimensions.

\begin{figure}[ht]%
\begin{center}
\begin{minipage}{1\textwidth}
\centering
Dimension $d=3$.
\end{minipage}
\end{center}
\centering
\vspace{-2mm}
\begin{minipage}{0.875\textwidth}%
\centering\subfigure{\includegraphics[width=0.325\linewidth]{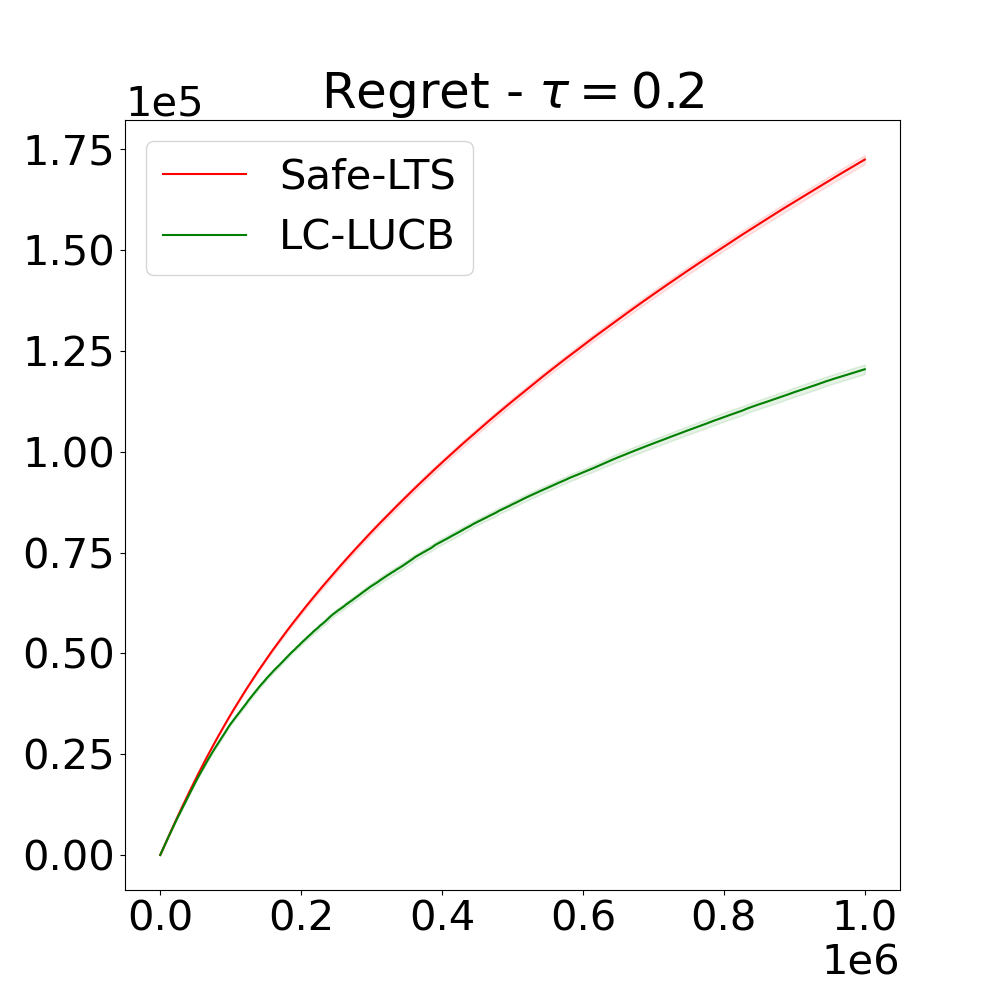}} 
\centering\subfigure{\includegraphics[width=0.325\linewidth]{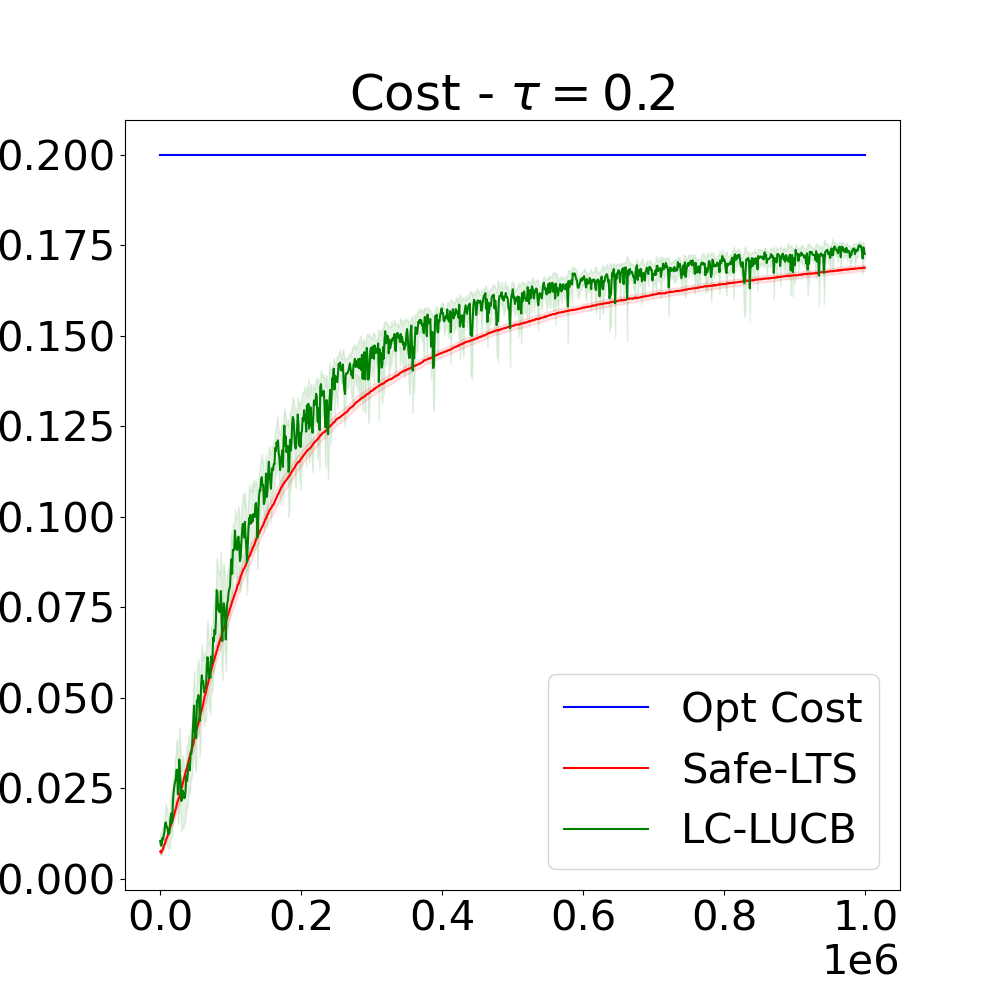}} 
\centering\subfigure{\includegraphics[width=0.325\linewidth]{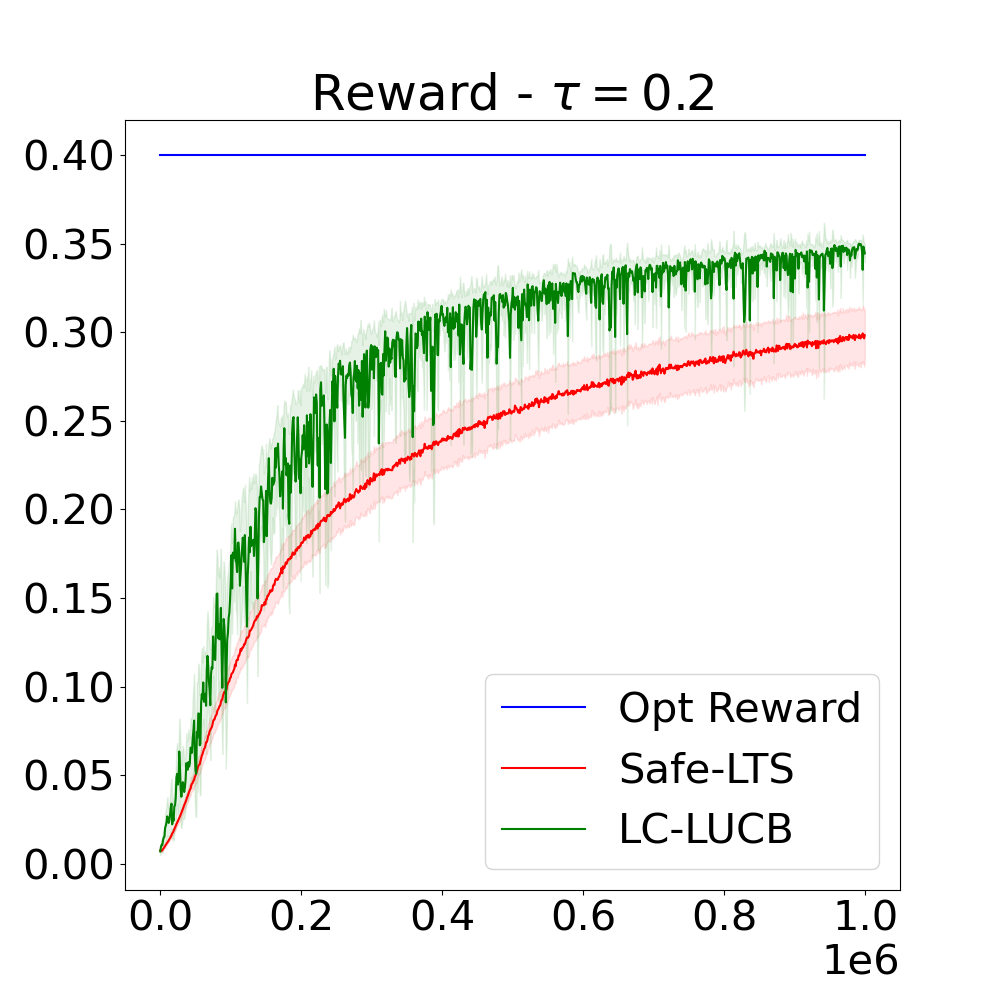}} 
\end{minipage}
\vspace{-2mm}
\vspace{2mm}
\begin{minipage}{0.875\textwidth}%
\centering\subfigure{\includegraphics[width=0.325\linewidth]{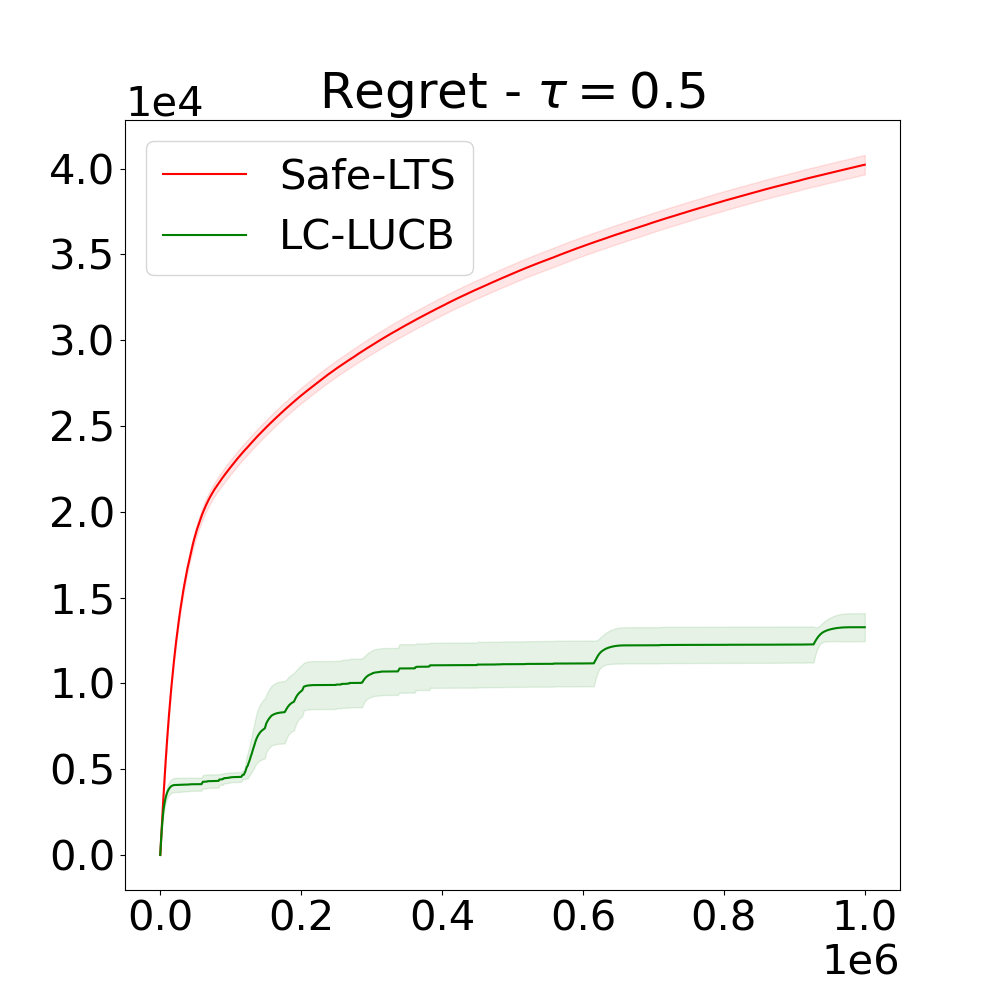}} 
\centering\subfigure{\includegraphics[width=0.325\linewidth]{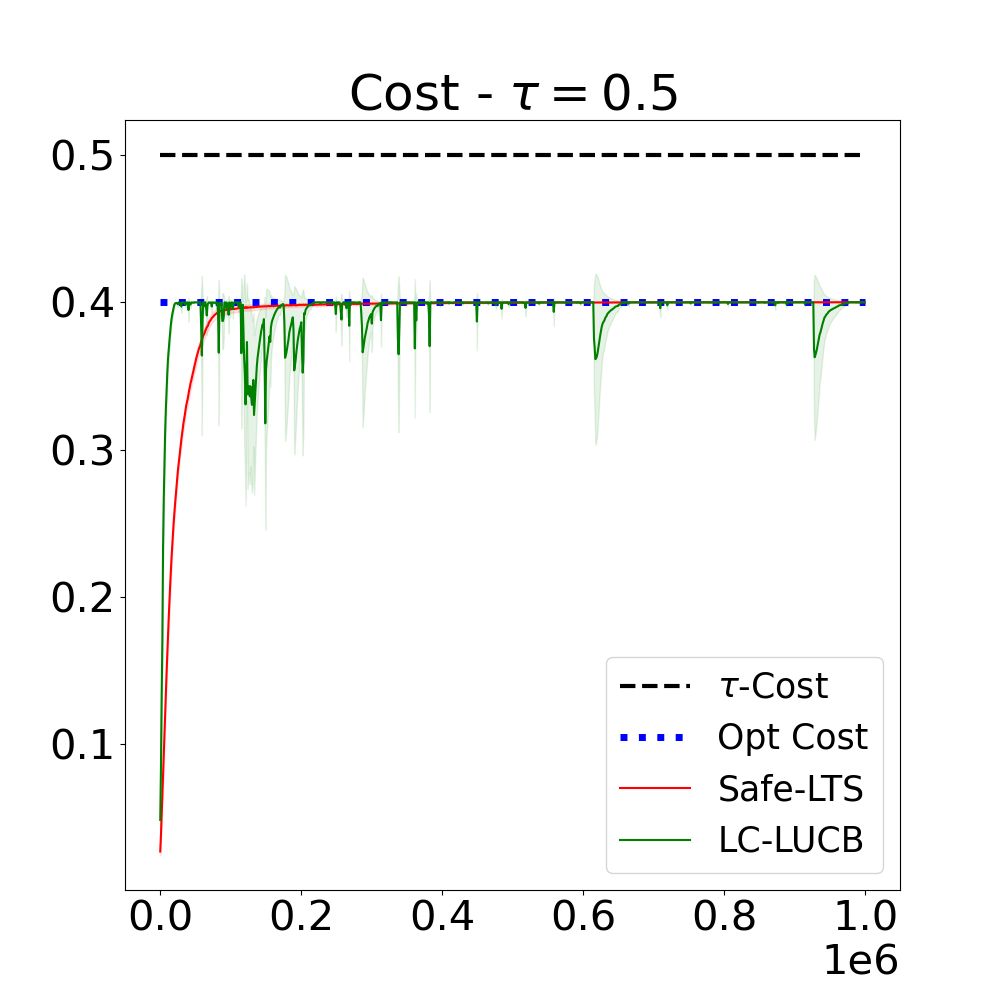}} 
\centering\subfigure{\includegraphics[width=0.325\linewidth]{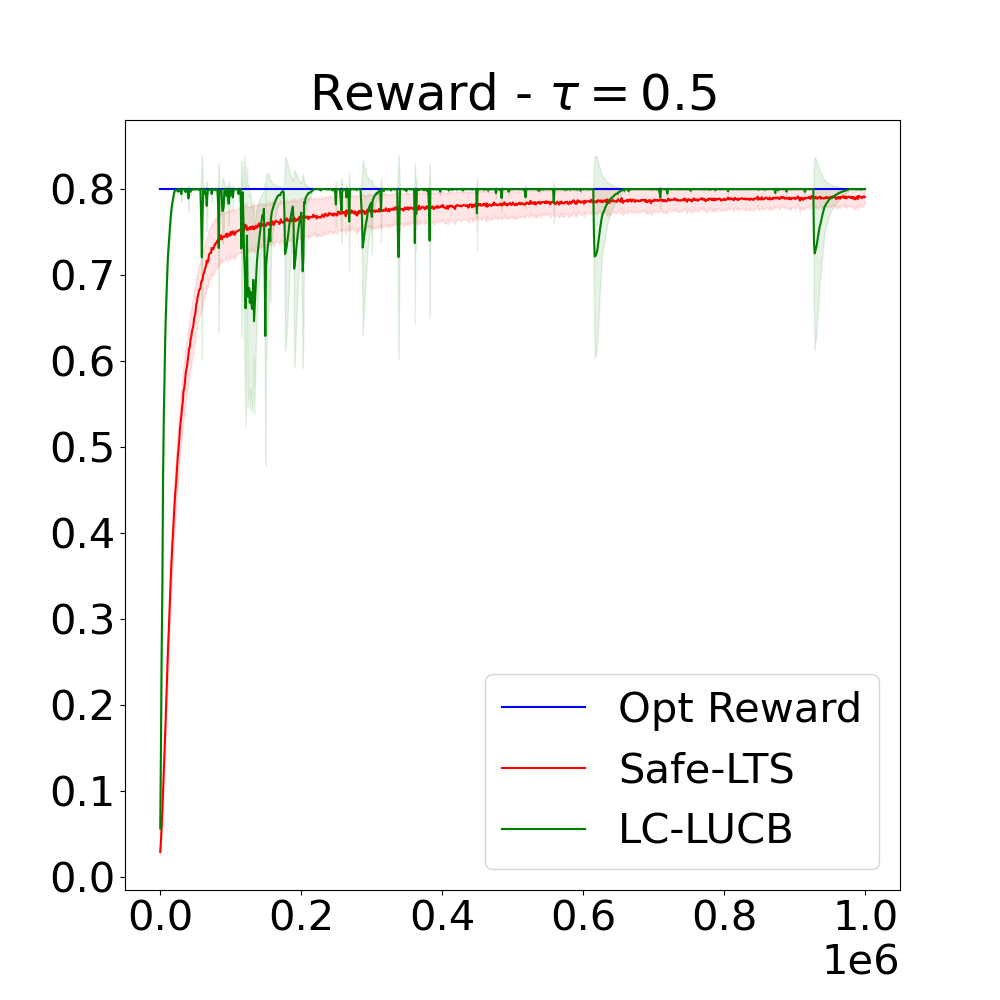}} 
\end{minipage}
\vspace{-2mm}
\vspace{2mm}
\begin{minipage}{0.875\textwidth}%
\centering\subfigure{\includegraphics[width=0.325\linewidth]{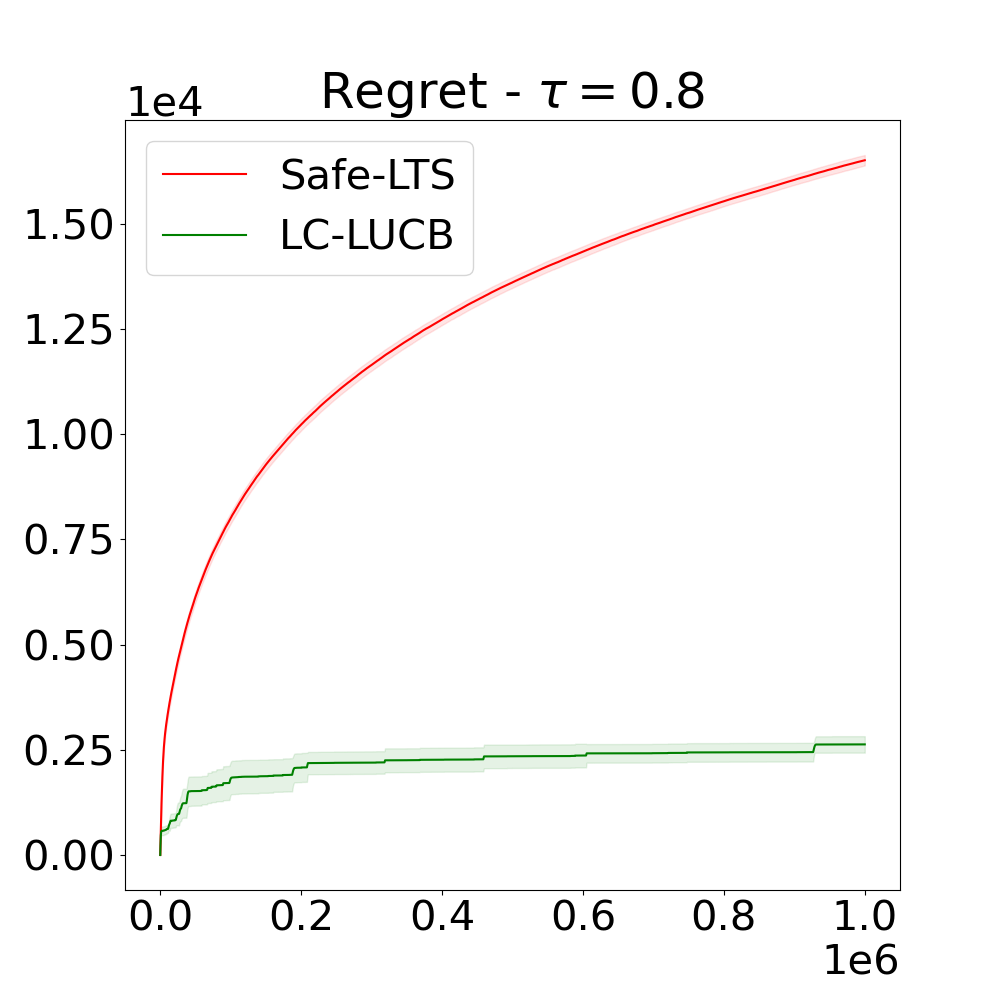}} 
\centering\subfigure{\includegraphics[width=0.325\linewidth]{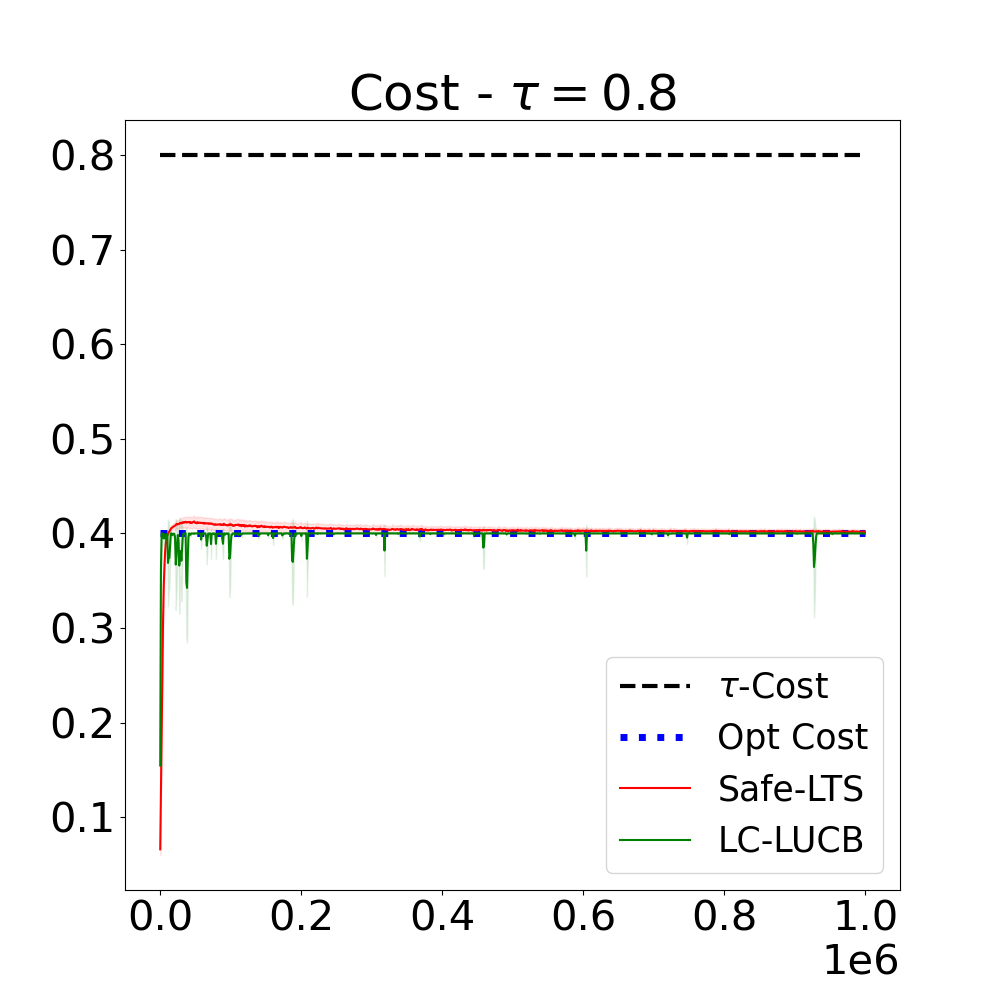}} 
\centering\subfigure{\includegraphics[width=0.325\linewidth]{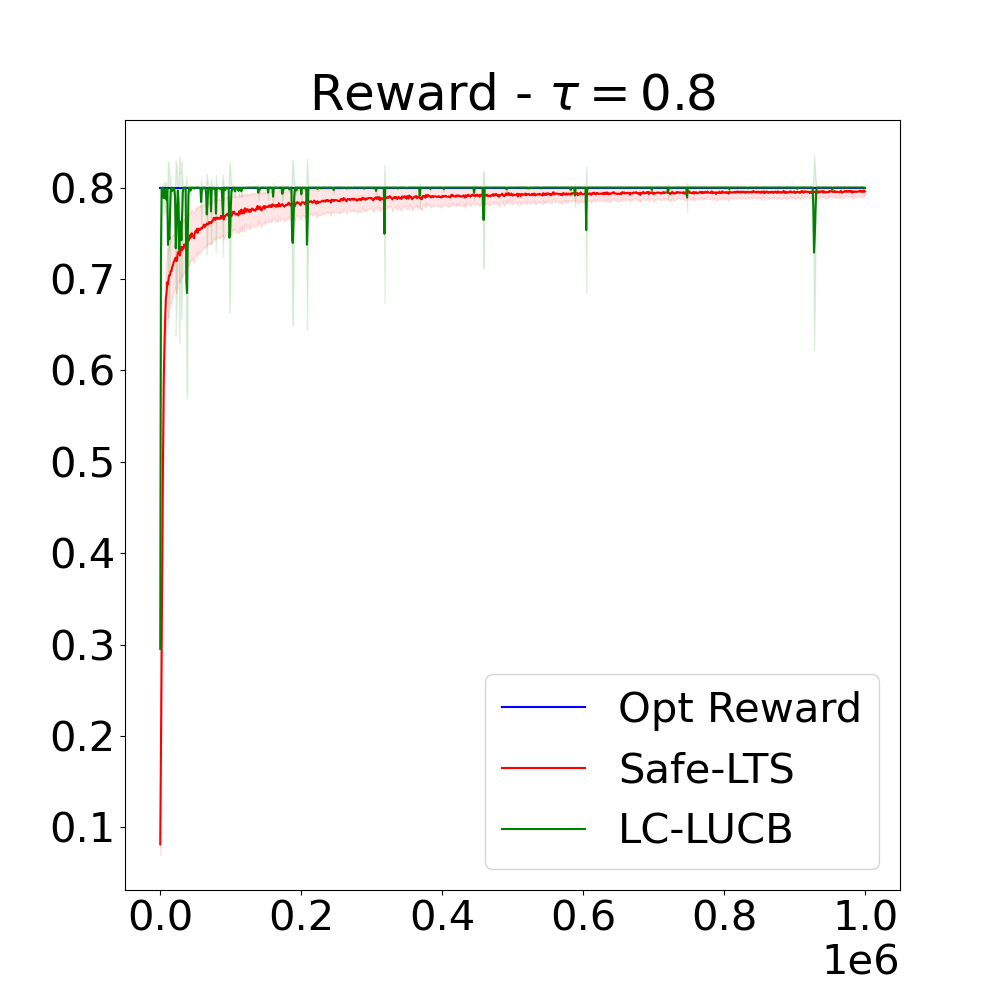}} 
\end{minipage}
\vspace{-2mm}
\caption{\textbf{LC-LUCB:} {Dimension $d = 3$. \textbf{Top:} Constraint Threshold $\tau=0.2$. \textbf{Center:} Constraint Threshold $\tau = 0.5$. \textbf{Bottom:} Constraint Threshold $\tau = 0.8$.  The shaded regions around the curves correspond to one standard deviation.}}
\vspace{2mm}
\label{fig:constrained_bandits_TS3}
\end{figure}

\begin{figure}[ht]%
\begin{center}
\begin{minipage}{1\textwidth}
\centering
Dimension $d=5$.
\end{minipage}
\end{center}
\centering
\vspace{-2mm}
\begin{minipage}{0.875\textwidth}%
\centering\subfigure{\includegraphics[width=0.325\linewidth]{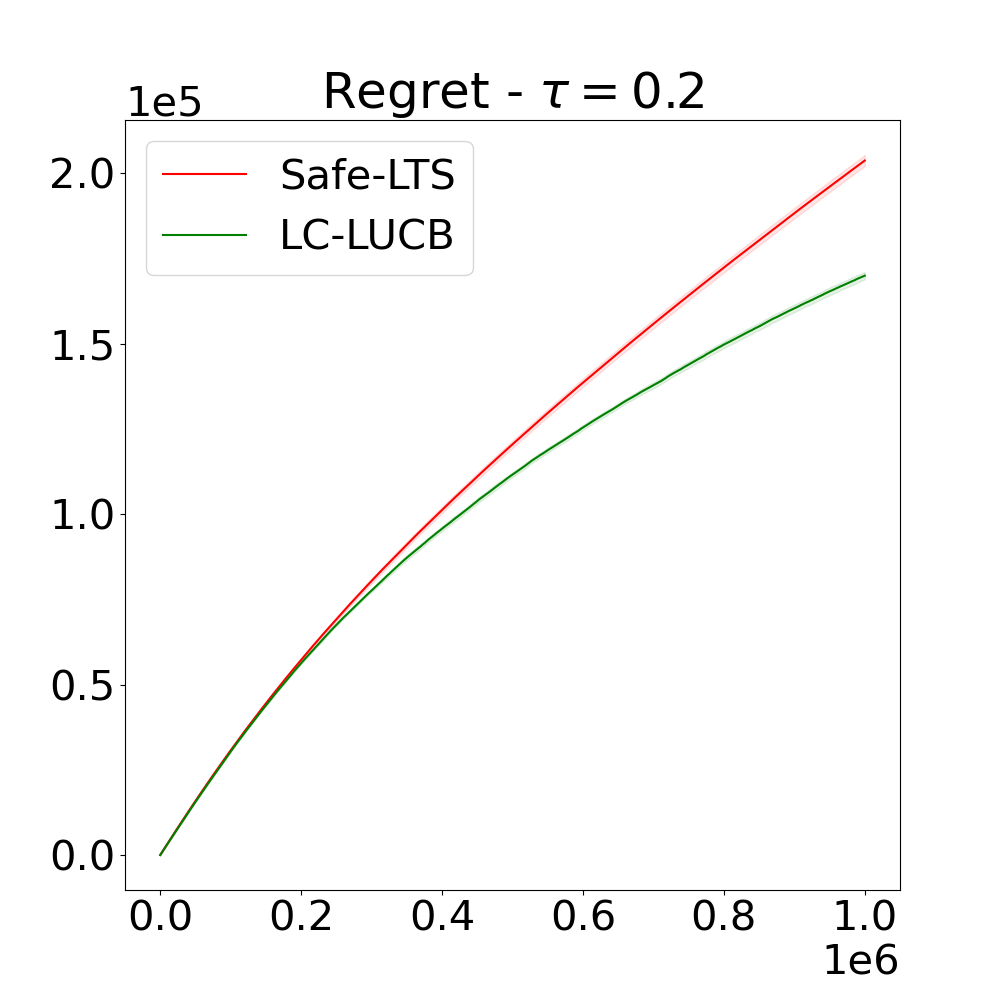}} 
\centering\subfigure{\includegraphics[width=0.325\linewidth]{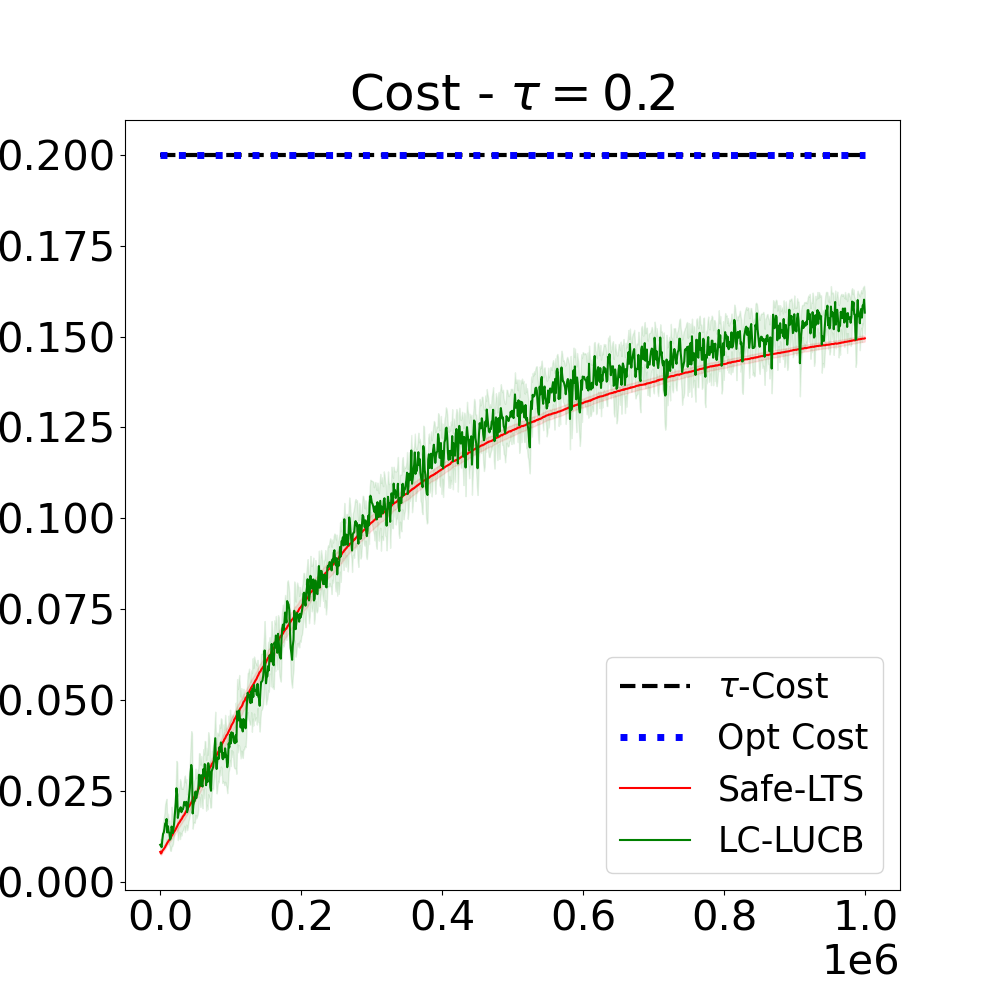}} 
\centering\subfigure{\includegraphics[width=0.325\linewidth]{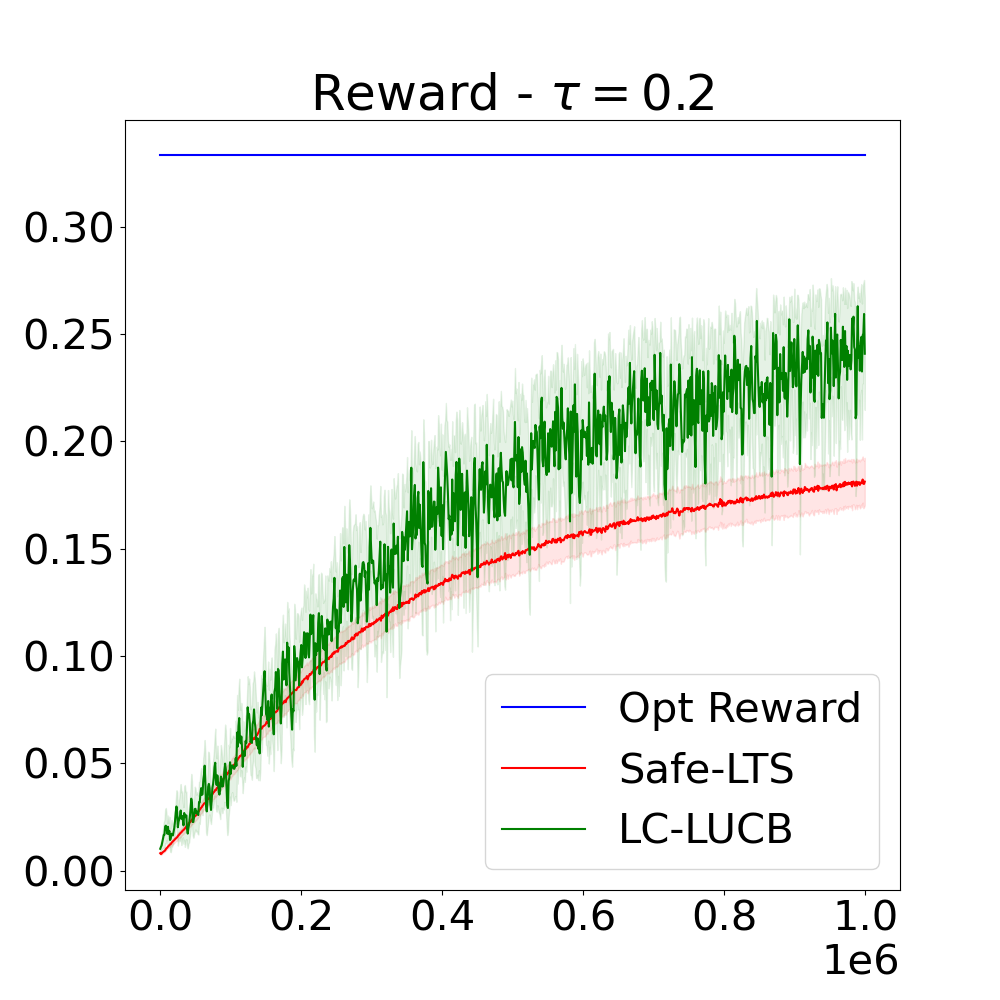}} 
\end{minipage}
\begin{minipage}{0.875\textwidth}%
\centering\subfigure{\includegraphics[width=0.325\linewidth]{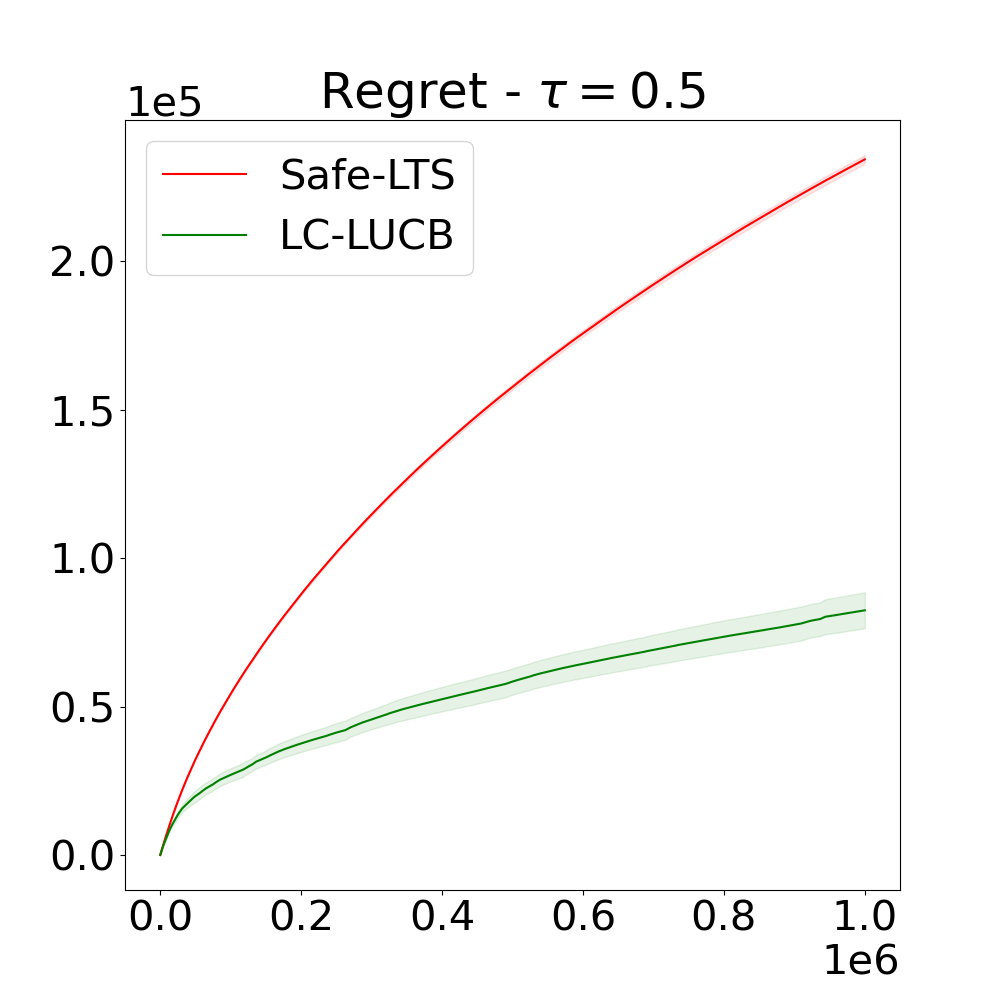}} 
\centering\subfigure{\includegraphics[width=0.325\linewidth]{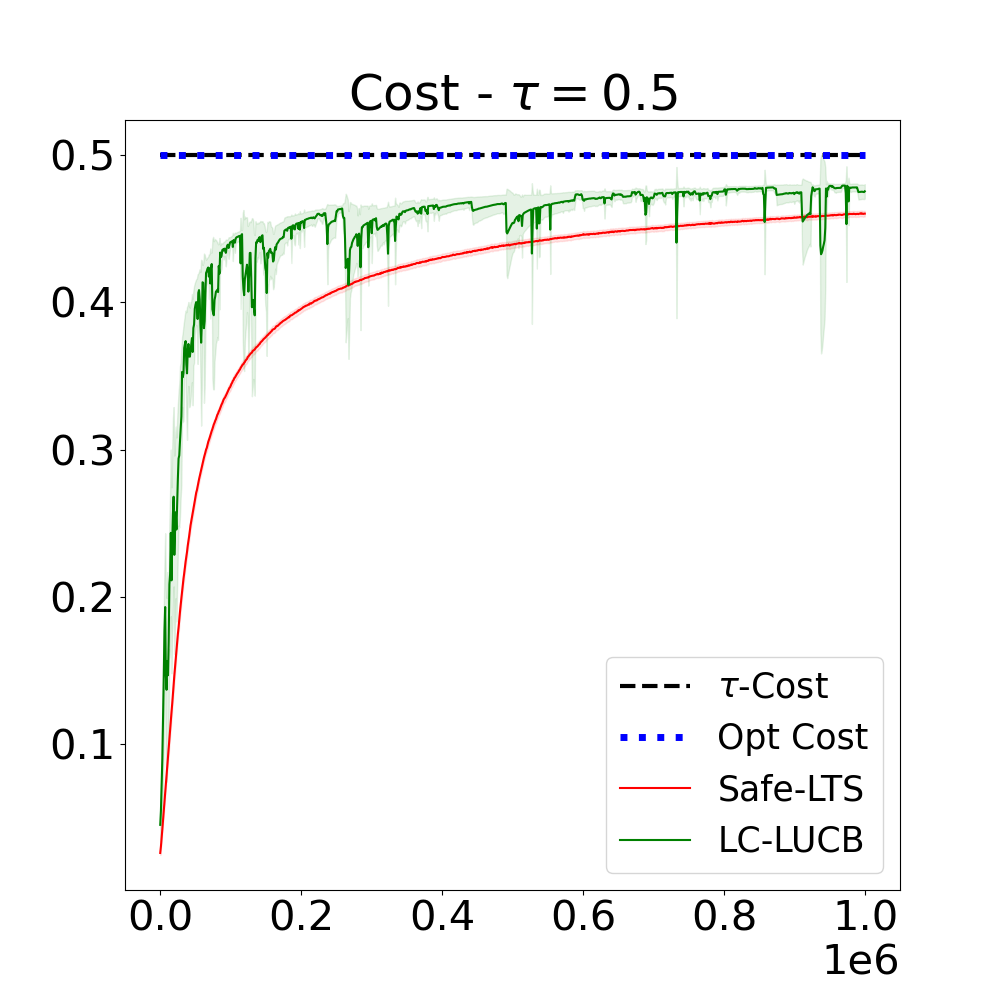}} 
\centering\subfigure{\includegraphics[width=0.325\linewidth]{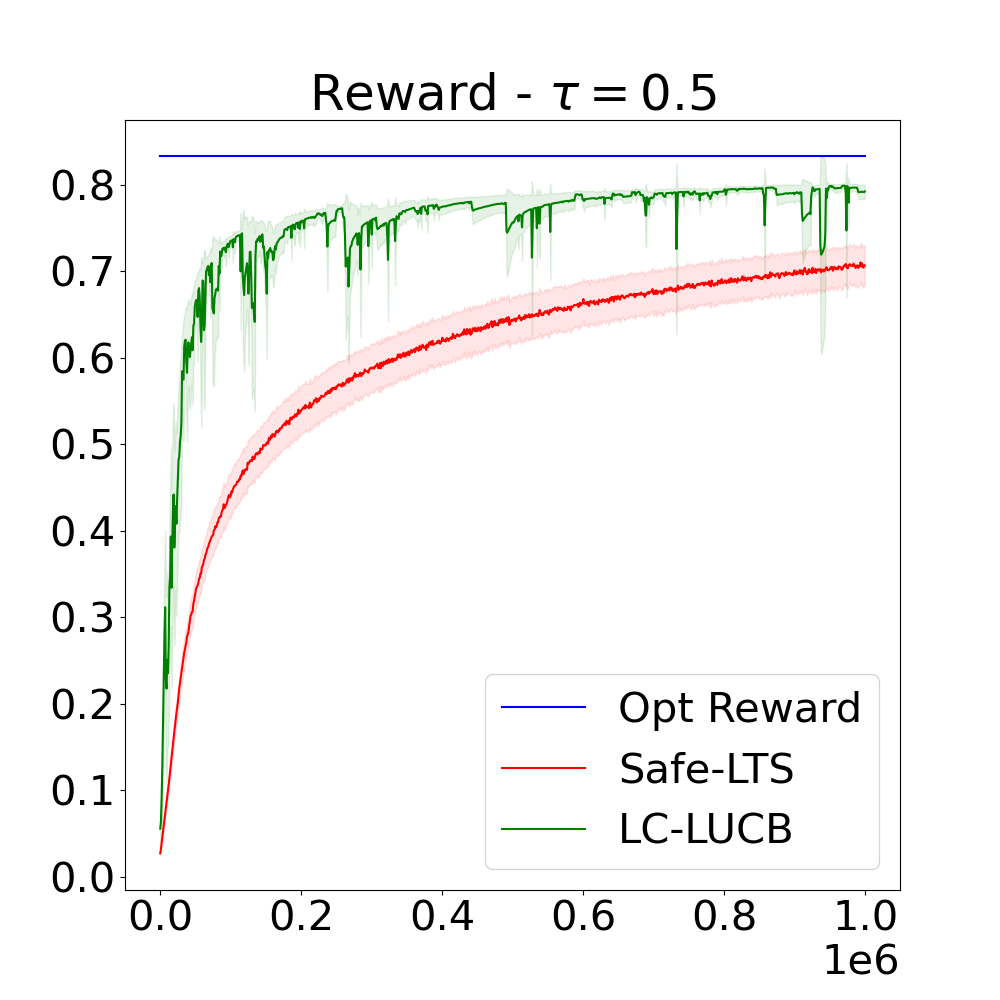}} 
\end{minipage}
\vspace{-2mm}
\begin{minipage}{0.875\textwidth}%
\centering\subfigure{\includegraphics[width=0.325\linewidth]{figs/Linear_Regret_0.8_1000000_3_uniform.png}} 
\centering\subfigure{\includegraphics[width=0.325\linewidth]{figs/Linear_Cost_0.8_1000000_3_uniform.png}} 
\centering\subfigure{\includegraphics[width=0.325\linewidth]{figs/Linear_Reward_0.8_1000000_3_uniform.png}} 
\end{minipage}
\vspace{-2mm}
\caption{\textbf{LC-LUCB:} {Dimension $d = 5$. \textbf{Top:} Constraint Threshold $\tau=0.2$. \textbf{Center:} Constraint Threshold $\tau = 0.5$. \textbf{Bottom:} Constraint Threshold $\tau = 0.8$. The shaded regions around the curves correspond to one standard deviation.}}
\vspace{2mm}
\label{fig:constrained_bandits_TS4}
\end{figure}

\begin{figure}[ht]%
\begin{center}
\begin{minipage}{1\textwidth}
\centering
Dimension $d=10$.
\end{minipage}
\end{center}
\centering
\vspace{-2mm}
\begin{minipage}{0.875\textwidth}%
\centering\subfigure{\includegraphics[width=0.325\linewidth]{figs/Linear_Regret_0.2_1000000_10_uniform.png}} 
\centering\subfigure{\includegraphics[width=0.325\linewidth]{figs/Linear_Cost_0.2_1000000_10_uniform.png}} 
\centering\subfigure{\includegraphics[width=0.325\linewidth]{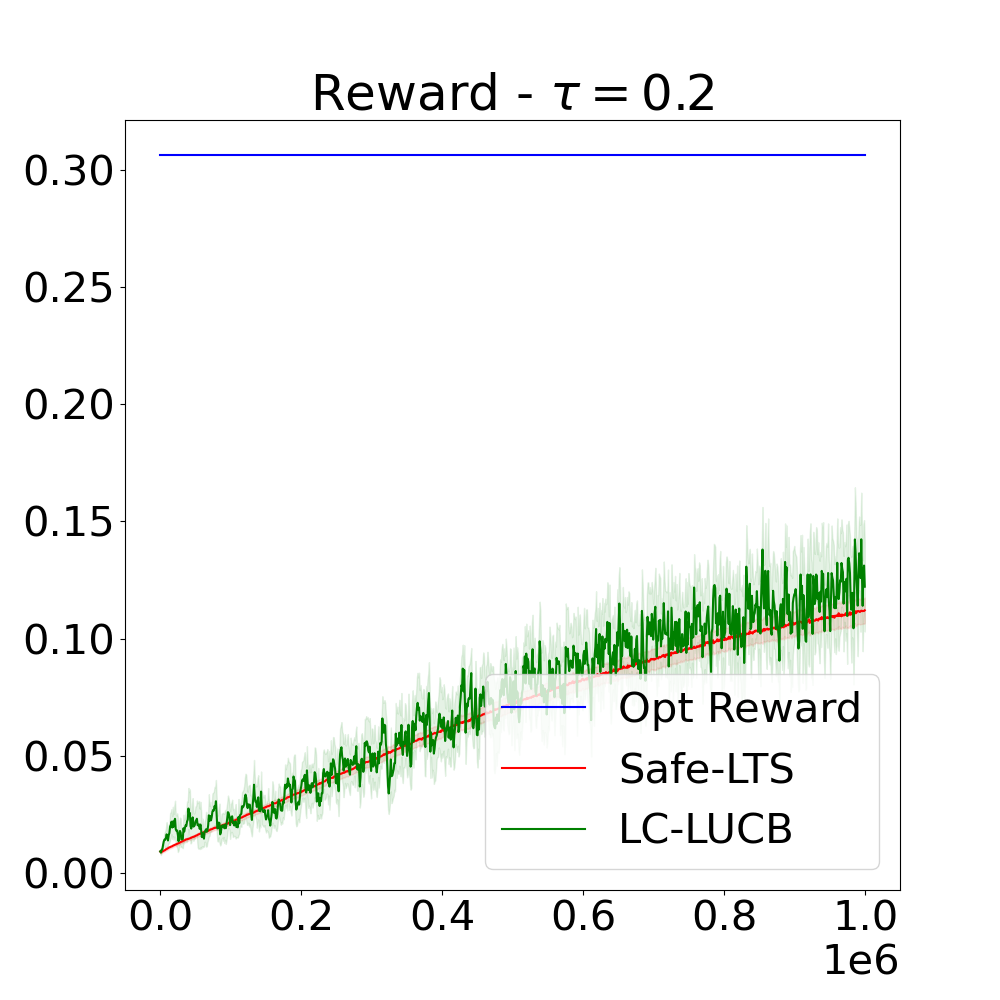}} 
\end{minipage}
\begin{minipage}{0.875\textwidth}%
\centering\subfigure{\includegraphics[width=0.325\linewidth]{figs/Linear_Regret_0.5_1000000_10_uniform.png}} 
\centering\subfigure{\includegraphics[width=0.325\linewidth]{figs/Linear_Cost_0.5_1000000_10_uniform.png}} 
\centering\subfigure{\includegraphics[width=0.325\linewidth]{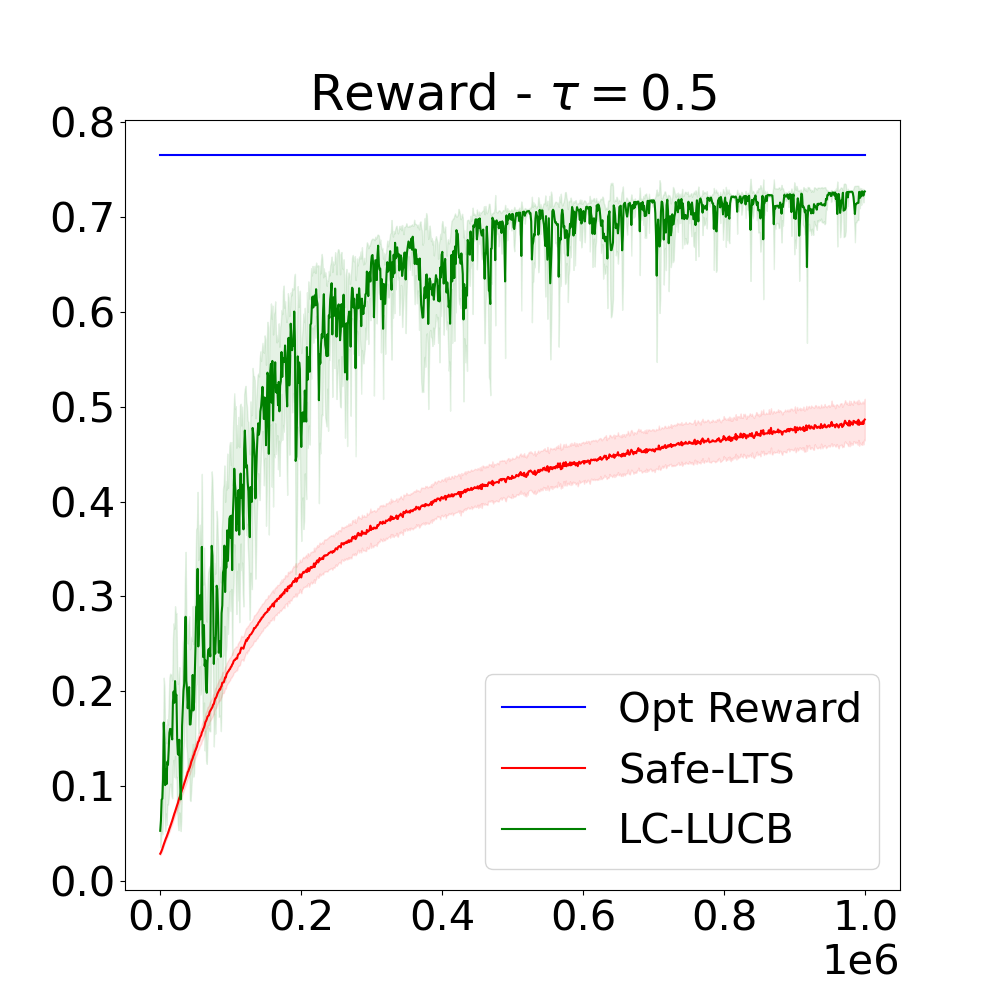}} 
\end{minipage}
\vspace{-2mm}
\begin{minipage}{0.875\textwidth}%
\centering\subfigure{\includegraphics[width=0.325\linewidth]{figs/Linear_Regret_0.8_1000000_10_uniform.png}} 
\centering\subfigure{\includegraphics[width=0.325\linewidth]{figs/Linear_Cost_0.8_1000000_10_uniform.png}} 
\centering\subfigure{\includegraphics[width=0.325\linewidth]{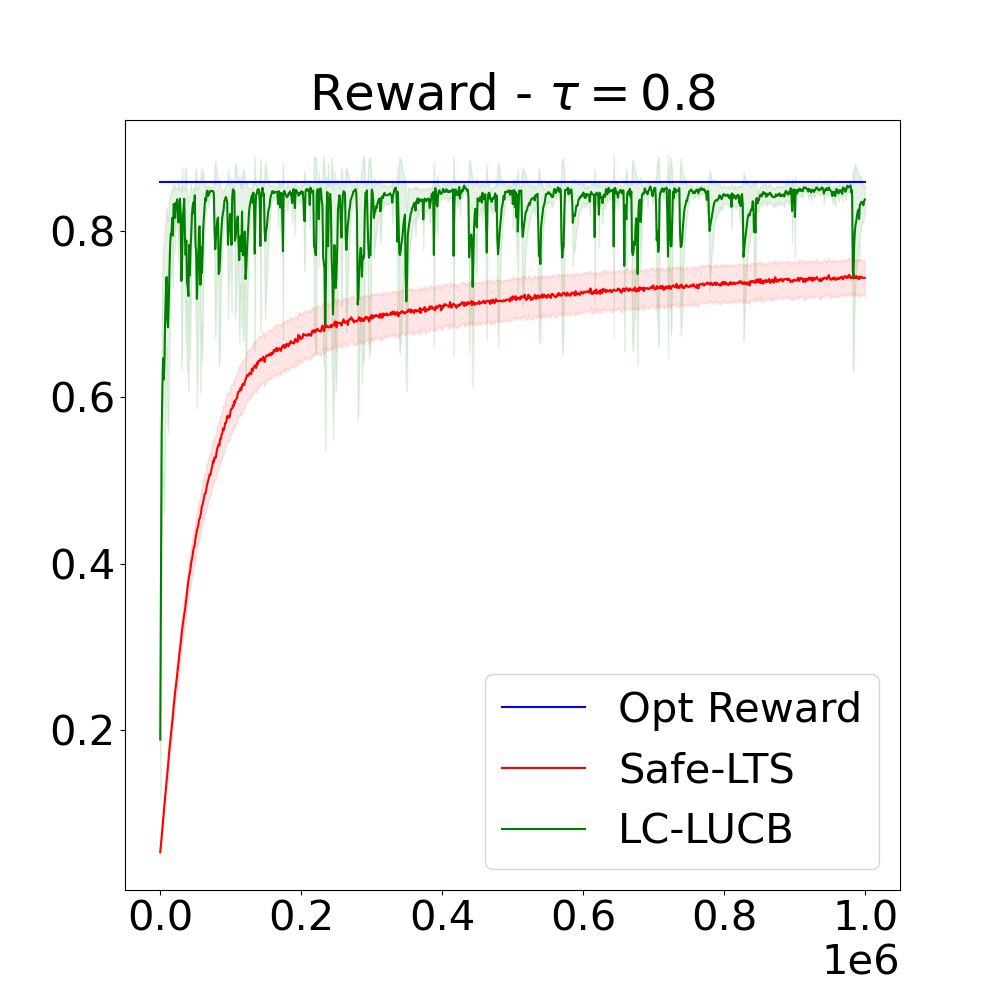}} 
\end{minipage}
\vspace{-2mm}
\caption{\textbf{LC-LUCB:} {Dimension $d = 10$. \textbf{Top:} Constraint Threshold $\tau=0.2$. \textbf{Center:} Constraint Threshold $\tau = 0.5$. \textbf{Bottom:} Constraint Threshold $\tau = 0.8$.  The shaded regions around the curves correspond to one standard deviation.}}
\vspace{2mm}
\label{fig:constrained_bandits_TS5}
\end{figure}

\section{Proofs of Section~\ref{section::expectation_constraints}}
\label{sec:proofs-algo-section}

\begin{proof}
{\bf of Proposition~\ref{prop:optimistic-reward-pessimistic-cost}: } The proof follows the exact same structure as Proposition~\ref{PROP:OPTIMISTIC-REWARD-PESSIMISTIC-COST}. Instead of using Equation~\ref{equation:supporting_optimistic_reward_pessimistic_cost} we utilize the following identity,

\begin{align*}
\widetilde{V}_t^r(\pi) &=  \max_{\theta \in \mathcal{C}_t^r(\alpha_r)} \mathbb{E}_{X \sim \pi}[ \langle X, \theta \rangle ] = \max_{\theta \in \mathcal{C}_t^r(\alpha_r)}  \langle x_\pi, \theta \rangle = \langle x_\pi, \widehat{\theta}_t \rangle + \max_{v:\|v\|_{\Sigma_t} \leq \alpha_r \beta_t(\delta, d)} \langle x_\pi, v \rangle \\
&\stackrel{\text{(a)}}{\leq} \langle x_\pi,\widehat{\theta}_t \rangle + \alpha_r \beta_t(\delta, d) \|  x_\pi \|_{\Sigma_t^{-1}}.
\end{align*}

The rest of the argument remains the same, substituting $x$ by $x_\pi$.

\end{proof}

\begin{proof}
{\bf of Proposition~\ref{prop:safe-set}: } Recall that 
\begin{equation*}
\tilde{c}_{\pi, t} = \frac{ \langle x_{\pi}^o, e_0\rangle c_0}{ \| x_0\| }  + \langle x_{\pi}^{o, \perp}, \widehat{t}_{\pi}^{o, \perp}\rangle  + \alpha_c \beta_t(\delta, d-1)\| x_{\pi}^{o, \perp} \|_{ (\Sigma^{o, \perp}_t)^{-1}}\leq \tau. 
\end{equation*}
Conditioned on the event $\mathcal{E}$ defined in Eq.~\ref{eq:high-prob-event}, it follows that
\begin{align*}
    |\langle x_{\pi}^{o, \perp} ,   \widehat{\mu}_t^{o, \perp} -\mu_*^{o, \perp}\rangle| &\leq \| \mu_*^{o, \perp} - \widehat{\mu}_t^{o, \perp} \|_{\Sigma_t^{o, \perp}}\| x_{\pi}\|_{(\Sigma^{o, \perp}_t)^{-1}}   \\
    &\leq \langle x_{\pi}^{o, \perp} ,   \widehat{\mu}_t^{o, \perp} -\mu_*^{o, \perp}\rangle \beta_t
    (\delta, d-1) \| x_{\pi}\|_{(\Sigma^{o, \perp}_t)^{-1}}.
\end{align*}
Thus, we have
\begin{equation}
    0 \leq \langle x_{\pi}^{o, \perp} ,   \widehat{\mu}_t^{o, \perp} -\mu_*^{o, \perp}\rangle + \beta_t
    (\delta, d-1) \| x_{\pi}\|_{(\Sigma^{o, \perp}_t)^{-1}}. 
    \label{equation::prop_3_eq1}
\end{equation}
Note that 
\begin{align}
\label{equation::prop_3_eq2}
c_{\pi} &= \frac{ \langle x_{\pi}^o, e_0\rangle c_0}{ \| x_0\| }  + \langle x_{\pi}^{o, \perp}, \mu_*^{o, \perp}\rangle \\ 
&\leq \underbrace{\frac{ \langle x_{\pi}^o, e_0\rangle c_0}{ \| x_0\| }  + \langle x_{\pi}^{o, \perp}, \widehat{\mu}_{t}^{o, \perp}\rangle  + \alpha_c \beta_t(\delta, d-1)\| x_{\pi}^{o, \perp} \|_{ (\Sigma^{o, \perp}_t)^{-1}}}_{\mathrm{(V)}}. \notag
\end{align}
The above inequality holds by adding the inequality in Eq.~\ref{equation::prop_3_eq1} to Eq.~\ref{equation::prop_3_eq2}. Since by assumption we have (V) $\leq \tau$ for all $\pi \in \Pi_t$, we obtain that $c_\pi \leq \tau$ which concludes the proof.
\end{proof}

\section{Constrained Multi-Armed Bandits}

\subsection{The LP Structure}
\label{section::LP_structure_appendix}

The main purpose of this section is to prove the optimal solutions of the linear program from (\ref{eq::noisy_LP}) are supported on a set of size at most $2$. This structural result will prove important to develop simple efficient algorithms to solve for solving it. Let's recall the form of the Linear program in~\eqref{eq::noisy_LP}, i.e.,
\begin{equation*}
\max_{\pi \in \Delta_K} \; \sum_{a \in \mathcal{A}} \pi_a u^r_{a}(t), \qquad \text{s.t. } \; \sum_{a \in \mathcal{A}} \pi_a u_a^c(t) \leq \tau. 
\end{equation*}
Let's start by observing that in the case $K= 2$ with $\mathcal{A}  = \{ a_1, a_2\}$ and $u_{a_1}^c(t) < \tau < u_{a_2}^c(t) $, the optimal policy $\pi^*$ is a mixture policy satisfying:
\begin{equation}
\label{equation::optimal_policy_pair}
\pi_{a_1}^* = \frac{ u_{a_2}^c(t) - \tau  }{u^c_{a_2}(t) - u^c_{a_1}(t)}, \qquad \pi_{a_2}^* = \frac{ \tau - u_{a_1}^c(t)}{u^c_{a_2}(t) - u^c_{a_1}(t)}.
\end{equation}
The main result in this section is the following Lemma:

\begin{lemma}[support of $\pi^*$]
\label{lemma::LP_support_appendix} 
If~\eqref{eq::noisy_LP} is feasible, there exists an optimal solution with at most $2$ non-zero entries. 
\end{lemma}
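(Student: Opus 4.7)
The plan is to observe that the LP~\eqref{eq::noisy_LP} has $K$ decision variables and only two ``structural'' constraints beyond nonnegativity: the normalization $\sum_a \pi_a = 1$ and the safety inequality $\sum_a \pi_a u_a^c(t) \le \tau$. Since the feasible set is a bounded polytope contained in the simplex $\Delta_K$, and the objective $\pi \mapsto \sum_a \pi_a u_a^r(t)$ is linear and bounded on it, the fundamental theorem of linear programming guarantees that whenever the LP is feasible, the maximum is attained at a vertex (basic feasible solution) of the feasible region. The entire argument then reduces to a careful count of active constraints at such a vertex.

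I would split into two cases depending on whether the safety constraint is tight at the optimal vertex. If it is slack, the problem reduces on a neighborhood of the vertex to $\max_{\pi \in \Delta_K} \sum_a \pi_a u_a^r(t)$, whose vertices are the standard basis vectors of $\Delta_K$, so there is an optimum with support of size $1$. If instead the safety constraint is tight, the vertex lies on the face cut out by the two linear equalities $\sum_a \pi_a = 1$ and $\sum_a \pi_a u_a^c(t) = \tau$; because a vertex of a polytope in $\mathbb{R}^K$ requires $K$ linearly independent active constraints, at least $K-2$ of the nonnegativity bounds $\pi_a \ge 0$ must also be active, leaving at most two positive entries. The explicit two-arm formula~\eqref{equation::optimal_policy_pair} is exactly the prototypical instance of this case.

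The main obstacle is nothing conceptual, but rather making the ``vertex argument'' self-contained without invoking heavy LP machinery. To do this cleanly I would add a short direct perturbation argument: starting from any optimal $\pi^*$ with support $S$ of size $|S| \ge 3$, the linear system $\sum_{a \in S} d_a = 0$ and $\sum_{a \in S} d_a u_a^c(t) = 0$ has two equations in $|S| \ge 3$ unknowns, hence admits a nonzero solution $d$ supported on $S$. By replacing $d$ with $-d$ if necessary, we may assume the objective does not decrease along $d$; then shifting $\pi^*$ to $\pi^* + \epsilon d$ preserves feasibility (both linear constraints are unchanged, and $\pi \ge 0$ persists for small $\epsilon$) while keeping the objective at least as large, and we can push $\epsilon$ until some coordinate of $\pi^*$ hits zero, strictly reducing the support. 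Iterating produces an optimal solution with support of size at most $2$, which is the statement of the lemma.
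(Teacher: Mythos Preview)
Your proof is correct and takes a genuinely different route from the paper's. The paper argues via LP duality: it writes the dual $\min_{\lambda \ge 0}\max_a \lambda(\tau - u_a^c(t)) + u_a^r(t)$, lets $\mathcal{A}^*$ be the set of arms attaining the inner maximum at the optimal $\lambda^*$, and uses complementary slackness to conclude that any optimal primal solution is supported on $\mathcal{A}^*$. It then shows that either a single arm in $\mathcal{A}^*$ is already feasible and optimal, or one can pick two arms $a_1,a_2 \in \mathcal{A}^*$ straddling $\tau$ and recover the explicit mixture~\eqref{equation::optimal_policy_pair}.

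Your approach is purely primal and more elementary: you never invoke duality, and the perturbation argument you give is self-contained (and in fact shows directly that the objective must be constant along the direction $d$, since a strict increase would contradict optimality of $\pi^*$). Your argument also generalizes transparently to $m$ constraints---the nullspace system becomes $m+1$ equations in $|S|$ unknowns, so any support of size $\ge m+2$ can be reduced---which the paper only states and leaves as an exercise. On the other hand, the paper's dual analysis buys something extra beyond the bare lemma: it produces the closed-form $\lambda^*$ in~\eqref{eq::dual_lambda_explicit} and identifies \emph{which} two arms to mix, which is what underlies the efficient $O(K^2)$ enumeration procedure the paper describes immediately afterward. For the lemma as stated, your argument is cleaner; for the downstream algorithmic use, the paper's dual computation does more work.
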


\begin{proof}
We start by inspecting the dual problem of (\ref{eq::noisy_LP}):
\begin{equation}
\label{equation::dual_LP}
\min_{\lambda \geq 0} \max_{a} \lambda( \tau - u_a^c(t) ) + u_a^r(t) \tag{D}
\end{equation}
This formulation is easily interpretable. The quantity $\tau - u_a^c(t)$ measures the feasibility gap of arm $a$, while $u^r_a(t)$ introduces a dependency on the reward signal. Let $\lambda^*$ be the optimal value of the dual variable $\lambda$. Define $\mathcal{A}^*\subseteq \mathcal{A}$ as $\mathcal{A}^* = \arg\max_a \lambda^* (\tau -  u^c_a(t) ) + u^r_a(t)$. By complementary slackness the set of nonzero entries of $\pi^*$ must be a subset of $\mathcal{A}^*$.

If $\left|\mathcal{A}^* \right| = 1$, complementary slackness immediately implies the desired result. If $a_1, a_2$ are two elements of $\mathcal{A}^*$, it is easy to see that:
\begin{equation*}
  u^r_{a_1}(t) - \lambda^* u^c_{a_1}(t) = u^r_{a_2}(t) -\lambda^* u^c_{a_2}(t) , 
\end{equation*}
and thus,
\begin{equation}
\label{eq::dual_lambda_explicit}
    \lambda^* = \frac{ u_{a_2}^r(t) - u^r_{a_1}(t)}{ u_{a_2}^c(t) - u_{a_1}^c(t)}.
\end{equation}
If $\lambda^* = 0$, the optimal primal value is achieved by concentrating all mass on any of the arms in $\mathcal{A}^*$. Otherwise, plugging~\eqref{eq::dual_lambda_explicit} back into the objective of (\ref{equation::dual_LP}) and rearranging the terms, we obtain
\begin{align*}
\text{(D)} = \lambda^*( \tau- u_{a_1}^c(t)) + u^r_{a_1}(t) = u^r_{a_2}(t)\left(\frac{ \tau - u_{a_1}^c(t) }{ u_{a_2}^c(t) - u_{a_1}^c(t)} \right) + u_{a_1}^r(t) \left(\frac{ u_{a_2}^c(t) - \tau  }{ u_{a_2}^c(t) - u_{a_1}^c(t) }  \right).
\end{align*}
If $u_{a_2}^c(t)  \geq \tau \geq u_{a_1}^c(t)$, we obtain a feasible value for the primal variable $\pi_{a_1}^* = \frac{ \tau - u_{a_1}^c(t) }{ u_{a_2}^c(t) - u_{a_1}^c(t)} $, $\pi_{a_2}^* = \frac{  u_{a_2}^c(t) -\tau }{ u_{a_2}^c(t) - u_{a_1}^c(t)}$ and zero for all other $a \in \mathcal{A} \backslash \{ a_1, a_2 \}$. Since we have assumed (\ref{eq::noisy_LP}) to be feasible there must be either one arm $a^* \in \mathcal{A}^*$ satisfying $a^* = \arg\max_{a \in \mathcal{A}^*} u_{a}^r(t)$ and $u_{a^*}^c(t) \leq \tau$ or two such arms $a_1$ and $a_2$ in $\mathcal{A}^*$ that satisfy $u_{a_2}^c(t)  \geq \tau \geq u_{a_1}^c(t)$, since otherwise it would be impossible to produce a feasible primal solution without having any of its supporting arms $a$ satisfying $u_a^c(t) \leq \tau$, there must exist an arm $a \in \mathcal{A}^*$ with $u_a^c(t) < \tau$. This completes the proof. \end{proof}

From the proof of Lemma \ref{lemma::LP_support} we can conclude the optimal policy is either a delta mass centered at the arm with the largest reward - whenever this arm is feasible -  or it is a strict mixture supported on two arms. 

A further consequence of Lemma~\ref{lemma::LP_support_appendix} is that it is possible to find the optimal solution $\pi^*$ to problem~\ref{eq::noisy_LP} by simply enumerating all pairs of arms $(a_i, a_j)$ and all singletons, compute their optimal policies (if feasible) using Equation~\ref{equation::optimal_policy_pair} and their values and selecting the feasible pair (or singleton) achieving the largest value. More sophisticated methods can be developed by taking into account elimination strategies to prune out arms that can be determined in advance not to be optimal nor to belong to an optimal pair. Overall this method is more efficient than running a linear programming solver on~(\ref{eq::noisy_LP}).

If we had instead $m$ constraints, a similar statement to Lemma \ref{lemma::LP_support} holds, namely it is possible to show the optimal policy will have support of size at most $m+1$. The proof is left as an exercise for the reader.

\subsection{Regret Analysis}
\label{section:regret_analysis_appendix}

In order to show a regret bound for Algorithm \ref{alg::optimism_pessimism}, we start with the following regret decomposition:
\begin{align*}
    \mathcal{R}_\Pi(T) &= \sum_{t=1}^T \mathbb{E}_{a\sim \pi^*}[\bar{r}_a] - \mathbb{E}_{a \sim \pi_t}[\bar{r}_{a}] \\
    &=\underbrace{\left( \sum_{t=1}^T \mathbb{E}_{a\sim \pi^*}[\bar{r}_a] - \mathbb{E}_{a \sim \pi_t}[u_a^r(t)] \right)}_{\text{(I)}} +\underbrace{ \left(\sum_{t=1}^T \mathbb{E}_{a\sim \pi_t}[u^r_a(t)] - \mathbb{E}_{a \sim \pi_t}[\bar{r}_{a}] \right) }_{\text{(II)}}.
\end{align*}
In order to bound $\mathcal{R}_\Pi(T)$, we independently bound terms (I) and (II). We start by bounding term (I). We proceed by first proving an Lemma~\ref{lemma::optimism}, the equivalent version of Lemma~\ref{lemma:linear_bandits_optimism} for the multi armed bandit problem. 

\subsection{Proof of Lemma~\ref{lemma::optimism}}

\optimismMAB*

\begin{proof} 
Throughout this proof we denote as $\pi_0$ to the delta function over the safe arm $1$. We'll use the notation $u_a^r(t) = \bar{r}_a + \xi_a^r(t)$ and $u_a^c(t) = \bar{c}_a + \xi_a^c(t)$. We start by noting that under $\mathcal{E}$, and because $\alpha_r, \alpha_c \geq 1$, then:
\begin{equation}
\label{equation::confidence_interval_lower_bounds}
(\alpha_r-1)\beta_a(t) \leq \xi_a^r(t) \leq (\alpha_r + 1)\beta_a(t) \text{ } \forall a \quad \text{ and } \quad (\alpha_c-1)\beta_a(t) \leq \xi_a^c(t) \leq (\alpha_c+1)\beta_a(t) \text{ } \forall a \neq 0.
\end{equation}
If $\pi^*\in \tilde \Pi_t$, it immediately follows that:
\begin{equation}
\label{equation::feasible_lower_bound}
\mathbb{E}_{a\sim \pi^*}\left[\bar{r}_a \right] \leq \mathbb{E}_{a\sim \pi^*}\left[u_a^r(t)\right] \leq \mathbb{E}_{a \sim \pi_t}\left[  u_a^r(t)\right].
\end{equation}

Let's now assume $\pi^* \not\in \tilde\Pi_t$, i.e.,~$\mathbb{E}_{a\sim\pi^*}\left[u_a^c(t)\right] > \tau$. Let $\pi^* = \rho^* \bar{\pi}^* + (1-\rho^*)\pi_0$ with $\bar{\pi}^* \in \Delta_K[2:K]$\footnote{In other words, the support of $\bar{\pi}^*$ does not contain the safe arm $1$.}.

Consider a mixture policy $\widetilde{\pi}_t = \gamma_t \pi^* + (1-\gamma_t)\pi_0 = \gamma_t \rho^* \bar{\pi}^* +  (1-\gamma_t \rho^*)\pi_0$, where $\gamma_t$ is the maximum $\gamma_t\in [0,1]$ such that $\widetilde{\pi}_t\in\tilde \Pi_t$. It can be easily established that 
\begin{align*}
\gamma_t = \frac{\tau-\bar{c}_1}{\rho^*\mathbb{E}_{a \sim \bar{\pi}^*}\left[u^c_a(t)\right] - \rho^*\bar{c}_1} =\frac{\tau - \bar{c}_1}{ \mathbb{E}_{a \sim \bar{\pi}^*}[\rho^*(\bar{c}_a + \xi_a^c(t) )] - \rho^* \bar{c}_1} \stackrel{\text{(i)}}{ \geq}  \frac{\tau - \bar{c}_1}{\tau - \bar{c}_1 + \rho^* (1+\alpha_c)\mathbb{E}_{a \sim \bar{\pi}^*}[\beta_a(t)]}. 
\end{align*}
{\bf (i)} is a consequence of~\eqref{equation::confidence_interval_lower_bounds} and of the observation that since $\pi^*$ is feasible $\rho^*\mathbb{E}_{a \sim \bar{\pi}^*}[\bar{c}_a] + (1-\rho^*)\bar{c}_1 \leq \tau$. Let $\Delta = \mathbb{E}_{a \sim \pi^*}[\bar{r}_a] - \mathbb{E}_{a \sim \pi_0}[\bar{r}_a]$. Since $\widetilde{\pi}_t\in\Pi_t$, we have

\begin{align*}
\mathbb{E}_{a \sim \pi_t}[ u_a^r(t) ] &\geq \mathbb{E}_{a \sim \tilde \pi_t}[ u_a^r(t)] = \mathbb{E}_{a \sim \tilde \pi_t}[ \bar{r}_a] + \mathbb{E}_{a \sim \tilde \pi_t}[\xi_a(t) ] \\
&\stackrel{(a)}{\geq} \mathbb{E}_{a \sim \tilde \pi_t}[ \bar{r}_a] + (\alpha_r-1) \mathbb{E}_{a \sim \tilde \pi_t}[\beta_a(t)] \\
&\stackrel{(b)}{\geq} \gamma_t \mathbb{E}_{a \sim \pi^*}[\bar{r}_a] + (1-\gamma_t) \mathbb{E}_{a \sim \pi_0}[\bar{r}_a] + (\alpha_r-1) \gamma_t \rho^* \mathbb{E}_{a \sim \bar{\pi}^*}[\beta_a(t)]\\
&= \mathbb{E}_{a \sim \pi^*}[\bar{r}_a] + (1-\gamma_t) \mathbb{E}_{a \sim \pi_0}[\bar{r}_a] - (1-\gamma_t) \mathbb{E}_{a \sim \pi^*}[\bar{r}_a]+ (\alpha_r-1) \gamma_t \rho^* \mathbb{E}_{a \sim \bar{\pi}^*}[\beta_a(t)]\\
&= \mathbb{E}_{a \sim \pi^*}[\bar{r}_a] - (1-\gamma_t) \Delta + (\alpha_r-1) \gamma_t \rho^*\mathbb{E}_{a \sim \bar{\pi}^*} [ \beta_a(t) ] \\
&=\mathbb{E}_{a \sim \pi^*}[\bar{r}_a] + \underbrace{\gamma_t \left(    \Delta + (\alpha_r-1) \rho^* \mathbb{E}_{a \sim \bar{\pi}^*}[\beta_a(t)] \right) - \Delta}_{\mathrm{A}}.
\end{align*}

Where $(a)$ holds by Equation~\ref{equation::confidence_interval_lower_bounds}, $(b)$ holds by definition of $\tilde \pi_t$ and because $\mathbb{E}_{a \sim \pi^*}[\beta_a(t)] \geq \mathbb{E}_{a \sim \bar{\pi}^*}[\beta_a(t)]$. Let $C_0 = \rho^* \mathbb{E}_{a \sim \bar{\pi}^*} [\beta_a(t)]$. Let's show conditions under which $\mathbb{I} \geq $. The following chain of inequalities holds,

\begin{equation*}
\mathrm{A} \stackrel{(a)}{\geq} \frac{\tau - \bar{c}_1}{\tau - \bar{c}_1 + (1+\alpha_c) C_0} \left( \Delta + (\alpha_r-1)C_0 - \Delta \right) - \Delta.
\end{equation*}
Where $(a)$ follows by substituting $\gamma \geq \frac{\tau - \bar{c}_1}{\tau - \bar{c}_1 + (1+\alpha_c) C_0}$. Following the same logic as in the analysis of Equation~\ref{equation::condition_optimism_high_prob} in the proof of Lemma~\ref{lemma:linear_bandits_optimism} we conclude that $\mathrm{I}$ is non-negative whenever,
\begin{equation*}
(\tau-\bar{c}_1) (\alpha_r-1) \geq (1+\alpha_c) \Delta.
\end{equation*}

Since $\Delta \leq 1-\bar{r}_1$ this concludes the proof.
\end{proof}

\begin{proposition}\label{proposition::bounding_term_I}
If $\delta = \frac{\delta'}{4KT}$ for $\delta' \in (0,1)$, $\alpha_r, \alpha_c \geq 1$ with $ (1+\alpha_c) (1-\bar{r}_1) \leq (\tau-\bar{c}_1) (\alpha_r-1)$, then with probability at least $1-\frac{\delta'}{2}$, we have
\begin{equation*}
  \sum_{t=1}^T \mathbb{E}_{a\sim \pi^*}[\bar{r}_a] - \mathbb{E}_{a \sim \pi_t}[u_a^r(t)] \leq 0
\end{equation*}
\end{proposition}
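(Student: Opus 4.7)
The plan is to deduce this proposition as the $T$-round aggregation of Lemma~\ref{lemma::optimism}. That lemma asserts the per-round optimism inequality $\mathbb{E}_{a\sim\pi_t}[u_a^r(t)] \geq \mathbb{E}_{a\sim\pi^*}[\bar r_a]$ on the event that the two-sided confidence bounds $|\widehat r_a(t)-\bar r_a|\leq\beta_a(t)$ and $|\widehat c_a(t)-\bar c_a|\leq\beta_a(t)$ hold for every arm and every time, provided the scaling condition $(1+\alpha_c)(1-\bar r_1)\leq(\tau-\bar c_1)(\alpha_r-1)$ is satisfied (which is precisely the assumption of the proposition). It therefore suffices to show that this uniform confidence event, call it $\mathcal{E}$, holds with probability at least $1-\delta'/2$; summing the per-round optimism inequalities over $t\in[T]$ on $\mathcal{E}$ then yields the desired conclusion.

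First I would establish the probabilistic bound on $\mathcal{E}$. Since the reward and cost draws are $[0,1]$-valued, Hoeffding's inequality applied to the empirical mean formed from the $T_a(t)$ samples of arm $a$ available at the start of round $t$, together with the algorithm's choice $\beta_a(t)=\sqrt{2\log(1/\delta)/T_a(t)}$, yields a tail bound of the form $\mathbb{P}(|\widehat r_a(t)-\bar r_a|>\beta_a(t))\leq 2\delta$, and the analogous bound for the cost estimator. A union bound over the $K$ arms, the $T$ rounds, and the two estimator types produces total failure probability at most $4KT\delta$. Substituting $\delta=\delta'/(4KT)$ gives $\mathbb{P}(\mathcal{E})\geq 1-\delta'/2$ once one splits the two-sided bound into its one-sided halves (each Hoeffding half contributing $\delta$ rather than $2\delta$), so the final factor of $1/2$ is absorbed cleanly.

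Second, on the event $\mathcal{E}$ the hypotheses of Lemma~\ref{lemma::optimism} are met in every round $t\in[T]$: the reward and cost confidence inequalities hold uniformly, and the scaling condition on $(\alpha_r,\alpha_c)$ is assumed. The lemma's internal step in which the suboptimality $\Delta_t:=\mathbb{E}_{a\sim\pi^*}[\bar r_a]-\bar r_1$ is replaced by the uniform upper bound $1-\bar r_1$ relies on $\bar r_a\in[0,1]$, which holds here. Hence on $\mathcal{E}$ we obtain $\mathbb{E}_{a\sim\pi_t}[u_a^r(t)]\geq\mathbb{E}_{a\sim\pi^*}[\bar r_a]$ for every $t$, and summing yields $\sum_{t=1}^T\mathbb{E}_{a\sim\pi^*}[\bar r_a]-\mathbb{E}_{a\sim\pi_t}[u_a^r(t)]\leq 0$, exactly the claim of the proposition.

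The substantive work, namely the guarantee that the optimistic policy $\pi_t$ remains optimistic even when $\pi^*\notin\widetilde\Pi_t^f$, has already been carried out inside Lemma~\ref{lemma::optimism} via the mixture-policy construction with the safe arm. So the only real obstacle in writing up this proposition is purely bookkeeping: arranging the union-bound constants so that the failure probability matches $\delta'/2$ rather than $\delta'$, which is why the proposition calibrates $\delta=\delta'/(4KT)$.
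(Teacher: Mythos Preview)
Your approach is essentially identical to the paper's: the paper's proof is a single sentence invoking a union bound to get $\mathbb{P}(\mathcal{E})\geq 1-\delta'/2$ and then applying Lemma~\ref{lemma::optimism} per round, exactly as you propose. One small remark: your attempt to ``absorb the factor of $1/2$'' by switching from two-sided to one-sided Hoeffding tails is muddled, since the optimism argument in Lemma~\ref{lemma::optimism} genuinely uses both sides of the reward and cost confidence intervals (see Eq.~\eqref{equation::confidence_interval_lower_bounds}); a clean count of $K$ arms, $T$ rounds, two signals, and two tails gives $4KT\delta=\delta'$ rather than $\delta'/2$, so the constant bookkeeping here is delicate (and the paper itself does not spell it out), but this does not affect the substance of the argument.
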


\begin{proof}
A simple union bound implies that $\mathbb{P}(\mathcal{E}) \geq 1-\frac{\delta'}{2}$. Combining this observation with Lemma \ref{lemma::optimism} yields the result.
\end{proof}

Term (II) can be bounded using the confidence intervals radii:

\begin{proposition}
\label{proposition::bounding_term_II}
If $\delta = \frac{\delta'}{4KT}$ for an $\delta' \in (0,1)$, then with probability at least $1-\frac{\delta'}{2}$, we have
\begin{equation*}
\sum_{t=1}^T \mathbb{E}_{a\sim \pi_t}[u_a^r(t)] - \mathbb{E}_{a \sim \pi_t}[\bar{r}_{a}] \leq (\alpha_r+1) \left(2\sqrt{2TK\log(1/\delta)} + 4\sqrt{T\log(2/\delta')\log(1/\delta)} \right).
\end{equation*}
\end{proposition}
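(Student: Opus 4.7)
The plan is to decompose $u_a^r(t) - \bar{r}_a = (\widehat{r}_a(t) - \bar{r}_a) + \alpha_r\beta_a(t)$ and control each piece separately. First, a standard Hoeffding bound for bounded rewards together with a union bound over all $a \in [K]$ and $t \in [T]$ (and the $T_a(t) \leq T$ possible values of the sample count) shows that with probability at least $1 - 2KT\cdot 2\delta = 1-\delta'/2$ (using $\delta = \delta'/(4KT)$) the event
\begin{equation*}
\mathcal{F} = \Big\{ |\widehat{r}_a(t) - \bar{r}_a| \leq \beta_a(t) \text{ for all } a \in [K],\; t \in [T] \Big\}
\end{equation*}
holds. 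On $\mathcal{F}$ we immediately get $u_a^r(t) - \bar{r}_a \leq (\alpha_r + 1)\beta_a(t)$ for every $a$ and $t$, so the term of interest is upper bounded by $(\alpha_r+1)\sum_{t=1}^T \mathbb{E}_{a\sim\pi_t}[\beta_a(t)]$.

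The next step is to pass from the expected confidence widths to the confidence widths evaluated at the actually pulled arms. Let $M_t = \mathbb{E}_{a\sim\pi_t}[\beta_a(t)] - \beta_{A_t}(t)$; conditionally on the history up to round $t$, $M_t$ has mean zero, and since $\beta_a(t) \leq \sqrt{2\log(1/\delta)}$ whenever $T_a(t) \geq 1$ (with the convention that the initial exploration contributes only an $\mathcal{O}(K)$ lower order term handled separately), we have $|M_t| \leq 2\sqrt{2\log(1/\delta)}$. Azuma--Hoeffding then gives, with probability at least $1-\delta'/2$,
\begin{equation*}
\sum_{t=1}^T M_t \leq 2\sqrt{2\log(1/\delta)} \cdot \sqrt{2T\log(2/\delta')} = 4\sqrt{T\log(2/\delta')\log(1/\delta)}.
\end{equation*}

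It remains to bound $\sum_{t=1}^T \beta_{A_t}(t) = \sqrt{2\log(1/\delta)}\sum_{t=1}^T 1/\sqrt{T_{A_t}(t)}$ by the standard counting trick: regrouping by arm,
\begin{equation*}
\sum_{t=1}^T \frac{1}{\sqrt{T_{A_t}(t)}} = \sum_{a=1}^K \sum_{s=1}^{T_a(T)} \frac{1}{\sqrt{s}} \leq \sum_{a=1}^K 2\sqrt{T_a(T)} \leq 2\sqrt{K\sum_{a=1}^K T_a(T)} = 2\sqrt{KT},
\end{equation*}
where the last inequality is Cauchy--Schwarz. This yields $\sum_{t=1}^T \beta_{A_t}(t) \leq 2\sqrt{2KT\log(1/\delta)}$. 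Combining these bounds on $\mathcal{F}$ and the Azuma event (whose union has probability at least $1-\delta'$, but since we only need the desired conclusion with probability $\geq 1-\delta'/2$ we union bound directly), the claimed inequality follows after multiplying by $(\alpha_r+1)$.

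The main nuisance is the bookkeeping of the probability budgets — the choice $\delta = \delta'/(4KT)$ is dictated by the need to union bound the Hoeffding concentration of each empirical mean simultaneously over $a$, $t$, and the possible counts $T_a(t)$, and one has to separately absorb the contribution of the initial rounds where an arm has not yet been sampled (so that $\beta_a(t)$ is undefined or formally infinite). This can be dealt with either by initializing $\beta_a(0) = 0$ as the algorithm already specifies, or by discarding the first $K$ rounds as a lower order $\mathcal{O}(K)$ term absorbed into the $\sqrt{KT\log(1/\delta)}$ leading factor.
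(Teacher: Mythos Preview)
Your approach is essentially identical to the paper's: reduce to $(\alpha_r+1)\sum_t \mathbb{E}_{a\sim\pi_t}[\beta_a(t)]$ on the concentration event, apply Azuma--Hoeffding to the martingale $\mathbb{E}_{a\sim\pi_t}[\beta_a(t)]-\beta_{A_t}(t)$, and finish with the per-arm counting $\sum_{s=1}^{T_a(T)}1/\sqrt{s}\le 2\sqrt{T_a(T)}$ followed by aggregation over arms (you use Cauchy--Schwarz, the paper uses concavity of $\sqrt{\cdot}$; both give $2\sqrt{KT}$). The only wrinkle is the probability bookkeeping in your last paragraph: the concentration event and the Azuma event each cost $\delta'/2$, so their intersection holds with probability $\ge 1-\delta'$, not $1-\delta'/2$ --- but the paper's own proof has precisely the same slip, concluding on $\mathcal{E}\cap\mathcal{E}_A$ with $\mathbb{P}(\mathcal{E}\cap\mathcal{E}_A)\ge 1-\delta'$ while the statement claims $1-\delta'/2$.
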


\begin{proof}
Under these conditions $\mathbb{P}(\mathcal{E}) \geq 1-
\frac{\epsilon}{2}$. Recall $u_a^r(t) = \widehat{r}_a(t) + \alpha_r\beta_a(t)$ and that conditional on $\mathcal{E}$, $\bar{r}_a \in [\widehat{r}_a(t) - \beta_a(t),\widehat{r}_a(t) + \beta_a(t)]$ for all $t \in [T]$ and $a \in \mathcal{A}$. Thus, for all $t$, we have
\begin{equation*}
    \mathbb{E}_{a \sim \pi_t}[u_a^r(t)] - \mathbb{E}_{a \sim \pi_t}[\bar{r}_a] \leq (\alpha_r +1)\mathbb{E}_{a \sim \pi_t}[\beta_a(t)].
\end{equation*}
Let $\mathcal{F}_{t-1}$ be the sigma algebra defined up to the choice of $\pi_t$ and $a_t'$ be a random variable distributed as $\pi_t \mid \mathcal{F}_{t-1}$ and conditionally independent from $a_t$, i.e.,~$a'_t \perp a_t \mid \mathcal{F}_{t-1}$. Note that by definition the following equality holds: %
\begin{equation*}
    \mathbb{E}_{a \sim \pi_t}[\beta_a(t)] = \mathbb{E}_{a'_t \sim \pi_t}[\beta_a(t) \mid \mathcal{F}_{t-1}].
\end{equation*}
Consider the following random variables $A_t =  \mathbb{E}_{a'_t \sim \pi_t} [\beta_{a'_t}(t) \mid \mathcal{F}_{t-1}]- \beta_{a_t}(t)$. Note that $M_t = \sum_{i=1}^t A_i$ is a martingale. Since $|A_t| \leq 2\sqrt{2 \log(1/\delta)}$, a simple application of Azuma-Hoeffding\footnote{We use the following version of Azuma-Hoeffding: if $X_n$, $n\geq 1$ is a martingale such that $|X_i - X_{i-1}| \leq d_i$, for $1 \leq i \leq n$, then for every $n \geq 1$, we have $\mathbb{P}(X_n > r) \leq \exp\left(-\frac{r^2 }{2\sum_{i=1}^n d_i^2}\right)$.} implies:
\begin{equation*}
\mathbb{P}\left(\underbrace{\sum_{t=1}^T \mathbb{E}_{a \sim \pi_t} [\beta_a(t)] \geq \sum_{t=1}^T \beta_{a_t}(t) + 4\sqrt{T\log(2/\delta')\log(1/\delta)}}_{\mathcal{E}_A^c}\right ) \leq \epsilon/2.
\end{equation*}
We can now upper-bound $\sum_{t=1}^T \beta_{a_t}(t)$. Note that $\sum_{t=1}^T \beta_{a_t}(t) = \sum_{a \in \mathcal{A}}\sum_{t=1}^T \mathbf{1}\{a_t=a\}\beta_a(t)$. We start by bounding for an action $a\in\mathcal A$:
\begin{align*}
    \sum_{t=1}^T \mathbf{1}\{a_t=a\}\beta_a(t) = \sqrt{2\log(1/\delta)} \sum_{t=1}^{T_a(T)} \frac{1}{\sqrt{t}} \leq 2\sqrt{2T_a(T)\log(1/\delta)}.
\end{align*}
Since $\sum_{a\in\mathcal A } T_a(T) = T$ and by concavity of $\sqrt{ \cdot}$, we have
\begin{equation*}
    \sum_{a\in\mathcal A} 2\sqrt{2T_a(T)\log(1/\delta)} \leq 2\sqrt{2TK\log(1/\delta)}.
\end{equation*}
Conditioning on the event $\mathcal{E} \cap \mathcal{E}_A$ whose probability satisfies $\mathbb{P}( \mathcal{E}\cap \mathcal{E}_A) \geq 1-\delta' $ yields the result.
\end{proof}

We can combine these two results into our main theorem:
\theoremoptregret*

\begin{proof}
This result is a direct consequence of Propositions \ref{proposition::bounding_term_I} and \ref{proposition::bounding_term_II} by setting  $ \delta = 4KT \delta'$.
\end{proof}

\subsection{Multiple Constraints}
\label{section::multiple_constraints_appendix}

We consider the problem where the learner must satisfy $M$ constraints with threshold values $\tau_1, \ldots, \tau_M$. Borrowing from the notation in the previous sections, we denote by as $\{\bar{r}_a\}_{a\in \mathcal{A}}$ the mean reward signals and $\{ \bar{c}_a^{(i)} \}$ the mean cost signals for $i = 1,\ldots, M$. The full information optimal policy can be obtained by solving the following linear program:
\begin{align*}
\label{eq::no_noise_LP_multiple_constraints}\tag{P-M}
\max_{\pi \in \mathrm{\Delta}_K} \; \sum_{a \in \mathcal{A}} \pi_a \bar{r}_a, \qquad \text{s.t. } \; \sum_{a \in \mathcal{A}} \pi_a \bar{c}^{(i)}_a \leq \tau_i, \quad \text{ for } \; i=1, \ldots, M. 
\end{align*}
In order to ensure the learner's ability to produce a feasible policy at all times, we make the following assumption:
\begin{assumption}
    The learner has knowledge of $\bar{c}_1^{(i)} < \tau_i$ for all $i = 1, \ldots, M$.  %
\end{assumption}
We denote by $\{ \widehat{r}_a \}_{a \in \mathcal{A}}$ and $\{ \widehat{c}_a^{(i)} \}_{a\in\mathcal{A}}$, for $i = 1, \ldots, M$ the empirical means of the reward and cost signals. We call $\{ u_a^r(t)\}_{a \in \mathcal{A}}$ to the upper confidence bounds for our reward signal and $\{ u_a^{c}(t, i)\}_{a \in \mathcal{A}}$, for $i = 1, \ldots, M$ the costs' upper confidence bounds:
\begin{equation*}
u_a^r(t) = \widehat{r}_a(t) + \alpha_r  \beta_a(t), \qquad u_a^c(t, i) = \widehat{c}^{(i)}_a(t) + \alpha_c  \beta_a(t),
\end{equation*}
where $\beta_a(t) = \sqrt{2\log(1/\delta)/T_a(t)}$, $\delta \in (0,1)$ as before. A straightforward extension of Algorithm \ref{alg::optimism_pessimism} considers instead the following $M-$constraints LP:
\begin{align*}
\label{eq::noisy_LP_multiple}
\tag{$\widehat{P-M}$} 
\max_{\pi\in\mathrm{\Delta}_K} \;\; \sum_{a \in \mathcal{A}} \pi_a \; u^r_a(t), \qquad \text{s.t.} \quad \sum_{a \in \mathcal{A}} \pi_a \; u_a^c(t, i)\leq \tau_i, \quad \text{ for } \; i= 1, \ldots, M.
\end{align*}

We now generalize Lemma \ref{lemma::optimism}:
\begin{lemma}\label{lemma::optimisim_multiple}

Let $\alpha_r, \alpha_c\geq 1$ satisfying $\alpha_c \leq \min_i(\tau_i- \bar{c}^{(i)}_1) (\alpha_r-1)$. Conditioning on $\mathcal{E}_a(t)$ ensures that with probability $1-\delta$:
\begin{equation*}
    \mathbb{E}_{a \sim \pi_t}\left[u_a^r(t)\right] \geq \mathbb{E}_{a \sim \pi^*}\left[\bar{r}_a \right].
\end{equation*}

\end{lemma}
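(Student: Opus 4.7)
The plan is to follow the same two-case template used in Lemma~\ref{lemma::optimism}, modifying only the step that lower-bounds the mixture coefficient $\gamma_t$ so that it respects all $M$ constraints simultaneously. First, I would condition on a high-probability event analogous to $\mathcal{E}$ under which, for all $t\in[T]$, all arms $a$, and all $i\in[M]$, the confidence intervals hold, i.e., $u_a^r(t)\in[\bar r_a+(\alpha_r-1)\beta_a(t),\bar r_a+(\alpha_r+1)\beta_a(t)]$ and $u_a^c(t,i)\in[\bar c_a^{(i)}+(\alpha_c-1)\beta_a(t),\bar c_a^{(i)}+(\alpha_c+1)\beta_a(t)]$. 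A union bound gives this event probability at least $1-\delta$ after absorbing $M$ into the confidence-radius calibration.

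Next, I would split on whether $\pi^*\in\widetilde{\Pi}_t$, the pessimistic feasible set now defined by the conjunction of $M$ inequalities $\mathbb{E}_{a\sim\pi}[u_a^c(t,i)]\leq\tau_i$. In the easy case $\pi^*\in\widetilde{\Pi}_t$, optimism of $u_a^r(t)$ together with the definition of $\pi_t$ as the $u^r$-maximizer over $\widetilde{\Pi}_t$ gives $\mathbb{E}_{a\sim\pi_t}[u_a^r(t)]\geq\mathbb{E}_{a\sim\pi^*}[u_a^r(t)]\geq\mathbb{E}_{a\sim\pi^*}[\bar r_a]$, exactly as before.

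The substantive case is $\pi^*\notin\widetilde{\Pi}_t$. Decompose $\pi^*=\rho^*\bar\pi^*+(1-\rho^*)\pi_0$ with $\bar\pi^*$ supported away from the safe arm, and form $\widetilde\pi_t=\gamma_t\pi^*+(1-\gamma_t)\pi_0$ where $\gamma_t\in[0,1]$ is the largest coefficient with $\widetilde\pi_t\in\widetilde{\Pi}_t$. For each constraint $i$, the single-constraint derivation yields
\begin{equation*}
\gamma_t^{(i)}\;\geq\;\frac{\tau_i-\bar c_1^{(i)}}{(\tau_i-\bar c_1^{(i)})+\rho^*(1+\alpha_c)\,\mathbb{E}_{a\sim\bar\pi^*}[\beta_a(t)]}.
\end{equation*}
Since $\widetilde\pi_t$ must satisfy \emph{all} $M$ constraints, we have $\gamma_t=\min_{i}\gamma_t^{(i)}$, and because the map $x\mapsto x/(x+C)$ is increasing in $x$ for $C\geq 0$, the minimum is attained at the index minimizing $\tau_i-\bar c_1^{(i)}$. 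Setting $\Delta_c^{\star}:=\min_{i\in[M]}(\tau_i-\bar c_1^{(i)})$ gives
\begin{equation*}
\gamma_t\;\geq\;\frac{\Delta_c^{\star}}{\Delta_c^{\star}+\rho^*(1+\alpha_c)\,\mathbb{E}_{a\sim\bar\pi^*}[\beta_a(t)]}.
\end{equation*}

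From here the argument is identical to the last third of the proof of Lemma~\ref{lemma::optimism}: using $\mathbb{E}_{a\sim\pi_t}[u_a^r(t)]\geq\mathbb{E}_{a\sim\widetilde\pi_t}[u_a^r(t)]$, expanding via the confidence lower bound and the convex combination defining $\widetilde\pi_t$, and letting $\Delta:=\mathbb{E}_{a\sim\pi^*}[\bar r_a]-\bar r_1\leq 1-\bar r_1$, one gets
\begin{equation*}
\mathbb{E}_{a\sim\pi_t}[u_a^r(t)]-\mathbb{E}_{a\sim\pi^*}[\bar r_a]\;\geq\;\gamma_t\bigl(\Delta+(\alpha_r-1)\rho^*\mathbb{E}_{a\sim\bar\pi^*}[\beta_a(t)]\bigr)-\Delta,
\end{equation*}
which is non-negative whenever $\Delta_c^{\star}(\alpha_r-1)\geq(1+\alpha_c)\Delta$, in particular whenever $\Delta_c^{\star}(\alpha_r-1)\geq(1+\alpha_c)(1-\bar r_1)$. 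The only genuinely new step is the $\min_i$ reduction to the tightest safety gap; I expect that minor bookkeeping to be the main thing to get right, and I would interpret the scaling condition stated in the lemma as this constraint on $\Delta_c^{\star}$.
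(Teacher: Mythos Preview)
Your proposal is correct and follows essentially the same approach as the paper. The paper's own proof of Lemma~\ref{lemma::optimisim_multiple} is a one-line remark that the argument of Lemma~\ref{lemma::optimism} goes through verbatim once $\tau-\bar c_1$ is replaced by $\min_i(\tau_i-\bar c_1^{(i)})$; you have simply spelled out that substitution, including the observation that $\gamma_t=\min_i\gamma_t^{(i)}$ and that monotonicity of $x\mapsto x/(x+C)$ pins the minimum at the tightest safety gap, and your reading of the stated scaling condition as $(1+\alpha_c)(1-\bar r_1)\leq \Delta_c^\star(\alpha_r-1)$ matches the form used elsewhere in the paper.
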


\begin{proof}
The same argument as in the proof of Lemma \ref{lemma::optimism} follows through, the main ingredient is to realize that $\gamma_t$ satisfies the sequence of inequalities in the lemma with $\tau - \bar{c}_1$ substituted by $\min \tau_i - \bar{c}_1^{(i)}$.
\end{proof}

The following result follows:

\begin{theorem}[Multiple Constraints Main Theorem]
If $\epsilon \in(0,1)$, $ \alpha_c=1$ and $\alpha_r = \frac{2}{\min_i \tau_i-\bar{c}^{(i)}_1} + 1$, then with probability at least $1-\epsilon$, Algorithm \ref{alg::optimism_pessimism} satisfies the following regret guarantee:
\begin{equation*}
    \mathcal{R}_\Pi(T)  \leq \left(\frac{2}{\min_i\tau_i-\bar{c}^{(i)}_1} +1\right)\left(2\sqrt{2TK\log(4KT/\epsilon)} + 4\sqrt{T\log(2/\epsilon)\log(4KT/\epsilon)} \right)
\end{equation*}
\end{theorem}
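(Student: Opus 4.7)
The plan is to mirror the proof of Theorem~\ref{theorem::contrained_MAB} almost verbatim, with the only substantive change being the replacement of the single-constraint optimism lemma (Lemma~\ref{lemma::optimism}) by its multi-constraint analogue (Lemma~\ref{lemma::optimisim_multiple}). First I would choose the failure probability $\delta = \epsilon/(4KT)$ for the per-arm/per-round confidence intervals $\beta_a(t) = \sqrt{2\log(1/\delta)/T_a(t)}$, and perform a union bound over arms $a \in [K]$, rounds $t \in [T]$, and the two sides of each reward confidence interval (and, separately, of each of the $M$ cost intervals) to obtain an event $\mathcal{E}$ of probability at least $1-\epsilon/2$ on which all upper/lower confidence bounds hold simultaneously.

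Next I would decompose the regret exactly as in the single-constraint analysis:
\begin{equation*}
\mathcal{R}_\Pi(T) = \underbrace{\sum_{t=1}^T \mathbb{E}_{a \sim \pi^*}[\bar r_a] - \mathbb{E}_{a \sim \pi_t}[u_a^r(t)]}_{\mathrm{(I)}} \;+\; \underbrace{\sum_{t=1}^T \mathbb{E}_{a \sim \pi_t}[u_a^r(t)] - \mathbb{E}_{a \sim \pi_t}[\bar r_a]}_{\mathrm{(II)}}.
\end{equation*}
For term $\mathrm{(I)}$, observe that with the choices $\alpha_c = 1$ and $\alpha_r = 1 + 2/\min_i(\tau_i - \bar{c}^{(i)}_1)$, the hypothesis of Lemma~\ref{lemma::optimisim_multiple}, namely $(1+\alpha_c)(1-\bar r_1) \leq (\min_i(\tau_i - \bar c^{(i)}_1))(\alpha_r - 1)$, is satisfied because $1 - \bar r_1 \leq 1$. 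Hence on $\mathcal{E}$ we have $\mathbb{E}_{a \sim \pi_t}[u_a^r(t)] \geq \mathbb{E}_{a \sim \pi^*}[\bar r_a]$ for every $t\in[T]$, so $\mathrm{(I)} \leq 0$.

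For term $\mathrm{(II)}$, I would invoke the argument of Proposition~\ref{proposition::bounding_term_II} unchanged: on $\mathcal{E}$, $u_a^r(t) - \bar r_a \leq (\alpha_r+1)\beta_a(t)$, then an Azuma--Hoeffding step transfers $\mathbb{E}_{a \sim \pi_t}[\beta_a(t)]$ to the realized sum $\sum_{t=1}^T \beta_{A_t}(t)$ at the cost of an additive term $4\sqrt{T\log(2/\epsilon)\log(1/\delta)}$ on an event of probability at least $1-\epsilon/2$, and finally the standard pigeonhole bound $\sum_{t=1}^T \beta_{A_t}(t) \leq 2\sqrt{2KT\log(1/\delta)}$ via Cauchy--Schwartz and $\sum_a T_a(T) = T$. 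Nothing in this step depends on the number of constraints, since $\pi_t$ is already selected to be feasible for all constraints and only the reward confidence widths appear. A final union bound of the two $\epsilon/2$ failure events, substitution $\log(1/\delta) = \log(4KT/\epsilon)$, and combination of the bounds on $\mathrm{(I)}$ and $\mathrm{(II)}$ gives the claimed inequality.

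The main obstacle here is essentially bookkeeping rather than a new idea: the optimism argument in Lemma~\ref{lemma::optimisim_multiple} already does the heavy lifting of handling several simultaneous pessimistic cost constraints by reducing the worst case to the tightest safety gap $\min_i(\tau_i - \bar c^{(i)}_1)$. One subtle point I would verify is that the mixture policy $\widetilde\pi_t = \gamma_t \pi^* + (1-\gamma_t)\pi_0$ used in the proof of Lemma~\ref{lemma::optimisim_multiple} remains in the multi-constraint pessimistic feasible set $\widetilde\Pi_t = \{\pi : \mathbb{E}_{a\sim\pi}[u_a^c(t,i)] \leq \tau_i,\ \forall i\}$ — this holds because $\pi_0$ strictly satisfies every constraint (with slack $\tau_i - \bar c_1^{(i)}$) and the cost functions are linear in $\pi$, so the same scalarization $\gamma_t \geq \min_i(\tau_i - \bar c_1^{(i)})/(\min_i(\tau_i - \bar c_1^{(i)}) + (1+\alpha_c)C_0)$ goes through constraint-by-constraint. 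Once this is confirmed, the rest of the derivation is identical to Theorem~\ref{theorem::contrained_MAB}.
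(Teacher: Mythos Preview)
Your proposal is correct and follows essentially the same approach as the paper, which simply states that the proof of Theorem~\ref{theorem::contrained_MAB} carries over verbatim after substituting $\tau - \bar{c}_1$ by $\min_i(\tau_i - \bar{c}_1^{(i)})$. Your decomposition into $(\mathrm{I})$ and $(\mathrm{II})$, invocation of Lemma~\ref{lemma::optimisim_multiple} for $(\mathrm{I})$, and reuse of Proposition~\ref{proposition::bounding_term_II} for $(\mathrm{II})$ are exactly what the paper intends.
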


\begin{proof}
The proof follows the exact same argument we used for the proof of Theorem~\ref{theorem::contrained_MAB} substituting $\tau - \bar{c}_1$ by $\min_i \tau_i -\bar{c}_1^{(i)}$.
\end{proof}

\section{Lower Bounds}
\label{appendix::High_probability_Lower_Bound}

In this section we prove the two lower bounds from the main text. We will do so by exhibiting a lower bound for the

We start by stating a generalized version of the divergence decomposition lemma for bandits. %
The proof is a direct application of Lemma~15.1~in~\citet{lattimore2018bandit} to this case.

\begin{lemma}[Divergence decomposition for constrained multi-armed bandits] 
\label{lemma::divergence_decomposition}

Let $\nu = ((P_1, Q_1), \cdots, (P_K, Q_K))$ be the reward and constraint distributions associated with one instance of the single constraint multi-armed bandit, and let $\nu' = ((P_1', Q_1'), \cdots, (P_K', Q_K'))$ be the reward and constraint distributions associated with another constrained bandit instance. Fix some algorithm $\mathcal{A}$ and let $\mathbb{P}_\nu = \mathbb{P}_{\nu^\mathcal{A}}$ and $\mathbb{P}_{\nu'} = \mathbb{P}_{\nu'^\mathcal{A}}$ be the probability measures on the cannonical bandit model (see Section~4.6~of~\citealt{lattimore2018bandit}) induced by the $T$ round interconnection of $\mathcal{A}$ and $\nu$ (respectively $\mathcal{A}$ and $\nu'$). Then,
\begin{equation*}
\mathrm{KL}(\mathbb{P}_\nu,\mathbb{P}_{\nu'}) = \sum_{a=1}^K \mathbb{E}_{\nu}\big[T_a(T)\big] \mathrm{KL}\big((P_a,Q_a), (P_a', Q_a')\big),
\end{equation*}
where $T_a(T)$ denotes the number of times that arm "$a$" has been pulled by $\mathcal{A}$ up to time $T$. 
\end{lemma}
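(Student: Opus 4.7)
The plan is to directly adapt the standard divergence-decomposition proof (Lemma~15.1 in~\citet{lattimore2018bandit}) by treating the per-round observation as the \emph{joint} outcome $Y_t := (R_t, C_t)$ drawn from the product distribution $(P_{A_t}, Q_{A_t})$. Once we make that identification, the constrained problem is formally an unconstrained bandit whose arms produce vector-valued rewards in a product space, and the standard argument goes through verbatim. I would be careful to state that the reward and cost at round $t$ are conditionally independent given the pulled arm (this is implicit in describing them as the pair of distributions $(P_a, Q_a)$), so that the joint KL on the right-hand side is well-defined.

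First I would write down the canonical model: the sample space is $([K] \times \mathcal{Y})^T$, where $\mathcal{Y}$ is the observation space for $(R,C)$, and under $\mathbb{P}_\nu$ the density (w.r.t.~a dominating product measure) of a trajectory $(a_1, y_1, \ldots, a_T, y_T)$ factors as
\[
p_\nu(a_1, y_1, \ldots, a_T, y_T) \;=\; \prod_{t=1}^T \pi_t(a_t \mid a_1, y_1, \ldots, a_{t-1}, y_{t-1}) \cdot (P_{a_t} \otimes Q_{a_t})(y_t),
\]
and similarly for $\mathbb{P}_{\nu'}$ with $(P'_{a_t}, Q'_{a_t})$. The crucial observation is that the algorithm's conditional law $\pi_t(\cdot \mid \cdot)$ is identical under both measures, since $\mathcal{A}$ is fixed.

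Next, I would apply the chain rule for KL divergence to this factorization. All $\pi_t$ terms cancel, leaving
\[
\mathrm{KL}(\mathbb{P}_\nu, \mathbb{P}_{\nu'}) \;=\; \sum_{t=1}^T \mathbb{E}_{\nu}\!\left[\mathrm{KL}\bigl((P_{A_t}, Q_{A_t}), (P'_{A_t}, Q'_{A_t})\bigr)\right].
\]
Then I would rewrite the per-round KL using indicators, $\mathrm{KL}((P_{A_t},Q_{A_t}),(P'_{A_t},Q'_{A_t})) = \sum_{a=1}^K \mathbf{1}\{A_t = a\}\, \mathrm{KL}((P_a,Q_a),(P'_a,Q'_a))$, swap the finite sums, and pull $\mathrm{KL}((P_a,Q_a),(P'_a,Q'_a))$ outside the expectation (it is deterministic). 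Since $T_a(T) = \sum_{t=1}^T \mathbf{1}\{A_t = a\}$, this yields the claimed identity.

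The main obstacle is conceptual rather than technical: ensuring that the pair $(P_a, Q_a)$ is correctly interpreted as the joint law of $(R_t, C_t)$ given $A_t = a$, so that the factorization above is legitimate and the per-round KL is the KL of the joint distributions. Once this interpretation is fixed, every remaining step is either a direct invocation of Lemma~15.1 of~\citet{lattimore2018bandit} or an elementary manipulation of indicator sums.
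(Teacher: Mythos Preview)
Your proposal is correct and matches the paper's approach exactly: the paper simply states that the proof is a direct application of Lemma~15.1 in~\citet{lattimore2018bandit} to this case, which is precisely what you do by treating the per-round observation as the pair $(R_t,C_t)$ and running the standard chain-rule/indicator argument. If anything, you supply more detail than the paper does; the only minor remark is that conditional independence of $R_t$ and $C_t$ given $A_t$ is not actually needed for the identity---the argument works with any joint law of $(R_t,C_t)$ given the arm, and the $\mathrm{KL}\big((P_a,Q_a),(P'_a,Q'_a)\big)$ term is then just the KL of those joints.
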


The following two lemmas will also be useful in our lower-bound proof, so we state them here. 

\begin{lemma}[Gaussian Divergence] 
\label{lemma::gaussian_divergence}
The divergence between two multi-variate normal distributions with means $\mu_1, \mu_2 \in \mathbb{R}^d$ and spherical identity covariance $\mathbb{I}_d$ is equal to
\begin{equation*}
\mathrm{KL}\big(\mathcal{N}(\mu_1, \mathbb{I}_d) , \mathcal{N}(\mu_2, \mathbb{I}_d)\big) = \| \mu_1- \mu_2\|^2/2.
\end{equation*}
\end{lemma}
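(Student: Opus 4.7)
The plan is to prove this standard identity by direct computation from the definition of KL divergence. I would start by writing out the densities of the two multivariate normals: for $i \in \{1,2\}$,
\[
p_i(x) = (2\pi)^{-d/2}\exp\bigl(-\tfrac{1}{2}\|x-\mu_i\|^2\bigr),
\]
so that the log-density ratio is
\[
\log\frac{p_1(x)}{p_2(x)} = \tfrac{1}{2}\bigl(\|x-\mu_2\|^2 - \|x-\mu_1\|^2\bigr).
\]

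Next I would expand the right-hand side algebraically. Writing $\|x-\mu_i\|^2 = \|x\|^2 - 2\langle x,\mu_i\rangle + \|\mu_i\|^2$, the $\|x\|^2$ terms cancel, leaving
\[
\log\frac{p_1(x)}{p_2(x)} = \langle x,\mu_1-\mu_2\rangle + \tfrac{1}{2}\bigl(\|\mu_2\|^2 - \|\mu_1\|^2\bigr).
\]
Then I would take expectation under $X\sim\mathcal{N}(\mu_1,\mathbb{I}_d)$, using $\mathbb{E}[X]=\mu_1$, to obtain
\[
\mathrm{KL}\bigl(\mathcal{N}(\mu_1,\mathbb{I}_d),\mathcal{N}(\mu_2,\mathbb{I}_d)\bigr) = \langle \mu_1,\mu_1-\mu_2\rangle + \tfrac{1}{2}\bigl(\|\mu_2\|^2 - \|\mu_1\|^2\bigr).
\]
Finally I would rearrange: $\langle\mu_1,\mu_1-\mu_2\rangle = \|\mu_1\|^2 - \langle\mu_1,\mu_2\rangle$, so the expression collapses to $\tfrac{1}{2}\bigl(\|\mu_1\|^2 - 2\langle\mu_1,\mu_2\rangle + \|\mu_2\|^2\bigr) = \tfrac{1}{2}\|\mu_1-\mu_2\|^2$, which is the claimed identity.

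There is no real obstacle here; this is a textbook computation included only so that it can be invoked later in the lower-bound proof (alongside Lemma~\ref{lemma::divergence_decomposition}) to bound $\mathrm{KL}(\mathbb{P}_\nu,\mathbb{P}_{\nu'})$ for Gaussian reward/cost instances. The only minor care needed is to note that we are in the identity-covariance setting, so the determinant term in the general Gaussian KL formula vanishes and only the Mahalanobis mean-shift term survives.
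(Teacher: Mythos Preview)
Your proof is correct. The paper actually states this lemma without proof, treating it as a standard fact, so your direct computation from the definition of KL divergence is exactly the textbook argument one would supply and there is nothing to compare against.
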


\begin{lemma}\label{lemma::binary_entropy_definition_properties}
The binary relative entropy to be
\begin{equation*}
d(x,y) = x\log\left(\frac{x}{y}\right) + (1-x)\log\left(\frac{1-x}{1-y}\right),
\end{equation*}
and satisfies
\begin{equation}
\label{equation::lower_bound_divergence}
d(x,y) \geq (1/2)\log(1/4y),
\end{equation}
for $x \in [1/2,1]$ and $y \in (0,1)$. 

\end{lemma}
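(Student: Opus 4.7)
The plan is to reduce the claimed inequality to the non-negativity of three simple terms. First I would expand
\begin{equation*}
d(x,y) = -H(x) - x\log y - (1-x)\log(1-y),
\end{equation*}
where $H(x) = -x\log x - (1-x)\log(1-x)$ is the binary entropy, and rewrite the right-hand side of the target inequality as
\begin{equation*}
\tfrac{1}{2}\log\!\tfrac{1}{4y} = -\log 2 - \tfrac{1}{2}\log y.
\end{equation*}
Subtracting, the desired inequality $d(x,y) - \tfrac{1}{2}\log\!\tfrac{1}{4y} \geq 0$ becomes
\begin{equation*}
\bigl(\log 2 - H(x)\bigr) \;+\; \bigl(-(x-\tfrac{1}{2})\log y\bigr) \;+\; \bigl(-(1-x)\log(1-y)\bigr) \;\geq\; 0.
\end{equation*}

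The core of the argument is then to observe that each of the three bracketed terms is non-negative under the stated hypotheses. The first term is non-negative because $H(x) \leq \log 2$ for any $x \in [0,1]$ (the binary entropy is maximized at $1/2$). For the second term, the hypothesis $x \geq 1/2$ gives $x - 1/2 \geq 0$, while $y \in (0,1)$ gives $\log y \leq 0$, so their product is non-positive and its negation is non-negative. For the third term, $1-x \geq 0$ and $\log(1-y) \leq 0$, so again the negation is non-negative. Summing the three non-negative contributions yields the claim.

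There is no genuine obstacle here: the statement is an elementary inequality, and the only minor care required is to keep track of signs when splitting off the $\log 2$ constant from the right-hand side and when regrouping the $\log y$ terms to produce the factor $(x - 1/2)$. The hypothesis $x \geq 1/2$ is used exactly and only to make the second term non-negative; if $x < 1/2$ were allowed, one would pick up a term of the wrong sign, and the inequality would fail for $y$ close to $0$. This explains why the restriction $x \in [1/2,1]$ is tight for the proof strategy.
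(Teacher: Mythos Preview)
Your proof is correct. The three-term decomposition is clean, and the sign analysis for each piece is accurate: $H(x)\le \log 2$, the factor $(x-\tfrac12)\ge 0$ paired with $\log y<0$, and $(1-x)\ge 0$ paired with $\log(1-y)<0$. The edge cases $x=1/2$ and $x=1$ also go through under the usual convention $0\log 0=0$.

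As for comparison: the paper states this lemma without proof in the lower-bound appendix, so there is no authors' argument to compare against. Your argument stands on its own and is the natural elementary proof.
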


\begin{lemma}[Adapted from~\citealt{kaufmann2016complexity}, Lemma 1.]
\label{lemma::binary_relative_entropy}
Let $\nu, \nu'$ be two constrained bandit models with $K$ arms. Borrow the setup, definitions and notations of Lemma~\ref{lemma::divergence_decomposition}, then for any measurable event $\mathcal{B} \in \mathcal{F}_T$:
\begin{equation}
\label{equation::relative_entropy_bound}
\mathrm{KL}(\mathbb{P}_\nu,\mathbb{P}_{\nu'}) = \sum_{a=1}^K \mathbb{E}_\nu\big[T_a(T)\big] \mathrm{KL}\big((P_a,Q_a),(P_a', Q_a')\big) \geq d\big(\mathbb{P}_\nu(\mathcal{B}),\mathbb{P}_{\nu'}(\mathcal{B})\big).
\end{equation}
\end{lemma}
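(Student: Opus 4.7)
The statement combines two distinct facts. The equality
\[
\mathrm{KL}(\mathbb{P}_\nu,\mathbb{P}_{\nu'}) = \sum_{a=1}^K \mathbb{E}_\nu\big[T_a(T)\big]\, \mathrm{KL}\big((P_a,Q_a),(P_a',Q_a')\big)
\]
is exactly the conclusion of Lemma~\ref{lemma::divergence_decomposition}, which is already available, so I would simply invoke it verbatim. The real content is therefore the inequality $\mathrm{KL}(\mathbb{P}_\nu,\mathbb{P}_{\nu'}) \geq d(\mathbb{P}_\nu(\mathcal{B}),\mathbb{P}_{\nu'}(\mathcal{B}))$, and my plan is to obtain it via the data-processing inequality for KL applied to the indicator of $\mathcal{B}$.

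Concretely, I would define the measurable map $\phi\colon \Omega \to \{0,1\}$ by $\phi(\omega) = \mathbf{1}_{\mathcal{B}}(\omega)$, and let $\phi_*\mathbb{P}_\nu$ and $\phi_*\mathbb{P}_{\nu'}$ denote the induced pushforward measures on $\{0,1\}$. Under $\phi_*\mathbb{P}_\nu$, the random variable $\phi$ is Bernoulli with parameter $\mathbb{P}_\nu(\mathcal{B})$, and analogously for $\nu'$. A one-line computation directly from the definition of $d(\cdot,\cdot)$ given in Lemma~\ref{lemma::binary_entropy_definition_properties} then yields
\[
\mathrm{KL}\bigl(\phi_*\mathbb{P}_\nu,\, \phi_*\mathbb{P}_{\nu'}\bigr) \;=\; d\bigl(\mathbb{P}_\nu(\mathcal{B}),\, \mathbb{P}_{\nu'}(\mathcal{B})\bigr).
\]

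The only remaining step is the data-processing inequality $\mathrm{KL}(\mathbb{P}_\nu,\mathbb{P}_{\nu'}) \geq \mathrm{KL}(\phi_*\mathbb{P}_\nu, \phi_*\mathbb{P}_{\nu'})$. I would either cite this as a standard fact, or give a short first-principles proof via the log-sum inequality applied to the two-element partition $\{\mathcal{B}, \mathcal{B}^c\}$: writing $\mathrm{KL}(\mathbb{P}_\nu,\mathbb{P}_{\nu'}) = \int \log(d\mathbb{P}_\nu/d\mathbb{P}_{\nu'})\, d\mathbb{P}_\nu$ and splitting the integral according to whether $\omega \in \mathcal{B}$ or $\omega \in \mathcal{B}^c$, Jensen's inequality inside each piece lower-bounds the result by the binary KL between the corresponding Bernoulli masses. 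No substantive obstacle is expected; both ingredients are classical, and the only mild care needed is to handle the case where $\mathbb{P}_\nu$ is not absolutely continuous with respect to $\mathbb{P}_{\nu'}$, in which case $\mathrm{KL}(\mathbb{P}_\nu,\mathbb{P}_{\nu'}) = +\infty$ and the inequality is trivial.
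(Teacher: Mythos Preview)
Your proposal is correct and is exactly the standard argument behind this result. The paper does not actually supply its own proof: the lemma is stated with the attribution ``Adapted from~\citealt{kaufmann2016complexity}, Lemma~1'' and is used without further justification, so your data-processing/log-sum argument is precisely what a reader would find upon following that citation.
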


We start by showing that under an appropriate noise assumption, it is possible to reduce the constrained (in expectation) Multi Armed Bandit (CE-MAB) problem studied in~\citet{pacchiano2020stochastic} to our setting. The argument behind the proof of the main result in this section, the lower bound Theorem~\ref{theorem::lower_bound} relies on the problem structure behind the LC-LUCB version of the CE-MAB problem given by this reduction.

\paragraph{Setup:} Let's first describe the CE-MAB setup. %
In the constrained $K$-armed bandit setting, the action sets satisfy $\mathcal{A}_t = \Delta_{K}$, where $\Delta_K$ is the $K$-dimensional simplex. The reward and cost parameters are reduced to the $K$-dimensional vectors containing the mean reward and cost values of the $K$ arms, i.e.,~$\theta_* =(\bar{r}_0,\ldots,\bar{r}_{K-1})$ and $\mu_* = (\bar{c}_0,\ldots,\bar{c}_{K-1})$. 

In this case $X_t \in \Delta_K$ and we assume that abusing notation $a_t \sim X_t$, an index in $[K]$ is sampled from the distribution $X_t$, after which the reward value and the cost satisfy:
\begin{equation*}
R_t = \bar{r}_{a_t} + \nu^r_t\;, \qquad\qquad C_t = \bar{c}_{a_t} + \nu^c_t\;,
\end{equation*}
where $\nu_t^r$ and $\nu_t^c$ are conditionally zero mean sub-Gaussian random variables. The learner's objective is to play policies $X_t$ such that for all $t$, $\langle X_t, \mu_* \rangle \leq \tau$ while at the same time maximizing $\langle X_t, \theta_*\rangle$. We work under the assumption that $\bar{c}_0$ is known to the learner and satisfies $\bar{c}_0 \leq \tau$.

\paragraph{Reduction:} We now show that it is possible to reduce the CE-MAB problem to the LC-LUCB setup. Using the notation in Assumption~\ref{ass:noise-sub-gaussian} we define $\xi_t^r = R_t - \sum_{a \in [K]} X_t(a) \bar{r}_a $ and $\xi_t^r = C_t - \sum_{a \in [K]} X_t(a) \bar{c}_a$. Where $X_t(a)$ corresponds to the $a-$th coordinate of $X_t$. Notice that:
\begin{equation*}
R_T = \langle X_t, \theta_* \rangle + \xi_t^r\;, \qquad\qquad C_T = \langle X_t, \mu_* \rangle + \xi_t^c\;,
\end{equation*}
with $\xi_t^r$ and $\xi_t^c$ both conditionally zero mean subgaussian random variables:

Indeed since $\{\bar{r}_a, \bar{c}_a\}_{a \in [K]}$ are all assumed to be bounded, the conditional subgaussianity assumption of $\xi_t^r$ and $\xi_t^c$ is satisfied for an appropriate choice of subgaussianity parameter $R$, dependent on the subgaussianity parmeters of $\nu_t^r, \nu_t^c$ and the boundedness of $\{\bar{r}_a, \bar{c}_a\}_{a \in [K]}$. This finalizes the reduction.

We now proceed to prove Theorem~\ref{theorem::lower_bound} the main result of this section:

\theoremlowerboundlclucb*

\begin{proof}
If $\max\left(d\sqrt{T}, \frac{1-r_0}{21(\tau - c_0)^2} \right) = d\sqrt{T}$, then the argument in Theorem 24.1 of \cite{lattimore2018bandit} yields the desired result by noting that the framework of constrained bandits subsumes linear bandits. In this case we conclude there is a constrained linear bandit instance with $\theta_* \in \{ -\frac{1}{\sqrt{T}}, \frac{1}{\sqrt{T}}\}^d$ and $\mathcal{A}_t = [-1,1]^d$ satisfying:
\begin{equation*}
\mathcal{R}_\mathcal{C}(T) \geq \frac{d\sqrt{T}}{8e^2}.
\end{equation*}

Let's instead focus on the case where $B = \max\big(\frac{d\sqrt{T}}{8e^2},\frac{1-r_0}{21(\tau - c_0)^2}\big) =\frac{1-r_0}{21(\tau - c_0)^2} $. %

Pick any algorithm $\mathfrak{A}$. We want to show that the algorithm's regret on some environment is as large as $B$. For the remainder of the proof we restrict ourselves to instances where $\mathcal{A}_t = \Delta_d$ and $\theta_* = \bar{r}$, $\mu_* = \bar{c}$ with $\bar{r}, \bar{c} \in [0,1]^d$ parametrize a constrained Multi Armed Bandit problem such that arm $0$, the (known) safe arm satisfies $\bar{r}_0 = r_0$ and $\bar{c}_0 = c_0$. 

If there was any such instance $\bar{r}, \bar{c}$ such that $\mathcal{R}_\mathcal{C}(T) > B$ there would be nothing to prove. Hence without loss of generality, and by the way of contradiction we assume algorithm $\mathfrak{A}$ satisfies $\mathcal{R}_\mathcal{C}(T) \leq B $ for all $\bar{r}, \bar{c} \in [0,1]^d$ with $\bar{r}
_0 = r_0$ and $\bar{c}_0 = c_0$ and where all arms have unit variance Gaussian rewards and costs. 

In what follows we denote $\mathcal{R}_{\mathcal{C}}(T, \bar{r}, \bar{c})$ as the regret incurred by algorithm $\mathfrak{A}$ on instance $\theta_* = \bar{r}, \mu_* = \bar{c}$.

For simplicity we will introduce a couple of shorthand notational choices. Let $c = \tau - c_0$, $\Delta = \frac{1-r_0}{7}$ and $D = \frac{8r_0-1}{7}$ when $r_0 \geq \frac{1}{8}$. This will make it easier to explain the logic of the proof argument. Let's consider the following constrained bandit instance inducing measure $\nu$:
\begin{align*}
    \bar{c}^1  &= (\tau - c, \qquad \tau+2c,\qquad \tau-c,\qquad \tau+2c, \qquad\ldots,\qquad \tau+2c )\\
    \bar{r}^1 &= (D+\Delta,\qquad \qquad D+8\Delta,\qquad\qquad D, \qquad\quad D+4\Delta,\qquad\quad \ldots, \quad\qquad D+4\Delta\text{ }).
\end{align*}
Notice that by definition $\bar{c}^1_0 = c_0$ and $\bar{r}^1_0 = r_0$. For the $(\bar{r}^1, \bar{c}^1)$ problem instance, the optimal policy is a mixture between arms $0$ and $1$, where arm $0$ is chosen with probability $2/3$ and arm $1$ with probability $1/3$. The value of this optimal policy equals $D + \frac{10}{3}\Delta $.

Let $\bar{T}_j(T) \in [0,T]$ be the total amount of probability mass that $\mathfrak{A}$ allocated to arm $j$ up to time $T$. %

Let's lower bound the regret in the event that $\bar{T}_0(T) < \frac{T}{2}$. By the feasibility assumption it follows the average visitation policy $\pi_T$ defined as $\pi_T(i) = \frac{\bar{T}_i(T)}{T}$ is feasible. 

When the event $\{ \bar{T}_0(T) < \frac{T}{2}\}$ holds policy $\pi_T$ satisfies $\pi_T(0) \leq \frac{1}{2}$. A simple computation shows that to maximize its cumulative reward while maintaining feasibility constrained to $\pi_T(0) \leq\frac{1}{2}$, $\pi_T$'s optimal mass allocation is 

\begin{equation*}
\pi_T(i) = \begin{cases}
        \frac{1}{2} &\text{if }i = 0\\
        \frac{1}{3} &\text{if }i=1\\
        \frac{1}{6} &\text{if } i= 2
        \end{cases}
\end{equation*}
This policy has a reward of $D + \frac{19\Delta}{6} $ and therefore the regret of $\pi_T$ is lower bounded by $\frac{\Delta}{6}$. Thus, the regret $\mathcal{R}_\mathcal{C}(T, \bar{r}^1,\bar{c}^1)$ can be lower bounded bounded as
\begin{equation*}
\mathcal{R}_\mathcal{C}(T, \bar{r}^1,\bar{c}^1)  \geq \frac{\Delta T}{6} \mathbb{P}_\nu\left( \bar{T}_0(T) < \frac{T}{2}  \right)
\end{equation*}

Since by assumption, $\mathfrak{A}$ satisfies $\mathcal{R}_\mathcal{C}(T, \bar{r}^1, \bar{c}^1) \leq B $:
\begin{align*}
   B \geq \mathcal{R}_\mathcal{C}(T, \bar{r}^1,\bar{c}^1) \geq \frac{\Delta T}{6} \mathbb{P}_\nu\left( \bar{T}_0(T) < \frac{T}{2}  \right)
\end{align*}
And therefore:
\begin{align}\label{equation::lower_bound_p_thalf_nu}
    \mathbb{P}_\nu\left( \bar{T}_0(T) \geq \frac{T}{2} \right) = 1-\mathbb{P}_\nu\left(  \bar{T}_0(T) < \frac{T}{2}  \right) \geq 1-\frac{6B}{\Delta T} \geq 1/2
\end{align}
The last inequality follows from the assumption $T \geq \max(d-1,\frac{168eB}{1-r_0})$ (recall that $\Delta= \frac{1-r_0}{7}$.

Let's now consider the following constrained bandit instance inducing measure $\nu'$:
\begin{align*}
    \bar{c}^2  &= (\tau - c, \qquad \tau+2c,\qquad \tau-c,\qquad \tau-c, \qquad\ldots,\qquad \tau+2c )\\
    \bar{r}^2 &= (D+\Delta,\qquad \qquad D+8\Delta,\qquad\qquad D, \qquad\quad D+4\Delta,\text{ }\qquad \ldots, \quad\qquad D+4\Delta\text{ }).
\end{align*}
In this instance the optimal policy is to play arm 3 deterministically. This policy achieves a reward of $D + 4\Delta$.

We will now lower bound the regret $\mathcal{R}_\mathcal{C}(T, \bar{r}^2, \bar{c}^2)$ under measure $\nu'$. We'll do so by lower bounding the regret in the event that $\{ \bar{T}_0 \geq \frac{T}{2}\} $ holds. 

Similar to the argument we expanded on above, when $\{ \bar{T}_0 \geq \frac{T}{2} \}$ feasibility  guarantees the average policy $\pi_T'(i) = \frac{\bar{T}_i(T)}{T}$ is feasible. When  $\{ \bar{T}_0 \geq \frac{T}{2} \}$  holds policy $\pi_T'$ satisfies $\pi_T'(0) \geq \frac{1}{2}$. Simple computations show that to maximize the cumulative reward of policy $\pi_T'$ while maintaining feasibility constrained to $\pi_T'(0) \geq \frac{1}{2}$ the optimal allocation satisfies,

\begin{equation*}
\pi_T'(i) = \begin{cases}
\frac{1}{2} &\text{if } i=0\\
\frac{1}{2} &\text{if }i=3
\end{cases}
\end{equation*}
This policy achieves an expected  reward of $D + \frac{5\Delta}{2}$ and therefore the regret of $\pi_T'$ is lower bounded by $D +  4 \Delta - D - \frac{5}{2}\Delta = \frac{3}{2}\Delta$. Therefore,
\begin{equation*}
\mathcal{R}_\mathcal{C}(T, \bar{r}^2, \bar{c}^2) \geq \frac{3}{2}\Delta \mathbb{P}_{\nu'}\left(  \bar{T}_0(T) \geq \frac{T}{2}  \right)
\end{equation*}
Since by assumption $\mathfrak{A}$ satisfies $\mathcal{R}_\mathcal{C}(T, \bar{r}^2, \bar{c}^2) \leq B $, we have
\begin{equation*}
    \mathbb{P}_{\nu'}\left( \bar{T}_0(T) \geq \frac{T}{2} \right) \leq \frac{2B}{3\Delta T} \leq \frac{1}{4e}.
\end{equation*}
The last inequality follows from the assumption that $T \geq \max(  d-1,\frac{168eB}{1-r_0})$. We now apply the results of Lemma~\ref{lemma::binary_entropy_definition_properties} to this upper bound and the lower bound from Equation~\ref{equation::lower_bound_p_thalf_nu} and obtain,

\begin{equation*}
d\left(\mathbb{P}_{\nu}\left( \bar{T}_0(T) \geq \frac{T}{2} \right), \mathbb{P}_{\nu'}\left( \bar{T}_0(T) \geq \frac{T}{2} \right)\right) \geq  1/2.
\end{equation*}

As a consequence of~\eqref{equation::lower_bound_divergence}, Lemma~\ref{lemma::gaussian_divergence} and~\ref{lemma::binary_relative_entropy}, we have 
\begin{align*}
\mathrm{KL}(\mathbb{P}_\nu , \mathbb{P}_{\nu'} ) &= \mathbb{E}_\nu\big[T_3(T)\big]\times\mathrm{KL}\left(
    \mathcal{N}\left(\binom{\tau+2c}{4\Delta}, \mathbb{I}_d\right) , \mathcal{N}\left(\binom{\tau-c}{4\Delta}, \mathbb{I}_d\right) \right) \\
    &= 2c^2\mathbb{E}_\nu\big[T_3(T)\big]\\
    &\geq d\left(\mathbb{P}_{\nu}\left( \bar{T}_0(T) \geq \frac{T}{2} \right), \mathbb{P}_{\nu'}\left( \bar{T}_0(T) \geq \frac{T}{2} \right)\right) \\
    &\geq \frac{1}{2}.
\end{align*}
Therefore, we can conclude
\begin{equation}
 \mathbb{E}_\nu[\bar{T}_3(T)]= \mathbb{E}_\nu[T_3(T)]  \geq \frac{1}{4c^2}.
\end{equation}
Since in $\nu$, any feasible policy with support in arm $4$ and no support in arm 2 has a suboptimality gap of $\frac{4}{3}\Delta$, we conclude the regret $\mathcal{R}_\mathcal{C}(T, \bar{r}^2, \bar{c}^2)$ must satisfy:
\begin{equation*}
    \mathcal{R}_\mathcal{C}(T, \bar{r}^2, \bar{c}^2) \geq \frac{\Delta}{3c^2}.
\end{equation*}
Since $\Delta= \frac{1-r_0}{7}$ and $D = \frac{8r_0-1}{7}$ and noting that in this case $\frac{\Delta}{3c^2} = B$. The result follows. 
\end{proof}

\theoremlowerboundlclucbmab*

\begin{proof}

If $\max\left(\frac{1}{27}\sqrt{(K-1)T}, \frac{1-r_0}{21(\tau - c_0)^2} \right) = \frac{1}{27}\sqrt{(K-1)T}$, then the argument in Theorem 15.2 of \cite{lattimore2018bandit} yields the desired result by noting that the framework of constrained bandits subsumes multi armed bandits. In this case we conclude there is a constrained multi armed bandit instance satisfying:
\begin{equation*}
\mathcal{R}_\mathcal{C}(T) \geq \frac{1}{27}\sqrt{(K-1)T}.
\end{equation*}

When $B = \max\big(\frac{d\sqrt{T}}{8e^2},\frac{1-r_0}{21(\tau - c_0)^2}\big) =\frac{1-r_0}{21(\tau - c_0)^2} $, the same argument as in the proof of Theorem~\ref{theorem::lower_bound} finalizes the result. 
\end{proof}

\section{Extensions}

\subsection{Unknown $c_0$ and $r_0$}
\label{section::unknown_c0}

In this section, we relax Assumption~\ref{ass:safe-action}, and instead assume that we only have the knowledge of a safe action $x_0$, and no knowledge about its cost $c_0$ and reward $r_0$. The same discussion applies to $\bar{c}_1$ and $\bar{r}_1$ in OPB. The objective will be to design an algorithm capable of estimating $c_0$ and $r_0$ up to an accuracy of $\tau-c_0$ and $1-r_0$ for $c_0$ and $r_0$ respectively. We summarize the algorithm in the box below,

  \begin{algorithm}[H]
    \textbf{Input:} Safe arm $x_0$. \\
    \For{$t=1, \ldots , T$}{
     1. Pull arm $x_0$.\\
     2. Compute average cost estimator $\widehat{c}_0(t)$.\\
     3. Compute average reward estimator $\widehat{r}_0(t)$. \\
     4. \If{ $\widehat{c}_0(t) + 3\sqrt{2 \log(1/\delta)/t } \leq \tau$:}
     {
        Stop estimating $c_0 $.
    }
     5. \If{ $\widehat{r}_0(t) + 3\sqrt{2 \log(1/\delta)/t } \leq 1$:}
     {
        Stop estimating $r_0$.
    }   
     6. \If{ $\widehat{c}_0(t) + 3\sqrt{2 \log(1/\delta)/t } \leq \tau$ and $\widehat{r}_0(t) + 3\sqrt{2 \log(1/\delta)/t } \leq 1$:}
     {
        Return $\widehat{c}_0(T_0^c) $ and $\widehat{r}_0(T_0^r)$.
    }
     }    
     \caption{Unknown $c_0$, $r_0$ estimation.}
    \label{alg::unknown_c0_r0}
    \end{algorithm}

For all $t$ rounds to produce empirical mean estimators $\widehat{c}_0$ and $\widehat{r}_0$. Note that for all $\delta \in (0,1)$ and all $t \leq T$, $\widehat{c}_0(t)$ and $\widehat{r}_0(t)$ satisfy,
\begin{align}
\mathbb{P}\big(|\widehat{c}_0(t) - c_0 |\leq \sqrt{2\log(2T/\delta)/t}\big) \geq 1- \delta/2T \label{equation::helper_unknown_c0}
\\
\mathbb{P}\big(|\widehat{r}_0(t) - r_0| \leq \sqrt{2\log(2T/\delta)/t}\big) \geq 1 -\delta/2T. \label{equation::helper_unknown_r0}
\end{align}
Let's define $\tilde{\mathcal{E}}$ as the event where Equations~\ref{equation::helper_unknown_c0} and~\ref{equation::helper_unknown_r0} hold for all $t \leq T$. The reasoning above implies $\mathbb{P}(\tilde{\mathcal{E}}) \geq 1-\delta $. 

Denote $T_0^c, T_0^r$ as the times when conditions 4. and 5. of Algorithm~\ref{alg::unknown_c0_r0} trigger. Let's analyze $T_0^c$. Since $T_0^c$ is the first time when conditions 4. triggers thus,

\begin{equation*}
c_0 + 2\sqrt{2\log(2T/\delta)/T^c_0}\stackrel{(i)}{\leq} \widehat{c}_0(T_0^c) + 3\sqrt{2\log(2T/\delta)/T^c_0}  \leq \tau. 
\end{equation*}

Where $(i)$ holds because of equation~\ref{equation::helper_unknown_c0}. Thus 
\begin{equation}\label{equation::upper_bounding_confidence_interfal_c0_r0_est}
\sqrt{2\log(2T/\delta)/T^c_0} \leq \frac{\tau - c_0}{2}. 
\end{equation}
Since $\widetilde{\mathcal{E}}$ implies $\widehat{c}_{0}(T_0^c) \in \left[c_0- \sqrt{2\log(2T/\delta)/t},c_0 + \sqrt{2\log(2T/\delta)/t} \right]$ we have,
\begin{equation*}
\tau - \widehat{c}_0(T_0^c)  \in \left[ \tau- c_0 - \sqrt{2\log(2T/\delta)/T^c_0}, \tau - c_0 + \sqrt{2\log(2T/\delta)/T^c_0} \right] \subseteq \left[ \frac{\tau - c_0}{2},\frac{3(\tau - c_0 )}{2}  \right]
\end{equation*}
Similarly we conclude that whenever $\widetilde{\mathcal{E}}$ holds, 
\begin{equation*}
1 - \widehat{r}_0(T_0^r)  \in \left[ 1- r_0 - \sqrt{2\log(2T/\delta)/T^r_0}, 1 - r_0 + \sqrt{2\log(2T/\delta)/T^r_0} \right] \subseteq \left[ \frac{1 - r_0}{2},\frac{3(1- r_0 )}{2}  \right]
\end{equation*}
We define $\widehat{\Delta}_c = \tau - \widehat{c}_0(T_0^c)$ and $\widehat{\Delta}_r = 1 - \widehat{r}_0(T_0^r)$. The above discussion implies $\Delta_c$ and $\Delta_r$ are upper and lower bounded by constant multiples of $\tau - c_0$ and $1-r_0$ respectively.

When $\widetilde{\mathcal{E}}$ holds, Equation~\ref{equation::upper_bounding_confidence_interfal_c0_r0_est} implies $T_0^c \geq \frac{8  \log(2T/\delta)}{(\tau - c_0)^2}$ and $T_0^r\geq  \frac{8  \log(2T/\delta)}{(1 - r_0)^2}$. We now see that we can also upper bound these quantities, let's work through the argument for $c_0$. For all $t $ such that $\sqrt{2\log(2T/\delta)/t} \leq \frac{\tau - c_0}{4}$, when $\widetilde{\mathcal{E}}$ holds,
\begin{equation*}
\widehat{c}_0(t) + 3 \sqrt{2\log(2T/\delta)/t} \leq c_0 + 4\sqrt{2\log(2T/\delta)/t} \leq \tau.
\end{equation*}
Thus, condition 4. of Algorithm~\ref{alg::unknown_c0_r0} holds. Similarly for all $t$ such that $\sqrt{2\log(2T/\delta)/t} \leq \frac{1 - r_0}{4}$, when $\widetilde{\mathcal{E}}$ holds,
\begin{equation*}
\widehat{r}_0(t) + 3 \sqrt{2\log(2T/\delta)/t} \leq r_0 + 4\sqrt{2\log(2T/\delta)/t} \leq 1.
\end{equation*}
Thus condition 5. of Algorithm~\ref{alg::unknown_c0_r0} holds. This implies $T_0^c \leq \frac{32  \log(2T/\delta)}{(\tau - c_0)^2}$ and $T_0^r\leq  \frac{32  \log(2T/\delta)}{(1 - r_0)^2}$. If we define as $T_0$ to the time-step when condition 6. of Algorithm~\ref{alg::unknown_c0_r0} triggers, it follows that $$T_0 \leq 32  \log(2T/\delta)\max\left(\frac{1}{(\tau - c_0)^2},\frac{1}{(1 - r_0)^2} \right) .$$
 We then set $\frac{\alpha_r}{\alpha_c} = \widehat{\Delta}_r/\widehat\Delta_c$ and run LC-LUCB for rounds $t > T_0$. Since the scaling of $\alpha_r$ w.r.t. $\alpha_c$ is optimal up to constants, the same regret bounds (plus the regret incurred up to $T_0$) would hold. We can upper bound the regret incurred during $T_0$, 
\begin{equation*}
32  \log(2T/\delta)\max\left(\frac{1-r_0}{(\tau - c_0)^2},\frac{1}{1 - r_0} \right)
\end{equation*}

Therefore, in case $c_0$ is unknown, the algorithm proceeds by warm-starting our estimates of $\theta_*$ and $\mu_*$ using the data collected by playing the safe arm $x_0$. However, instead of estimating $\mu_*^{o, \perp}$, we build an estimator for $\mu_*$ over all its directions, including $e_0$, similar to what Algorithm~\ref{alg::linear_optimism_pessimism} (LC-LUCB) and Algorithm~\ref{alg::linear_optimism_pessimism} (OPLB) do for the reward parameter $\theta_*$. For the multi-constrained setting the estimation procedure of Algorithm~\ref{alg::unknown_c0_r0} can be used to estimate each of the cost signals simultaneously. An equivalent stopping condition yields a scheme to estimate the minimal cost gap up to constant accuracy. The same analysis as in the single constraint case holds.

\section{Nonlinear Rewards}\label{section::appendix_nonlinear_rewards}

\subsection{Properties of Least Squares Estimators}\label{section::LS_estimators_properties}

In this section we derive convergence properties of least squares estimators. These results will be crucial to analyze the NLC-LUCB algorithm in the following section.  Let $\{X_t, Y_t\}_{t=1}^\infty$ be a martingale sequence such that $X_t \in \mathcal{X}$ and $Y_t \in \mathbb{R}$ with $Y_t = f_\star(X_t) + \xi_t$ where $\xi_t$  satisfies Assumption~\ref{ass:noise-sub-gaussian}. Throughout this section we will use the notation $\mathcal{F}_{t-1} = \sigma(X_1, Y_1 \cdots, X_{t-1}, Y_{t-1})$ to denote the sigma algebra generated by all previous outcomes. 

Let $\mathcal{G}$ be a finite\footnote{Our results can be easily extended to the case of infinite function classes with bounded metric entropy.} class of functions such that $f_\star \in \mathcal{G}$ and for all $t \in \mathbb{N}$ consider the least squares regression estimator $\widehat{f}_t$ defined as,

\begin{equation*}
\widehat{f}_t = \min_{f \in \mathcal{F}} \sum_{\ell=1}^t (f(X_\ell) - Y_\ell)^2
\end{equation*}
We assume that 

\begin{assumption}[Bounded responses]
There exists a $B > 0$ such that for all $X \in \mathcal{X}$, and all $f\in \mathcal{F}$,
\begin{equation*}
| f(X) | \leq B, \text{ and } |Y_i| \leq B.
\end{equation*}
\end{assumption}

Our results rely on the following Uniform Empirical Bernstein bound from~\cite{howard2021time}.

\begin{lemma}[Uniform empirical Bernstein bound]
\label{lem:uniform_emp_bernstein}
In the terminology of \citet{howard2021time}, let $Z_t = \sum_{i=1}^t Y_i$ be a sub-$\psi_P$ process with parameter $c > 0$ and variance process $W_t$. Then with probability at least $1 - \widetilde{\delta}$ for all $t \in \mathbb{N}$
\begin{align*}
    Z_t &\leq  1.44 \sqrt{\max(W_t , m) \left( 1.4 \ln \ln \left(2 \left(\max\left(\frac{W_t}{m} , 1 \right)\right)\right) + \ln \frac{5.2}{\widetilde{\delta}}\right)}\\
   & \qquad + 0.41 c  \left( 1.4 \ln \ln \left(2 \left(\max\left(\frac{W_t}{m} , 1\right)\right)\right) + \ln \frac{5.2}{\widetilde{\delta}}\right)
\end{align*}
where $m > 0$ is arbitrary but fixed.
\end{lemma}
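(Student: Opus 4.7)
The plan is to recognize that this statement is not a novel lemma to be proved from scratch, but rather a direct invocation (with a notational translation) of a time-uniform concentration inequality established by \citet{howard2021time}. Their paper develops a general line-crossing/curve-crossing machinery for sub-$\psi$ processes with associated variance processes $W_t$, and the specific empirical Bernstein form stated here matches their Theorem on empirical Bernstein bounds (applied with the polynomial stitching function that yields the $\ln\ln$ iterated-logarithm correction).

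Concretely, I would carry out the proof in three short steps. First, verify that the process $Z_t = \sum_{i=1}^t Y_i$, together with the variance process $W_t$ and scale parameter $c$, fits the sub-$\psi_P$ framework of \citet{howard2021time} (this is essentially by definition of sub-$\psi_P$, so it requires only quoting Assumption~\ref{ass:noise-sub-gaussian} or the corresponding bounded-response hypothesis to exhibit the appropriate $\psi_P$ bound on the cumulant generating function). Second, invoke their stitched boundary construction, picking the ``geometric'' sequence of intrinsic-time thresholds that produces precisely the $1.44$, $1.4$, $0.41$, and $5.2$ numerical constants appearing in the statement, with the free parameter $m>0$ playing the role of the initial intrinsic-time scale around which the stitching is centered. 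Third, apply their uniform line-crossing inequality to obtain the tail bound that holds simultaneously for all $t\in\mathbb{N}$ with probability at least $1-\widetilde{\delta}$.

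The main (and really the only) obstacle is the notational translation: the constants in the displayed bound are exactly those produced by the polynomial stitching choice in \citet{howard2021time}, and one must carefully match their parameterization (their $k$, $\eta$, $s$ constants) to verify that the square-root and linear terms above are reproduced verbatim. Beyond this bookkeeping step, no substantive new work is needed; the lemma is essentially a named corollary of the uniform empirical Bernstein inequality from Section~4 of their paper, restated in the form most convenient for the subsequent least-squares analysis in Appendix~\ref{section::LS_estimators_properties}.
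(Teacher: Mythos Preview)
Your proposal is correct and matches the paper's treatment: the paper does not prove this lemma at all but simply states it as a direct citation of the uniform empirical Bernstein bound from \citet{howard2021time}. Your recognition that no substantive new work is needed, and that the only content is a notational translation matching the stitched-boundary constants, is exactly right.
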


As a corollary of Lemma~\ref{lem:uniform_emp_bernstein} we can show the following,

\begin{lemma}[Freedman]\label{lemma:super_simplified_freedman}
Suppose $\{ X_t \}_{t=1}^\infty$ is a martingale difference sequence with $| X_t | \leq b$. Let $S_t = \sum_{\ell=1}^t X_\ell^2$ For any $\widetilde{\delta} \in (0,1)$, with probability at least $1-\widetilde{\delta}$,
\begin{equation*}
   \sum_{\ell=1}^t X_\ell \leq    4\sqrt{S_t \ln \frac{12\ln 2 t }{\widetilde{\delta}}  } + 6b \ln \frac{12\ln  2t }{\widetilde{\delta}}  .
\end{equation*}
for all $t \in \mathbb{N}$ simultaneously.
\end{lemma}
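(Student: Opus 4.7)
The plan is to invoke Lemma~\ref{lem:uniform_emp_bernstein} directly. Since $\{X_t\}$ is a martingale difference sequence with $|X_t|\leq b$, the partial sum process $Z_t=\sum_{\ell=1}^t X_\ell$ is sub-$\psi_P$ in the sense of Howard et al.\ with range parameter $c=b$ and with empirical variance process that can be taken to be $W_t=S_t=\sum_{\ell=1}^t X_\ell^2$ (this is the standard empirical Bernstein variance proxy for bounded MDSs). Thus we can plug $Z_t$, $W_t=S_t$, $c=b$ directly into Lemma~\ref{lem:uniform_emp_bernstein}.

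The free parameter $m$ in Lemma~\ref{lem:uniform_emp_bernstein} is the next choice to make. I would set $m=b^2$; this gives two immediate simplifications. First, since $W_t=S_t\leq t\,b^2$, we have $\max(W_t/m,1)\leq t$, so the iterated-logarithm term collapses to $\ln\ln(2t)$. Second, $\max(W_t,m)\leq W_t+m=S_t+b^2$, and then $\sqrt{(S_t+b^2)L}\leq \sqrt{S_tL}+b\sqrt{L}$ for the bracketed log-quantity $L=1.4\ln\ln(2t)+\ln(5.2/\widetilde\delta)$. Assembling the pieces from Lemma~\ref{lem:uniform_emp_bernstein} then gives, uniformly in $t\in\mathbb{N}$,
\begin{equation*}
Z_t\;\leq\;1.44\sqrt{S_t\,L}\;+\;1.44\,b\sqrt{L}\;+\;0.41\,b\,L\;\leq\;1.44\sqrt{S_t\,L}\;+\;1.85\,b\,L,
\end{equation*}
using $\sqrt{L}\leq L$ (valid since $L\geq 1$ whenever $\widetilde\delta\leq 5.2/e$, and the statement is vacuous otherwise).

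The last step is a purely cosmetic comparison of log constants to match the form stated in the lemma. Writing $L'=\ln(12\ln(2t)/\widetilde\delta)=\ln 12+\ln\ln(2t)+\ln(1/\widetilde\delta)$, one checks that $L\leq 1.4\,L'$ because $1.4\ln 12 > \ln 5.2$ and $1.4\ln(1/\widetilde\delta)\geq \ln(1/\widetilde\delta)$. Substituting this and absorbing numerical factors ($1.44\sqrt{1.4}<4$ and $1.85\cdot 1.4<6$) yields the stated bound $\sum_{\ell=1}^t X_\ell \leq 4\sqrt{S_t\ln(12\ln(2t)/\widetilde\delta)}+6b\ln(12\ln(2t)/\widetilde\delta)$.

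The only nontrivial point is the first one: verifying that a bounded MDS with bound $b$ is sub-$\psi_P$ with parameter $c=b$ and empirical variance process $S_t$ in Howard et al.'s framework. This is essentially a standard calculation on the MGF of a bounded centered random variable (the same computation underlying Bennett's inequality), and once this identification is made the rest is bookkeeping of constants. I expect the bookkeeping to be the bulkiest but least conceptually interesting part; the only place care is needed is the $\sqrt{\max(W_t,m)}$ term, where the choice $m=b^2$ is what makes both the $\ln\ln$ argument and the residual $b\sqrt{L}$ term come out to the right order.
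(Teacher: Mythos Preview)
Your approach is essentially the same as the paper's: both invoke Lemma~\ref{lem:uniform_emp_bernstein} with $c=b$ and $m=b^2$, bound $\max(W_t/m,1)\leq t$ to simplify the iterated-log term, split $\sqrt{\max(W_t,m)}\leq\sqrt{W_t}+b$, and then absorb the numerical constants into $4$ and $6$. The one substantive difference is that the paper takes the variance process to be the predictable quadratic variation $W_t=\sum_{\ell}\mathrm{Var}_\ell(X_\ell)$ and only at the very end replaces it by $S_t$ via $W_t\leq S_t$, whereas you take $W_t=S_t$ from the outset as the empirical variance proxy; your choice is the more natural reading of ``empirical Bernstein'' in the Howard et al.\ framework and sidesteps the need to justify the final comparison between the conditional and realized second moments.
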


\begin{proof}
We are ready to use Lemma~\ref{lem:uniform_emp_bernstein} (with $c = b$). Let $S_t = \sum_{\ell=1}^t X_t$ and $W_t = \sum_{\ell=1}^t \mathrm{Var}_\ell(X_\ell)$. Let's set $m = b^2$. It follows that with probability $1-\widetilde{\delta}$ for all $t \in \mathbb{N}$

\begin{align*}
    S_t &\leq  1.44 \sqrt{\max(W_t , b^2) \left( 1.4 \ln \ln \left(2 \left(\max\left(\frac{W_t}{b^2} , 1 \right)\right)\right) + \ln \frac{5.2}{\widetilde{\delta}}\right)}\\
   & \qquad + 0.41 b  \left( 1.4 \ln \ln \left(2 \left(\max\left(\frac{W_t}{b} , 1\right)\right)\right) + \ln \frac{5.2}{\widetilde{\delta}}\right)\\
   &\leq 2 \sqrt{\max(W_t , b^2) \left( 2 \ln \ln \left(2 \left(\max\left(\frac{W_t}{b^2} , 1 \right)\right)\right) + \ln \frac{6}{\widetilde{\delta}}\right)}\\
   & \qquad + b  \left( 2 \ln \ln \left(2 \left(\max\left(\frac{W_t}{b^2} , 1\right)\right)\right) + \ln \frac{6}{\widetilde{\delta}}\right) \\
   &= 2\max(\sqrt{W_t}, b)A_t + bA_t^2\\
   &\leq 2\sqrt{W_t}A_t + 2bA_t + bA_t^2\\
   &\stackrel{(i)}{\leq} 2\sqrt{W_t}A_t + 3b A_t^2~,
\end{align*}
where $A_t = \sqrt{2 \ln \ln \left(2 \left(\max\left(\frac{W_t}{b^2} , 1\right)\right)\right) + \ln \frac{6}{\widetilde{\delta}}}$. Inequality $(i)$ follows because $A_t \geq 1$. By identifying $V_t = W_t$ we conclude that for any $\widetilde{\delta} \in (0,1)$ and $t \in \mathbb{N}$
\begin{equation*}
    \mathbb{P}\left(  \sum_{\ell=1}^t X_\ell >    2\sqrt{V_t}A_t + 3b A_t^2 \right) \leq \widetilde{\delta}
\end{equation*}
Where $A_t = \sqrt{2 \ln \ln \left(2 \left(\max\left(\frac{V_t}{b^2} , 1\right)\right)\right) + \ln \frac{6}{\widetilde{\delta}}}$. Since $V_t \leq tb^2$ with probability $1$,

\begin{equation*}
    \frac{V_t}{b^2} \leq t,
\end{equation*}

And therefore $2\ln \ln \left(2 \max(\frac{V_t}{b^2}, 1) \right) \leq 2 \ln \ln 2 t $ implying,
\begin{equation*}
    A_t \leq \sqrt{ 2 \ln \frac{12\ln t }{\widetilde{\delta}}   }
\end{equation*}
Thus
\begin{equation*}
    \mathbb{P}\left(  \sum_{\ell=1}^t X_\ell >    4\sqrt{V_t \ln \frac{12\ln 2t }{\widetilde{\delta}}  }A + 6b \ln \frac{12\ln 2t }{\widetilde{\delta}}  \right) \leq \widetilde{\delta}
\end{equation*}
Since $V_t \leq S_t$ the result follows. 
\end{proof}

\begin{lemma} \label{lemma::confidence_intervals_least_squares_regression} Let $\tilde \delta \in (0,1)$. The estimator $\widehat{f}_t$ satisfies,
\begin{equation*}
\sum_{\ell=1}^t \left( \widehat{f}_t(X_\ell) - f_\star(X_\ell) \right)^2 \leq \gamma(t, \tilde\delta)
\end{equation*}
for all $t \in \mathbb{N}$ with probability at least $1-\tilde \delta$, where $\gamma(t, \tilde\delta) = 256B(B+1) \log\left( \frac{12 |\mathcal{G}|\log 2t}{\tilde\delta} \right) $.
\end{lemma}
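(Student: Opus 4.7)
The plan is to follow the classical ``basic inequality'' route for least squares ERM, then apply the uniform Freedman-type bound (Lemma~\ref{lemma:super_simplified_freedman}) to the relevant martingale, upgraded to hold simultaneously over all $f \in \mathcal{G}$ by a union bound. This is the standard structure used, for example, in Proposition~5 of~\citet{russo2013eluder}, adapted to the Freedman inequality derived earlier in this section.

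\textbf{Step 1 (basic inequality).} Since $f_\star \in \mathcal{G}$ and $\widehat{f}_t$ minimizes the empirical squared loss, we have $\sum_{\ell=1}^t (\widehat{f}_t(X_\ell)-Y_\ell)^2 \le \sum_{\ell=1}^t (f_\star(X_\ell)-Y_\ell)^2$. Substituting $Y_\ell = f_\star(X_\ell)+\xi_\ell$ and cancelling $\sum_\ell \xi_\ell^2$ and $\sum_\ell f_\star(X_\ell)^2$ terms gives the self-bounding identity
\begin{equation*}
\sum_{\ell=1}^t \bigl(\widehat{f}_t(X_\ell)-f_\star(X_\ell)\bigr)^2 \;\le\; 2\sum_{\ell=1}^t \bigl(\widehat{f}_t(X_\ell)-f_\star(X_\ell)\bigr)\,\xi_\ell.
\end{equation*}

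\textbf{Step 2 (martingale concentration uniform in $\mathcal{G}$).} For each fixed $f \in \mathcal{G}$, the sequence $Z^f_\ell := (f(X_\ell)-f_\star(X_\ell))\,\xi_\ell$ is a martingale difference with respect to $\{\mathcal{F}_\ell\}$, because $X_\ell$ is $\mathcal{F}_{\ell-1}$-measurable and $\mathbb{E}[\xi_\ell \mid \mathcal{F}_{\ell-1}]=0$. Using $|f(X_\ell)-f_\star(X_\ell)|\le 2B$ and $|\xi_\ell| = |Y_\ell-f_\star(X_\ell)|\le 2B$, we obtain $|Z^f_\ell| \le 4B^2$ and $\sum_\ell (Z^f_\ell)^2 \le 4B^2 \sum_\ell (f(X_\ell)-f_\star(X_\ell))^2$. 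Writing $S_t(f) := \sum_{\ell=1}^t (f(X_\ell)-f_\star(X_\ell))^2$ and $L_t := \log(12|\mathcal{G}|\log(2t)/\widetilde{\delta})$, applying Lemma~\ref{lemma:super_simplified_freedman} with $b=4B^2$ and union bounding over the finite class $\mathcal{G}$ at level $\widetilde{\delta}/|\mathcal{G}|$ yields, with probability at least $1-\widetilde{\delta}$, for every $f\in\mathcal{G}$ and every $t\in\mathbb{N}$ simultaneously,
\begin{equation*}
\sum_{\ell=1}^t Z^f_\ell \;\le\; 8B\sqrt{S_t(f)\,L_t} \;+\; 24B^2\, L_t.
\end{equation*}

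\textbf{Step 3 (closing the quadratic).} Because the bound above is uniform in $f$, we may specialize it to the data-dependent choice $f=\widehat{f}_t$. Combining with the basic inequality gives $S_t(\widehat{f}_t) \le 16B\sqrt{S_t(\widehat{f}_t)\,L_t} + 48B^2 L_t$, a quadratic in $\sqrt{S_t(\widehat{f}_t)}$ whose positive root satisfies $\sqrt{S_t(\widehat{f}_t)} \le 19 B \sqrt{L_t}$, and hence $S_t(\widehat{f}_t) \le c\, B^2 L_t$ for an absolute constant $c$. Tracking the constants (or absorbing the $B^2$ versus $B(B+1)$ discrepancy using $B\le B(B+1)$ when $B\ge 1$) produces the advertised bound $\gamma(t,\widetilde{\delta}) = 256B(B+1)\,L_t$.

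\textbf{Main obstacle.} The main conceptual point is that $\widehat{f}_t$ is a data-dependent element of $\mathcal{G}$, so Freedman's inequality cannot be applied to its associated martingale directly; this forces the uniform-in-$f$ union bound and is what introduces the $\log|\mathcal{G}|$ factor in $\gamma(t,\widetilde{\delta})$. The main technical step is closing the self-bounding quadratic cleanly with the claimed constants; this is routine via the quadratic formula or an AM--GM split $16B\sqrt{S_t L_t} \le \tfrac{1}{2}S_t + 128 B^2 L_t$, but the bookkeeping must be done carefully to land on exactly the $256B(B+1)$ prefactor rather than a slightly larger constant.
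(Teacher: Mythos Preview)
Your proposal is correct and follows essentially the same route as the paper: the basic least-squares inequality, a union bound over $\mathcal{G}$ combined with the Freedman-type Lemma~\ref{lemma:super_simplified_freedman} applied to $Z^f_\ell = \xi_\ell(f(X_\ell)-f_\star(X_\ell))$, and then an AM--GM split to absorb the $\sqrt{S_t}$ term and solve for $S_t$. The only differences are bookkeeping: the paper takes $|Z^f_\ell|\le B$ and $(Z^f_\ell)^2 \le B^2(f-f_\star)^2$ (a looser reading of the boundedness assumption than your $4B^2$ choices), which is precisely what lands on the constant $256B(B+1)$; with your tighter increment bounds you get a constant of the form $cB^2$ rather than $cB(B+1)$, as you correctly flag.
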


\begin{proof}
Since $\widehat{f}_t$ is the minimizer of the square loss over the data up to time $t$,

\begin{equation*}
\sum_{\ell=1}^t (\widehat{f}_t(X_\ell) - Y_\ell)^2 \leq \sum_{\ell=1}^t (f_\star(X_\ell) - Y_\ell)^2 
\end{equation*}

Plugging in the definition $Y_\ell = f_\star(X_\ell)  + \xi_\ell$ and expanding both sides of the inequality yields,

\begin{equation*}
    \sum_{\ell=1}^t (\widehat{f}_t(X_\ell) - f_\star(X_\ell) - \xi_\ell)^2 \leq \sum_{\ell=1}^t \xi_\ell^2
\end{equation*}

and therefore,

\begin{equation}\label{equation::fundamental_least_squares_inequality}
    \sum_{\ell=1}^t (\widehat{f}_t(X_\ell) - f_\star(X_\ell) )^2 \leq 2\sum_{\ell=1}^t  \xi_\ell \left( \widehat{f}_t(X_\ell) - f_\star(X_\ell) \right)
\end{equation}

For any \underline{fixed} $f \in \mathcal{F}$ consider the martingale difference process $\{Z_{\ell}\}_{\ell=1}^\infty$,

\begin{equation*}
    Z^f_\ell =  \xi_\ell \left( f(X_\ell) - f_\star(X_\ell) \right).
\end{equation*}
Since $|Z_\ell| \leq B$ it is easy to see that by the boundedness assumption, 
$\mathbb{E}\left[  \left(Z^f_\ell\right)^2 \Big| \mathcal{O}_{\ell-1}\right] \leq B^2  \left( f(X_\ell) - f_\star(X_\ell) \right)^2 $. 
Thus, Freedman's inequality (Lemma~\ref{lemma:super_simplified_freedman}) implies,
\begin{align*}
\sum_{\ell=1}^t Z^f_\ell &\leq 4 \sqrt{ \sum_{\ell=1}^t \mathbb{E}\left[  \left(Z^f_\ell\right)^2 \Big| \mathcal{O}_{\ell-1}\right]  \log\left( \frac{12 |\mathcal{G}|\log 2t}{\tilde\delta} \right)  }    + 6B \log\frac{12 |\mathcal{G}|\log 2t}{\tilde\delta}\\
&\leq 4 B \sqrt{ \left[\sum_{\ell=1}^t \left( f(X_\ell) - f_\star(X_\ell) \right)^2 \right] \log\left( \frac{12 |\mathcal{G}|\log 2t}{\tilde\delta} \right)  }    + 6B \log\frac{12 |\mathcal{G}|\log 2t}{\tilde\delta}\\
&\stackrel{(i)}{\leq} \frac{\sum_{\ell=1}^t \left( f(X_\ell) - f_\star(X_\ell) \right)^2 }{4} + 64B^2 \log\left( \frac{12 |\mathcal{G}|\log 2t}{\tilde\delta} \right) + 6B \log\left( \frac{12 |\mathcal{G}|\log 2t}{\tilde\delta} \right) \\
&\leq \frac{\sum_{\ell=1}^t \left( f(X_\ell) - f_\star(X_\ell) \right)^2 }{4} + 64B(B+1) \log\left( \frac{12 |\mathcal{G}|\log 2t}{\tilde\delta} \right) 
\end{align*}
with probability at least $1-\frac{\tilde\delta}{ |\mathcal{G}|}$ for all $t \in \mathbb{N}$. Where $(i)$ holds because of the inequality $2\sqrt{ab} \leq \alpha a + \frac{b}{\alpha}$ for any $\alpha > 0$. Plugging this back into equation~\ref{equation::fundamental_least_squares_inequality} we obtain,
\begin{equation*}
   \sum_{\ell=1}^t (\widehat{f}_t(X_\ell) - f_\star(X_\ell) )^2 \leq 128B(B+1) \log\left( \frac{12 |\mathcal{G}|\log 2t}{\tilde\delta} \right)  + \frac{1}{2}  \sum_{\ell=1}^t \left( \widehat{f}_t(X_\ell) - f_\star(X_\ell)\right)^2
\end{equation*}
Canceling terms on both sides yields (and multiplying by two) yields,
\begin{equation*}
  \sum_{\ell=1}^t (\widehat{f}_t(X_\ell) - f_\star(X_\ell) )^2 \leq 256B(B+1) \log\left( \frac{12 |\mathcal{G}|\log 2t}{\tilde\delta} \right) 
\end{equation*}
The result follows.
\end{proof}

\begin{corollary}\label{corollary:confidence_sets_function_approx}
If $\gamma_r(t, \delta) = 512 \log\left( \frac{24 |\mathcal{G}_r|\log 2t}{\delta} \right),
\gamma_c(t, \delta) = 512 \log\left( \frac{24 |\mathcal{G}_c|\log 2t}{\delta} \right)$ then $\theta_\star \in C_t^r(\delta)$ and $\mu_\star \in C_t^c(\delta)$ for all $t \in \mathbb{N}$ with probability at least $1-\delta$. 
\end{corollary}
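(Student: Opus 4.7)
The plan is to obtain Corollary~\ref{corollary:confidence_sets_function_approx} as a direct instantiation of Lemma~\ref{lemma::confidence_intervals_least_squares_regression} applied twice, once for the reward estimator and once for the cost estimator, followed by a union bound.

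First I would verify the preconditions of the lemma. For the reward estimator, the filtration is generated by past actions and the responses $R_\ell = \theta_*(X_\ell) + \xi_\ell^r$; by Assumption~\ref{ass:bounded_reward_costs_function_approx} both $\theta_*(x) \in [0,1]$ and $R_\ell \in [0,1]$, so the bounded-responses hypothesis of the lemma holds with $B = 1$. By assumption $\theta_* \in \mathcal{G}_r$ is in the class over which the least-squares problem is solved, so $\widehat{\theta}_t$ is exactly the estimator analyzed in the lemma. The same checks apply verbatim to $\mu_*$, $\mathcal{G}_c$, and the cost responses $C_\ell = \mu_*(X_\ell) + \xi_\ell^c$.

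Next, I would invoke the lemma with $\tilde{\delta} = \delta/2$ for each estimator. With $B=1$ the prefactor $256B(B+1)$ equals $512$, and the logarithmic factor becomes $\log\bigl(12|\mathcal{G}_r|\log(2t)/(\delta/2)\bigr) = \log\bigl(24|\mathcal{G}_r|\log(2t)/\delta\bigr)$, matching the definition of $\gamma_r(t,\delta/2)$ (up to the harmless inflation of the $\delta$-argument by a factor of~$2$ that the paper absorbs into constants). Thus, with probability at least $1-\delta/2$ simultaneously for all $t \in \mathbb{N}$,
\begin{equation*}
\|\widehat{\theta}_t - \theta_*\|_{\mathcal{D}_t}^{\,2} \;=\; \sum_{\ell=1}^{t-1}\bigl(\widehat{\theta}_t(X_\ell) - \theta_*(X_\ell)\bigr)^{2} \;\leq\; \gamma_r(t,\delta/2),
\end{equation*}
which is exactly the membership condition $\theta_* \in C_t^r(\delta)$. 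The analogous statement with probability $1-\delta/2$ gives the inclusion needed for the cost confidence set, provided one also checks the additional equality constraint $\mu_*(x_0) = c_0$ that appears in the definition of $C_t^c(\delta)$; this is immediate from the safe-action Assumption~\ref{ass:safe-action}, which hands the learner the exact value $c_0 = \mu_*(x_0)$. A union bound over the two events then yields $\mathbb{P}(\mathcal{E}) \geq 1-\delta$, completing the proof.

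The argument is essentially bookkeeping: all of the probabilistic heavy lifting has already been done inside Lemma~\ref{lemma::confidence_intervals_least_squares_regression} (via the uniform Freedman inequality in Lemma~\ref{lemma:super_simplified_freedman}). The only ``step of substance'' is making sure the definitional constraint $\mu(x_0)=c_0$ in $C_t^c(\delta)$ is satisfied by the true $\mu_*$, and that the constants line up between the lemma's $\gamma$ and the algorithm's $\gamma_r, \gamma_c$; both are immediate from the stated assumptions and the choice $\tilde\delta=\delta/2$.
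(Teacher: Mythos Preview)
Your proposal is correct and follows exactly the paper's approach: the paper's entire proof is the single sentence ``This result is an immediate consequence of setting $B=1$ and $\tilde\delta = \delta/2$ in Lemma~\ref{lemma::confidence_intervals_least_squares_regression}.'' Your write-up simply spells out the bookkeeping (the constant $256B(B+1)=512$, the $12/\tilde\delta \to 24/\delta$ substitution, and the check that $\mu_*(x_0)=c_0$ from Assumption~\ref{ass:safe-action}) that the paper leaves implicit.
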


\begin{proof}
This result is an immediate consequence of setting $B=1$ and $\tilde \delta = \delta/2$ in Lemma~\ref{lemma::confidence_intervals_least_squares_regression}. 
\end{proof}

\subsection{Proof of Lemma~\ref{lemma::optimism_eluder}}\label{section::proof_lemma_optimism_eluder}

Notice that for any policy $\pi$ %
\begin{align}
        \widetilde{V}_t^c(\pi) \leq \mu_\star(\pi) +  \max_{\mu,\mu' \in C^c_t(\delta)} \mu(\pi) - \mu'(\pi). \label{equation::inequalities_V_tilde_t_c}
\end{align}
with probability at least $1-\delta$ for all $t \in \mathbb{N}$. This is because $\mu_\star$ belongs to $C^c_t(\delta)$ w.h.p and therefore
\begin{align*}
    \widetilde{V}_t^c(\pi) &= \max_{\mu \in C^c_t(\delta)} \mu(\pi) = \mu_\star(\pi) + \max_{\mu \in C^c_t(\delta) }\mu(\pi) - \mu_\star(\pi)\\
    &\leq \mu_\star(\pi) + \max_{\mu,\mu' \in C^c_t(\delta)} \mu(\pi) - \mu'(\pi).
\end{align*}
Similarly since $\theta_\star \in C^r_t(\delta)$ with high probability, $\max_{\theta \in C_t^r(\delta)} \theta(\pi) \geq \theta_\star(\pi) $ and therefore
\begin{align}
    \widetilde{V}_t^r(\pi) \geq \underbrace{\theta_\star(\pi)}_{V^r_t(\pi)} + \alpha_r \max_{\mu^{'} , \mu^{''}\in C^c_t(\delta)} \mu^{'} (\pi) - \mu^{''}(\pi) \label{equation::lower_bound_vtilder}
\end{align}

\lemmaoptimismeluder*

\begin{proof}
We are going to prove this result by splitting it into two cases determined by whether $\pi_t^\star$ belongs to $\widetilde{\Pi}_t$ or not. 

\textbf{Case 1. $\pi_t^* \in \widetilde{\Pi}_t$.} Recall that $\pi_t = \argmax_{\pi \in \widetilde{\Pi}_t} \widetilde{V}_t^r(\pi)$. It follows that $\widetilde{V}_t^r(\pi_t) \geq \widetilde{V}_t^r(\pi_t^*) \geq V_t^r(\pi_t^*)$ where the last inequality is true because $\widetilde{V}_t^r(\pi)$ is an optimistic estimator of the value of all policies (see Equation~\ref{equation::lower_bound_vtilder}).

\textbf{Case 2. $\pi_t^* \not\in \widetilde{\Pi}_t$.} Let $\pi_0 = \delta(x_0)$. By definition for all $\mu \in C_t^c(\delta)$ it follows that $\mu(x_0) = c_0$. Consider a mixture policy $\widetilde{\pi}_t = \gamma_t \pi_t^* + (1-\gamma_t) \pi_0$ where $\gamma_t$ is the smallest value in $[0,1]$ such that $\widetilde{\pi}_t \in \widetilde{\Pi}_t$. Let's see this value exists: 

Let $\widetilde{\mu}_t = \argmax_{\mu \in C_t^c(\delta) \text{ s.t } \mu(x_0) = c_0 }  \mu(\pi_t^\star, \mathcal{A}_t)$. Observe that $\widetilde{\mu}_t$ by definition also satisfies $\widetilde{\mu}_t = \argmax_{\mu \in C_t^c(\delta) \text{ s.t } \mu(x_0) = c_0} \mu(\gamma \pi_t^* + (1-\gamma) \pi_0, \mathcal{A}_t)$ and that $$\widetilde{V}_t^c(\gamma \pi_t^* + (1-\gamma) \pi_0)= \widetilde{\mu}_t(\gamma \pi_t^* + (1-\gamma) \pi_0, \mathcal{A}_t) = \gamma \widetilde{\mu}_t(\pi_t^*, \mathcal{A}_t) + (1-\gamma) c_0.$$ This shows there exists a value $\gamma_t \in [0,1]$ such that $\widetilde{V}_t^c( \widetilde{\pi}_t ) = \widetilde{\mu}_t(\gamma_t \pi_t^* + (1-\gamma_t) \pi_0, \mathcal{A}_t) = \tau$. Let's start by proving a lower bound for $\gamma_t$. By definition
$$\widetilde{V}_t^c(\widetilde{\pi}_t) = \gamma_t \widetilde{V}_t^c(\pi^*_t) + (1-\gamma_t) c_0 = \tau.$$ 
And therefore,
\begin{align}
\gamma_t = \frac{\tau - c_0}{\widetilde{V}_t^c(\pi^*_t) - c_0} &\stackrel{(i)}{\geq} \frac{\tau - c_0}{\mu_\star(\pi_t^*) - c_0 + \max_{\mu,\mu' \in C^c_t(\delta_c)} \mu(\pi_t^*, \mathcal{A}_t) - \mu'(\pi_t^*, \mathcal{A}_t)  } \notag\\
&\stackrel{(ii)}{\geq}  \frac{\tau - c_0}{\tau - c_0 + \max_{\mu,\mu' \in C^c_t(\delta_c)} \mu(\pi_t^*, \mathcal{A}_t) - \mu'(\pi_t^*, \mathcal{A}_t)  } \label{equation::lower_bound_gamma_t}
\end{align}
Where $(i)$ follows from~\ref{equation::inequalities_V_tilde_t_c} and $(ii)$ holds because it satisfies $\mu_\star(\pi_t^*, \mathcal{A}_t) \leq \tau$. Let $r_0 = \theta_\star(x_0)$. Since $\pi_t$ and $\widetilde \pi_t$ are both feasible, it follows that $\widetilde{V}_t^r(\pi_t) \geq \widetilde{V}_t^r(\widetilde{\pi}_t)$ and therefore,
\begin{align*}
    \widetilde{V}_t^r(\pi_t) &\geq \widetilde{V}_t^r(\widetilde{\pi}_t) = \gamma_t \widetilde{V}_t^r(\pi_t^*)  + (1-\gamma_t) r_0\\
    &\stackrel{(i)}{\geq} \gamma_t \left( \theta_\star(\pi_t^\star, \mathcal{A}_t) + \alpha_r \max_{\mu^{'} , \mu^{''}\in C^c_t(\delta)} \mu^{'} (\pi_t^\star, \mathcal{A}_t) - \mu^{''}(\pi_t^\star, \mathcal{A}_t) \right)+ (1-\gamma_t) r_0.
\end{align*}
Where $(i)$ is a result of inequality~\ref{equation::lower_bound_vtilder}. Let $C_1 = \max_{\mu^{'} , \mu^{''}\in C^c_t(\delta)} \mu^{'} (\pi_t^\star, \mathcal{A}_t) - \mu^{''}(\pi_t^\star, \mathcal{A}_t)$. Substituting the $\gamma_t$ lower bound from Equation~\ref{equation::lower_bound_gamma_t} in the RHS of the equation above,
\begin{equation*}
\gamma_t(\theta_\star(\pi_t^\star, \mathcal{A}_t) + \alpha_r C_1) + (1-\gamma_t) r_0= \frac{\tau-c_0}{\tau- c_0 + C_1} (\theta_\star(\pi_t^\star, \mathcal{A}_t) + \alpha_r C_1)  + \frac{C_1}{\tau- c_0 + C_1} r_0
\end{equation*}
Since $\theta_\star(\pi_t^*, \mathcal{A}_t) \leq 1$ (Assumption~\ref{ass:bounded_reward_costs_function_approx}), setting $\alpha_r = \frac{1-r_0}{\tau- c_0}$ is enough to guarantee the inequality $\widetilde{V}_t^r(\pi_t) \geq \theta_\star(\pi_t^\star, \mathcal{A}_t)$ holds. \end{proof}

\newpage

\section{Additional Experiments of Section~\ref{section::expectation_constraints}}

\begin{figure*}%
\centering
\subfigure{\includegraphics[width=0.32\linewidth]{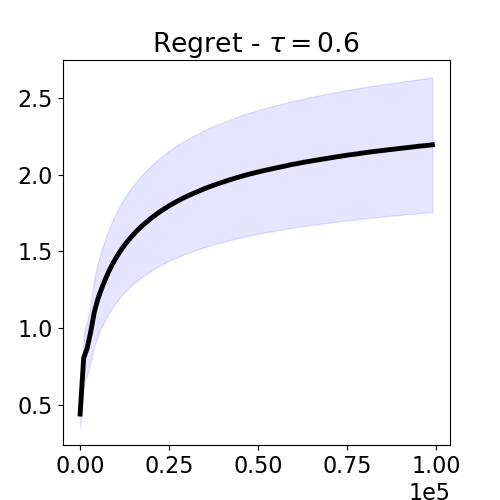}} \centering
\subfigure{\includegraphics[width=0.32\linewidth]{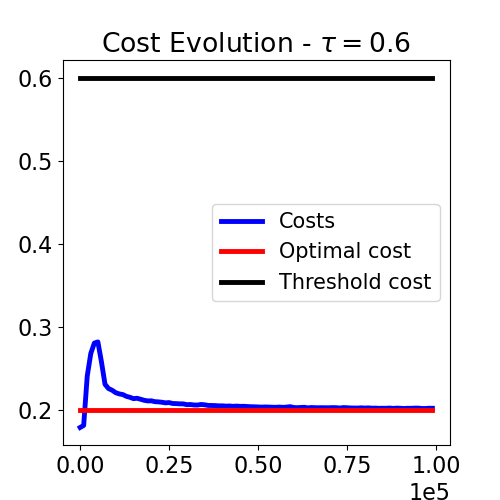}} 
\centering
\subfigure{\includegraphics[width=0.32\linewidth]{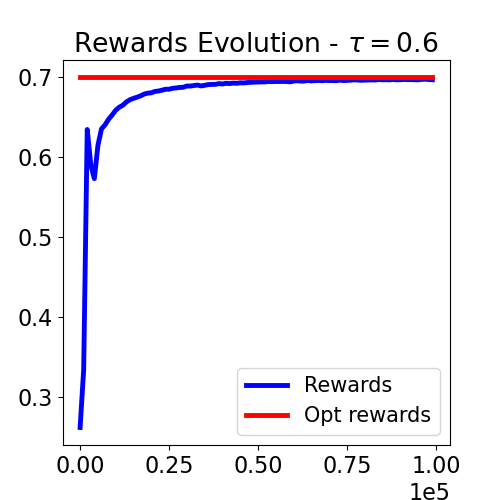}}
\centering
\subfigure{\includegraphics[width=0.32\linewidth]{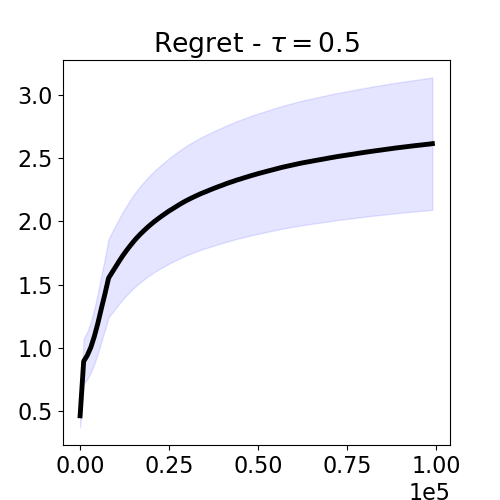}} 
\centering
\subfigure{\includegraphics[width=0.32\linewidth]{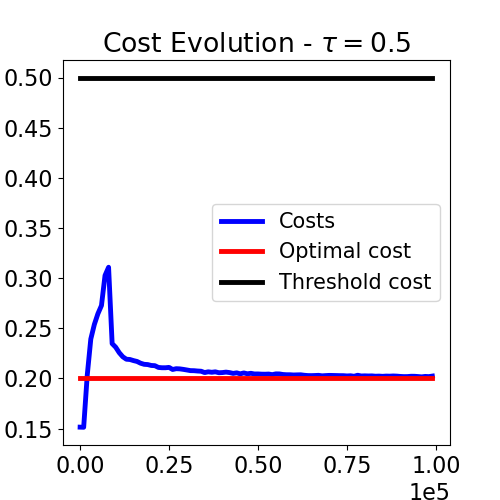}} 
\centering
\subfigure{\includegraphics[width=0.32\linewidth]{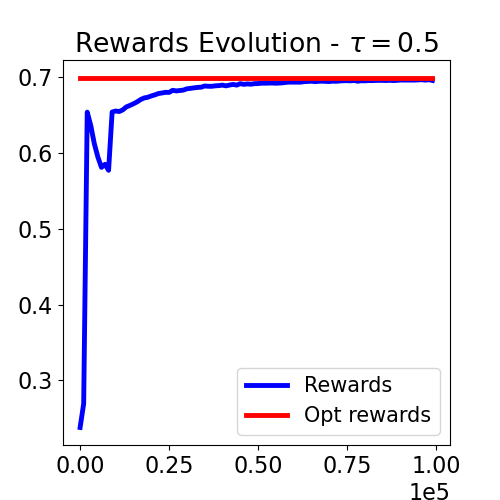}}
        \caption{\small{Regret {\em (left)}, cost {\em (middle)}, and reward {\em (right)} evolution of OPB in a $4$-armed bandit problem with Bernoulli reward and cost distributions with means $\bar{r} = (0.1, 0.2, 0.4, 0.7)$ and $\bar{c}=(0, 0.4, 0.5, 0.2)$. The cost of the safe arm (Arm~1) is $\bar{c}_1 = 0$.}}
        \label{fig:constrained_bandits3}    
\end{figure*}

\vskip 0.2in
\bibliography{ref}

\begin{thebibliography}{48}
\providecommand{\natexlab}[1]{#1}
\providecommand{\url}[1]{\texttt{#1}}
\expandafter\ifx\csname urlstyle\endcsname\relax
  \providecommand{\doi}[1]{doi: #1}\else
  \providecommand{\doi}{doi: \begingroup \urlstyle{rm}\Url}\fi

\bibitem[Abbasi-Yadkori et~al.(2011)Abbasi-Yadkori, P{\'a}l, and
  Szepesv{\'a}ri]{abbasi2011improved}
Y.~Abbasi-Yadkori, D.~P{\'a}l, and C.~Szepesv{\'a}ri.
\newblock Improved algorithms for linear stochastic bandits.
\newblock In \emph{Advances in Neural Information Processing Systems 24}, pages
  2312--2320, 2011.

\bibitem[Abeille and Lazaric(2017)]{abeille2017linear}
M.~Abeille and A.~Lazaric.
\newblock Linear {T}hompson sampling revisited.
\newblock \emph{Electronic Journal of Statistics}, 11\penalty0 (2):\penalty0
  5165--5197, 2017.

\bibitem[Agrawal and Devanur(2014)]{agrawal2014bandits}
S.~Agrawal and N.~Devanur.
\newblock Bandits with concave rewards and convex knapsacks.
\newblock In \emph{Proceedings of the Fifteenth ACM conference on Economics and
  computation}, pages 989--1006, 2014.

\bibitem[Agrawal and Devanur(2016)]{agrawal2016linear}
S.~Agrawal and N.~Devanur.
\newblock Linear contextual bandits with knapsacks.
\newblock In \emph{Advances in Neural Information Processing Systems 29}, pages
  3450--3458, 2016.

\bibitem[Agrawal and Goyal(2013{\natexlab{a}})]{agrawal13further}
S.~Agrawal and N.~Goyal.
\newblock Further optimal regret bounds for {Thompson} sampling.
\newblock In \emph{Proceedings of the 16th International Conference on
  Artificial Intelligence and Statistics}, pages 99--107, 2013{\natexlab{a}}.

\bibitem[Agrawal and Goyal(2013{\natexlab{b}})]{agrawal13thompson}
S.~Agrawal and N.~Goyal.
\newblock Thompson sampling for contextual bandits with linear payoffs.
\newblock In \emph{Proceedings of the 30th International Conference on Machine
  Learning}, pages 127--135, 2013{\natexlab{b}}.

\bibitem[Amani et~al.(2019)Amani, Alizadeh, and Thrampoulidis]{amani2019linear}
S.~Amani, M.~Alizadeh, and C.~Thrampoulidis.
\newblock Linear stochastic bandits under safety constraints.
\newblock In \emph{Advances in Neural Information Processing Systems}, pages
  9252--9262, 2019.

\bibitem[Amani et~al.(2021)Amani, Thrampoulidis, and Yang]{amani2021safe}
S.~Amani, C.~Thrampoulidis, and L.~Yang.
\newblock Safe reinforcement learning with linear function approximation.
\newblock In \emph{International Conference on Machine Learning}, pages
  243--253, 2021.

\bibitem[Auer et~al.(2002)Auer, Cesa-Bianchi, and Fischer]{auer02finitetime}
P.~Auer, N.~Cesa-Bianchi, and P.~Fischer.
\newblock Finite-time analysis of the multiarmed bandit problem.
\newblock \emph{Machine Learning}, 47:\penalty0 235--256, 2002.

\bibitem[Badanidiyuru et~al.(2013)Badanidiyuru, Kleinberg, and
  Slivkins]{badanidiyuru2013bandits}
A.~Badanidiyuru, R.~Kleinberg, and A.~Slivkins.
\newblock Bandits with knapsacks.
\newblock In \emph{{IEEE} 54th Annual Symposium on Foundations of Computer
  Science}, pages 207--216, 2013.

\bibitem[Badanidiyuru et~al.(2014)Badanidiyuru, Langford, and
  Slivkins]{badanidiyuru2014resourceful}
A.~Badanidiyuru, J.~Langford, and A.~Slivkins.
\newblock Resourceful contextual bandits.
\newblock In \emph{Proceedings of The 27th Conference on Learning Theory},
  pages 1109--1134, 2014.

\bibitem[Balakrishnan et~al.(2018)Balakrishnan, Bouneffouf, Mattei, and
  Rossi]{balakrishnan2018using}
A.~Balakrishnan, D.~Bouneffouf, N.~Mattei, and F.~Rossi.
\newblock Using contextual bandits with behavioral constraints for constrained
  online movie recommendation.
\newblock In \emph{IJCAI}, pages 5802--5804, 2018.

\bibitem[Bura et~al.(2022)Bura, Hasanzade~Zonuzy, Kalathil, Shakkottai, and
  Chamberland]{Bura22DD}
A.~Bura, A.~Hasanzade~Zonuzy, D.~Kalathil, S.~Shakkottai, and J.-F.
  Chamberland.
\newblock Dope: Doubly optimistic and pessimistic exploration for safe
  reinforcement learning.
\newblock In \emph{Advances in Neural Information Processing Systems}, 2022.

\bibitem[Chan et~al.(2023)Chan, Pacchiano, Tripuraneni, Song, Bartlett, and
  Jordan]{chan2021parallelizing}
J.~Chan, A.~Pacchiano, N.~Tripuraneni, Y.~Song, P.~Bartlett, and M.~Jordan.
\newblock Parallelizing contextual bandits.
\newblock \emph{arXiv:2105.10590}, 2023.

\bibitem[Chaudhary and Kalathil(2022)]{Chaudhary22SO}
S.~Chaudhary and D.~Kalathil.
\newblock Safe online convex optimization with unknown linear safety
  constraints.
\newblock In \emph{AAAI Conference on Artificial Intelligence}, pages
  6175--6182, 2022.

\bibitem[Chen et~al.(2022)Chen, Gangrade, and Saligrama]{Chen22DO}
T.~Chen, A.~Gangrade, and V.~Saligrama.
\newblock A doubly optimistic strategy for safe linear bandits.
\newblock \emph{arXiv preprint arXiv:2209.13694}, 2022.

\bibitem[Dani et~al.(2008)Dani, Hayes, and Kakade]{dani08stochastic}
V.~Dani, T.~Hayes, and S.~Kakade.
\newblock Stochastic linear optimization under bandit feedback.
\newblock In \emph{Proceedings of the 21st Annual Conference on Learning
  Theory}, pages 355--366, 2008.

\bibitem[Ding et~al.(2021)Ding, Wei, Yang, Wang, and Jovanovic]{Ding21PE}
D.~Ding, X.~Wei, Z.~Yang, Z.~Wang, and M.~Jovanovic.
\newblock Provably efficient safe exploration via primal-dual policy
  optimization.
\newblock In \emph{International Conference on Artificial Intelligence and
  Statistics}, pages 3304--3312, 2021.

\bibitem[Efroni et~al.(2020)Efroni, Mannor, and Pirotta]{efroni2020exploration}
Y.~Efroni, S.~Mannor, and M.~Pirotta.
\newblock Exploration-exploitation in constrained mdps.
\newblock \emph{arXiv:2003.02189}, 2020.

\bibitem[Foster and Rakhlin(2020)]{foster2020beyond}
Dylan Foster and Alexander Rakhlin.
\newblock Beyond ucb: Optimal and efficient contextual bandits with regression
  oracles.
\newblock In \emph{International Conference on Machine Learning}, pages
  3199--3210. PMLR, 2020.

\bibitem[Garcelon et~al.(2020)Garcelon, Ghavamzadeh, Lazaric, and
  Pirotta]{Garcelon20IA}
E.~Garcelon, M.~Ghavamzadeh, A.~Lazaric, and M.~Pirotta.
\newblock Improved algorithms for conservative exploration in bandits.
\newblock In \emph{AAAI}, 2020.

\bibitem[Ghosh et~al.(2022)Ghosh, Zhou, and Shroff]{Ghosh22PE}
A.~Ghosh, X.~Zhou, and N.~Shroff.
\newblock Provably efficient model-free constrained {RL} with linear function
  approximation.
\newblock In \emph{Advances in Neural Information Processing Systems},
  volume~35, pages 13303--13315, 2022.

\bibitem[Howard et~al.(2021)Howard, Ramdas, McAuliffe, and
  Sekhon]{howard2021time}
S.~Howard, A.~Ramdas, J.~McAuliffe, and J.~Sekhon.
\newblock Time-uniform, nonparametric, nonasymptotic confidence sequences.
\newblock \emph{The Annals of Statistics}, 2021.

\bibitem[Kaufmann et~al.(2016)Kaufmann, Capp{\'e}, and
  Garivier]{kaufmann2016complexity}
E.~Kaufmann, O.~Capp{\'e}, and A.~Garivier.
\newblock On the complexity of best-arm identification in multi-armed bandit
  models.
\newblock \emph{The Journal of Machine Learning Research}, 17\penalty0
  (1):\penalty0 1--42, 2016.

\bibitem[Kazerouni et~al.(2017)Kazerouni, Ghavamzadeh, Yadkori, and
  Van~Roy]{kazerouni2017conservative}
A.~Kazerouni, M.~Ghavamzadeh, Y.~Abbasi Yadkori, and B.~Van~Roy.
\newblock Conservative contextual linear bandits.
\newblock In \emph{Advances in Neural Information Processing Systems}, pages
  3910--3919, 2017.

\bibitem[Lai and Robbins(1985)]{lai85asymptotically}
T.~Lai and H.~Robbins.
\newblock Asymptotically efficient adaptive allocation rules.
\newblock \emph{Advances in Applied Mathematics}, 6\penalty0 (1):\penalty0
  4--22, 1985.

\bibitem[Lattimore and Szepesv{\'a}ri(2019)]{lattimore2018bandit}
T.~Lattimore and C.~Szepesv{\'a}ri.
\newblock \emph{Bandit Algorithms}.
\newblock Cambridge University Press, 2019.

\bibitem[Li et~al.(2010)Li, Chu, Langford, and Schapire]{Li10CB}
L.~Li, W.~Chu, J.~Langford, and R.~Schapire.
\newblock A contextual-bandit approach to personalized news article
  recommendation.
\newblock In \emph{WWW}, pages 661--670, 2010.

\bibitem[Liu and Wang(2023)]{liu2023global}
Chong Liu and Yu-Xiang Wang.
\newblock Global optimization with parametric function approximation.
\newblock In \emph{International Conference on Machine Learning}, pages
  22113--22136. PMLR, 2023.

\bibitem[Liu et~al.(2021{\natexlab{a}})Liu, Zhou, Kalathil, Kumar, and
  Tian]{liu2021learning}
T.~Liu, R.~Zhou, D.~Kalathil, P.~Kumar, and C.~Tian.
\newblock Learning policies with zero or bounded constraint violation for
  constrained {MDP}s.
\newblock \emph{Advances in Neural Information Processing Systems},
  34:\penalty0 17183--17193, 2021{\natexlab{a}}.

\bibitem[Liu et~al.(2021{\natexlab{b}})Liu, Li, Shi, and Ying]{Liu21EP}
X.~Liu, B.~Li, P.~Shi, and L.~Ying.
\newblock An efficient pessimistic-optimistic algorithm for stochastic linear
  bandits with general constraints.
\newblock \emph{Advances in Neural Information Processing Systems},
  34:\penalty0 24075--24086, 2021{\natexlab{b}}.

\bibitem[Maghsudi and Hossain(2016)]{maghsudi2016multi}
S.~Maghsudi and E.~Hossain.
\newblock Multi-armed bandits with application to 5{G} small cells.
\newblock \emph{IEEE Wireless Communications}, 23\penalty0 (3):\penalty0
  64--73, 2016.

\bibitem[Moradipari et~al.(2019)Moradipari, Amani, Alizadeh, and
  Thrampoulidis]{Moradipari19SL}
A.~Moradipari, S.~Amani, M.~Alizadeh, and C.~Thrampoulidis.
\newblock Safe linear {T}hompson sampling with side information.
\newblock \emph{arXiv:1911.02156}, 2019.

\bibitem[Ontan{\'o}n(2013)]{ontanon2013combinatorial}
S.~Ontan{\'o}n.
\newblock The combinatorial multi-armed bandit problem and its application to
  real-time strategy games.
\newblock In \emph{Ninth Artificial Intelligence and Interactive Digital
  Entertainment Conference}, 2013.

\bibitem[Pacchiano et~al.(2021)Pacchiano, Ghavamzadeh, Bartlett, and
  Jiang]{pacchiano2020stochastic}
A.~Pacchiano, M.~Ghavamzadeh, P.~Bartlett, and H.~Jiang.
\newblock Stochastic bandits with linear constraints.
\newblock In \emph{Proceedings of the 24th International Conference on
  Artificial Intelligence and Statistics}, 2021.

\bibitem[Rusmevichientong and Tsitsiklis(2010)]{rusmevichientong10linearly}
P.~Rusmevichientong and J.~Tsitsiklis.
\newblock Linearly parameterized bandits.
\newblock \emph{Mathematics of Operations Research}, 35\penalty0 (2):\penalty0
  395--411, 2010.

\bibitem[Russo et~al.(2018)Russo, {Van Roy}, Kazerouni, Osband, and
  Wen]{russo18tutorial}
D.~Russo, B.~{Van Roy}, A.~Kazerouni, I.~Osband, and Z.~Wen.
\newblock A tutorial on {Thompson} sampling.
\newblock \emph{Foundations and Trends in Machine Learning}, 11\penalty0
  (1):\penalty0 1--96, 2018.

\bibitem[Russo and Van~Roy(2013)]{russo2013eluder}
Daniel Russo and Benjamin Van~Roy.
\newblock Eluder dimension and the sample complexity of optimistic exploration.
\newblock \emph{Advances in Neural Information Processing Systems}, 26, 2013.

\bibitem[Shi et~al.(2023)Shi, Liang, and Shroff]{Shi23NO}
M.~Shi, Y.~Liang, and N.~Shroff.
\newblock A near-optimal algorithm for safe reinforcement learning under
  instantaneous hard constraints.
\newblock \emph{arXiv preprint arXiv:2302.04375}, 2023.

\bibitem[Thompson(1933)]{thompson33likelihood}
W.~Thompson.
\newblock On the likelihood that one unknown probability exceeds another in
  view of the evidence of two samples.
\newblock \emph{Biometrika}, 25\penalty0 (3-4):\penalty0 285--294, 1933.

\bibitem[Villar et~al.(2015)Villar, Bowden, and Wason]{Villar15MA}
S.~Villar, J.~Bowden, and J.~Wason.
\newblock Multi-armed bandit models for the optimal design of clinical trials:
  Benefits and challenges.
\newblock \emph{Statistical Science}, 30\penalty0 (2):\penalty0 199--215, 2015.

\bibitem[Wang et~al.(2022)Wang, Wagenmaker, and Jamieson]{Wang22BA}
Z.~Wang, A.~Wagenmaker, and K.~Jamieson.
\newblock Best arm identification with safety constraints.
\newblock In \emph{International Conference on Artificial Intelligence and
  Statistics}, pages 9114--9146, 2022.

\bibitem[Washburn(2008)]{washburn2008application}
R.~Washburn.
\newblock Application of multi-armed bandits to sensor management.
\newblock In \emph{Foundations and Applications of Sensor Management}, pages
  153--175. Springer, 2008.

\bibitem[Wei et~al.(2021)Wei, Liu, and Ying]{Wei21PE}
H.~Wei, X.~Liu, and L.~Ying.
\newblock A provably-efficient model-free algorithm for constrained {M}arkov
  decision processes.
\newblock \emph{arXiv preprint arXiv:2106.01577}, 2021.

\bibitem[Wu et~al.(2015)Wu, Srikant, Liu, and Jiang]{wu2015algorithms}
H.~Wu, R.~Srikant, X.~Liu, and C.~Jiang.
\newblock Algorithms with logarithmic or sub-linear regret for constrained
  contextual bandits.
\newblock In \emph{Advances in Neural Information Processing Systems 28}, pages
  433--441, 2015.

\bibitem[Wu et~al.(2016)Wu, Shariff, Lattimore, and
  Szepesv{\'a}ri]{wu2016conservative}
Y.~Wu, R.~Shariff, T.~Lattimore, and C.~Szepesv{\'a}ri.
\newblock Conservative bandits.
\newblock In \emph{International Conference on Machine Learning}, pages
  1254--1262, 2016.

\bibitem[Zhou et~al.(2020)Zhou, Li, and Gu]{zhou2020neural}
Dongruo Zhou, Lihong Li, and Quanquan Gu.
\newblock Neural contextual bandits with ucb-based exploration.
\newblock In \emph{International Conference on Machine Learning}, pages
  11492--11502. PMLR, 2020.

\bibitem[Zhou and Ji(2022)]{Zhou22KM}
X.~Zhou and B.~Ji.
\newblock On kernelized multi-armed bandits with constraints.
\newblock \emph{Advances in Neural Information Processing Systems},
  35:\penalty0 14--26, 2022.

\end{thebibliography}

\end{document}